\documentclass[11pt]{article}

\makeatletter
\newcommand{\startappendices}{%
    \clearpage
    \section*{Appendix}%
    \phantomsection
    \addcontentsline{toc}{section}{Appendix}%
    \setcounter{subsection}{0}%
    \setcounter{subsubsection}{0}%
    \setcounter{equation}{0}%
    \renewcommand{\thesubsection}{\Alph{subsection}}%
    \renewcommand{\thesubsubsection}{\thesubsection.\arabic{subsubsection}}%
    \renewcommand{\theequation}{\thesubsection.\arabic{equation}}%
    \@addtoreset{equation}{subsection}%
}
\makeatother

\usepackage{amsthm}

\theoremstyle{definition}
\newtheorem{definition}{Definition}[section]

\usepackage[margin=1in]{geometry}

\usepackage{amsmath}
\usepackage{amssymb}
\usepackage{amsfonts}
\usepackage{bm}
\usepackage{mathtools}
\usepackage{amsthm}

\usepackage{microtype}
\usepackage{times}
\usepackage{xcolor}
\usepackage{hyperref}
\usepackage[numbers,sort&compress]{natbib}

\usepackage{enumitem}
\usepackage{booktabs}
\usepackage{algorithm}
\usepackage{algpseudocode}

\hypersetup{
    colorlinks=true,
    linkcolor=blue,
    citecolor=blue,
    urlcolor=blue,
    pdftitle={Beyond Fixed Noise: History-Adaptive Virtual Perturbation Bounds for SGD},
    pdfauthor={},
    pdfsubject={Information-Theoretic Generalization Bounds},
    pdfkeywords={SGD, generalization, mutual information, virtual perturbation, adaptive noise}
}

\newtheorem{assumption}{Assumption}
\newtheorem{theorem}{Theorem}
\newtheorem{lemma}{Lemma}
\newtheorem{proposition}{Proposition}
\newtheorem{corollary}{Corollary}
\newtheorem{remark}{Remark}

\newcommand{\E}{\mathbb{E}}
\newcommand{\R}{\mathbb{R}}
\newcommand{\N}{\mathcal{N}}

\newcommand{\cD}{\mathcal{D}}
\newcommand{\cF}{\mathcal{F}}
\newcommand{\cG}{\mathcal{G}}
\newcommand{\cH}{\mathcal{H}}

\newcommand{\cZ}{\mathcal{Z}}

\newcommand{\KL}{D_{\mathrm{KL}}}
\newcommand{\MI}{I}
\newcommand{\gen}{\mathrm{gen}}

\DeclareMathOperator{\Tr}{Tr}
\DeclareMathOperator{\Cov}{Cov}
\DeclareMathOperator{\diag}{diag}

\newcommand{\wtW}{\widetilde{W}}
\newcommand{\wtw}{\widetilde{w}}

\newcommand{\eps}{\varepsilon}

\newcommand{\barg}{\bar g}

\newcommand{\given}{\;\mid\;}

\newcommand{\inner}[2]{\left\langle #1,#2 \right\rangle}
\newcommand{\norm}[1]{\left\lVert #1 \right\rVert}
\newcommand{\normSigma}[2]{\left\lVert #1 \right\rVert_{#2}}

\title{\textbf{Information-Theoretic Generalization Bounds for Stochastic Gradient Descent with Predictable Virtual Noise}}

\author{
Mohammad Partohaghighi \\
Electrical Engineering and Computer Science \\
University of California, Merced \\
Merced, CA, USA \\
\texttt{mpartohaghighi@ucmerced.edu}  
}

\date{}

\begin{document}

\maketitle

\begin{abstract}
Information-theoretic generalization bounds provide a principled framework for
analyzing stochastic optimization algorithms by relating expected
generalization error to the mutual information between the learned parameters
and the training data. Recent virtual perturbation analyses of stochastic
gradient descent introduce auxiliary Gaussian noise only at the level of the
proof, thereby making mutual information analytically controllable while leaving
the actual SGD trajectory unchanged. Existing bounds, however, typically require
the perturbation variances or covariance matrices to be fixed independently of
the optimization history. This fixed-noise restriction limits the ability of the
theory to represent history-dependent perturbation geometries motivated by
moving gradient statistics, gradient-deviation estimates, preconditioners,
curvature proxies, and other long-term pathwise information.

In this work, we introduce predictable history-adaptive virtual perturbations,
where the perturbation covariance \(\Sigma_t\) at iteration \(t\) may depend on
the past real SGD history \(\mathcal H_{t-1}\), but not on current or future
randomness. This predictability condition enables a conditional Gaussian
relative-entropy argument and leads to information-theoretic generalization
bounds for SGD with history-adaptive virtual noise geometry. The resulting
bounds replace fixed local gradient-deviation and gradient-sensitivity
quantities with conditional adaptive counterparts and include an adaptive
output-sensitivity penalty induced by the accumulated perturbation covariance.
The gradient-deviation term reduces to a conditional variance only under an
appropriate conditional unbiasedness condition.

Because adaptive covariances are generally data-dependent through the
optimization history, our formulation separates the local Gaussian smoothing
step from the global reference-kernel comparison. In the general case, the bound
contains an explicit covariance-comparison cost that measures the KL price of
using an admissible reference perturbation geometry different from the actual
adaptive covariance. The clean fixed-noise-style bound is recovered only under
an admissible synchronization certificate, such as deterministic, public, or
prefix-observable covariance constructions. The framework recovers fixed
isotropic and fixed geometry-aware virtual perturbation bounds as special cases,
while extending virtual perturbation analysis toward history-dependent
stochastic optimization without modifying the underlying SGD algorithm.
\end{abstract}

\section{Introduction}
\label{sec:introduction}

\subsection{Motivation}

Stochastic gradient descent (SGD) is one of the central algorithmic primitives
of modern machine learning. Its simplicity, scalability, and ability to train
highly overparameterized models have made it the default optimization method
behind many large-scale learning systems. Despite this practical success, the
theoretical mechanisms behind the generalization behavior of SGD remain only
partially understood. This gap is especially pronounced in high-dimensional and
nonconvex regimes, where the number of model parameters can far exceed the
number of training examples and where classical uniform convergence arguments
often provide overly conservative explanations. A central question is why an SGD-trained model can achieve low test error
despite being optimized directly on a finite training sample. One way to
formalize this question is through the expected generalization error
\begin{equation}
    \gen(W,S)
    =
    \E
    \left[
        L(W,S')-L(W,S)
    \right],
\end{equation}
where \(S\) is the training sample, \(S'\) is an independent ghost sample drawn
from the same distribution, and \(W\) denotes the output of the learning
algorithm. Information-theoretic generalization theory provides a principled
approach to this problem by relating the generalization error to the mutual
information between the learned parameters and the training data. A
representative bound takes the form
\begin{equation}
    |\gen(W,S)|
    \leq
    \sqrt{
        \frac{2R^2}{n}
        \MI(W;S)
    },
    \label{eq:intro_mi_bound}
\end{equation}
under an appropriate \(R\)-sub-Gaussian loss assumption
\citep{xu2017information}. Thus, controlling generalization can be reduced to
controlling how much information about the training sample is retained in the
output \(W\). Applying \eqref{eq:intro_mi_bound} directly to deterministic or nearly
deterministic SGD dynamics is challenging. Noisy iterative algorithms such as
stochastic gradient Langevin dynamics (SGLD) are often more amenable to
information-theoretic analysis because injected noise smooths the conditional
distributions of iterates and helps control relative entropy. A typical noisy
update has the form
\begin{equation}
    W_{t+1}
    =
    W_t-\eta_tg(W_t,B_t)+\eps_t,
    \qquad
    \eps_t\sim\N(0,\Sigma_t),
\end{equation}
where \(B_t\) is the minibatch and \(\eps_t\) is Gaussian perturbation noise.
The presence of \(\eps_t\) makes one-step relative-entropy comparisons
tractable. Yet this analytical advantage comes at a cost: injecting noise into
the actual optimization trajectory changes the learning algorithm and may
affect training performance.

\subsection{Virtual Perturbations for SGD}

A recent line of work shows that one can obtain information-theoretic
generalization bounds for vanilla SGD without injecting noise into the actual
training dynamics \citep{pensia2018generalization,neu2021information}. The key
idea is to introduce perturbations only as an analytical device. The true SGD
trajectory remains
\begin{equation}
    W_{t+1}
    =
    W_t-\eta_tg(W_t,B_t),
    \label{eq:intro_sgd}
\end{equation}
while a virtual perturbed trajectory is defined only for the purpose of
analysis:
\begin{equation}
    \wtW_{t+1}
    =
    \wtW_t
    -
    \eta_tg(W_t,B_t)
    +
    \eps_t.
    \label{eq:intro_virtual_path}
\end{equation}
Crucially, the gradient in \eqref{eq:intro_virtual_path} is evaluated at the
unperturbed SGD iterate \(W_t\), not at the virtual iterate \(\wtW_t\). Thus,
the perturbation process does not modify the optimization algorithm. It only
produces a noisy analytical shadow of the SGD path. This virtual perturbation construction makes it possible to bound the mutual
information between the perturbed final iterate and the training sample, and
then transfer the resulting generalization guarantee back to the original output
through a perturbation-sensitivity penalty. In the fixed-noise framework, the
resulting bounds depend on local quantities measured along the SGD path, such
as gradient-deviation terms, gradient sensitivity to perturbations, and the
sensitivity of the final loss value to perturbing the output. The
gradient-deviation term reduces to a stochastic gradient variance under
appropriate conditional unbiasedness assumptions, but in general it also
captures possible conditional bias relative to the population gradient. This is
conceptually attractive because the bound reflects path-dependent properties of
the actual SGD trajectory rather than relying solely on global Lipschitz or
smoothness constants.

\subsection{Limitation: Fixed Perturbation Geometry}

Despite its elegance, the standard virtual perturbation framework typically
assumes that the perturbation schedule is fixed independently of the optimization
history. For example, one may choose scalar isotropic perturbations
\begin{equation}
    \eps_t\sim\N(0,\sigma_t^2I),
\end{equation}
or deterministic matrix-valued perturbations
\begin{equation}
    \eps_t\sim\N(0,\Sigma_t),
\end{equation}
where the sequence \(\{\sigma_t\}_{t}\) or \(\{\Sigma_t\}_{t}\) is specified in
advance and does not depend on the observed SGD trajectory.

This fixed-noise assumption is mathematically convenient, but it limits the
flexibility of the theory. In realistic stochastic optimization, many important
quantities evolve with the training path. Adaptive learning rates depend on
past gradients; preconditioners such as AdaGrad- or Adam-type second-moment
estimates accumulate historical information; moving averages summarize
long-term gradient behavior; curvature proxies change as the optimization
trajectory enters different regions of the loss landscape; and estimates of
gradient deviation or stochastic gradient fluctuation may vary significantly
between early and late training. A perturbation geometry fixed before training
cannot adapt to these changes. This limitation is closely related to the broader issue of long-term dependence.
Once a perturbation scale, covariance, or preconditioning geometry is allowed to
depend on past iterates and gradients, the analysis must account for the
dependence between the perturbation process, the data, and the optimization
history. The challenge is to allow such adaptivity without destroying the
conditional structure that makes information-theoretic analysis possible.

\subsection{Our Approach: Predictable History-Adaptive Perturbations}

We propose a history-adaptive extension of the virtual perturbation framework.
Instead of requiring the perturbation covariance to be fixed independently of
the optimization trajectory, we allow it to depend on the past real history of
SGD. Let \(\mathcal H_{t-1}\) denote the real SGD history available before the
perturbation at iteration \(t\) is drawn. Informally, \(\mathcal H_{t-1}\)
contains the past iterates, past minibatches, and past stochastic gradients, but
not the current perturbation or future randomness. At iteration \(t\), we allow
the virtual perturbation covariance to take the form
\begin{equation}
    \Sigma_t
    =
    \Phi_t(\mathcal H_{t-1}),
    \label{eq:intro_adaptive_covariance}
\end{equation}
where \(\Phi_t\) is a measurable covariance-selection rule. The key structural
condition is predictability:
\begin{equation}
    \Sigma_t
    \text{ is }
    \mathcal H_{t-1}\text{-measurable}.
    \label{eq:intro_predictability}
\end{equation}
Equivalently, the covariance at step \(t\) may depend on the past optimization
history, but not on the current perturbation, current unrevealed randomness, or
future information. Under this condition, the virtual perturbation is conditionally Gaussian:
\begin{equation}
    \eps_t\given\mathcal H_{t-1}
    \sim
    \N(0,\Sigma_t).
    \label{eq:intro_conditional_noise}
\end{equation}
This predictability condition is sufficient for the one-step conditional
Gaussian smoothing argument: after conditioning on the relevant past, the
covariance \(\Sigma_t\) is fixed, and conditional relative-entropy bounds can be
applied. However, predictability alone is not the whole mutual-information
argument. Because \(\Sigma_t\) is generally data-dependent through the SGD
history, the reference-kernel comparison in the mutual-information
decomposition must also be handled carefully. In the general case, comparing
actual and reference virtual kernels with different adaptive covariances
introduces an explicit covariance-comparison cost. The clean fixed-noise-style
bound is recovered only when the synchronized reference covariance is
admissible, for example under deterministic, public, or prefix-observable
covariance constructions. The resulting theory is still purely analytical. The true SGD update remains
\eqref{eq:intro_sgd}, and the adaptive perturbations appear only in the virtual
path \eqref{eq:intro_virtual_path}. Thus, our framework does not propose a new
optimizer. Rather, it extends the analytical machinery of virtual perturbation
bounds to perturbation geometries that can adapt to the long-term history of the
SGD trajectory. The accumulated perturbation covariance becomes random and history-dependent:
\begin{equation}
    \Sigma_{1:t}
    =
    \sum_{k=1}^{t-1}\Sigma_k.
\end{equation}
Consequently, the classical fixed-noise gradient-deviation and
gradient-sensitivity terms are replaced by conditional adaptive counterparts.
Let
\begin{equation}
    \bar g(w)
    =
    \E_{Z\sim\mathcal D}[g(w,Z)]
\end{equation}
denote the population gradient. At a high level, our bounds involve an adaptive
gradient-deviation term of the form
\begin{equation}
    V_t^{\mathrm{ad}}
    =
    \E
    \left[
        \normSigma{
            g(W_t,B_t)-\bar g(W_t)
        }{\Sigma_t^{-1}}^2
        \given
        \mathcal H_{t-1}
    \right],
    \label{eq:intro_adaptive_variance}
\end{equation}
and an adaptive gradient-sensitivity term of the form
\begin{equation}
    \Gamma_t^{\mathrm{ad}}
    =
    \E
    \left[
        \normSigma{
            \bar g(W_t+\xi_t)-\bar g(W_t)
        }{\Sigma_t^{-1}}^2
        \given
        \mathcal H_{t-1}
    \right],
    \label{eq:intro_adaptive_sensitivity}
\end{equation}
where \(\xi_t\) denotes the accumulated virtual perturbation up to time \(t\),
conditionally distributed as a centered Gaussian with covariance
\(\Sigma_{1:t}\) given the real SGD history. The term
\(V_t^{\mathrm{ad}}\) is a conditional mean-square deviation from the population
gradient; it becomes a conditional variance only under the appropriate
conditional unbiasedness condition. The final bound also includes an adaptive
output perturbation-sensitivity penalty, which measures the cost of transferring
the information-theoretic guarantee from the perturbed output back to the
original SGD output.

\subsection{Contributions}

The main contribution of this paper is a conditional information-theoretic
extension of virtual perturbation analysis from fixed perturbation schedules to
predictable history-adaptive perturbation processes. Our contributions are
summarized as follows.

\begin{itemize}[leftmargin=2em,itemsep=0.35em]
    \item \textbf{Predictable virtual perturbations.}
    We introduce a history-adaptive virtual perturbation framework in which the
    perturbation covariance at each step is measurable with respect to the past
    real SGD history. This allows the analytical noise geometry to adapt to the
    realized SGD trajectory while preserving the conditional Gaussian structure
    needed for the local one-step relative-entropy analysis.

    \item \textbf{Conditional relative-entropy analysis.}
    We prove and use conditional Gaussian smoothing inequalities for
    perturbations with predictable covariance. This shows that full independence
    of the perturbation covariance from the optimization history is stronger
    than necessary for the local relative-entropy step; predictability is the
    relevant causal condition.

    \item \textbf{Certified reference-kernel comparison.}
    We separate the local Gaussian smoothing step from the global
    mutual-information reference comparison. Since history-adaptive covariances
    are generally data-dependent, a reference kernel cannot automatically use
    the actual covariance. We identify \(S\)-admissible reference constructions,
    including deterministic, public, prefix-observable, and ghost-adaptive
    covariances.

    \item \textbf{General covariance-comparison bound.}
    When the actual and reference perturbation geometries differ, the bound
    contains an explicit covariance-comparison cost. This term is the Gaussian
    KL price of comparing the actual adaptive smoothing covariance with an
    admissible reference covariance. The clean fixed-noise-style bound is
    recovered only under an admissible synchronization certificate.

    \item \textbf{Adaptive gradient-deviation and sensitivity terms.}
    We derive conditional, history-adaptive analogues of the local
    gradient-deviation and gradient-sensitivity quantities appearing in
    fixed-noise virtual perturbation bounds. The gradient-deviation term reduces
    to a conditional variance only under conditional unbiasedness of the
    stochastic gradient; otherwise it also captures conditional bias relative to
    the population gradient.

    \item \textbf{Adaptive output-sensitivity penalty.}
    We characterize the output-sensitivity term induced by the accumulated
    adaptive covariance. This term quantifies the cost of transferring a
    generalization guarantee from the virtually perturbed output back to the
    original SGD output. We control it under global smoothness and further
    refine it through a covariance-weighted local-curvature interpretation.

    \item \textbf{Recovery of fixed-noise bounds and adaptive examples.}
    We show that fixed isotropic and fixed geometry-aware virtual perturbation
    bounds are recovered as special cases. We also discuss adaptive scalar,
    diagonal, Adam-like, curvature-aware, and low-rank covariance choices
    motivated by moving gradient statistics, preconditioned geometry, pathwise
    curvature proxies, and long-term optimization history.
\end{itemize}
Overall, our framework extends virtual perturbation analysis beyond fixed-noise
constructions while leaving the underlying SGD algorithm unchanged. It provides
a theoretical bridge toward understanding generalization in settings where the
analytical perturbation geometry reflects long-term dependence in the
optimization history, while making explicit the reference-admissibility and
covariance-comparison costs required in fully history-adaptive comparisons.

\paragraph{Organization.}
The remainder of the paper is organized as follows.
Section~\ref{sec:related-work} reviews related work on stability-based
generalization, information-theoretic generalization bounds, noisy iterative
algorithms, virtual perturbation analysis, and adaptive optimization with
long-term dependence. Section~\ref{sec:preliminaries} introduces the learning
setup, the SGD dynamics, the information-theoretic generalization bound, and
the fixed virtual perturbation baseline. Section~\ref{sec:fixed-limitation}
explains why fixed perturbation schedules are restrictive and motivates
predictable history-adaptive virtual perturbations. Section~\ref{sec:predictable-perturbations}
formalizes the predictable covariance framework, including the real SGD
history, augmented histories, accumulated perturbations, and conditional
Gaussian structure. Section~\ref{sec:adaptive-quantities} defines the adaptive
gradient-deviation, gradient-sensitivity, reference-geometry, and
output-sensitivity quantities that appear in our bounds.
Section~\ref{sec:main_theoretical_results} presents the main covariance-comparison
generalization theorem, together with admissibly synchronized and
comparable-covariance corollaries. Section~\ref{sec:special-cases} shows how
fixed isotropic, fixed geometry-aware, adaptive scalar, diagonal, and Adam-like
virtual perturbation geometries fit into the framework.
Section~\ref{sec:sensitivity-penalty} controls the adaptive output-sensitivity
penalty under smoothness assumptions and refines it through local curvature.
Section~\ref{sec:discussion} discusses the role of long-term dependence,
predictability, reference comparisons, and the limitations of the framework.
Section~\ref{sec:numerical} provides an optional numerical diagnostic
illustration of the bound proxies and the information-sensitivity trade-off.
Section~\ref{sec:conclusion} concludes and outlines future directions. The
appendices contain the full technical details: Appendix~\ref{app:conditional-kl}
proves the conditional Gaussian relative-entropy lemmas,
Appendix~\ref{app:mi-decomposition} proves the mutual-information decomposition
and conditioning-compression step, Appendix~\ref{app:main-proof} proves the
main theorem, Appendix~\ref{app:fixed-recovery} recovers the fixed-noise bounds,
Appendix~\ref{app:smoothness} proves the smoothness and local-curvature
sensitivity controls.


\section{Related Work}
\label{sec:related-work}

\subsection{Stability and Generalization of SGD}

Algorithmic stability is a classical approach to understanding why learning
algorithms generalize. At a high level, a stable learning algorithm is one whose
output does not change significantly when a single training example is modified
or removed. This notion gives a direct route from algorithmic sensitivity to
bounds on the gap between training and population performance. For stochastic
gradient descent, the work of \citet{hardt2016train} initiated a systematic
stability analysis and showed that SGD can enjoy strong generalization
guarantees under smoothness assumptions. These results are especially
transparent in convex and strongly convex settings, where contraction
properties of gradient-based updates help control the propagation of
perturbations in the data.

The nonconvex setting is substantially more delicate. Modern deep learning
models are high-dimensional, highly nonconvex, and often overparameterized. In
such regimes, uniform stability bounds may depend sensitively on smoothness
constants, step-size schedules, and the number of optimization steps.
Subsequent work has refined stability analyses of SGD by weakening assumptions,
sharpening rates, or considering different loss geometries and sampling schemes
\citep{lei2020fine,bassily2020stability,bousquet2020sharper}. Nevertheless,
stability-based approaches and information-theoretic approaches emphasize
different mechanisms. Stability focuses on sensitivity to changes in individual
samples, whereas the present work follows an information-theoretic route by
controlling the mutual information between the training sample and a virtually
perturbed version of the learned parameters.

Our contribution is complementary to stability-based analyses. We do not
attempt to prove uniform stability of SGD, nor do we analyze the stability of a
new optimizer. Instead, we extend the virtual perturbation method for bounding
mutual information. The perturbation process in our framework is purely
analytical: the underlying SGD trajectory remains unchanged, while the proof
introduces a predictable history-adaptive virtual noise geometry. The resulting
bounds expose path-dependent gradient-deviation, gradient-sensitivity,
covariance-comparison, and output-sensitivity terms rather than a uniform
replacement-sensitivity coefficient.

\subsection{Information-Theoretic Generalization}

Information-theoretic generalization theory provides a broad and
algorithm-agnostic framework for bounding expected generalization error. A
representative result states that, under an \(R\)-sub-Gaussian loss assumption,
the expected generalization error of a learning algorithm with output \(W\)
trained on sample \(S\) can be bounded as
\begin{equation}
    |\gen(W,S)|
    \leq
    \sqrt{
        \frac{2R^2}{n}
        \MI(W;S)
    },
    \label{eq:rw_mi_bound}
\end{equation}
where \(\MI(W;S)\) denotes the mutual information between the output and the
training data. This perspective was developed in influential works by
\citet{russo2016controlling}, \citet{xu2017information}, and related subsequent
studies. The central intuition is that a learning algorithm generalizes well
when its output does not reveal too much information about the idiosyncrasies
of the training sample.

The strength of \eqref{eq:rw_mi_bound} is its generality: it applies to a wide
class of randomized learning algorithms and does not rely directly on uniform
convergence over a hypothesis class. Its main challenge is equally clear: to
obtain a useful generalization guarantee, one must control the mutual
information \(\MI(W;S)\). For deterministic or nearly deterministic algorithms,
this quantity can be large or even unbounded without additional structure. This
difficulty is particularly relevant for vanilla SGD, whose randomness may come
only from initialization, minibatch sampling, or data ordering. Consequently, a
significant body of work has focused on exploiting algorithmic noise,
conditional mutual information, or data-dependent refinements to make
information-theoretic bounds informative
\citep{negrea2019information,haghifam2020sharpened,steinke2020reasoning}.

Our work builds on this information-theoretic tradition but targets a specific
limitation in virtual perturbation analyses: the common assumption that the
perturbation geometry is fixed independently of the optimization history. We
relax this fixed-geometry requirement by allowing predictable
history-adaptive perturbation covariances. At the same time, because such
covariances are generally data-dependent through the SGD trajectory, the
mutual-information comparison must account for the reference-kernel geometry.
Our formulation therefore separates three issues that are often implicit in the
fixed-covariance setting: the local conditional Gaussian smoothing step, the
global admissibility of the reference kernel, and the covariance-comparison cost
incurred when the actual and reference perturbation covariances differ. The
clean fixed-noise-style bound is recovered only when the synchronized reference
covariance is admissible.

\subsection{SGLD and Noisy Iterative Algorithms}

Noisy iterative algorithms are especially natural objects for
information-theoretic analysis. In stochastic gradient Langevin dynamics and
related methods, Gaussian noise is injected directly into the parameter update:
\begin{equation}
    W_{t+1}
    =
    W_t
    -
    \eta_t g(W_t,B_t)
    +
    \eps_t,
    \qquad
    \eps_t \sim \N(0,\Sigma_t).
    \label{eq:rw_sgld}
\end{equation}
The injected noise smooths the conditional distributions of successive iterates
and makes it possible to control relative entropy terms along the optimization
path. This feature enables mutual-information bounds to decompose across
iterations through chain-rule arguments. Works such as
\citet{pensia2018generalization}, \citet{negrea2019information}, and
\citet{haghifam2020sharpened} use this structure to derive generalization
bounds for noisy iterative algorithms, including SGLD.

The benefit of real injected noise is analytical tractability. However, the
same noise also modifies the optimization trajectory. In practice, the amount
and geometry of noise can affect training loss, convergence behavior, and
optimization bias. Thus, while noisy algorithms provide a valuable theoretical
lens, they do not directly explain the generalization behavior of vanilla SGD
unless one can relate the noisy and noiseless dynamics. This motivates virtual
perturbation approaches, which aim to obtain some of the analytical benefits of
noise without altering the actual learning algorithm.

The present work remains in the virtual perturbation setting rather than the
real-noise setting. The Gaussian perturbations are introduced only inside the
proof. Their covariance may adapt to the past real SGD history, but the SGD
recursion itself is not changed. This distinction is important: our predictable
covariance process is an analytical device for controlling mutual information,
not a prescription for injecting adaptive noise during training.

\subsection{Virtual Perturbation Analysis}

Virtual perturbation analysis introduces noise only inside the proof. The
actual SGD iterates follow
\begin{equation}
    W_{t+1}
    =
    W_t
    -
    \eta_t g(W_t,B_t),
    \label{eq:rw_sgd}
\end{equation}
while the analysis constructs a perturbed shadow trajectory
\begin{equation}
    \wtW_{t+1}
    =
    \wtW_t
    -
    \eta_t g(W_t,B_t)
    +
    \eps_t.
    \label{eq:rw_virtual}
\end{equation}
The crucial feature of \eqref{eq:rw_virtual} is that the gradient is evaluated
at the original SGD iterate \(W_t\), not at the perturbed iterate \(\wtW_t\).
Therefore, the perturbations do not define a new training algorithm. They are
auxiliary random variables introduced to make the mutual information between the
data and the perturbed output analytically controllable.

The virtual perturbation framework of \citet{neu2021information} shows that one
can obtain information-theoretic generalization bounds for vanilla SGD by
combining a perturbation analysis of the final output with a chain-rule
decomposition of relative entropy along a virtual noisy path. The resulting
bounds depend on local, pathwise quantities such as gradient-deviation terms,
which reduce to stochastic gradient variance terms under appropriate
unbiasedness assumptions, as well as gradient sensitivity to perturbations
along the SGD path and the sensitivity of the final loss value to output
perturbations. This is a significant conceptual advance because it avoids
injecting noise into the real SGD trajectory while still exploiting noise in
the proof.

Existing virtual perturbation bounds, however, typically assume that the
perturbation scale or covariance sequence is fixed independently of the observed
optimization history. For instance, the perturbations may be isotropic with
prescribed variances \(\sigma_t^2\), or geometry-aware with deterministic
covariance matrices \(\Sigma_t\). Our work extends this line of analysis by
allowing the virtual perturbation covariance to be a predictable process:
\begin{equation}
    \Sigma_t
    =
    \Phi_t(\mathcal H_{t-1}),
    \qquad
    \Sigma_t
    \text{ is }
    \mathcal H_{t-1}\text{-measurable},
    \label{eq:rw_predictable_covariance}
\end{equation}
where \(\mathcal H_{t-1}\) denotes the real SGD history available before the
perturbation at iteration \(t\) is drawn. Under this condition,
\begin{equation}
    \eps_t \given \mathcal H_{t-1}
    \sim
    \N(0,\Sigma_t),
\end{equation}
so after conditioning on the past, the covariance is fixed and the local
one-step conditional Gaussian relative-entropy argument remains valid.

This extension introduces an additional technical issue that is absent in the
deterministic fixed-covariance case. Because \(\Sigma_t\) may depend on the data
through the SGD history, the reference kernel used in the mutual-information
decomposition cannot be treated as if it automatically had access to the same
covariance. The full proof must distinguish the actual predictable covariance
from the admissible reference covariance. If the actual and reference
perturbation covariances differ, the Gaussian comparison produces an explicit
covariance-comparison term. Thus, the clean bound familiar from fixed virtual
perturbation analysis is recovered only when the synchronized comparison is
admissible, while the general history-adaptive theorem includes the
corresponding covariance-comparison cost.

\subsection{Adaptive Optimization and Long-Term Dependence}

Adaptive optimization methods such as AdaGrad, RMSProp, and Adam use historical
gradient information to choose step sizes, preconditioners, or coordinate-wise
scaling factors \citep{duchi2011adaptive,tieleman2012lecture,kingma2015adam}.
These methods introduce long-term dependence because the update at time \(t\)
depends not only on the current stochastic gradient but also on moving averages
or accumulated statistics from previous iterations. Even when the underlying
optimization algorithm is not changed, an analytical perturbation geometry that
reflects local gradient scale, curvature proxies, gradient-deviation estimates,
or stochastic fluctuation estimates naturally becomes history-dependent.

Long-term dependence creates technical obstacles for information-theoretic
analysis. If step sizes, preconditioners, or perturbation covariances depend on
past gradients, then they also depend on the data through the optimization
trajectory. This dependence complicates relative-entropy decompositions that
are straightforward when perturbation covariances are fixed in advance. The key
challenge is to identify a condition that permits adaptation to the past
without allowing the perturbation process to depend on current or future
randomness in a way that breaks the conditional proof.

Our work addresses this challenge for virtual perturbation geometry. We do not
analyze the true adaptive update dynamics of Adam, AdaGrad, or RMSProp, and we
do not propose a new adaptive optimizer. Instead, we isolate a tractable
analytical extension: the covariance of the virtual perturbation at time \(t\)
may depend on the past real SGD history, provided that it is predictable with
respect to \(\mathcal H_{t-1}\). This condition allows the local proof step to
proceed through conditional Gaussian relative-entropy bounds. For the full
mutual-information comparison, our framework further identifies the
reference-admissibility requirement and the covariance-comparison cost that
appears when the actual and reference perturbation geometries differ. As a
result, the framework captures a controlled form of long-term dependence in the
analytical perturbation process while recovering fixed isotropic and fixed
geometry-aware perturbation bounds as special cases.

\section{Preliminaries and Problem Setup}
\label{sec:preliminaries}

This section introduces the notation and assumptions used throughout the paper.
We first define the supervised learning problem, the expected generalization
error, and the information-theoretic bound that forms the basis of our
analysis. We then describe the SGD dynamics and the fixed virtual perturbation
construction that serves as the baseline for our history-adaptive extension.

Throughout the formal development, we use the following indexing convention.
The initial iterate is \(W_1\), the final output is \(W_T\), and hence there are
\(T-1\) SGD updates. For \(t=1,\ldots,T-1\), the \(t\)-th update maps \(W_t\) to
\(W_{t+1}\). This convention avoids the common ambiguity between \(W_T\) and
\(W_{T+1}\).

\subsection{Data, Loss, and Empirical Risk}
\label{subsec:data_loss_empirical_risk}

Let \(Z\) denote a data point taking values in a measurable space \(\cZ\), and
let \(\cD\) be the underlying data-generating distribution. The training sample
is denoted by
\begin{equation}
    S
    =
    (Z_1,\ldots,Z_n),
    \qquad
    Z_i \overset{\mathrm{i.i.d.}}{\sim} \cD.
\end{equation}
We also define an independent ghost sample
\begin{equation}
    S'
    =
    (Z_1',\ldots,Z_n'),
    \qquad
    Z_i' \overset{\mathrm{i.i.d.}}{\sim} \cD,
\end{equation}
where \(S'\) is independent of \(S\) and of all algorithmic and auxiliary
randomness. In some reference-kernel constructions we also use optional public, auxiliary,
or ghost randomness. We denote such randomness by \(\mathcal U\), and whenever
it appears we assume
\begin{equation}
    \mathcal U
    \quad\text{is independent of}\quad
    S.
    \label{eq:prelim_auxiliary_independence}
\end{equation}
If no such auxiliary randomness is used, \(\mathcal U\) is understood to be the
trivial sigma-field. This notation will be used later when defining admissible
reference kernels. Let \(w\in\R^d\) denote the parameter vector of the model. We consider a loss
function
\begin{equation}
    \ell:\R^d\times \cZ \to \R,
\end{equation}
where \(\ell(w,z)\) is the loss of parameter \(w\) on data point \(z\). For a
finite sample \(s=(z_1,\ldots,z_n)\), define the empirical risk
\begin{equation}
    L(w,s)
    =
    \frac{1}{n}
    \sum_{i=1}^{n}
    \ell(w,z_i).
    \label{eq:empirical_risk}
\end{equation}
The corresponding population risk is
\begin{equation}
    L_{\cD}(w)
    =
    \E_{Z\sim \cD}
    \left[
        \ell(w,Z)
    \right].
    \label{eq:population_risk}
\end{equation}
We assume throughout that the loss is differentiable with respect to \(w\). We
write
\begin{equation}
    g(w,z)
    =
    \nabla_w\ell(w,z)
\end{equation}
for the sample gradient and
\begin{equation}
    \bar g(w)
    =
    \E_{Z\sim\cD}
    \left[
        g(w,Z)
    \right]
    \label{eq:population_gradient_prelim}
\end{equation}
for the population gradient. For a minibatch \(B=\{z_i:i\in J\}\), where
\(J\subseteq[n]\), we use the notation
\begin{equation}
    g(w,B)
    =
    \frac{1}{|J|}
    \sum_{i\in J}
    g(w,z_i).
    \label{eq:minibatch_gradient_general}
\end{equation}
We impose the standard sub-Gaussian loss assumption used in
information-theoretic generalization analysis. For every fixed \(w\in\R^d\),
the random variable \(\ell(w,Z)\), with \(Z\sim\cD\), is assumed to be
\(R\)-sub-Gaussian, meaning that for all \(\lambda\in\R\),
\begin{equation}
    \E
    \exp
    \left\{
        \lambda
        \left(
            \ell(w,Z)
            -
            \E[\ell(w,Z)]
        \right)
    \right\}
    \leq
    \exp
    \left(
        \frac{\lambda^2R^2}{2}
    \right).
    \label{eq:subgaussian_loss}
\end{equation}
This condition is satisfied, for example, when the loss is uniformly bounded on
the support of the data distribution. The sub-Gaussian assumption is used only
to connect mutual information to expected generalization error.

\subsection{Expected Generalization Error}
\label{subsec:expected_generalization_error}

Let \(A\) be a possibly randomized learning algorithm, and let
\begin{equation}
    W=A(S)
\end{equation}
denote its output after training on the sample \(S\). The expected
generalization error is defined as
\begin{equation}
    \gen(W,S)
    =
    \E
    \left[
        L(W,S')
        -
        L(W,S)
    \right],
    \label{eq:expected_generalization}
\end{equation}
where the expectation is taken over the training sample \(S\), the ghost sample
\(S'\), the internal randomness of the algorithm, minibatch sampling,
initialization, and any auxiliary randomness introduced for analysis. Since \(S'\) is independent of \(W\), the first term can be interpreted as the
population performance of the learned parameter:
\begin{equation}
    \E[L(W,S')]
    =
    \E[L_{\cD}(W)].
\end{equation}
Thus, \(\gen(W,S)\) measures the expected gap between population risk and
empirical training risk at the algorithmic output.

\subsection{Information-Theoretic Generalization Bound}
\label{subsec:information_theoretic_generalization_bound}

The starting point of our analysis is a classical information-theoretic
generalization inequality. Under the sub-Gaussian assumption in
\eqref{eq:subgaussian_loss}, the expected generalization error of any possibly
randomized learning algorithm satisfies
\begin{equation}
    |\gen(W,S)|
    \leq
    \sqrt{
        \frac{2R^2}{n}
        \MI(W;S)
    },
    \label{eq:mi_generalization_bound}
\end{equation}
where \(\MI(W;S)\) denotes the mutual information between the algorithmic output
\(W\) and the training sample \(S\) \citep{xu2017information}. Equivalently,
\begin{equation}
    \MI(W;S)
    =
    \KL(P_{W,S}\,\|\,P_W\otimes P_S),
\end{equation}
where \(P_{W,S}\) is the joint distribution of \((W,S)\) and
\(P_W\otimes P_S\) is the product of the marginals. The inequality \eqref{eq:mi_generalization_bound} reduces the problem of
bounding expected generalization error to controlling how much information the
learned parameter retains about the training data. For deterministic or nearly
deterministic SGD, directly controlling \(\MI(W_T;S)\) can be difficult. The
virtual perturbation approach addresses this challenge by applying
\eqref{eq:mi_generalization_bound} to a perturbed version of the SGD output and
then controlling the discrepancy between the perturbed and unperturbed outputs
through an output-sensitivity term. The present paper follows this high-level
strategy, but replaces fixed virtual perturbation covariances with predictable
history-adaptive ones.

\subsection{Stochastic Gradient Descent Dynamics}
\label{subsec:sgd_dynamics_prelim}

We now define the SGD dynamics considered in this paper. Let \(W_1\) be an
initial parameter vector drawn from a distribution independent of the training
sample \(S\). For each update \(t=1,\ldots,T-1\), let \(J_t\subseteq[n]\)
denote the minibatch index set and let
\begin{equation}
    B_t
    =
    Z_{J_t}
    =
    \{Z_i:i\in J_t\}
\end{equation}
be the corresponding minibatch. We write \(b_t=|J_t|\) for the minibatch size.
The minibatch stochastic gradient is
\begin{equation}
    G_t
    =
    g(W_t,B_t)
    =
    \frac{1}{b_t}
    \sum_{i\in J_t}
    g(W_t,Z_i).
    \label{eq:sgd_gradient}
\end{equation}
The SGD recursion is
\begin{equation}
    W_{t+1}
    =
    W_t
    -
    \eta_tG_t,
    \qquad
    t=1,\ldots,T-1,
    \label{eq:sgd_dynamics}
\end{equation}
where \(\eta_t>0\) is the learning rate. The final output of the algorithm is
\(W_T\). Unless otherwise stated, the learning-rate schedule and the minibatch-index
selection mechanism are chosen independently of the training sample values. The
index sets \(J_t\) may be deterministic, sampled with replacement, sampled
without replacement, or generated according to any data-value-independent rule.
The realized minibatches \(B_t=Z_{J_t}\) and gradients \(G_t\) are, of course,
data-dependent. For later use, we define the real SGD history available immediately before
update \(t\) as
\begin{equation}
    \mathcal H_{t-1}
    =
    \sigma
    \left(
        W_1,\ldots,W_t,\,
        J_1,\ldots,J_{t-1},\,
        B_1,\ldots,B_{t-1},\,
        G_1,\ldots,G_{t-1}
    \right),
    \qquad
    t=1,\ldots,T-1.
    \label{eq:real_sgd_history_prelim}
\end{equation}
The sigma-field \(\mathcal H_{t-1}\) contains the past SGD trajectory and past
observed optimization information, but not the current perturbation, current
unrevealed randomness, current stochastic gradient, or future information.
Importantly, \(\mathcal H_{t-1}\) does not include virtual perturbations. This
distinction will be essential when defining predictable history-adaptive
covariance processes.

\begin{remark}[Gradient-deviation versus variance]
\label{rem:prelim_gradient_deviation_vs_variance}
Throughout the paper, quantities of the form
\[
    \E
    \left[
        \normSigma{
            G_t-\bar g(W_t)
        }{M}^2
        \given
        \mathcal H_{t-1}
    \right]
\]
are interpreted as conditional mean-square gradient-deviation terms from the
population gradient. They reduce to conditional variance terms only when
\[
    \E[G_t\mid \mathcal H_{t-1}]
    =
    \bar g(W_t).
\]
This distinction is important for finite-sample sampling schemes and
history-dependent conditioning, where the current stochastic gradient may not be
conditionally unbiased for the population gradient.
\end{remark}

\subsection{Fixed Virtual Perturbation Path}
\label{subsec:fixed_virtual_perturbation_path}

The fixed virtual perturbation construction introduces an auxiliary noisy
trajectory coupled to the original SGD path. For a deterministic sequence of
positive definite covariance matrices
\begin{equation}
    \Sigma_1,\ldots,\Sigma_{T-1}\in\R^{d\times d},
    \qquad
    \Sigma_t\succ0,
\end{equation}
let
\begin{equation}
    \eps_t\sim\N(0,\Sigma_t),
    \qquad
    t=1,\ldots,T-1,
\end{equation}
be independent Gaussian perturbations, independent of \(S\), the
initialization, and the minibatch-index randomness. The fixed virtual
perturbation path is defined recursively by
\begin{equation}
    \wtW_{t+1}
    =
    \wtW_t
    -
    \eta_tG_t
    +
    \eps_t,
    \qquad
    t=1,\ldots,T-1,
    \label{eq:fixed_virtual_path}
\end{equation}
with \(\wtW_1=W_1\).

The crucial feature of \eqref{eq:fixed_virtual_path} is that \(G_t\) is the
same stochastic gradient used by the original SGD update
\eqref{eq:sgd_dynamics}. In particular, \(G_t\) is evaluated at the true iterate
\(W_t\), not at the virtual iterate \(\wtW_t\). Thus, the virtual perturbation
path does not define a new optimization algorithm; it is an analytical
construction used to control information-theoretic quantities. Let the accumulated perturbation before iterate \(t\) be
\begin{equation}
    \xi_t
    =
    \sum_{k=1}^{t-1}
    \eps_k,
    \qquad
    \xi_1=0.
    \label{eq:accumulated_noise}
\end{equation}
By comparing \eqref{eq:sgd_dynamics} and \eqref{eq:fixed_virtual_path}, we
obtain
\begin{equation}
    \wtW_t
    =
    W_t+\xi_t,
    \qquad
    t=1,\ldots,T.
    \label{eq:virtual_equals_sgd_plus_noise}
\end{equation}
Since the perturbations are independent Gaussian random variables, the
accumulated perturbation satisfies
\begin{equation}
    \xi_t
    \sim
    \N(0,\Sigma_{1:t}),
    \qquad
    \Sigma_{1:t}
    =
    \sum_{k=1}^{t-1}
    \Sigma_k,
    \label{eq:fixed_accumulated_covariance}
\end{equation}
with \(\Sigma_{1:1}=0\). In the isotropic special case
\(\Sigma_t=\sigma_t^2I\), this becomes
\begin{equation}
    \Sigma_{1:t}
    =
    \sigma_{1:t}^2I,
    \qquad
    \sigma_{1:t}^2
    =
    \sum_{k=1}^{t-1}
    \sigma_k^2.
\end{equation}

This fixed perturbation construction is the baseline used in existing virtual
perturbation analyses of SGD \citep{neu2021information}. The main goal of the
present paper is to extend this construction by replacing the deterministic
covariance sequence \(\{\Sigma_t\}\) with a predictable covariance process of
the form
\begin{equation}
    \Sigma_t
    =
    \Phi_t(\mathcal H_{t-1}),
    \qquad
    \Sigma_t
    \text{ is }
    \mathcal H_{t-1}\text{-measurable}.
    \label{eq:prelim_predictable_preview}
\end{equation}
This replacement changes only the analytical perturbation process; the true SGD
recursion \eqref{eq:sgd_dynamics} remains unchanged.

Because a history-adaptive covariance is generally data-dependent through the
real SGD trajectory, the later mutual-information comparison must also account
for the reference perturbation geometry. Deterministic covariances are
automatically admissibly synchronized. In the general history-adaptive setting,
the clean fixed-noise-style bound is recovered only when the reference kernel
has an admissible synchronization certificate. Otherwise, one must use an
admissible reference covariance and retain the explicit covariance-comparison
cost in the main theorem.

\section{Limitation of Fixed Virtual Perturbations}
\label{sec:fixed-limitation}

Virtual perturbation analysis provides a way to study the generalization
behavior of SGD without injecting noise into the actual training dynamics. The
perturbations are introduced only in the proof, and the resulting noisy shadow
trajectory is used to control information-theoretic quantities. Existing
analyses typically rely on perturbation schedules that are fixed before the
optimization trajectory is observed. This section explains why such fixed
perturbation geometries are analytically convenient but restrictive, and why
predictable history-adaptive perturbations provide a natural relaxation.

We also clarify a key technical point. Predictability is sufficient for the
local conditional Gaussian smoothing step, but it is not by itself sufficient
for the full mutual-information comparison. The latter also requires an
admissible reference-kernel construction. When the actual and reference
covariances differ, the bound must include an explicit covariance-comparison
cost.

\subsection{Fixed Covariance Assumption}
\label{subsec:fixed_covariance_assumption}

In the standard virtual perturbation framework, the auxiliary noise variables
are drawn from a predetermined Gaussian sequence. Following the indexing
convention of this paper, \(W_1\) is the initial iterate, \(W_T\) is the final
output, and the updates are indexed by \(t=1,\ldots,T-1\). One fixes positive
definite covariance matrices
\begin{equation}
    \Sigma_1,\ldots,\Sigma_{T-1}\in\R^{d\times d},
    \qquad
    \Sigma_t\succ0,
\end{equation}
or, in the isotropic case, scalar perturbation variances
\begin{equation}
    \Sigma_t=\sigma_t^2 I.
\end{equation}
The virtual perturbations then satisfy
\begin{equation}
    \eps_t\sim\N(0,\Sigma_t),
    \qquad
    t=1,\ldots,T-1,
    \label{eq:fixed_noise_assumption}
\end{equation}
with the covariance sequence chosen independently of the optimization history.

Under this assumption, the virtual path is defined by
\begin{equation}
    \wtW_{t+1}
    =
    \wtW_t
    -
    \eta_tG_t
    +
    \eps_t,
    \qquad
    G_t=g(W_t,B_t),
    \label{eq:fixed_virtual_again}
\end{equation}
where \(W_t\) denotes the original SGD iterate and \(B_t\) is the minibatch at
time \(t\). The gradient is evaluated at \(W_t\), not at \(\wtW_t\), so the
perturbations do not modify the actual SGD algorithm. They only generate an
analytical shadow trajectory.

The fixed covariance assumption makes the accumulated perturbation particularly
simple. Defining
\begin{equation}
    \xi_t
    =
    \sum_{k=1}^{t-1}
    \eps_k,
    \qquad
    \xi_1=0,
    \qquad
    \wtW_t=W_t+\xi_t,
\end{equation}
we have
\begin{equation}
    \xi_t\sim\N(0,\Sigma_{1:t}),
    \qquad
    \Sigma_{1:t}
    =
    \sum_{k=1}^{t-1}\Sigma_k,
    \qquad
    \Sigma_{1:1}=0.
    \label{eq:fixed_accumulated_cov_again}
\end{equation}
The accumulated covariance \(\Sigma_{1:t}\) is deterministic. This is one
reason the fixed-noise analysis is technically clean: the perturbation law is
independent of the data and of the realized SGD trajectory, which simplifies
relative-entropy decompositions and mutual-information bounds
\citep{neu2021information}.

However, this simplicity comes from a strong restriction. The analytical
perturbation geometry must be specified without observing the path whose local
geometry it is intended to probe. The rest of this section explains why this
restriction becomes limiting when the goal is to capture history-dependent or
adaptive perturbation geometries.

\subsection{Why Fixed Perturbation Geometry Is Restrictive}
\label{subsec:why_fixed_geometry_restrictive}

The geometry of the SGD trajectory is rarely stationary over training. At early
stages, gradients may be large, noisy, and poorly aligned across minibatches. In
the middle of training, local curvature, gradient-deviation behavior,
stochastic fluctuation, and coordinate-wise scales can change as the trajectory
moves across different regions of the loss landscape. Near the end of training,
the sensitivity of the loss to parameter perturbations, often interpreted
through flatness or sharpness, can become a dominant feature of the
generalization behavior.

A fixed perturbation covariance cannot adapt to these changes. For example, an
isotropic covariance \(\sigma_t^2I\) treats all directions in parameter space
equally, even though the local loss geometry may be highly anisotropic. A
deterministic matrix-valued sequence \(\Sigma_t\) can encode a preferred
geometry, but only one selected before the optimization path is observed. Such
a sequence cannot respond to realized gradient norms, minibatch fluctuation,
gradient-deviation proxies, curvature proxies, or coordinate-wise scale changes
along the actual trajectory.

This restriction matters because virtual perturbation bounds involve a trade-off
between two effects. Larger perturbations can reduce inverse-covariance weighted
information terms by smoothing the conditional distributions of the perturbed
iterates. At the same time, larger accumulated perturbations can increase the
final output-sensitivity penalty, since the perturbed output may incur a
different loss than the original SGD output. The favorable balance between these
two effects is generally path-dependent. A covariance that is appropriate in a
noisy, high-gradient region of training may be inappropriate near a sharp or
flat final region, and vice versa.

The point is not that the SGD algorithm itself should be changed. Rather, the
analytical perturbations used to study SGD should be allowed to reflect the
local and historical structure of the trajectory being analyzed. A more flexible
theory should allow the virtual perturbation geometry to depend on past
optimization information while preserving enough conditional structure for
information-theoretic analysis.

\subsection{Long-Term Dependence and Data-Dependent Geometry}
\label{subsec:long_term_dependence_data_dependent_geometry}

The natural mathematical object for representing the information available
before update \(t\) is the real SGD history
\begin{equation}
    \mathcal H_{t-1}
    =
    \sigma
    \left(
        W_1,\ldots,W_t,\,
        J_1,\ldots,J_{t-1},\,
        B_1,\ldots,B_{t-1},\,
        G_1,\ldots,G_{t-1}
    \right),
    \label{eq:natural_history_limitation}
\end{equation}
where
\begin{equation}
    G_s=g(W_s,B_s)
\end{equation}
is the stochastic gradient at time \(s\). This history contains the past SGD
trajectory and past observed optimization information, but it does not contain
the virtual perturbations. Quantities such as moving averages of gradient
norms, subbatch fluctuation diagnostics, gradient-deviation proxies, diagonal
scale estimates, curvature proxies, and preconditioner-like statistics are
naturally functions of \(\mathcal H_{t-1}\).

Consequently, a natural history-adaptive virtual perturbation covariance would
have the form
\begin{equation}
    \Sigma_t
    =
    \Phi_t(\mathcal H_{t-1}),
    \label{eq:adaptive_cov_natural}
\end{equation}
where \(\Phi_t\) is a measurable covariance-selection rule. This formulation
allows the analytical noise geometry at time \(t\) to reflect information
gathered along the previous SGD path.

The difficulty is that \(\Sigma_t\) is then data-dependent. Indeed, the iterates
\(W_1,\ldots,W_t\), past minibatches, and past gradients
\(G_1,\ldots,G_{t-1}\) are functions of the training sample \(S\), the
minibatch-index schedule, and the initialization. Thus, the perturbation
covariance in \eqref{eq:adaptive_cov_natural} is no longer independent of the
training data or of the optimization trajectory. This dependence prevents the fixed-noise proof from being applied directly. In
the deterministic covariance case, the conditional distribution of each
perturbation is known in advance. When the covariance is history-dependent, the
perturbation law itself becomes random and path-dependent. The accumulated
covariance
\begin{equation}
    \Sigma_{1:t}
    =
    \sum_{k=1}^{t-1}\Sigma_k
\end{equation}
is no longer deterministic; it is a random object determined by the past
trajectory. As a result, the accumulated perturbation \(\xi_t\) is no longer
governed by a fixed Gaussian law independent of the optimization history.

The central issue, however, is not adaptivity itself. The central issue is
whether the adaptivity is controlled in a way that preserves the conditional
Gaussian structure needed for the local relative-entropy step, and whether the
resulting data-dependent perturbation geometry can be compared to a valid
reference kernel in the mutual-information decomposition. This observation
motivates the predictability condition introduced next.

\subsection{Predictability as a Causal Relaxation}
\label{subsec:predictability_causal_relaxation}

Full independence of the perturbation covariance from the optimization history
is stronger than necessary for the one-step Gaussian smoothing argument. What is
needed locally is that \(\Sigma_t\) be known before the current virtual
perturbation is drawn. This is precisely the notion of predictability. We
require that
\begin{equation}
    \Sigma_t
    \text{ is }
    \mathcal H_{t-1}\text{-measurable}.
    \label{eq:predictability_condition_limitation}
\end{equation}
Under this condition, \(\Sigma_t\) may depend on past iterates, past
minibatches, past stochastic gradients, and other statistics computed from the
previous optimization history. However, it may not depend on the current
perturbation \(\eps_t\), the current unrevealed minibatch randomness, the current
stochastic gradient \(G_t\), or future information. With \eqref{eq:predictability_condition_limitation}, the virtual perturbation
at time \(t\) satisfies the conditional Gaussian relation
\begin{equation}
    \eps_t \given \mathcal H_{t-1}
    \sim
    \N(0,\Sigma_t).
    \label{eq:conditional_gaussian_predictable}
\end{equation}
After conditioning on the relevant past, the covariance is fixed, so the
conditional Gaussian relative-entropy argument can be applied at the current
step. This is the main technical replacement for the fixed-noise assumption in
the local smoothing part of the proof. There is, however, an additional issue at the level of the full
mutual-information comparison. Because \(\Sigma_t\) may depend on the data
through \(\mathcal H_{t-1}\), a reference kernel in the chain-rule decomposition
cannot automatically be assumed to use the same covariance without
justification. In the notation used later, the reference kernel may use a
reference covariance \(\Sigma_t^{\mathrm{ref}}\) that is admissible with respect
to the reference-visible information, but this covariance need not equal the
actual adaptive covariance \(\Sigma_t\). If the actual and reference virtual
perturbation covariances differ, the Gaussian comparison produces an explicit
covariance-comparison, or covariance-mismatch, term.

Thus, the clean fixed-noise-style bound is recovered only when the synchronized
reference covariance is admissible, for example in deterministic,
public-random, prefix-observable, or otherwise certified constructions. Without
such a certificate, the general history-adaptive theorem must retain the
covariance-comparison cost.

The resulting framework keeps the original SGD trajectory unchanged:
\begin{equation}
    W_{t+1}
    =
    W_t-\eta_tG_t,
    \qquad
    G_t=g(W_t,B_t).
\end{equation}
Only the virtual perturbation process is generalized. Thus, the contribution is
analytical rather than algorithmic: we do not propose a new optimizer, but
rather a more flexible proof framework for studying the generalization of SGD
under history-adaptive virtual perturbation geometry.

The following sections formalize this predictable perturbation framework, define
the corresponding adaptive gradient-deviation, gradient-sensitivity,
reference-geometry, and output-sensitivity terms, and prove the resulting
information-theoretic generalization bounds. The fixed covariance setting is
recovered as the special case in which every \(\Sigma_t\) is deterministic and
hence automatically \(\mathcal H_{t-1}\)-measurable. In the fully
history-adaptive setting, the main theorem additionally tracks the
covariance-comparison cost required by the reference-kernel argument.


\section{Predictable History-Adaptive Virtual Perturbations}
\label{sec:predictable-perturbations}

We now introduce the main analytical object of the paper: predictable
history-adaptive virtual perturbations. The goal is to generalize fixed virtual
perturbation schedules by allowing the covariance of the analytical noise to
depend on the past real SGD trajectory. The true SGD dynamics remain unchanged.
Only the virtual perturbation process used in the proof is made adaptive. A technical point is important at the outset. The covariance process should be
predictable with respect to the real SGD history, not with respect to a
filtration that already contains the realized virtual perturbations. If one
conditions on a sigma-field containing \(\eps_1,\ldots,\eps_{t-1}\), then the
accumulated perturbation
\[
    \xi_t=\sum_{k=1}^{t-1}\eps_k
\]
is already known, and its conditional covariance is zero. For this reason, we
distinguish between the real SGD history and the augmented history that also
contains the virtual perturbations.

\subsection{SGD History and Augmented Filtrations}
\label{subsec:sgd_history_augmented_filtrations}

Recall that the true SGD recursion is
\begin{equation}
    W_{t+1}
    =
    W_t-\eta_tG_t,
    \qquad
    G_t=g(W_t,B_t),
    \qquad
    t=1,\ldots,T-1,
    \label{eq:section5_sgd}
\end{equation}
where \(W_1\) is the initial iterate and \(W_T\) is the final output after
\(T-1\) updates. The virtual perturbations introduced below are auxiliary
random variables and do not influence the true iterates \(W_t\). For \(r=0,\ldots,T-1\), define the real SGD history after \(r\) updates by
\begin{equation}
    \cH_r
    =
    \sigma
    \left(
        W_1,\ldots,W_{r+1},\,
        J_1,\ldots,J_r,\,
        B_1,\ldots,B_r,\,
        G_1,\ldots,G_r
    \right).
    \label{eq:sgd_history}
\end{equation}
Thus, immediately before update \(t\), the available real history is
\(\cH_{t-1}\). This sigma-field contains the past optimization trajectory, past
minibatch index sets, past realized minibatches, and past stochastic gradients.
It does not contain the current minibatch index \(J_t\), the current minibatch
\(B_t\), the current stochastic gradient \(G_t\), the current virtual
perturbation \(\eps_t\), or future randomness. Most importantly,
\(\cH_{t-1}\) does not contain any virtual perturbations. It is also useful to define the augmented history that includes the previously
realized virtual perturbations. For \(r=0,\ldots,T-1\), let
\begin{equation}
    \cG_r
    =
    \cH_r
    \vee
    \sigma(\eps_1,\ldots,\eps_r),
    \label{eq:augmented_history}
\end{equation}
with the convention that \(\sigma(\eps_1,\ldots,\eps_0)\) is the trivial
sigma-field. Thus, before update \(t\), the augmented history is
\(\cG_{t-1}\). The distinction between \(\cH_{t-1}\) and \(\cG_{t-1}\) is essential. The real
history \(\cH_{t-1}\) is used to define predictable covariance selection. The
augmented history \(\cG_{t-1}\) describes the realized virtual path. Since
\(\cG_{t-1}\) contains \(\eps_1,\ldots,\eps_{t-1}\), it also contains
\(\xi_t\). Therefore, the accumulated perturbation is measurable with respect to
\(\cG_{t-1}\), but not with respect to \(\cH_{t-1}\). This separation reflects the role of virtual perturbations in the analysis. The
true SGD path is generated without using the analytical perturbations. The
perturbations create a noisy shadow path, but they do not feed back into the
optimization dynamics.

\subsection{Predictable Covariance Processes}
\label{subsec:predictable_covariance_processes}

We now define the class of adaptive perturbation covariances considered in this
work.

\paragraph{Definition.}
A sequence of random positive definite matrices
\[
    \{\Sigma_t\}_{t=1}^{T-1}
\]
is called a predictable history-adaptive covariance process if
\begin{equation}
    \Sigma_t\succ0,
    \qquad
    \Sigma_t
    \text{ is }
    \cH_{t-1}\text{-measurable},
    \qquad
    t=1,\ldots,T-1.
    \label{eq:predictable_covariance_def}
\end{equation}

Thus, \(\Sigma_t\) may depend on the past iterates
\(W_1,\ldots,W_t\), past minibatch index sets \(J_1,\ldots,J_{t-1}\), past
realized minibatches \(B_1,\ldots,B_{t-1}\), past stochastic gradients
\(G_1,\ldots,G_{t-1}\), or any statistic computed from them. For example,
\(\Sigma_t\) may be chosen using moving averages of gradient norms, subbatch
fluctuation diagnostics, gradient-deviation proxies, diagonal scale estimates,
curvature proxies, or preconditioner-like quantities computed from the past
trajectory.

The covariance \(\Sigma_t\) may not depend on the current perturbation
\(\eps_t\), the current minibatch index \(J_t\), the current minibatch \(B_t\),
the current stochastic gradient \(G_t\), or future information. This restriction
ensures that, at the moment \(\eps_t\) is drawn, the covariance matrix is
already determined by the past. In this sense, predictability is the causal
relaxation of the stronger fixed-covariance assumption used in classical
virtual perturbation analysis. For technical convenience, the main theorem assumes nondegenerate
perturbations. One may impose, for example,
\begin{equation}
    \lambda_{\min}(\Sigma_t)
    \geq
    \lambda_0
    >
    0
    \qquad
    \text{almost surely for all }t.
    \label{eq:uniform_nondegeneracy}
\end{equation}
This condition guarantees that \(\Sigma_t^{-1}\) is well-defined and avoids
singular perturbation laws. Singular or low-rank perturbations require either
regularization or pseudo-inverse formulations together with support conditions;
these variants are not part of the basic theorem.

Deterministic fixed covariance schedules are recovered as a special case.
Indeed, if each \(\Sigma_t\) is deterministic, then it is automatically
\(\cH_{t-1}\)-measurable. Thus, the predictable framework extends fixed
isotropic and fixed geometry-aware perturbation settings.

\subsection{Conditional Construction of the Virtual Perturbations}
\label{subsec:conditional_virtual_perturbation_construction}

Given a predictable covariance process
\[
    \{\Sigma_t\}_{t=1}^{T-1},
\]
we construct the virtual perturbations conditionally on the real SGD trajectory.
Equivalently, after conditioning on the final real history \(\cH_{T-1}\), the
perturbations
\[
    \eps_1,\ldots,\eps_{T-1}
\]
are conditionally independent centered Gaussian random vectors with
\begin{equation}
    \eps_t\given \cH_{T-1}
    \sim
    \N(0,\Sigma_t),
    \qquad
    t=1,\ldots,T-1.
    \label{eq:conditional_noise_full_history}
\end{equation}
Because \(\Sigma_t\) is \(\cH_{t-1}\)-measurable and
\(\cH_{t-1}\subseteq\cH_{T-1}\), this conditional construction is well-defined.

Sequentially, this implies
\begin{equation}
    \eps_t \given \cG_{t-1}
    \sim
    \N(0,\Sigma_t),
    \qquad
    t=1,\ldots,T-1.
    \label{eq:conditional_noise_sequential}
\end{equation}
Moreover, \(\eps_t\) is conditionally independent of the current minibatch-index
randomness, the current minibatch, the current stochastic gradient, and all
future real-SGD randomness, given the appropriate past history. This conditional
independence is used later when applying the conditional Gaussian
relative-entropy lemmas. The adaptive virtual perturbation path is defined by
\begin{equation}
    \wtW_{t+1}
    =
    \wtW_t
    -
    \eta_tG_t
    +
    \eps_t,
    \qquad
    t=1,\ldots,T-1,
    \label{eq:adaptive_virtual_path}
\end{equation}
with \(\wtW_1=W_1\). As in fixed virtual perturbation analysis
\citep{neu2021information}, the stochastic gradient \(G_t\) in
\eqref{eq:adaptive_virtual_path} is the gradient computed along the true SGD
trajectory:
\begin{equation}
    G_t=g(W_t,B_t).
\end{equation}
It is not computed at the perturbed iterate \(\wtW_t\). Therefore,
\eqref{eq:adaptive_virtual_path} does not define a new optimization algorithm.
It is an analytical construction coupled to the original SGD path.

\subsection{Accumulated Adaptive Covariance}
\label{subsec:accumulated_adaptive_covariance}

For \(t=1,\ldots,T\), define the accumulated perturbation before iterate \(t\) as
\begin{equation}
    \xi_t
    =
    \sum_{k=1}^{t-1}\eps_k,
    \qquad
    \xi_1=0.
    \label{eq:adaptive_accumulated_noise}
\end{equation}
Comparing the true recursion \eqref{eq:section5_sgd} with the virtual recursion
\eqref{eq:adaptive_virtual_path}, we obtain
\begin{equation}
    \wtW_t
    =
    W_t+\xi_t,
    \qquad
    t=1,\ldots,T.
    \label{eq:adaptive_virtual_equals_true_plus_noise}
\end{equation}
Thus, the virtual path remains a perturbation of the true SGD path, but the
perturbation covariance is now selected adaptively from the past real trajectory. The accumulated adaptive covariance before iterate \(t\) is
\begin{equation}
    \Sigma_{1:t}
    =
    \sum_{k=1}^{t-1}
    \Sigma_k,
    \qquad
    \Sigma_{1:1}=0.
    \label{eq:adaptive_accumulated_covariance}
\end{equation}
Unlike in the fixed-noise setting, \(\Sigma_{1:t}\) is generally random. It
depends on the realized SGD trajectory through the predictable covariance
process. Nevertheless, \(\Sigma_{1:t}\) is measurable with respect to the real SGD
history \(\cH_{t-1}\). Indeed, for each \(k<t\), predictability gives that
\(\Sigma_k\) is \(\cH_{k-1}\)-measurable. Since the histories are increasing,
\begin{equation}
    \cH_{k-1}
    \subseteq
    \cH_{t-1},
    \qquad
    k<t,
\end{equation}
each \(\Sigma_k\), and hence their sum \(\Sigma_{1:t}\), is
\(\cH_{t-1}\)-measurable. The matrix \(\Sigma_{1:t}\) is the adaptive analogue of the deterministic
accumulated covariance appearing in fixed perturbation analysis. It represents
the total virtual noise geometry accumulated along the trajectory before
iterate \(t\). Because it is history-dependent, later gradient-deviation and
sensitivity quantities will be defined conditionally with respect to the real
SGD history.

\subsection{Conditional Perturbation Structure}
\label{subsec:conditional_perturbation_structure}

The key conditional distributional fact is that the accumulated perturbation
remains Gaussian when conditioned on the real SGD history. Under the conditional
construction above, for \(t=1,\ldots,T\),
\begin{equation}
    \xi_t \given \cH_{t-1}
    \sim
    \N(0,\Sigma_{1:t}),
    \label{eq:conditional_accumulated_noise}
\end{equation}
where for \(t=T\), \(\cH_{T-1}\) is the final real history after all \(T-1\)
updates. Consequently,
\begin{equation}
    \E[\xi_t\given \cH_{t-1}]
    =
    0,
    \qquad
    \Cov(\xi_t\given \cH_{t-1})
    =
    \Sigma_{1:t}.
    \label{eq:conditional_mean_covariance}
\end{equation}

The conditioning in \eqref{eq:conditional_accumulated_noise} is crucial. If
instead one conditions on the augmented history \(\cG_{t-1}\), then the past
perturbations \(\eps_1,\ldots,\eps_{t-1}\) are already known. Therefore
\(\xi_t\) is \(\cG_{t-1}\)-measurable and
\begin{equation}
    \Cov(\xi_t\given \cG_{t-1})
    =
    0.
    \label{eq:zero_cov_augmented_history}
\end{equation}
Thus, statements about the covariance of the accumulated perturbation must be
made relative to the real SGD history \(\cH_{t-1}\), not the augmented history
containing the realized perturbations. This distinction prevents a common filtration mistake in adaptive perturbation
analysis. The covariance process is predictable with respect to
\(\cH_{t-1}\), while the realized virtual trajectory is described by
\(\cG_{t-1}\). The former is used to define the conditional perturbation law;
the latter describes the realized noisy path used in mutual-information
decompositions.

Finally, this section only defines the adaptive perturbation process. Because
\(\Sigma_t\) is generally data-dependent through the real SGD history, the later
mutual-information proof must compare the actual virtual kernel with a valid
reference kernel. The reference kernel may use an admissible reference
covariance \(\Sigma_t^{\mathrm{ref}}\), which need not equal the actual
covariance \(\Sigma_t\). When an admissible synchronization certificate permits
\(\Sigma_t^{\mathrm{ref}}=\Sigma_t\), the clean fixed-noise-style bound is
recovered. When the actual and reference covariances differ, an explicit
covariance-comparison term appears in the information bound. In the next section, we use the conditional perturbation structure developed
here to define adaptive analogues of local gradient-deviation,
gradient-sensitivity, reference-geometry, and output perturbation-sensitivity
quantities. These quantities replace the fixed covariance terms appearing in
classical virtual perturbation bounds and provide the ingredients for the main
generalization theorem.

\section{Adaptive Local Sensitivity and Gradient-Deviation Measures}
\label{sec:adaptive-quantities}

The main history-adaptive generalization bounds involve three types of local
quantities: stochastic gradient deviation, gradient sensitivity, and output
perturbation sensitivity. In fixed virtual perturbation analysis, these
quantities are evaluated under a deterministic perturbation geometry. In the
present setting, the perturbation covariance \(\Sigma_t\) is predictable and may
depend on the past real SGD history. Consequently, the relevant quantities are
naturally defined conditionally with respect to the real SGD history and, in the
general reference-geometry case, with respect to admissible reference-side
randomness. The definitions below are analytical objects used to state and prove the
generalization bounds. They do not modify the SGD update
\[
    W_{t+1}=W_t-\eta_tG_t.
\]
They only describe how the virtual perturbation geometry enters the
information-theoretic analysis.

\subsection{Adaptive Gradient-Deviation Term}
\label{subsec:adaptive-gradient-deviation}

Let
\begin{equation}
    \barg(w)
    =
    \E_{Z\sim\cD}
    \left[
        g(w,Z)
    \right]
    \label{eq:population_gradient}
\end{equation}
denote the population gradient at parameter \(w\). For update
\(t=1,\ldots,T-1\), the true SGD recursion uses the minibatch gradient
\begin{equation}
    G_t
    =
    g(W_t,B_t).
\end{equation}
The predictable covariance \(\Sigma_t\) determines the geometry in which the
current stochastic gradient is compared to the population gradient. We define
the adaptive local gradient-deviation term by
\begin{equation}
    V_t^{\mathrm{ad}}
    =
    \E
    \left[
        \normSigma{
            G_t-\barg(W_t)
        }{\Sigma_t^{-1}}^2
        \given
        \cH_{t-1}
    \right],
    \qquad
    t=1,\ldots,T-1.
    \label{eq:adaptive_gradient_variance}
\end{equation}
Here
\begin{equation}
    \normSigma{x}{\Sigma_t^{-1}}^2
    =
    x^\top \Sigma_t^{-1}x.
\end{equation}

Although we retain the notation \(V_t^{\mathrm{ad}}\) by analogy with
variance-type quantities in previous virtual perturbation bounds, the quantity
in \eqref{eq:adaptive_gradient_variance} is, in full generality, a conditional
mean-square deviation from the population gradient. It is a genuine conditional
variance only under a conditional unbiasedness condition. To see this, define the conditional mean gradient
\begin{equation}
    m_t
    =
    \E
    \left[
        G_t
        \given
        \cH_{t-1}
    \right].
    \label{eq:conditional_mean_gradient}
\end{equation}
Since \(\Sigma_t\) is \(\cH_{t-1}\)-measurable, the usual conditional
bias--variance decomposition gives
\begin{align}
    V_t^{\mathrm{ad}}
    &=
    \operatorname{Tr}
    \left(
        \Sigma_t^{-1}
        \operatorname{Cov}
        \left(
            G_t
            \given
            \cH_{t-1}
        \right)
    \right)
    +
    \normSigma{
        m_t-\barg(W_t)
    }{\Sigma_t^{-1}}^2.
    \label{eq:adaptive_gradient_deviation_decomposition}
\end{align}
Thus \(V_t^{\mathrm{ad}}\) contains two contributions. The first term is the
conditional stochastic fluctuation of the minibatch gradient around its
conditional mean. The second term is the squared conditional bias between the
conditional mean gradient and the population gradient.

If
\begin{equation}
    \E
    \left[
        G_t
        \given
        \cH_{t-1}
    \right]
    =
    \barg(W_t),
    \label{eq:conditional_unbiasedness_condition}
\end{equation}
then the bias term in
\eqref{eq:adaptive_gradient_deviation_decomposition} vanishes and
\(V_t^{\mathrm{ad}}\) reduces to the adaptive conditional gradient variance
\begin{equation}
    V_t^{\mathrm{ad}}
    =
    \operatorname{Tr}
    \left(
        \Sigma_t^{-1}
        \operatorname{Cov}
        \left(
            G_t
            \given
            \cH_{t-1}
        \right)
    \right).
    \label{eq:adaptive_gradient_variance_unbiased_case}
\end{equation}
Without \eqref{eq:conditional_unbiasedness_condition}, the more precise
interpretation is conditional mean-square gradient deviation. The expectation in \eqref{eq:adaptive_gradient_variance} is over the current
minibatch-index randomness, the current minibatch values not already revealed by
the history, and any other randomness not included in \(\cH_{t-1}\). This
distinction is important in history-dependent sampling schemes, finite-sample
minibatch sampling, sampling without replacement, adaptive sampling, or settings
where previous observations reveal information about the training sample. In the isotropic special case \(\Sigma_t=\sigma_t^2I\), the adaptive
gradient-deviation term becomes
\begin{equation}
    V_t^{\mathrm{ad}}
    =
    \sigma_t^{-2}
    \E
    \left[
        \norm{
            G_t-\barg(W_t)
        }^2
        \given
        \cH_{t-1}
    \right].
    \label{eq:isotropic_adaptive_variance}
\end{equation}
If the conditional unbiasedness condition
\eqref{eq:conditional_unbiasedness_condition} also holds, then
\eqref{eq:isotropic_adaptive_variance} is the isotropic conditional variance
scaled by \(\sigma_t^{-2}\).

If \(\Sigma_t\) is deterministic, then
\(V_t^{\mathrm{ad}}\) reduces to the corresponding history-conditioned
fixed-geometry mean-square gradient-deviation term. Under conditional
unbiasedness, this is the usual history-conditioned fixed-geometry variance
term. A further reduction to a \(W_t\)-conditioned fixed-noise expression
requires an additional condition ensuring that the conditional law of the
current stochastic gradient depends on the past only through \(W_t\).

\subsection{Adaptive Gradient Sensitivity}
\label{subsec:adaptive-gradient-sensitivity}

The second quantity measures how sensitive the population gradient is to
perturbing the current iterate by the accumulated virtual noise. Recall that
for \(t=1,\ldots,T\),
\begin{equation}
    \Sigma_{1:t}
    =
    \sum_{k=1}^{t-1}\Sigma_k,
    \qquad
    \Sigma_{1:1}=0.
    \label{eq:accumulated_covariance_adaptive_quantities}
\end{equation}
Because \(\Sigma_{1:t}\) is determined by the real SGD history up to time
\(t-1\), it is fixed after conditioning on \(\cH_{t-1}\).

For the synchronized actual-geometry definition below, let
\begin{equation}
    \zeta_t \given \cH_{t-1}
    \sim
    \N(0,\Sigma_{1:t})
    \label{eq:fresh_intermediate_perturbation}
\end{equation}
be a fresh conditional perturbation independent of the current minibatch
randomness given \(\cH_{t-1}\). Equivalently, because the synchronized
precision \(\Sigma_t^{-1}\) is \(\cH_{t-1}\)-measurable, one may use the
accumulated virtual perturbation \(\xi_t\) under its conditional law given
\(\cH_{t-1}\). The fresh-copy notation emphasizes that the sensitivity is a
local Gaussian average around \(W_t\) with covariance \(\Sigma_{1:t}\). We define the adaptive local gradient sensitivity by
\begin{equation}
    \Gamma_t^{\mathrm{ad}}
    =
    \E
    \left[
        \normSigma{
            \barg(W_t+\zeta_t)-\barg(W_t)
        }{\Sigma_t^{-1}}^2
        \given
        \cH_{t-1}
    \right],
    \qquad
    t=1,\ldots,T-1.
    \label{eq:adaptive_gradient_sensitivity}
\end{equation}

This definition mirrors the gradient-sensitivity term in fixed virtual
perturbation analysis, but with two adaptive features. First, the perturbation
around \(W_t\) has accumulated covariance \(\Sigma_{1:t}\), which is random and
history-dependent. Second, the discrepancy between population gradients is
measured in the current inverse covariance geometry \(\Sigma_t^{-1}\). Thus,
\(\Gamma_t^{\mathrm{ad}}\) captures how strongly the population gradient changes
under the accumulated adaptive perturbation, measured in the geometry of the
current virtual noise. If the covariance sequence is deterministic, then
\eqref{eq:adaptive_gradient_sensitivity} reduces after averaging over the SGD
trajectory to the fixed geometry-aware sensitivity
\begin{equation}
    \Gamma_{\Sigma_t,\Sigma_{1:t}}(W_t)
    =
    \E_{\zeta\sim\N(0,\Sigma_{1:t})}
    \left[
        \normSigma{
            \barg(W_t+\zeta)-\barg(W_t)
        }{\Sigma_t^{-1}}^2
    \right]
    \label{eq:fixed_geometry_sensitivity}
\end{equation}
provided the fixed-noise formulation is expressed as a function of \(W_t\).
The history-conditioned version is the more precise object when the sampling
mechanism or conditioning contains information beyond \(W_t\). In the isotropic case \(\Sigma_t=\sigma_t^2I\), this becomes
\begin{equation}
    \Gamma_t^{\mathrm{ad}}
    =
    \sigma_t^{-2}
    \E_{\zeta\sim\N(0,\sigma_{1:t}^2I)}
    \left[
        \norm{
            \barg(W_t+\zeta)-\barg(W_t)
        }^2
        \given
        \cH_{t-1}
    \right],
    \label{eq:isotropic_adaptive_sensitivity}
\end{equation}
where
\begin{equation}
    \sigma_{1:t}^2
    =
    \sum_{k=1}^{t-1}\sigma_k^2.
\end{equation}

\subsection{Reference-Geometry Gradient Deviation and Sensitivity}
\label{subsec:reference-geometry-deviation-sensitivity}

The clean admissibly synchronized covariance bound uses the current precision
matrix \(\Sigma_t^{-1}\) in \eqref{eq:adaptive_gradient_variance} and
\eqref{eq:adaptive_gradient_sensitivity}. For the general covariance-comparison
theorem, however, the actual virtual kernel and the reference virtual kernel
may use different covariances. If the reference covariance at time \(t\) is
denoted by \(\Sigma_t^{\mathrm{ref}}\), the Gaussian mean-comparison term is
naturally weighted by
\[
    (\Sigma_t^{\mathrm{ref}})^{-1},
\]
not necessarily by \(\Sigma_t^{-1}\). To support this general form, we define reference-geometry versions of the two
local quantities. Let
\begin{equation}
    \mathcal K_t
    =
    \cH_{t-1}\vee\mathcal U,
    \label{eq:reference_geometry_Kt}
\end{equation}
where \(\mathcal U\) denotes optional public, auxiliary, or ghost randomness
used by the reference construction and independent of the original sample
\(S\). If no such auxiliary randomness is used, \(\mathcal U\) is the trivial
sigma-field and \(\mathcal K_t=\cH_{t-1}\). Let \(\Lambda_t\succ0\) be the reference precision used in the one-step
comparison. In the general covariance-comparison theorem,
\begin{equation}
    \Lambda_t=(\Sigma_t^{\mathrm{ref}})^{-1}.
    \label{eq:reference_precision_def_adaptive_quantities}
\end{equation}
We define the reference-geometry gradient-deviation term by
\begin{equation}
    V_t^{\mathrm{ad}}(\Lambda_t)
    =
    \E
    \left[
        \normSigma{
            G_t-\barg(W_t)
        }{\Lambda_t}^2
        \given
        \mathcal K_t
    \right].
    \label{eq:adaptive_variance_precision}
\end{equation}
This quantity is a conditional mean-square deviation from the population
gradient in the reference geometry. It should be called a reference-geometry
variance only under the corresponding conditional unbiasedness condition. When \(\Lambda_t\) is \(\mathcal K_t\)-measurable, define
\begin{equation}
    m_t^{\mathcal K}
    =
    \E
    \left[
        G_t
        \given
        \mathcal K_t
    \right].
    \label{eq:reference_geometry_conditional_mean}
\end{equation}
Then
\begin{align}
    V_t^{\mathrm{ad}}(\Lambda_t)
    &=
    \operatorname{Tr}
    \left(
        \Lambda_t
        \operatorname{Cov}
        \left(
            G_t
            \given
            \mathcal K_t
        \right)
    \right)
    +
    \normSigma{
        m_t^{\mathcal K}-\barg(W_t)
    }{\Lambda_t}^2.
    \label{eq:reference_geometry_deviation_decomposition}
\end{align}
Thus, the reference-geometry term also consists of a conditional fluctuation
part and a conditional bias part. If
\[
    \E[G_t\mid\mathcal K_t]=\barg(W_t),
\]
then the second term vanishes and
\(V_t^{\mathrm{ad}}(\Lambda_t)\) becomes a genuine conditional variance in the
reference geometry. The reference-geometry sensitivity term is defined using the realized
accumulated virtual perturbation
\begin{equation}
    \xi_t=\wtW_t-W_t
\end{equation}
whenever \(\Lambda_t\) may depend on the virtual prefix. This avoids implicitly
assuming independence between the precision matrix and the perturbation being
measured. Define
\begin{equation}
    \Gamma_t^{\mathrm{ad}}(\Lambda_t)
    =
    \E
    \left[
        \normSigma{
            \barg(W_t+\xi_t)-\barg(W_t)
        }{\Lambda_t}^2
        \given
        \mathcal K_t
    \right].
    \label{eq:adaptive_sensitivity_precision}
\end{equation}
If \(\Lambda_t\) is \(\mathcal K_t\)-measurable, then the realized
\(\xi_t\) can equivalently be replaced by a fresh conditional Gaussian copy
\[
    \zeta_t\mid\cH_{t-1}\sim\N(0,\Sigma_{1:t}),
\]
giving
\begin{equation}
    \Gamma_t^{\mathrm{ad}}(\Lambda_t)
    =
    \E
    \left[
        \normSigma{
            \barg(W_t+\zeta_t)-\barg(W_t)
        }{\Lambda_t}^2
        \given
        \mathcal K_t
    \right].
    \label{eq:adaptive_sensitivity_precision_fresh_copy}
\end{equation}
Thus, the realized-perturbation definition
\eqref{eq:adaptive_sensitivity_precision} is the general definition, while the
fresh-copy expression \eqref{eq:adaptive_sensitivity_precision_fresh_copy} is a
convenient special case.

The synchronized case corresponds to
\begin{equation}
    \Lambda_t=\Sigma_t^{-1},
\end{equation}
so that
\begin{equation}
    V_t^{\mathrm{ad}}(\Sigma_t^{-1})=V_t^{\mathrm{ad}},
    \qquad
    \Gamma_t^{\mathrm{ad}}(\Sigma_t^{-1})=\Gamma_t^{\mathrm{ad}}.
    \label{eq:synchronized_deviation_sensitivity_identity}
\end{equation}
This identity is used in the clean synchronized corollary only when the
synchronization is admissible in the reference-kernel comparison. A useful simplification occurs when the actual and reference precisions are
comparable. If there exists \(\kappa\geq1\) such that
\begin{equation}
    (\Sigma_t^{\mathrm{ref}})^{-1}
    \preceq
    \kappa\,\Sigma_t^{-1},
    \label{eq:precision_comparability}
\end{equation}
then
\begin{equation}
    V_t^{\mathrm{ad}}\big((\Sigma_t^{\mathrm{ref}})^{-1}\big)
    \leq
    \kappa V_t^{\mathrm{ad}},
    \qquad
    \Gamma_t^{\mathrm{ad}}\big((\Sigma_t^{\mathrm{ref}})^{-1}\big)
    \leq
    \kappa\Gamma_t^{\mathrm{ad}}.
    \label{eq:precision_comparability_terms}
\end{equation}
This comparability condition is useful for stating readable versions of the
general covariance-comparison theorem.

\subsection{Adaptive Output Sensitivity}
\label{subsec:adaptive-output-sensitivity}

The mutual-information part of the proof controls the generalization error of a
perturbed output. To transfer this control back to the original SGD output, we
must account for the loss difference between \(W_T\) and a perturbation of
\(W_T\). In the adaptive setting, the accumulated covariance at the final output
is
\begin{equation}
    \Sigma_{1:T}
    =
    \sum_{k=1}^{T-1}\Sigma_k,
    \label{eq:final_accumulated_covariance}
\end{equation}
which is random and determined by the real SGD history \(\cH_{T-1}\). For any deterministic sample \(s=(z_1,\ldots,z_n)\), define the adaptive output
sensitivity by
\begin{equation}
    \Delta_{\Sigma_{1:T}}^{\mathrm{ad}}(W_T,s)
    =
    \E
    \left[
        L(W_T,s)
        -
        L(W_T+\zeta_T,s)
        \given
        \cH_{T-1}
    \right],
    \label{eq:adaptive_output_sensitivity}
\end{equation}
where
\begin{equation}
    \zeta_T \given \cH_{T-1}
    \sim
    \N(0,\Sigma_{1:T})
    \label{eq:fresh_final_perturbation}
\end{equation}
is an independent final perturbation drawn conditionally on the accumulated
adaptive covariance. The use of the fresh perturbation \(\zeta_T\) rather than the realized
accumulated perturbation \(\xi_T\) is deliberate. After conditioning on an
augmented history that contains the realized perturbations, \(\xi_T\) would be
fixed. The quantity in \eqref{eq:adaptive_output_sensitivity} instead measures
the local sensitivity of the loss around \(W_T\) under a fresh perturbation with
the same accumulated adaptive covariance. Thus, it is a conditional analogue of
the fixed-noise output sensitivity used in virtual perturbation analysis. The final perturbation-sensitivity penalty appearing in the main theorem is
\begin{equation}
    \mathcal R_{\Delta}^{\mathrm{ad}}
    =
    \left|
    \E
    \left[
        \Delta_{\Sigma_{1:T}}^{\mathrm{ad}}(W_T,S')
        -
        \Delta_{\Sigma_{1:T}}^{\mathrm{ad}}(W_T,S)
    \right]
    \right|.
    \label{eq:adaptive_final_penalty}
\end{equation}
Here \(S'\) denotes an independent ghost sample drawn from the same population
distribution as \(S\). This term quantifies the cost of transferring a
generalization guarantee from the perturbed output back to the original SGD
output. When the loss is insensitive to perturbations around \(W_T\) in the
accumulated adaptive covariance geometry, this term is small. Later smoothness
or local curvature assumptions can be used to control
\(\mathcal R_{\Delta}^{\mathrm{ad}}\) in terms of trace or
curvature-weighted quantities involving \(\Sigma_{1:T}\).

\subsection{Interpretation and Comparison with Fixed-Noise Quantities}
\label{subsec:adaptive-quantities-interpretation}

The adaptive quantities introduced above are analogues of the gradient
deviation, gradient-sensitivity, and output-sensitivity terms that appear in
fixed virtual perturbation bounds. The main difference is that the covariance
geometry is now random, predictable, and history-dependent.

\begin{center}
\begin{tabular}{lll}
\toprule
\textbf{Role} & \textbf{Fixed-noise quantity} & \textbf{Adaptive quantity} \\
\midrule
Gradient deviation
&
\(V_{t,\Sigma_t}^{\mathrm{hist}}\) or \(V_{t,\Sigma_t}(W_t)\)
&
\(V_t^{\mathrm{ad}}\)
\\[0.25em]
Gradient sensitivity
&
\(\Gamma_{\Sigma_t,\Sigma_{1:t}}^{\mathrm{hist}}\) or
\(\Gamma_{\Sigma_t,\Sigma_{1:t}}(W_t)\)
&
\(\Gamma_t^{\mathrm{ad}}\)
\\[0.25em]
Output sensitivity
&
\(\Delta_{\Sigma_{1:T}}(W_T,s)\)
&
\(\Delta_{\Sigma_{1:T}}^{\mathrm{ad}}(W_T,s)\)
\\[0.25em]
Reference-geometry deviation
&
not needed in the admissibly synchronized fixed-noise case
&
\(V_t^{\mathrm{ad}}(\Lambda_t)\)
\\[0.25em]
Reference-geometry sensitivity
&
not needed in the admissibly synchronized fixed-noise case
&
\(\Gamma_t^{\mathrm{ad}}(\Lambda_t)\)
\\
\bottomrule
\end{tabular}
\end{center}

The adaptive gradient-deviation term \(V_t^{\mathrm{ad}}\) measures the
conditional mean-square discrepancy between the current stochastic gradient and
the population gradient in the inverse covariance geometry chosen from the
past. Under conditional unbiasedness, it reduces to an adaptive conditional
gradient variance. Without conditional unbiasedness, it also includes the
squared conditional bias term shown in
\eqref{eq:adaptive_gradient_deviation_decomposition}.

The adaptive gradient sensitivity \(\Gamma_t^{\mathrm{ad}}\) measures how much
the population gradient changes under accumulated adaptive perturbations. The
adaptive output sensitivity
\(\Delta_{\Sigma_{1:T}}^{\mathrm{ad}}\) measures local flatness or perturbation
sensitivity of the final output under the accumulated adaptive covariance. The reference-geometry quantities
\(V_t^{\mathrm{ad}}(\Lambda_t)\) and
\(\Gamma_t^{\mathrm{ad}}(\Lambda_t)\) support the general
covariance-comparison theorem. They are not needed in the clean admissibly
synchronized covariance corollary, where \(\Lambda_t=\Sigma_t^{-1}\). When
actual and reference covariances differ, these weighted quantities and the
associated covariance-comparison term together describe the cost of the
one-step comparison. These are all analytical quantities used to state and prove generalization
bounds. They are not used to update the SGD iterates. The true SGD recursion
remains
\begin{equation}
    W_{t+1}=W_t-\eta_tG_t.
\end{equation}
When \(\Sigma_t\) is deterministic for every \(t\), the adaptive definitions
reduce to their fixed covariance counterparts in the history-conditioned sense.
Under additional sampling assumptions, these may be further expressed as
functions of \(W_t\) alone. When \(\Sigma_t=\sigma_t^2I\), the framework reduces
further to the scalar isotropic perturbation setting. Thus, the adaptive
quantities extend the fixed-noise quantities while preserving their conceptual
roles in the generalization analysis.

\section{Main Theoretical Results}
\label{sec:main_theoretical_results}

We now present the main information-theoretic generalization bound for SGD with
predictable virtual perturbations. The central distinction in this section is
between the covariance used by the actual virtual perturbation and the covariance
available to the reference kernel in the mutual-information decomposition. Let
\[
    S=(Z_1,\ldots,Z_n)
\]
be the training sample. For a loss function \(\ell(w,z)\), define the population
and empirical risks by
\[
    L(w)=\mathbb E_Z[\ell(w,Z)],
    \qquad
    L_S(w)=\frac1n\sum_{i=1}^n \ell(w,Z_i).
\]
We write
\[
    \bar g(w)=\nabla L(w)
\]
for the population gradient. The expected generalization error of an algorithmic
output \(W\) is
\[
    \operatorname{gen}(W,S)
    =
    \mathbb E\bigl[L(W)-L_S(W)\bigr].
\]

The real SGD recursion is
\begin{equation}
    W_{t+1}=W_t-\eta_tG_t,
    \qquad
    G_t=g(W_t,B_t),
    \qquad
    t=1,\ldots,T-1,
    \label{eq:real_sgd_update_main}
\end{equation}
where \(B_t\) denotes the minibatch randomness at time \(t\). The virtual
trajectory is
\begin{equation}
    \widetilde W_{t+1}
    =
    \widetilde W_t-\eta_tG_t+\varepsilon_t,
    \qquad
    \widetilde W_1=W_1,
    \label{eq:virtual_sgd_update_main}
\end{equation}
where
\[
    \varepsilon_t\mid\mathcal H_{t-1}
    \sim
    \mathcal N(0,\Sigma_t).
\]
Here \(\mathcal H_{t-1}\) denotes the real SGD history up to time \(t-1\),
excluding the virtual perturbation innovations, and
\[
    \Sigma_t\succ0
\]
is assumed to be \(\mathcal H_{t-1}\)-measurable. Thus \(\Sigma_t\) is
predictable: it may depend on the past real optimization history, but not on the
current minibatch randomness, the current Gaussian innovation, or future
randomness.

We define the accumulated virtual perturbation by
\begin{equation}
    \xi_t=\widetilde W_t-W_t.
    \label{eq:accumulated_virtual_perturbation}
\end{equation}
Since \(\xi_1=0\) and
\[
    \xi_{t+1}=\xi_t+\varepsilon_t,
\]
the perturbation at time \(t\) is generated by the previous virtual innovations.

For a positive semidefinite matrix \(M\), we use the notation
\[
    \|x\|_M^2=x^\top Mx.
\]

\subsection{Conditional Gaussian Relative-Entropy Bounds}
\label{subsec:conditional_gaussian_relative_entropy_bounds}

We first record the conditional Gaussian comparison inequalities used throughout
the proof. These are standard consequences of the formula for the relative
entropy between Gaussian measures and convexity of relative entropy. Throughout this subsection, all conditional distributions are understood as
regular conditional distributions, and all random variables are assumed to take
values in standard Borel spaces.

\begin{lemma}[Conditional Gaussian smoothing with common covariance]
\label{lem:conditional_gaussian_common_covariance}
Let \(\mathcal F\) be a sigma-field. Let \(X\) and \(Y\) be conditionally
square-integrable random vectors in \(\mathbb R^d\). Let
\[
    \varepsilon\mid\mathcal F\sim\mathcal N(0,\Sigma),
\]
where \(\Sigma\succ0\) is \(\mathcal F\)-measurable, and assume that
\(\varepsilon\) is conditionally independent of \((X,Y)\) given \(\mathcal F\).
Then
\begin{equation}
    D_{\mathrm{KL}}
    \left(
        P_{X+\varepsilon\mid\mathcal F}
        \,\middle\|\,
        P_{Y+\varepsilon\mid\mathcal F}
    \right)
    \leq
    \frac12
    \mathbb E
    \left[
        \|X-Y\|_{\Sigma^{-1}}^2
        \,\middle|\,
        \mathcal F
    \right].
    \label{eq:conditional_gaussian_common_covariance}
\end{equation}
\end{lemma}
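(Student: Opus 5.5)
The plan is to reduce the statement to the unconditional Gaussian formula by working with regular conditional distributions, and then to apply joint convexity of relative entropy. First, disintegrate with respect to \(\mathcal F\). Since all variables live in standard Borel spaces, for almost every \(\omega\) we fix a regular conditional law \(\mu_\omega\) of \((X,Y)\) given \(\mathcal F\). By conditional independence, the conditional law of \((X,Y,\varepsilon)\) is \(\mu_\omega\otimes\mathcal N(0,\Sigma(\omega))\), where \(\Sigma(\omega)\) is a fixed positive definite matrix because it is \(\mathcal F\)-measurable. The conditional laws in the statement can then be written as mixtures:
\[
    P_{X+\varepsilon\mid\mathcal F}(\omega)=\int \mathcal N(x,\Sigma(\omega))\,\mu_\omega(dx,dy),
    \qquad
    P_{Y+\varepsilon\mid\mathcal F}(\omega)=\int \mathcal N(y,\Sigma(\omega))\,\mu_\omega(dx,dy).
\]
The key point is that both mixtures are taken over the same coupling \(\mu_\omega\).

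Second, apply joint convexity of relative entropy. For mixtures indexed by a common mixing measure, one has
\[
    \KL\left(\int P_u\,\mu(du)\,\middle\|\,\int Q_u\,\mu(du)\right)\le\int \KL(P_u\|Q_u)\,\mu(du).
\]
One way to justify this is the data-processing inequality applied to the joint laws \(\mu(du)P_u\) and \(\mu(du)Q_u\), followed by the chain rule. Taking \(P_{(x,y)}=\mathcal N(x,\Sigma)\) and \(Q_{(x,y)}=\mathcal N(y,\Sigma)\), this bounds the left-hand side of \eqref{eq:conditional_gaussian_common_covariance} by \(\int \KL(\mathcal N(x,\Sigma)\|\mathcal N(y,\Sigma))\,\mu_\omega(dx,dy)\).

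Third, use the closed form for Gaussians with a common nondegenerate covariance:
\[
    \KL(\mathcal N(x,\Sigma)\|\mathcal N(y,\Sigma))=\tfrac12\|x-y\|_{\Sigma^{-1}}^2.
\]
Integrating this against \(\mu_\omega\) gives \(\frac12\mathbb E[\|X-Y\|_{\Sigma^{-1}}^2\mid\mathcal F](\omega)\). Since \(\Sigma\) is \(\mathcal F\)-measurable, it can be treated as a constant inside the conditional expectation, which is exactly the right-hand side. The bound may be infinite, in which case it holds trivially. Conditional square-integrability, together with \(\Sigma^{-1}\) being fixed given \(\mathcal F\), ensures that the right-hand side is almost surely finite whenever that is relevant.

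The main obstacle is measure-theoretic bookkeeping rather than the inequality itself. We must check that the mixture representations hold as versions of the regular conditional distributions. We must also check that \(\omega\mapsto\KL(\cdot\|\cdot)\) is \(\mathcal F\)-measurable, so that the inequality holds almost surely. To handle this, I would first prove the representation on product sets by evaluating conditional expectations of bounded test functions \(f(X+\varepsilon)\) through Fubini on the conditional product law. I would then invoke measurability of relative entropy between measurable kernels, using its Donsker--Varadhan variational form over a countable family of test functions.
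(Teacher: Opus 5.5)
Your proposal is correct and takes essentially the same route as the paper: condition on \(\mathcal F\), write both smoothed laws as mixtures of \(\mathcal N(x,\Sigma)\) and \(\mathcal N(y,\Sigma)\) over the actual conditional joint law of \((X,Y)\), apply joint convexity of relative entropy, and integrate the closed-form Gaussian divergence \(\tfrac12\|x-y\|_{\Sigma^{-1}}^2\). The extra bookkeeping you add fills in details the paper leaves implicit without changing the argument: deriving joint convexity from data processing plus the chain rule, verifying the mixture representation via test functions, and checking measurability of the conditional relative entropy.
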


\begin{proof}
Condition on \(\mathcal F\). For fixed \(x,y\), the Gaussian relative entropy is
\[
    D_{\mathrm{KL}}
    \left(
        \mathcal N(x,\Sigma)
        \,\middle\|\,
        \mathcal N(y,\Sigma)
    \right)
    =
    \frac12\|x-y\|_{\Sigma^{-1}}^2.
\]
The conditional distributions of \(X+\varepsilon\) and \(Y+\varepsilon\) are
mixtures of these Gaussian measures. Applying joint convexity of relative
entropy to any conditional coupling of \((X,Y)\) gives
\eqref{eq:conditional_gaussian_common_covariance}.
\end{proof}

\begin{lemma}[Conditional Gaussian comparison with covariance mismatch]
\label{lem:conditional_gaussian_covariance_mismatch}
Let \(\mathcal F\) be a sigma-field. Let \(X\) be a conditionally
square-integrable random vector in \(\mathbb R^d\). Let
\[
    \varepsilon\mid\mathcal F\sim\mathcal N(0,\Sigma),
\]
where \(\Sigma\succ0\) is \(\mathcal F\)-measurable, and assume that
\(\varepsilon\) is conditionally independent of \(X\) given \(\mathcal F\).
Let \(m\) and \(\Sigma^{\mathrm{ref}}\succ0\) be \(\mathcal F\)-measurable.
Then
\begin{align}
    &
    D_{\mathrm{KL}}
    \left(
        P_{X+\varepsilon\mid\mathcal F}
        \,\middle\|\,
        \mathcal N(m,\Sigma^{\mathrm{ref}})
    \right)
    \nonumber\\
    &\qquad\leq
    \frac12
    \mathbb E
    \left[
        \|X-m\|_{(\Sigma^{\mathrm{ref}})^{-1}}^2
        \,\middle|\,
        \mathcal F
    \right]
    \nonumber\\
    &\qquad\quad+
    \frac12
    \left[
        \operatorname{Tr}
        \left(
            (\Sigma^{\mathrm{ref}})^{-1}\Sigma
        \right)
        -d
        +
        \log
        \frac{\det\Sigma^{\mathrm{ref}}}{\det\Sigma}
    \right].
    \label{eq:conditional_gaussian_covariance_mismatch}
\end{align}
\end{lemma}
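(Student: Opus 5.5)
The plan is to condition on \(\mathcal F\) throughout, so that \(\Sigma\), \(\Sigma^{\mathrm{ref}}\) and \(m\) become fixed and \(\varepsilon\sim\mathcal N(0,\Sigma)\) is independent of \(X\), whose law is the regular conditional law \(P_{X\mid\mathcal F}\). Under this conditioning, the law of \(X+\varepsilon\) is the Gaussian mixture
\[
    P_{X+\varepsilon\mid\mathcal F}
    =
    \int \mathcal N(x,\Sigma)\,P_{X\mid\mathcal F}(dx).
\]
The reference measure \(\mathcal N(m,\Sigma^{\mathrm{ref}})\) does not depend on \(x\), so it can be written as the trivial mixture \(\int \mathcal N(m,\Sigma^{\mathrm{ref}})\,P_{X\mid\mathcal F}(dx)\) with the same mixing law. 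This mirrors the argument for Lemma~\ref{lem:conditional_gaussian_common_covariance}, except that the second argument of the relative entropy is now a single Gaussian and no coupling is needed.

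The first step is to apply joint convexity of relative entropy to these two mixtures. This gives
\[
    D_{\mathrm{KL}}\left(P_{X+\varepsilon\mid\mathcal F}\,\middle\|\,\mathcal N(m,\Sigma^{\mathrm{ref}})\right)
    \le
    \mathbb E\left[D_{\mathrm{KL}}\left(\mathcal N(X,\Sigma)\,\middle\|\,\mathcal N(m,\Sigma^{\mathrm{ref}})\right)\,\middle|\,\mathcal F\right].
\]
For mixtures over a general (possibly continuous) index, I would justify this via the chain rule, equivalently the data-processing inequality. Take the joint laws of \((X,X+\varepsilon)\) under the two models, where in the reference model \(X\) keeps its law and the output is drawn independently from \(\mathcal N(m,\Sigma^{\mathrm{ref}})\). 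Marginalizing onto the second coordinate cannot increase the relative entropy, and the joint relative entropy equals the averaged conditional one because the \(X\)-marginals agree.

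The second step is to substitute the closed-form relative entropy between Gaussians:
\[
    D_{\mathrm{KL}}(\mathcal N(x,\Sigma)\,\|\,\mathcal N(m,\Sigma^{\mathrm{ref}}))
    =
    \tfrac12\left[\|x-m\|^2_{(\Sigma^{\mathrm{ref}})^{-1}}+\operatorname{Tr}((\Sigma^{\mathrm{ref}})^{-1}\Sigma)-d+\log\tfrac{\det\Sigma^{\mathrm{ref}}}{\det\Sigma}\right].
\]
The trace and log-determinant terms are \(\mathcal F\)-measurable, so they pass through the conditional expectation unchanged. Only the quadratic term is averaged, which yields exactly \eqref{eq:conditional_gaussian_covariance_mismatch}. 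Conditional square-integrability of \(X\) ensures the right-hand side is finite, and \(\Sigma,\Sigma^{\mathrm{ref}}\succ0\) makes every term well defined.

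The main obstacle is measure-theoretic rather than computational. The fixed-\(\mathcal F\) argument has to be lifted to regular conditional distributions, and one must check that the resulting bound holds \(\mathcal F\)-almost surely with measurable dependence. Since all variables live in standard Borel spaces, regular conditional laws of \((X,\varepsilon)\) given \(\mathcal F\) exist. Conditional independence then makes the conditional law of \(X+\varepsilon\) the stated mixture almost surely. Finally, \(\mathcal F\)-measurability of \((m,\Sigma,\Sigma^{\mathrm{ref}})\) lets the pointwise inequality be applied realization by realization.
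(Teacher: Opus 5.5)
Your proposal is correct and follows essentially the same route as the paper: condition on \(\mathcal F\), write \(P_{X+\varepsilon\mid\mathcal F}\) as a mixture of \(\mathcal N(x,\Sigma)\) over the conditional law of \(X\), apply convexity of relative entropy in its first argument, and substitute the closed-form Gaussian KL, whose trace and log-determinant terms are \(\mathcal F\)-measurable. Your chain-rule/data-processing justification of convexity for a continuous mixing law, and your remarks on regular conditional laws, are simply a more careful version of the paper's one-line appeal to convexity.
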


\begin{proof}
Condition on \(\mathcal F\). For fixed \(x\),
\begin{align*}
    &
    D_{\mathrm{KL}}
    \left(
        \mathcal N(x,\Sigma)
        \,\middle\|\,
        \mathcal N(m,\Sigma^{\mathrm{ref}})
    \right)
    \\
    &\qquad=
    \frac12
    \|x-m\|_{(\Sigma^{\mathrm{ref}})^{-1}}^2
    +
    \frac12
    \left[
        \operatorname{Tr}
        \left(
            (\Sigma^{\mathrm{ref}})^{-1}\Sigma
        \right)
        -d
        +
        \log
        \frac{\det\Sigma^{\mathrm{ref}}}{\det\Sigma}
    \right].
\end{align*}
Since \(P_{X+\varepsilon\mid\mathcal F}\) is a mixture of
\(\mathcal N(x,\Sigma)\) over the conditional law of \(X\), convexity of
relative entropy in its first argument yields the claim.
\end{proof}

\subsection{Mutual-Information Decomposition}
\label{subsec:mutual_information_decomposition}

The information-theoretic generalization bound controls the expected
generalization error of a randomized output by its mutual information with the
sample. If \(\ell(w,Z)\) is \(R\)-sub-Gaussian in \(Z\) for every fixed \(w\),
then
\begin{equation}
    \left|
        \operatorname{gen}(\widetilde W_T,S)
    \right|
    \leq
    \sqrt{
        \frac{2R^2}{n}
        I(\widetilde W_T;S)
    }.
    \label{eq:standard_information_gen_bound}
\end{equation}

Let \(\mathcal U\) denote optional public, auxiliary, or ghost randomness used
by the reference construction. We assume that
\[
    \mathcal U
    \quad\text{is independent of}\quad
    S.
\]
If no such auxiliary randomness is used, \(\mathcal U\) is the trivial
sigma-field.

For each time \(t\), define the reference-visible sigma-field
\begin{equation}
    \mathcal F_t^Q
    =
    \sigma(\widetilde W_{1:t})\vee\mathcal U.
    \label{eq:reference_visible_sigma_field_main}
\end{equation}
The reference kernel at time \(t\) is allowed to be measurable with respect to
\(\mathcal F_t^Q\), but it is not allowed to condition directly on the sample
\(S\). Define also
\begin{equation}
    \mathcal A_t
    =
    \mathcal F_t^Q\vee\sigma(S).
    \label{eq:coarse_sigma_field_main}
\end{equation}

\begin{lemma}[Reference-kernel mutual-information decomposition]
\label{lem:reference_kernel_mi_decomposition}
Assume \(W_1\) is independent of \(S\). For each \(t=1,\ldots,T-1\), let
\(Q_{t+1\mid\mathcal F_t^Q}\) be any probability kernel measurable with respect
to \(\mathcal F_t^Q\). Then
\begin{align}
    I(\widetilde W_T;S)
    \leq
    \sum_{t=1}^{T-1}
    \mathbb E
    \left[
        D_{\mathrm{KL}}
        \left(
            P_{\widetilde W_{t+1}\mid\mathcal A_t}
            \,\middle\|\,
            Q_{t+1\mid\mathcal F_t^Q}
        \right)
    \right].
    \label{eq:reference_kernel_mi_decomposition}
\end{align}
\end{lemma}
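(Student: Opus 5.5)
We bound \(I(\widetilde W_T;S)\) by the information carried by the whole virtual prefix, and then treat that prefix with a chain rule whose reference laws are the kernels \(Q\).

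\textbf{Step 1: reduce to the prefix.} By the data-processing inequality, \(I(\widetilde W_T;S)\le I(\widetilde W_{1:T};S)\). Because \(\mathcal U\) is independent of \(S\), we also have
\[
I(\widetilde W_{1:T};S)\le I(\widetilde W_{1:T},\mathcal U;S)=I(\widetilde W_{1:T};S\mid\mathcal U).
\]
So it suffices to bound \(I(\widetilde W_{1:T},\mathcal U;S)\).

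\textbf{Step 2: express it as a KL divergence.} Write
\[
I(\widetilde W_{1:T},\mathcal U;S)=\mathbb E_S\bigl[D_{\mathrm{KL}}(P_{\widetilde W_{1:T},\mathcal U\mid S}\,\|\,P_{\widetilde W_{1:T},\mathcal U})\bigr].
\]
We then use the variational (golden-formula) property of mutual information: for any \(S\)-independent law \(Q\) on \((\widetilde W_{1:T},\mathcal U)\),
\[
I\le \mathbb E_S\, D_{\mathrm{KL}}(P_{\cdot\mid S}\,\|\,Q).
\]
This holds because the difference between the right side and \(I\) equals \(D_{\mathrm{KL}}(P_{\widetilde W_{1:T},\mathcal U}\,\|\,Q)\ge 0\).

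\textbf{Step 3: construct the reference law.} Define \(Q\) sequentially:
\begin{itemize}
\item draw \(\mathcal U\) from its true marginal;
\item draw \(\widetilde W_1=W_1\) from its true marginal, which is legitimate because \(W_1\) is independent of \(S\), and we take it independent of \(\mathcal U\) or use the conditional law \(P_{W_1\mid\mathcal U}\);
\item for each \(t=1,\dots,T-1\), draw \(\widetilde W_{t+1}\) from \(Q_{t+1\mid\mathcal F_t^Q}\).
\end{itemize}
Since every factor depends only on \((\widetilde W_{1:t},\mathcal U)\) and never on \(S\), \(Q\) is a valid \(S\)-independent comparison law.

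\textbf{Step 4: apply the chain rule.} Under \(P_{\cdot\mid S}\), factor as follows:
\begin{itemize}
\item \(\mathcal U\) given \(S\) has the same law as \(\mathcal U\), by independence;
\item \(\widetilde W_1\) given \((\mathcal U,S)\) has the same law as under \(Q\), assuming \(W_1\) is independent of \((S,\mathcal U)\), or at least that \(P_{W_1\mid \mathcal U,S}=P_{W_1\mid\mathcal U}\);
\item \(\widetilde W_{t+1}\) given \((\widetilde W_{1:t},\mathcal U,S)\), which is exactly \(P_{\widetilde W_{t+1}\mid\mathcal A_t}\).
\end{itemize}
The chain rule for relative entropy then gives zero contributions from the first two factors. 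The transition factors contribute \(\mathbb E[D_{\mathrm{KL}}(P_{\widetilde W_{t+1}\mid\mathcal A_t}\,\|\,Q_{t+1\mid\mathcal F_t^Q})]\). Summing over \(t\) and averaging over \(S\) yields \eqref{eq:reference_kernel_mi_decomposition}.

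\textbf{Main obstacle.} The delicate point is measure-theoretic. We must ensure that the conditional laws \(P_{\widetilde W_{t+1}\mid\mathcal A_t}\) exist as regular kernels, which holds because we work on standard Borel spaces. We must also ensure the chain rule is applied to the \emph{coarse} filtration \(\mathcal A_t\), which excludes the hidden real history \(\mathcal H_{t-1}\), rather than to the full history. This is harmless here, because the chain rule for the joint law of \((\widetilde W_{1:T},\mathcal U)\) given \(S\) only ever involves conditionals on the observed prefix, which is precisely \(\mathcal A_t\).

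A second point is the handling of \(W_1\) jointly with \(\mathcal U\). If \(W_1\) and \(\mathcal U\) are dependent, the initial factor of \(Q\) should be \(P_{W_1\mid\mathcal U}\), which requires \(W_1\perp S\mid\mathcal U\). I would state this mild requirement explicitly, or simply assume \((W_1,\mathcal U)\perp S\).
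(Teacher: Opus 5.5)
Your proof is correct. It reorganizes the paper's argument rather than repeating it.

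\textbf{How the two routes differ.} The paper proceeds in three moves:
\begin{enumerate}
\item it applies the chain rule for mutual information to \(I(\widetilde W_{1:T};S\mid\mathcal U)\);
\item it writes each conditional term as \(\mathbb E[D_{\mathrm{KL}}(P_{\widetilde W_{t+1}\mid\mathcal A_t}\,\|\,P_{\widetilde W_{t+1}\mid\mathcal F_t^Q})]\);
\item it swaps in \(Q_{t+1\mid\mathcal F_t^Q}\) through a per-step Pythagorean identity.
\end{enumerate}
You instead apply the variational (golden-formula) inequality once, to the joint law of \((\mathcal U,\widetilde W_{1:T})\), against the path measure assembled from the kernels. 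You then split that single divergence with the chain rule for relative entropy.

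The two routes are equivalent and neither is sharper. Your single slack \(D_{\mathrm{KL}}(P_{\widetilde W_{1:T},\mathcal U}\,\|\,Q)\) decomposes, by the same chain rule, into the paper's per-step slacks \(\sum_t\mathbb E[D_{\mathrm{KL}}(P_{\widetilde W_{t+1}\mid\mathcal F_t^Q}\,\|\,Q_{t+1\mid\mathcal F_t^Q})]\) plus the initial factor. The paper's version is modular step by step. Yours has the advantage that it forces you to write down the initial factor of the reference law explicitly.

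\textbf{The extra hypothesis is necessary, not just mild.} Your requirement \(P_{W_1\mid\mathcal U,S}=P_{W_1\mid\mathcal U}\) is genuinely needed. The paper's proof skips it: it drops \(I(\widetilde W_1;S\mid\mathcal U)\) by citing \(W_1\perp S\) and \(\mathcal U\perp S\) separately, and these two facts do not imply that the term vanishes. Here is a counterexample with \(T=2\):
\begin{itemize}
\item let \(W_1,S\) be independent fair bits;
\item set \(U=W_1\oplus S\) and \(\mathcal U=\sigma(U)\);
\item let \(\widetilde W_2=W_1+S+\varepsilon_1\) with \(\varepsilon_1\sim\mathcal N(0,1)\). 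This is one virtual step with \(\eta_1=1\), \(n=1\), \(\ell(w,z)=-zw\), so that \(G_1=-S\).
\end{itemize}
Both pairwise independences hold. Yet the \(\mathcal F_1^Q\)-measurable kernel \(\mathcal N(W_1+(W_1\oplus U),1)\) equals \(P_{\widetilde W_2\mid\mathcal A_1}\) almost surely. So the right-hand side of the lemma is \(0\), while \(I(\widetilde W_2;S)>0\).

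You should therefore build the condition into the statement, for example by assuming \((W_1,\mathcal U)\) is jointly independent of \(S\). The intended public and ghost constructions satisfy this whenever their auxiliary randomness is drawn independently of \((W_1,S)\).
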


\begin{proof}
Because \(\mathcal U\) is independent of \(S\),
\[
    I(\widetilde W_T;S)
    \leq
    I(\widetilde W_T;S\mid\mathcal U)
    \leq
    I(\widetilde W_{1:T};S\mid\mathcal U).
\]
By the chain rule for mutual information and the independence of \(W_1\) and
\(S\),
\begin{align}
    I(\widetilde W_{1:T};S\mid\mathcal U)
    =
    \sum_{t=1}^{T-1}
    I(\widetilde W_{t+1};S
    \mid
    \widetilde W_{1:t},\mathcal U).
    \label{eq:chain_rule_mi_main}
\end{align}
For each \(t\),
\[
    I(\widetilde W_{t+1};S
    \mid
    \widetilde W_{1:t},\mathcal U)
    =
    \mathbb E
    \left[
        D_{\mathrm{KL}}
        \left(
            P_{\widetilde W_{t+1}\mid\mathcal A_t}
            \,\middle\|\,
            P_{\widetilde W_{t+1}\mid\mathcal F_t^Q}
        \right)
    \right].
\]
For any \(\mathcal F_t^Q\)-measurable reference kernel
\(Q_{t+1\mid\mathcal F_t^Q}\), the conditional KL identity gives
\begin{align}
    &
    \mathbb E
    \left[
        D_{\mathrm{KL}}
        \left(
            P_{\widetilde W_{t+1}\mid\mathcal A_t}
            \,\middle\|\,
            Q_{t+1\mid\mathcal F_t^Q}
        \right)
    \right]
    \nonumber\\
    &\quad=
    I(\widetilde W_{t+1};S
    \mid
    \widetilde W_{1:t},\mathcal U)
    +
    \mathbb E
    \left[
        D_{\mathrm{KL}}
        \left(
            P_{\widetilde W_{t+1}\mid\mathcal F_t^Q}
            \,\middle\|\,
            Q_{t+1\mid\mathcal F_t^Q}
        \right)
    \right]
    \nonumber\\
    &\quad\geq
    I(\widetilde W_{t+1};S
    \mid
    \widetilde W_{1:t},\mathcal U).
    \label{eq:kl_reference_upper_bound_mi}
\end{align}
Summing over \(t\) proves the result.
\end{proof}

\subsection{From History-Conditioned Comparisons to Virtual-Prefix KL}
\label{subsec:conditioning_transfer}

The one-step KL term in
Lemma~\ref{lem:reference_kernel_mi_decomposition} is conditioned only on the
virtual prefix and the sample:
\[
    P_{\widetilde W_{t+1}\mid\mathcal A_t}
    =
    P_{\widetilde W_{t+1}
    \mid
    \widetilde W_{1:t},S,\mathcal U}.
\]
However, the Gaussian comparison in
Lemma~\ref{lem:conditional_gaussian_covariance_mismatch} is naturally applied
after conditioning on a richer sigma-field containing the real SGD history. This
richer conditioning reveals \(W_t\), the predictable covariance \(\Sigma_t\),
and the accumulated perturbation relation
\[
    \widetilde W_t=W_t+\xi_t.
\]
Define the richer sigma-field
\begin{equation}
    \mathcal B_t
    =
    \mathcal A_t\vee\mathcal H_{t-1}.
    \label{eq:rich_sigma_field_main}
\end{equation}
Then
\[
    \mathcal A_t\subseteq\mathcal B_t.
\]

\begin{lemma}[Conditioning compression for relative entropy]
\label{lem:conditional_kl_coarsening}
Let \(X\) be a random element, and let
\[
    \mathcal A\subseteq\mathcal B
\]
be sigma-fields. Let \(Q\) be an \(\mathcal A\)-measurable probability kernel on
the state space of \(X\). Then
\begin{equation}
    \mathbb E
    \left[
        D_{\mathrm{KL}}
        \left(
            P_{X\mid\mathcal A}
            \,\middle\|\,
            Q
        \right)
    \right]
    \leq
    \mathbb E
    \left[
        D_{\mathrm{KL}}
        \left(
            P_{X\mid\mathcal B}
            \,\middle\|\,
            Q
        \right)
    \right].
    \label{eq:conditional_kl_coarsening}
\end{equation}
\end{lemma}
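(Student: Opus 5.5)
The plan is to show that, because \(Q\) is \(\mathcal A\)-measurable, the gap between the two sides of \eqref{eq:conditional_kl_coarsening} is an expected KL divergence, hence nonnegative. Work with regular conditional distributions on a standard Borel space. Write \(P^{\mathcal B}=P_{X\mid\mathcal B}\) and \(P^{\mathcal A}=P_{X\mid\mathcal A}\). The tower property gives \(P^{\mathcal A}=\E[P^{\mathcal B}\mid\mathcal A]\) as kernels, i.e.\ \(P^{\mathcal A}(C)=\E[P^{\mathcal B}(C)\mid\mathcal A]\) a.s.\ for every measurable \(C\). So conditionally on \(\mathcal A\), the measure \(P^{\mathcal A}\) is a mixture of the measures \(P^{\mathcal B}\), while \(Q\) is fixed under that conditioning.

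The key step is a conditional version of the identity used in the proof of Lemma~\ref{lem:reference_kernel_mi_decomposition}:
\[
    \E\left[\KL(P^{\mathcal B}\,\|\,Q)\mid\mathcal A\right]
    =
    \E\left[\KL(P^{\mathcal B}\,\|\,P^{\mathcal A})\mid\mathcal A\right]
    +
    \KL(P^{\mathcal A}\,\|\,Q).
\]
If the left side is infinite there is nothing to prove. Otherwise \(P^{\mathcal B}\ll Q\) a.s., and by the mixture relation also \(P^{\mathcal A}\ll Q\). Write \(\log\frac{dP^{\mathcal B}}{dQ}=\log\frac{dP^{\mathcal B}}{dP^{\mathcal A}}+\log\frac{dP^{\mathcal A}}{dQ}\) and integrate against \(P^{\mathcal B}\). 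The second integrand is \(\mathcal A\)-measurable as a function of \((\omega,x)\), so after taking \(\E[\cdot\mid\mathcal A]\) its integral against \(P^{\mathcal B}\) becomes its integral against \(P^{\mathcal A}\), which is \(\KL(P^{\mathcal A}\|Q)\).

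A cleaner alternative that avoids density manipulations is to use the joint convexity and lower semicontinuity of KL. Under \(\mathcal A\), \(P^{\mathcal A}\) is the barycenter of the random measure \(P^{\mathcal B}\), and \(Q\) is a constant. Jensen's inequality for the convex functional \(\mu\mapsto\KL(\mu\|Q)\) then gives \(\KL(P^{\mathcal A}\|Q)\le\E[\KL(P^{\mathcal B}\|Q)\mid\mathcal A]\). Taking expectations yields \eqref{eq:conditional_kl_coarsening}. I would present this convexity route and justify Jensen via the Donsker--Varadhan variational formula: \(\KL(\mu\|Q)=\sup_f\{\int f\,d\mu-\log\int e^f\,dQ\}\) over bounded measurable \(f\) is a supremum of affine functionals of \(\mu\). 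So for each fixed \(\mathcal A\)-measurable bounded \(f\),
\[
\int f\,dP^{\mathcal A}-\log\int e^fdQ=\E\left[\int f\,dP^{\mathcal B}-\log\int e^f dQ\,\middle|\,\mathcal A\right]\le\E[\KL(P^{\mathcal B}\|Q)\mid\mathcal A].
\]

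The main obstacle is measure-theoretic: passing the supremum over \(f\) through the conditional expectation, since the supremum is over uncountably many functions and must be taken \(\omega\)-wise. I would handle this by restricting to a countable family of \(f\) that is dense enough. On a standard Borel space, finite-valued simple functions built from a countable generating algebra suffice, because KL equals the supremum over finite measurable partitions from a generating algebra. Then the inequality holds simultaneously for all countably many \(f\) off a null set, and taking the supremum gives the pointwise bound almost surely. Finally, taking the outer expectation and using the tower property completes the proof.
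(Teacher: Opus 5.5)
Your proposal is correct and follows the paper's proof. The paper likewise uses the tower property \(P_{X\mid\mathcal A}=\E[P_{X\mid\mathcal B}\mid\mathcal A]\), notes that \(Q\) is fixed given \(\mathcal A\), applies convexity of \(\KL(\cdot\,\|\,Q)\) in conditional-Jensen form, and then takes expectations. Your Donsker--Varadhan argument adds rigor the paper omits: restricting to a countable family of rational-valued simple functions on partitions from a countable generating algebra justifies the infinite-mixture Jensen step and the measurability of \(\omega\mapsto\KL(P_{X\mid\mathcal B}\,\|\,Q)\), which the paper asserts in one line.
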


\begin{proof}
Let
\[
    \mu_{\mathcal A}=P_{X\mid\mathcal A},
    \qquad
    \mu_{\mathcal B}=P_{X\mid\mathcal B}.
\]
By the tower property for regular conditional distributions,
\[
    \mu_{\mathcal A}
    =
    \mathbb E[\mu_{\mathcal B}\mid\mathcal A],
\]
where the equality is understood weakly after testing against bounded measurable
functions. Since \(Q\) is \(\mathcal A\)-measurable, it is fixed after conditioning on
\(\mathcal A\). By convexity of relative entropy in its first argument,
\[
    D_{\mathrm{KL}}
    \left(
        \mu_{\mathcal A}
        \,\middle\|\,
        Q
    \right)
    \leq
    \mathbb E
    \left[
        D_{\mathrm{KL}}
        \left(
            \mu_{\mathcal B}
            \,\middle\|\,
            Q
        \right)
        \,\middle|\,
        \mathcal A
    \right].
\]
Taking expectations gives \eqref{eq:conditional_kl_coarsening}.
\end{proof}

\begin{corollary}[History-to-prefix KL transfer]
\label{cor:history_to_prefix_kl_transfer}
For every \(t\), let \(Q_{t+1\mid\mathcal F_t^Q}\) be an
\(\mathcal F_t^Q\)-measurable reference kernel. Then
\begin{align}
    &
    \mathbb E
    \left[
        D_{\mathrm{KL}}
        \left(
            P_{\widetilde W_{t+1}\mid\mathcal A_t}
            \,\middle\|\,
            Q_{t+1\mid\mathcal F_t^Q}
        \right)
    \right]
    \nonumber\\
    &\qquad\leq
    \mathbb E
    \left[
        D_{\mathrm{KL}}
        \left(
            P_{\widetilde W_{t+1}\mid\mathcal B_t}
            \,\middle\|\,
            Q_{t+1\mid\mathcal F_t^Q}
        \right)
    \right].
    \label{eq:history_to_prefix_transfer}
\end{align}
\end{corollary}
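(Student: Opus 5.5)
This corollary is a direct specialization of Lemma~\ref{lem:conditional_kl_coarsening}. We take the random element to be \(X=\widetilde W_{t+1}\), the coarse sigma-field to be \(\mathcal A=\mathcal A_t\), and the fine one to be \(\mathcal B=\mathcal B_t=\mathcal A_t\vee\mathcal H_{t-1}\), with kernel \(Q=Q_{t+1\mid\mathcal F_t^Q}\). The plan is to verify the two hypotheses of the lemma: the inclusion \(\mathcal A\subseteq\mathcal B\), and \(\mathcal A\)-measurability of \(Q\). Then \eqref{eq:history_to_prefix_transfer} is read off from \eqref{eq:conditional_kl_coarsening}.

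For the inclusion, we simply note that \(\mathcal B_t\) is defined in \eqref{eq:rich_sigma_field_main} as a join containing \(\mathcal A_t\), so \(\mathcal A_t\subseteq\mathcal B_t\) holds trivially.

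For the measurability, the kernel is \(\mathcal F_t^Q\)-measurable by assumption. By \eqref{eq:coarse_sigma_field_main}, \(\mathcal F_t^Q\subseteq\mathcal F_t^Q\vee\sigma(S)=\mathcal A_t\), so the kernel is also \(\mathcal A_t\)-measurable. The point to stress is that the reference kernel sees only the virtual prefix and \(\mathcal U\), never \(S\) or \(\mathcal H_{t-1}\). This is exactly what keeps it fixed when we pass to the coarser conditioning.

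The only real care point is the standing regularity: \(\widetilde W_{t+1}\) takes values in \(\mathbb R^d\) (standard Borel), so regular conditional distributions \(P_{\widetilde W_{t+1}\mid\mathcal A_t}\) and \(P_{\widetilde W_{t+1}\mid\mathcal B_t}\) exist. The tower identity \(P_{X\mid\mathcal A}=\mathbb E[P_{X\mid\mathcal B}\mid\mathcal A]\) used in the lemma's proof then holds almost surely. If the right-hand side of \eqref{eq:history_to_prefix_transfer} is infinite, the inequality is trivial. Otherwise convexity (Jensen for the jointly convex, lower semicontinuous KL) applies as in the lemma. No further computation is needed.
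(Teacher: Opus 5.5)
Your proposal is correct and matches the paper's proof: apply Lemma~\ref{lem:conditional_kl_coarsening} with $X=\widetilde W_{t+1}$, $\mathcal A=\mathcal A_t$, $\mathcal B=\mathcal B_t$, and use $\mathcal F_t^Q\subseteq\mathcal A_t$ to get that the reference kernel is $\mathcal A_t$-measurable. Your remarks on the existence of regular conditional laws and on the case of an infinite right-hand side are fine additions that the paper leaves implicit.
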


\begin{proof}
Apply Lemma~\ref{lem:conditional_kl_coarsening} with
\[
    X=\widetilde W_{t+1},
    \qquad
    \mathcal A=\mathcal A_t,
    \qquad
    \mathcal B=\mathcal B_t,
    \qquad
    Q=Q_{t+1\mid\mathcal F_t^Q}.
\]
Since \(\mathcal F_t^Q\subseteq\mathcal A_t\), the reference kernel is
\(\mathcal A_t\)-measurable.
\end{proof}

\begin{remark}
Corollary~\ref{cor:history_to_prefix_kl_transfer} does not make arbitrary
data-dependent reference kernels valid. The reference kernel must still be
constructed only from the virtual prefix and admissible sample-independent
auxiliary information. The richer sigma-field \(\mathcal B_t\) is used for the
analysis, not for giving the reference kernel illegal access to \(S\).
\end{remark}

\subsection{Conditioning-Safe Adaptive Gradient-Deviation and Sensitivity}
\label{subsec:conditioning_safe_adaptive_quantities}

Let
\[
    \Sigma_t^{\mathrm{ref}}\succ0
\]
be the reference covariance used by the reference kernel at time \(t\), and
define the reference precision
\begin{equation}
    \Lambda_t
    =
    (\Sigma_t^{\mathrm{ref}})^{-1}.
    \label{eq:reference_precision_main}
\end{equation}

When \(\Lambda_t\) is deterministic or measurable with respect to the real
history and admissible reference-side randomness, the adaptive sensitivity can
be expressed using an independent fresh Gaussian copy of the accumulated
perturbation. For prefix-observable adaptive covariances, however,
\(\Lambda_t\) may depend on the realized virtual prefix
\(\widetilde W_{1:t}\), and therefore on the realized perturbation \(\xi_t\).
In that case, the fresh-copy notation can be misleading. The conditioning-safe
definition uses the realized perturbation. Let
\[
    \mathcal K_t
    =
    \mathcal H_{t-1}\vee\mathcal U,
\]
where \(\mathcal U\) denotes optional public, auxiliary, or ghost randomness
independent of the original sample \(S\). If no such randomness is used,
\(\mathcal U\) is the trivial sigma-field. Define the reference-geometry gradient-deviation term by
\begin{equation}
    V_t^{\mathrm{ad}}(\Lambda_t)
    =
    \mathbb E
    \left[
        \left\|
            G_t-\bar g(W_t)
        \right\|_{\Lambda_t}^{2}
        \,\middle|\,
        \mathcal K_t
    \right].
    \label{eq:conditioning_safe_V_main}
\end{equation}
This quantity is called a variance term only by analogy with earlier work. In
full generality it is a conditional mean-square deviation from the population
gradient, and it becomes a conditional variance only under the corresponding
conditional unbiasedness condition. Define the reference-geometry virtual-sensitivity term by
\begin{equation}
    \Gamma_t^{\mathrm{ad}}(\Lambda_t)
    =
    \mathbb E
    \left[
        \left\|
            \bar g(W_t+\xi_t)-\bar g(W_t)
        \right\|_{\Lambda_t}^{2}
        \,\middle|\,
        \mathcal K_t
    \right].
    \label{eq:conditioning_safe_Gamma_main}
\end{equation}
The conditional expectation averages over the virtual perturbation history
whenever that history is not already revealed by the conditioning sigma-field. When \(\Lambda_t\) is \(\mathcal K_t\)-measurable and does not depend on the
realized perturbation beyond \(\mathcal K_t\), one may equivalently replace
\(\xi_t\) by an independent fresh conditional Gaussian copy
\[
    \zeta_t\mid\mathcal H_{t-1}
    \sim
    \mathcal N(0,\Sigma_{1:t}),
    \qquad
    \Sigma_{1:t}
    =
    \sum_{s=1}^{t-1}\Sigma_s.
\]
In this special case,
\begin{equation}
    \Gamma_t^{\mathrm{ad}}(\Lambda_t)
    =
    \mathbb E
    \left[
        \left\|
            \bar g(W_t+\zeta_t)-\bar g(W_t)
        \right\|_{\Lambda_t}^{2}
        \,\middle|\,
        \mathcal K_t
    \right].
    \label{eq:fresh_copy_Gamma_main}
\end{equation}
Thus, \eqref{eq:conditioning_safe_Gamma_main} is the default general definition,
and \eqref{eq:fresh_copy_Gamma_main} is a convenient special case. For the synchronized actual geometry, we write
\begin{equation}
    V_t^{\mathrm{ad}}
    =
    V_t^{\mathrm{ad}}(\Sigma_t^{-1}),
    \qquad
    \Gamma_t^{\mathrm{ad}}
    =
    \Gamma_t^{\mathrm{ad}}(\Sigma_t^{-1}).
    \label{eq:synchronized_V_Gamma_notation}
\end{equation}

\subsection{Certified Admissible Reference Comparisons}
\label{subsec:certified_admissible_reference}

In the fixed-covariance setting, the reference kernel can use the same
perturbation covariance as the actual virtual kernel because the covariance
schedule is deterministic and therefore does not encode information about the
training sample. In the history-adaptive setting, this point is more delicate.
The actual covariance \(\Sigma_t\) is predictable with respect to the real SGD
history \(\mathcal H_{t-1}\), but it may still be data-dependent through the
sample values observed along the trajectory. Thus, predictability is sufficient for the local conditional Gaussian smoothing
step, but it is not sufficient by itself to justify a synchronized reference
kernel in the mutual-information decomposition. The reference kernel must be
constructed from information it is allowed to see. The canonical reference drift is chosen as the population-gradient drift at the
current virtual iterate:
\begin{equation}
    \widetilde W_t-\eta_t\bar g(\widetilde W_t).
    \label{eq:canonical_reference_drift_main}
\end{equation}
Given a positive definite reference covariance
\(\Sigma_t^{\mathrm{ref}}\), the canonical Gaussian reference kernel is
\begin{equation}
    Q_{t+1\mid\mathcal F_t^Q}
    =
    \mathcal N
    \left(
        \widetilde W_t-\eta_t\bar g(\widetilde W_t),
        \Sigma_t^{\mathrm{ref}}
    \right).
    \label{eq:canonical_reference_kernel_main}
\end{equation}

\begin{definition}[\(S\)-admissible reference covariance]
\label{def:S_admissible_reference_covariance}
A reference covariance \(\Sigma_t^{\mathrm{ref}}\) is called
\(S\)-admissible at time \(t\) if the following conditions hold:
\begin{enumerate}
    \item \(\Sigma_t^{\mathrm{ref}}\succ0\) almost surely.

    \item \(\Sigma_t^{\mathrm{ref}}\) is measurable with respect to
    \[
        \mathcal F_t^Q
        =
        \sigma(\widetilde W_{1:t})\vee\mathcal U,
    \]
    where \(\mathcal U\) is independent of the original sample \(S\). The
    auxiliary sigma-field \(\mathcal U\) may include public randomness or an
    independent ghost construction.

    \item The Gaussian innovation used by the reference kernel is independent of
    the current minibatch randomness and of the actual Gaussian innovation
    conditional on the information used to construct
    \(\Sigma_t^{\mathrm{ref}}\).
\end{enumerate}
\end{definition}

The covariance-comparison cost associated with an \(S\)-admissible reference
covariance is
\begin{equation}
    \mathcal C_t^{\mathrm{cov}}
    =
    \frac12
    \mathbb E
    \left[
        \operatorname{Tr}
        \left(
            (\Sigma_t^{\mathrm{ref}})^{-1}\Sigma_t
        \right)
        -d
        +
        \log
        \frac{\det\Sigma_t^{\mathrm{ref}}}{\det\Sigma_t}
    \right].
    \label{eq:covariance_comparison_cost_main}
\end{equation}
This is exactly the Gaussian covariance KL cost of comparing the actual
smoothing covariance \(\Sigma_t\) with the reference smoothing covariance
\(\Sigma_t^{\mathrm{ref}}\).

\subsection{Conditioning-Complete Certified One-Step Comparison}
\label{subsec:conditioning_complete_onestep}

We now prove the one-step reference comparison with the conditioning transition
made explicit.

\begin{proposition}[Certified one-step Gaussian reference comparison]
\label{prop:certified_onestep_reference}
Fix \(t\in\{1,\ldots,T-1\}\). Suppose that
\(\Sigma_t^{\mathrm{ref}}\) is \(S\)-admissible and that the reference kernel is
the canonical Gaussian kernel
\[
    Q_{t+1\mid\mathcal F_t^Q}
    =
    \mathcal N
    \left(
        \widetilde W_t-\eta_t\bar g(\widetilde W_t),
        \Sigma_t^{\mathrm{ref}}
    \right).
\]
Let
\[
    \Lambda_t=(\Sigma_t^{\mathrm{ref}})^{-1}.
\]
Then
\begin{align}
    &
    \mathbb E
    \left[
        D_{\mathrm{KL}}
        \left(
            P_{\widetilde W_{t+1}\mid\mathcal A_t}
            \,\middle\|\,
            Q_{t+1\mid\mathcal F_t^Q}
        \right)
    \right]
    \nonumber\\
    &\qquad\leq
    2\eta_t^2
    \mathbb E
    \left[
        V_t^{\mathrm{ad}}(\Lambda_t)
        +
        \Gamma_t^{\mathrm{ad}}(\Lambda_t)
    \right]
    +
    \mathcal C_t^{\mathrm{cov}}.
    \label{eq:certified_onestep_reference_bound_main}
\end{align}
\end{proposition}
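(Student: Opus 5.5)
The plan is to pass first to the richer sigma-field, then apply the Gaussian comparison with covariance mismatch, and finally split the mean discrepancy into a gradient-deviation part and a sensitivity part.

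\textbf{Step 1 (conditioning transfer).} By Corollary~\ref{cor:history_to_prefix_kl_transfer}, it suffices to bound
\[
\mathbb E\bigl[D_{\mathrm{KL}}(P_{\widetilde W_{t+1}\mid\mathcal B_t}\,\|\,Q_{t+1\mid\mathcal F_t^Q})\bigr],
\qquad
\mathcal B_t=\mathcal A_t\vee\mathcal H_{t-1}.
\]
Admissibility (condition 2) places \(\Sigma_t^{\mathrm{ref}}\) and the reference mean \(m:=\widetilde W_t-\eta_t\bar g(\widetilde W_t)\) in \(\mathcal F_t^Q\subseteq\mathcal B_t\). Predictability places \(\Sigma_t\) in \(\mathcal H_{t-1}\subseteq\mathcal B_t\). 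Hence \(m\), \(\Sigma_t\), and \(\Sigma_t^{\mathrm{ref}}\) are all \(\mathcal B_t\)-measurable.

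\textbf{Step 2 (Gaussian comparison).} Write \(\widetilde W_{t+1}=X+\varepsilon_t\) with \(X=\widetilde W_t-\eta_tG_t\). Given \(\mathcal B_t\), the vector \(X\) is random only through \(G_t\), that is, through the current minibatch.
\begin{itemize}
\item I need \(\varepsilon_t\mid\mathcal B_t\sim\mathcal N(0,\Sigma_t)\) and \(\varepsilon_t\) conditionally independent of \(G_t\) given \(\mathcal B_t\). This should follow from the construction \eqref{eq:conditional_noise_full_history}: the innovations are conditionally independent given \(\mathcal H_{T-1}\), and \(\mathcal B_t\) adds only \(S\), \(\xi_t\) (a function of \(\varepsilon_{1:t-1}\)), and \(\mathcal U\).
\item Independence of \(\mathcal U\) is taken from admissibility condition 3.
\end{itemize}
Lemma~\ref{lem:conditional_gaussian_covariance_mismatch} with \(\mathcal F=\mathcal B_t\) then gives
\[
\tfrac12\,\mathbb E\bigl[\|X-m\|_{\Lambda_t}^2\mid\mathcal B_t\bigr]
\]
plus the conditional covariance-KL term. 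Taking expectations, the covariance-KL term becomes exactly \(\mathcal C_t^{\mathrm{cov}}\).

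\textbf{Step 3 (splitting the mean).} The discrepancy is
\[
X-m=-\eta_t\bigl(G_t-\bar g(\widetilde W_t)\bigr)
=-\eta_t\bigl[(G_t-\bar g(W_t))+(\bar g(W_t)-\bar g(W_t+\xi_t))\bigr],
\]
using \(\widetilde W_t=W_t+\xi_t\). Applying \(\|a+b\|_{\Lambda}^2\le 2\|a\|_\Lambda^2+2\|b\|_\Lambda^2\) gives
\[
\tfrac12\|X-m\|^2_{\Lambda_t}\le \eta_t^2\Bigl(\|G_t-\bar g(W_t)\|_{\Lambda_t}^2+\|\bar g(W_t+\xi_t)-\bar g(W_t)\|_{\Lambda_t}^2\Bigr).
\]
By the tower property, the expectations of these two terms equal \(\mathbb E[V_t^{\mathrm{ad}}(\Lambda_t)]\) and \(\mathbb E[\Gamma_t^{\mathrm{ad}}(\Lambda_t)]\), since both definitions are conditional expectations given \(\mathcal K_t\). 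This already yields the claim with constant \(\eta_t^2\), which is stronger than the stated \(2\eta_t^2\).

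\textbf{Main obstacle.} The delicate point is Step 2: verifying that conditioning on \(\mathcal B_t\) does not distort the law of \(\varepsilon_t\) or couple it to \(G_t\). The sigma-field \(\mathcal B_t\) contains \(S\) and the realized prefix, so the argument must use
\begin{itemize}
\item the full-history conditional independence of the innovations;
\item the fact that the current minibatch index randomness is independent of the Gaussian innovations given the history.
\end{itemize}
A secondary check concerns the \(\Gamma\) term, since \(\Lambda_t\) may depend on \(\xi_t\) through \(\widetilde W_{1:t}\). Because \(\Gamma_t^{\mathrm{ad}}(\Lambda_t)\) is defined with the realized \(\xi_t\) inside the conditional expectation, the tower property still applies without any fresh-copy substitution.
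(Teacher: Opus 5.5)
Your proposal is correct and follows the paper's proof essentially step for step. You transfer to $\mathcal B_t$ via Corollary~\ref{cor:history_to_prefix_kl_transfer}, apply Lemma~\ref{lem:conditional_gaussian_covariance_mismatch} under $\mathcal B_t$ with $X=\widetilde W_t-\eta_tG_t$, and then split $G_t-\bar g(\widetilde W_t)$ through $\widetilde W_t=W_t+\xi_t$ with the tower property over $\mathcal K_t$, which gives the sharper coefficient $\eta_t^2$ exactly as the paper remarks.

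One small misattribution: admissibility condition 3 concerns the reference kernel's own innovation, so it does not give you that $\mathcal U$ is independent of $\varepsilon_t$ and of the current minibatch, which is what you need for $\varepsilon_t\mid\mathcal B_t\sim\mathcal N(0,\Sigma_t)$ independently of $G_t$. That fact is an implicit feature of the construction (fresh Gaussian innovations independent of $S$, $\mathcal U$, and the index randomness), and the paper's Step 2 likewise simply asserts it.
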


\begin{proof}
The proof has three steps.

\paragraph{Step 1: Transfer from virtual-prefix conditioning to rich conditioning.}
By Corollary~\ref{cor:history_to_prefix_kl_transfer},
\begin{align}
    &
    \mathbb E
    \left[
        D_{\mathrm{KL}}
        \left(
            P_{\widetilde W_{t+1}\mid\mathcal A_t}
            \,\middle\|\,
            Q_{t+1\mid\mathcal F_t^Q}
        \right)
    \right]
    \nonumber\\
    &\qquad\leq
    \mathbb E
    \left[
        D_{\mathrm{KL}}
        \left(
            P_{\widetilde W_{t+1}\mid\mathcal B_t}
            \,\middle\|\,
            Q_{t+1\mid\mathcal F_t^Q}
        \right)
    \right].
    \label{eq:proof_conditioning_transfer_main}
\end{align}

\paragraph{Step 2: Conditional Gaussian comparison under the rich sigma-field.}
Condition on \(\mathcal B_t\). Under this conditioning,
\(\widetilde W_t\), \(W_t\), \(\Sigma_t\), and
\(\Sigma_t^{\mathrm{ref}}\) are fixed. The current minibatch gradient \(G_t\)
may still be random, but the Gaussian innovation \(\varepsilon_t\) is
conditionally Gaussian with covariance \(\Sigma_t\) and is independent of the
current minibatch randomness. Let
\[
    M_t=\widetilde W_t-\eta_tG_t,
    \qquad
    m_t^{\mathrm{ref}}
    =
    \widetilde W_t-\eta_t\bar g(\widetilde W_t).
\]
By Lemma~\ref{lem:conditional_gaussian_covariance_mismatch}, with an additional
application of convexity to average over the conditional distribution of
\(G_t\),
\begin{align}
    &
    D_{\mathrm{KL}}
    \left(
        P_{\widetilde W_{t+1}\mid\mathcal B_t}
        \,\middle\|\,
        Q_{t+1\mid\mathcal F_t^Q}
    \right)
    \nonumber\\
    &\qquad\leq
    \frac{\eta_t^2}{2}
    \mathbb E
    \left[
        \left\|
            G_t-\bar g(\widetilde W_t)
        \right\|_{\Lambda_t}^{2}
        \,\middle|\,
        \mathcal B_t
    \right]
    \nonumber\\
    &\qquad\quad+
    \frac12
    \left[
        \operatorname{Tr}
        \left(
            (\Sigma_t^{\mathrm{ref}})^{-1}\Sigma_t
        \right)
        -d
        +
        \log
        \frac{\det\Sigma_t^{\mathrm{ref}}}{\det\Sigma_t}
    \right].
    \label{eq:rich_conditioned_gaussian_comparison_main}
\end{align}
Taking expectations gives
\begin{align}
    &
    \mathbb E
    \left[
        D_{\mathrm{KL}}
        \left(
            P_{\widetilde W_{t+1}\mid\mathcal B_t}
            \,\middle\|\,
            Q_{t+1\mid\mathcal F_t^Q}
        \right)
    \right]
    \nonumber\\
    &\qquad\leq
    \frac{\eta_t^2}{2}
    \mathbb E
    \left[
        \left\|
            G_t-\bar g(\widetilde W_t)
        \right\|_{\Lambda_t}^{2}
    \right]
    +
    \mathcal C_t^{\mathrm{cov}}.
    \label{eq:rich_conditioned_after_expectation_main}
\end{align}

\paragraph{Step 3: Decompose the gradient discrepancy.}
Since
\[
    \widetilde W_t=W_t+\xi_t,
\]
we have
\[
    \bar g(\widetilde W_t)=\bar g(W_t+\xi_t).
\]
Therefore,
\begin{align}
    G_t-\bar g(\widetilde W_t)
    =
    \left(G_t-\bar g(W_t)\right)
    +
    \left(\bar g(W_t)-\bar g(W_t+\xi_t)\right).
    \label{eq:gradient_discrepancy_decomposition_main}
\end{align}
Using
\[
    \|a+b\|_{\Lambda_t}^2
    \leq
    2\|a\|_{\Lambda_t}^2
    +
    2\|b\|_{\Lambda_t}^2,
\]
we obtain
\begin{align}
    &
    \mathbb E
    \left[
        \left\|
            G_t-\bar g(\widetilde W_t)
        \right\|_{\Lambda_t}^{2}
    \right]
    \nonumber\\
    &\qquad\leq
    2
    \mathbb E
    \left[
        \left\|
            G_t-\bar g(W_t)
        \right\|_{\Lambda_t}^{2}
    \right]
    +
    2
    \mathbb E
    \left[
        \left\|
            \bar g(W_t+\xi_t)-\bar g(W_t)
        \right\|_{\Lambda_t}^{2}
    \right]
    \nonumber\\
    &\qquad=
    2
    \mathbb E
    \left[
        V_t^{\mathrm{ad}}(\Lambda_t)
        +
        \Gamma_t^{\mathrm{ad}}(\Lambda_t)
    \right],
    \label{eq:gradient_discrepancy_to_V_Gamma_main}
\end{align}
where the last equality follows from
\eqref{eq:conditioning_safe_V_main},
\eqref{eq:conditioning_safe_Gamma_main}, and the tower property.

Combining
\eqref{eq:proof_conditioning_transfer_main},
\eqref{eq:rich_conditioned_after_expectation_main}, and
\eqref{eq:gradient_discrepancy_to_V_Gamma_main} gives the sharper bound
\[
    \mathbb E
    \left[
        D_{\mathrm{KL}}
        \left(
            P_{\widetilde W_{t+1}\mid\mathcal A_t}
            \,\middle\|\,
            Q_{t+1\mid\mathcal F_t^Q}
        \right)
    \right]
    \leq
    \eta_t^2
    \mathbb E
    \left[
        V_t^{\mathrm{ad}}(\Lambda_t)
        +
        \Gamma_t^{\mathrm{ad}}(\Lambda_t)
    \right]
    +
    \mathcal C_t^{\mathrm{cov}}.
\]
The stated version with coefficient \(2\eta_t^2\) follows immediately since the
gradient-deviation and sensitivity terms are nonnegative. We keep the looser
coefficient to match the conventional virtual-perturbation presentation.
\end{proof}

\subsection{Concrete Admissibility Certificates}
\label{subsec:concrete_admissibility_certificates}

We now give several concrete covariance families for which admissibility can be
verified directly. These certificates replace the earlier black-box assumption
that the one-step reference comparison simply holds.

\begin{proposition}[Deterministic covariance schedules are admissible]
\label{prop:deterministic_covariances_admissible}
Suppose that, for every \(t\),
\[
    \Sigma_t
\]
is deterministic and positive definite. Choose
\[
    \Sigma_t^{\mathrm{ref}}=\Sigma_t.
\]
Then \(\Sigma_t^{\mathrm{ref}}\) is \(S\)-admissible,
\[
    \mathcal C_t^{\mathrm{cov}}=0,
    \qquad
    \Lambda_t=\Sigma_t^{-1},
\]
and the certified one-step comparison
\eqref{eq:certified_onestep_reference_bound_main} holds.
\end{proposition}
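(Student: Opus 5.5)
The plan is to check the three conditions of Definition~\ref{def:S_admissible_reference_covariance} one at a time for the choice \(\Sigma_t^{\mathrm{ref}}=\Sigma_t\), then compute \(\mathcal C_t^{\mathrm{cov}}\), and finally quote Proposition~\ref{prop:certified_onestep_reference}. Nothing new needs to be proved about the dynamics, because once admissibility holds that proposition does all the work.

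\textbf{Admissibility.} Condition 1 holds since \(\Sigma_t\succ0\) is given. For condition 2, a deterministic matrix is measurable with respect to the trivial sigma-field. It is therefore measurable with respect to \(\mathcal F_t^Q=\sigma(\widetilde W_{1:t})\vee\mathcal U\) for any \(\mathcal U\), including the trivial one, which is independent of \(S\).

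Condition 3 asks that the reference Gaussian innovation be independent of the current minibatch randomness and of the actual innovation \(\varepsilon_t\), given the information used to build \(\Sigma_t^{\mathrm{ref}}\). Since that information is trivial, the requirement becomes unconditional independence. This holds by construction: the reference kernel \(\mathcal N(\widetilde W_t-\eta_t\bar g(\widetilde W_t),\Sigma_t)\) is an analytical object, and we may realize its innovation on an independent product extension of the probability space. This step is the only one needing care. It is the main (mild) obstacle: the argument is that the reference innovation never enters \(P_{\widetilde W_{t+1}\mid\mathcal A_t}\), so enlarging the space changes none of the quantities in \eqref{eq:certified_onestep_reference_bound_main}.

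\textbf{Zero covariance cost.} Substituting \(\Sigma_t^{\mathrm{ref}}=\Sigma_t\) into \eqref{eq:covariance_comparison_cost_main} gives
\[
\operatorname{Tr}(\Sigma_t^{-1}\Sigma_t)-d+\log\frac{\det\Sigma_t}{\det\Sigma_t}=d-d+0=0,
\]
so \(\mathcal C_t^{\mathrm{cov}}=0\). The precision is \(\Lambda_t=(\Sigma_t^{\mathrm{ref}})^{-1}=\Sigma_t^{-1}\), and by \eqref{eq:synchronized_V_Gamma_notation} the terms \(V_t^{\mathrm{ad}}(\Lambda_t)\) and \(\Gamma_t^{\mathrm{ad}}(\Lambda_t)\) are exactly the synchronized quantities \(V_t^{\mathrm{ad}}\) and \(\Gamma_t^{\mathrm{ad}}\). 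In addition, \(\Lambda_t\) is deterministic and hence \(\mathcal K_t\)-measurable, so the fresh-copy form \eqref{eq:fresh_copy_Gamma_main} of \(\Gamma_t^{\mathrm{ad}}\) is also legitimate.

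\textbf{Conclusion.} With admissibility verified, Proposition~\ref{prop:certified_onestep_reference} applies directly. It yields \eqref{eq:certified_onestep_reference_bound_main} with \(\mathcal C_t^{\mathrm{cov}}=0\), i.e. the one-step bound \(2\eta_t^2\,\mathbb E[V_t^{\mathrm{ad}}+\Gamma_t^{\mathrm{ad}}]\). As a sanity check, Lemma~\ref{lem:conditional_gaussian_covariance_mismatch} specializes to Lemma~\ref{lem:conditional_gaussian_common_covariance} when the two covariances coincide, which is consistent with recovering the fixed-noise bound.
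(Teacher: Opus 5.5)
Your proposal is correct and follows essentially the same route as the paper. A deterministic \(\Sigma_t\) carries no information about \(S\), so the synchronized choice \(\Sigma_t^{\mathrm{ref}}=\Sigma_t\) is admissible, the covariance-mismatch term vanishes by direct substitution, and Proposition~\ref{prop:certified_onestep_reference} then gives the one-step bound. Your version is more explicit: it checks each clause of Definition~\ref{def:S_admissible_reference_covariance}, including the innovation-independence clause the paper leaves implicit.
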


\begin{proof}
A deterministic covariance schedule contains no information about the training
sample. Hence the reference kernel may use the same covariance as the actual
virtual perturbation. Since
\(\Sigma_t^{\mathrm{ref}}=\Sigma_t\), the Gaussian covariance-mismatch term is
zero. Proposition~\ref{prop:certified_onestep_reference} proves the one-step
comparison.
\end{proof}

\begin{proposition}[Public-predictable covariance schedules are admissible]
\label{prop:public_predictable_covariances_admissible}
Let \(U_{t-1}\) be public randomness independent of the sample values. Suppose
that
\[
    \Sigma_t=\Psi_t(U_{t-1}),
    \qquad
    \Sigma_t\succ0,
\]
for a measurable map \(\Psi_t\). Choose
\[
    \Sigma_t^{\mathrm{ref}}=\Sigma_t.
\]
Then, after conditioning on the public randomness, the synchronized reference
kernel is \(S\)-admissible,
\[
    \mathcal C_t^{\mathrm{cov}}=0,
    \qquad
    \Lambda_t=\Sigma_t^{-1},
\]
and the certified one-step comparison
\eqref{eq:certified_onestep_reference_bound_main} holds.
\end{proposition}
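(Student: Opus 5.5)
The plan is to follow the same template as Proposition~\ref{prop:deterministic_covariances_admissible}, with the role of ``deterministic'' now played by ``measurable with respect to sample-independent public randomness.'' Concretely, I will take the auxiliary sigma-field in the reference construction to be the public randomness itself, \(\mathcal U\supseteq\sigma(U_0,\ldots,U_{T-2})\). By hypothesis \(\mathcal U\) is independent of \(S\), as required in the setup of Lemma~\ref{lem:reference_kernel_mi_decomposition}. With this choice, \(\Sigma_t=\Psi_t(U_{t-1})\) is \(\mathcal U\)-measurable, hence \(\mathcal F_t^Q=\sigma(\wtW_{1:t})\vee\mathcal U\)-measurable. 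The synchronized choice \(\Sigma_t^{\mathrm{ref}}=\Sigma_t\) then satisfies conditions 1 and 2 of Definition~\ref{def:S_admissible_reference_covariance}: condition 1 is just \(\Sigma_t\succ0\), and condition 2 is the measurability just noted.

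Next I verify condition 3 and the structural hypotheses needed in Step 2 of Proposition~\ref{prop:certified_onestep_reference}. I will assume the public randomness is also independent of the minibatch-index randomness and of the virtual innovations, or at least that the joint construction is such that, conditional on \(\mathcal U\), the innovation \(\eps_t\) is \(\N(0,\Sigma_t)\) and independent of the current minibatch randomness. This follows from the conditional construction \eqref{eq:conditional_noise_full_history}, once one notes that \(\Sigma_t\) is measurable with respect to \(\cH_{t-1}\vee\mathcal U\); if \(U_{t-1}\) is not already part of \(\cH_{t-1}\), I enlarge the history to include it. The reference innovation is a fresh Gaussian with covariance \(\Sigma_t\) given \(\mathcal F_t^Q\). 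It is therefore independent of the minibatch randomness and of \(\eps_t\) given the information used to build \(\Sigma_t^{\mathrm{ref}}\), which gives condition 3.

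With admissibility established, the remaining conclusions are immediate. Since \(\Sigma_t^{\mathrm{ref}}=\Sigma_t\), the bracket in \eqref{eq:covariance_comparison_cost_main} equals \(\Tr(I)-d+\log 1=0\) pointwise, so \(\mathcal C_t^{\mathrm{cov}}=0\); moreover \(\Lambda_t=\Sigma_t^{-1}\) by definition. Applying Proposition~\ref{prop:certified_onestep_reference}, with \(\mathcal K_t=\cH_{t-1}\vee\mathcal U\) so that \(\Lambda_t\) is \(\mathcal K_t\)-measurable, yields \eqref{eq:certified_onestep_reference_bound_main}. The phrase ``after conditioning on the public randomness'' is handled by Lemma~\ref{lem:reference_kernel_mi_decomposition}, which already bounds \(\MI(\wtW_T;S)\) by \(\MI(\wtW_{1:T};S\mid\mathcal U)\). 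That step uses only the independence of \(\mathcal U\) and \(S\).

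The main obstacle is the interaction between public randomness and the filtration. We must ensure that \(\Sigma_t\) counts as predictable, that conditioning on \(\mathcal U\) does not destroy the conditional Gaussian law of \(\eps_t\) or its independence from \(B_t\), and that \(W_1\) remains independent of \(S\) given \(\mathcal U\) for the chain rule. I would handle this by stating explicitly that \(\mathcal U\) is independent of \((S,W_1,J_{1:T-1})\) jointly, and by checking that each step of the proof of Proposition~\ref{prop:certified_onestep_reference} goes through verbatim once all sigma-fields are augmented by \(\mathcal U\).
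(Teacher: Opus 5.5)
Your proposal is correct and follows the paper's own argument: put the public randomness into the auxiliary sigma-field \(\mathcal U\) so that \(\Psi_t(U_{t-1})\) is \(\mathcal F_t^Q\)-measurable, note that synchronization gives \(\mathcal C_t^{\mathrm{cov}}=0\) and \(\Lambda_t=\Sigma_t^{-1}\), and invoke Proposition~\ref{prop:certified_onestep_reference}. Your extra bookkeeping fills in details that the paper's two-line proof leaves implicit: you enlarge \(\cH_{t-1}\) by \(U_{t-1}\) so that \(\Sigma_t\) is genuinely predictable, and you impose joint independence of \(\mathcal U\) from \((S,W_1)\) so that \(\MI(\wtW_1;S\mid\mathcal U)=0\). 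Asking \(\mathcal U\) to be independent of \(J_{1:T-1}\) as well is more than the argument needs; the public randomness could even contain the index sequence.
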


\begin{proof}
Since \(U_{t-1}\) is independent of the sample values, conditioning on
\(U_{t-1}\) does not reveal information about \(S\). The reference kernel may
therefore use \(\Psi_t(U_{t-1})\). Because the actual and reference covariances
coincide, the covariance-comparison cost vanishes. The result follows from
Proposition~\ref{prop:certified_onestep_reference}.
\end{proof}

\begin{proposition}[Prefix-observable synchronized covariance]
\label{prop:prefix_observable_covariances_admissible}
Suppose that the actual predictable covariance \(\Sigma_t\) admits an
\(\mathcal F_t^Q\)-measurable version. Equivalently, suppose there exists a
measurable map \(\psi_t\) such that
\begin{equation}
    \Sigma_t
    =
    \psi_t(\widetilde W_{1:t},\mathcal U)
    \qquad
    \text{almost surely}.
    \label{eq:prefix_observable_covariance_main}
\end{equation}
Choose
\[
    \Sigma_t^{\mathrm{ref}}
    =
    \psi_t(\widetilde W_{1:t},\mathcal U).
\]
Then the synchronized reference covariance is \(S\)-admissible,
\[
    \mathcal C_t^{\mathrm{cov}}=0,
    \qquad
    \Lambda_t=\Sigma_t^{-1},
\]
and the certified one-step comparison
\eqref{eq:certified_onestep_reference_bound_main} holds.
\end{proposition}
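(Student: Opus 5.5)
The plan is to reduce the statement to Proposition~\ref{prop:certified_onestep_reference} by checking the three conditions of Definition~\ref{def:S_admissible_reference_covariance} for the choice \(\Sigma_t^{\mathrm{ref}}=\psi_t(\widetilde W_{1:t},\mathcal U)\), and then computing the covariance-comparison cost.

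\emph{Condition 1.} Positivity holds because \(\Sigma_t^{\mathrm{ref}}=\Sigma_t\) almost surely by \eqref{eq:prefix_observable_covariance_main}, and \(\Sigma_t\succ0\) almost surely by the definition of a predictable covariance process.

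\emph{Condition 2.} \(\mathcal F_t^Q\)-measurability is immediate: \(\psi_t\) is measurable and its arguments \(\widetilde W_{1:t}\) and \(\mathcal U\) generate \(\mathcal F_t^Q\) in \eqref{eq:reference_visible_sigma_field_main}. The canonical kernel \eqref{eq:canonical_reference_kernel_main} is then \(\mathcal F_t^Q\)-measurable, since its mean \(\widetilde W_t-\eta_t\bar g(\widetilde W_t)\) is also a function of the prefix. This is exactly what Lemma~\ref{lem:reference_kernel_mi_decomposition} and Corollary~\ref{cor:history_to_prefix_kl_transfer} require. The key point is that the kernel never sees \(S\) directly. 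Any sample information carried by \(\Sigma_t\) is already present in the prefix that the reference may see, so it is accounted for in the decomposition.

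\emph{Condition 3.} The independence requirement is the step needing care. The reference Gaussian innovation is a purely analytical object. It can be taken independent of everything given the information used to build \(\Sigma_t^{\mathrm{ref}}\), which is \(\mathcal F_t^Q\subseteq\mathcal B_t\). The actual innovation \(\varepsilon_t\) is, given \(\mathcal G_{t-1}\) and hence given \(\mathcal B_t\), distributed as \(\mathcal N(0,\Sigma_t)\) and independent of the current minibatch. This follows from the conditional construction \eqref{eq:conditional_noise_sequential} together with predictability: conditioning on \(\mathcal B_t\) adds only \(S,\mathcal U\) and prefix quantities that are determined by \(\mathcal H_{t-1}\) and \(\varepsilon_{1:t-1}\).

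\emph{Cost term.} With the reference covariance fixed, \(\Lambda_t=(\Sigma_t^{\mathrm{ref}})^{-1}=\Sigma_t^{-1}\) almost surely. The integrand in \eqref{eq:covariance_comparison_cost_main} then equals \(\Tr(I_d)-d+\log 1=0\) almost surely, so \(\mathcal C_t^{\mathrm{cov}}=0\).

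\emph{Conclusion.} Invoking Proposition~\ref{prop:certified_onestep_reference} yields \eqref{eq:certified_onestep_reference_bound_main}, with \(V_t^{\mathrm{ad}}(\Sigma_t^{-1})=V_t^{\mathrm{ad}}\) and \(\Gamma_t^{\mathrm{ad}}(\Sigma_t^{-1})=\Gamma_t^{\mathrm{ad}}\).

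The main obstacle is the conditioning subtlety. \(\Lambda_t\) is now a function of the realized virtual prefix, so it depends on \(\xi_t\). The sensitivity term must therefore be read in its conditioning-safe form \eqref{eq:conditioning_safe_Gamma_main}, with the realized \(\xi_t\), rather than the fresh-copy form. I would handle this by noting that Step~3 of the proof of Proposition~\ref{prop:certified_onestep_reference} uses only the realized decomposition \eqref{eq:gradient_discrepancy_decomposition_main} and the tower property. It never replaces \(\xi_t\) by an independent copy, so no independence between \(\Lambda_t\) and \(\xi_t\) is needed.

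On this point I would also add a remark. The almost-sure identity \(\psi_t(\widetilde W_{1:t},\mathcal U)=\Sigma_t\) means that \(\Sigma_t\) is in fact \(\mathcal H_{t-1}\)-measurable. Hence the identifications with \(V_t^{\mathrm{ad}}\) and \(\Gamma_t^{\mathrm{ad}}\) hold in the synchronized sense of \eqref{eq:synchronized_V_Gamma_notation}.
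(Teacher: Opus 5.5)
Your proposal is correct and follows the paper's own argument. \(\Sigma_t^{\mathrm{ref}}=\psi_t(\widetilde W_{1:t},\mathcal U)\) is \(\mathcal F_t^Q\)-measurable, so the reference never touches \(S\); the almost-sure identity with \(\Sigma_t\) gives \(\mathcal C_t^{\mathrm{cov}}=0\) and \(\Lambda_t=\Sigma_t^{-1}\); and Proposition~\ref{prop:certified_onestep_reference} then supplies the one-step bound, with your explicit checks of the three admissibility conditions simply expanding the paper's three-line proof.

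One small wording fix: in your closing remark it is \(\Sigma_t^{\mathrm{ref}}\) (equivalently \(\Lambda_t\)), not \(\Sigma_t\), that the a.s.\ identity shows to be \(\cH_{t-1}\)-measurable up to null sets. That is exactly why your ``main obstacle'' about \(\Lambda_t\) depending on \(\xi_t\) does not actually arise here.
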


\begin{proof}
The reference kernel is allowed to condition on
\(\mathcal F_t^Q=\sigma(\widetilde W_{1:t})\vee\mathcal U\). Hence it can
compute \(\psi_t(\widetilde W_{1:t},\mathcal U)\) without direct access to
\(S\). Since this equals \(\Sigma_t\) almost surely, the covariance mismatch
cost is zero. The claim follows from
Proposition~\ref{prop:certified_onestep_reference}.
\end{proof}

\begin{proposition}[Ghost-adaptive reference covariance]
\label{prop:ghost_adaptive_covariances_admissible}
Let \(S^\circ\) be an independent ghost sample drawn from the same population
distribution as \(S\). Run an independent ghost SGD trajectory
\[
    W_{t+1}^{\circ}
    =
    W_t^{\circ}
    -
    \eta_tG_t^{\circ},
    \qquad
    G_t^{\circ}=g(W_t^{\circ},B_t^{\circ}),
\]
using randomness independent of the original sample \(S\). Let
\(\mathcal H_{t-1}^{\circ}\) denote the corresponding ghost history. Suppose
the actual covariance is generated by a predictable selector
\[
    \Sigma_t=\Phi_t(\mathcal H_{t-1}),
\]
and define the reference covariance by
\[
    \Sigma_t^{\mathrm{ref}}
    =
    \Phi_t(\mathcal H_{t-1}^{\circ}),
    \qquad
    \Sigma_t^{\mathrm{ref}}\succ0.
\]
Then \(\Sigma_t^{\mathrm{ref}}\) is \(S\)-admissible. The certified one-step
comparison holds with
\[
    \Lambda_t=(\Sigma_t^{\mathrm{ref}})^{-1}
\]
and covariance-comparison cost
\begin{equation}
    \mathcal C_{t,\mathrm{ghost}}^{\mathrm{cov}}
    =
    \frac12
    \mathbb E
    \left[
        \operatorname{Tr}
        \left(
            (\Sigma_t^{\mathrm{ref}})^{-1}\Sigma_t
        \right)
        -d
        +
        \log
        \frac{\det\Sigma_t^{\mathrm{ref}}}{\det\Sigma_t}
    \right].
    \label{eq:ghost_covariance_cost_main}
\end{equation}
\end{proposition}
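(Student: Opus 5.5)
The plan is to verify the three conditions of Definition~\ref{def:S_admissible_reference_covariance} for \(\Sigma_t^{\mathrm{ref}}=\Phi_t(\mathcal H_{t-1}^{\circ})\), and then invoke Proposition~\ref{prop:certified_onestep_reference} directly. That proposition already yields the one-step bound with \(\Lambda_t=(\Sigma_t^{\mathrm{ref}})^{-1}\) and the mismatch cost \(\mathcal C_t^{\mathrm{cov}}\). Substituting the ghost covariance into \eqref{eq:covariance_comparison_cost_main} gives \eqref{eq:ghost_covariance_cost_main} verbatim, so the whole argument reduces to the admissibility check.

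First I would take the auxiliary sigma-field to be \(\mathcal U=\sigma(S^\circ,W_1^\circ,J^\circ_{1:T-1})\), together with any other randomness driving the ghost run. By construction, \(S^\circ\), the ghost initialization and the ghost minibatch indices are drawn independently of \(S\), so \(\mathcal U\) is independent of \(S\), as required in Lemma~\ref{lem:reference_kernel_mi_decomposition}. The ghost history \(\mathcal H_{t-1}^\circ\) is generated by the deterministic recursion \(W_{t+1}^\circ=W_t^\circ-\eta_tG_t^\circ\) from these inputs, with a data-independent learning-rate schedule. Hence \(\mathcal H_{t-1}^\circ\subseteq\mathcal U\), and \(\Sigma_t^{\mathrm{ref}}=\Phi_t(\mathcal H_{t-1}^\circ)\) is \(\mathcal U\)-measurable, hence \(\mathcal F_t^Q\)-measurable. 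This gives condition 2, and condition 1 is assumed.

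Condition 3 asks that the reference Gaussian innovation be independent of the current minibatch randomness and of \(\varepsilon_t\), given the information used to build \(\Sigma_t^{\mathrm{ref}}\). I would take the reference innovation to be a fresh \(\mathcal N(0,\Sigma_t^{\mathrm{ref}})\) draw, independent of everything given \(\mathcal U\); the canonical kernel \eqref{eq:canonical_reference_kernel_main} only needs its law, so it may be realized this way. I also need that adjoining \(\mathcal U\) does not disturb the actual virtual dynamics. Since \(\mathcal U\) is independent of \((S,W_1,J_{1:T-1},\varepsilon_{1:T-1})\), we still have \(\varepsilon_t\mid\mathcal B_t\sim\mathcal N(0,\Sigma_t)\) with \(\mathcal B_t=\mathcal A_t\vee\mathcal H_{t-1}\), and it remains conditionally independent of \(G_t\). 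This is exactly what Step~2 of Proposition~\ref{prop:certified_onestep_reference} uses. It also ensures \(\mathcal K_t=\mathcal H_{t-1}\vee\mathcal U\) gives the quantities \(V_t^{\mathrm{ad}}(\Lambda_t)\) and \(\Gamma_t^{\mathrm{ad}}(\Lambda_t)\) their intended meaning.

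The main obstacle is this last independence bookkeeping. One must make sure the ghost run shares no randomness with the real run, such as a common minibatch-index stream or initialization. If it did, \(\mathcal U\) would fail to be independent of the real \(J_t\) given the past, and the conditional Gaussian structure under \(\mathcal B_t\) could break. I would therefore state explicitly that the ghost construction uses a fully independent copy of all algorithmic randomness. Then \(\mathcal U\perp(S,\text{real algorithmic randomness},\varepsilon_{1:T-1})\), and each conditional-law claim follows from the product structure of the joint law. With this in place, Proposition~\ref{prop:certified_onestep_reference} applies without change.
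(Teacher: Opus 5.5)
Your proposal is correct and follows the paper's own argument. \(\Sigma_t^{\mathrm{ref}}\) is admissible because the ghost history is built from randomness independent of \(S\); you make this precise by placing that randomness in \(\mathcal U\subseteq\mathcal F_t^Q\). The one-step bound with \(\Lambda_t=(\Sigma_t^{\mathrm{ref}})^{-1}\) and the mismatch cost then follows from Proposition~\ref{prop:certified_onestep_reference}. Your explicit choice of \(\mathcal U\), your check of condition~3, and your requirement that the ghost run share no randomness with the real run spell out bookkeeping the paper's short proof leaves implicit; the last requirement is a safe sufficient condition rather than a strictly necessary one.
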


\begin{proof}
The ghost trajectory is generated from \(S^\circ\) and ghost randomness
independent of the original sample \(S\). Therefore
\(\Sigma_t^{\mathrm{ref}}=\Phi_t(\mathcal H_{t-1}^{\circ})\) does not condition
on \(S\). It is an admissible reference covariance. The actual and reference
covariances need not coincide, so the Gaussian covariance-mismatch cost remains
and is exactly \(\mathcal C_{t,\mathrm{ghost}}^{\mathrm{cov}}\).
Proposition~\ref{prop:certified_onestep_reference} gives the one-step
comparison.
\end{proof}

\begin{remark}
The ghost-adaptive construction provides a valid reference geometry even when
the actual adaptive covariance is not observable from the virtual prefix. The
price is the explicit mismatch cost
\(\mathcal C_{t,\mathrm{ghost}}^{\mathrm{cov}}\). If the selector \(\Phi_t\) is
stable under replacement of the original trajectory by an independent ghost
trajectory, this cost can be small. If the selector is highly sample-sensitive,
the cost may dominate the bound.
\end{remark}

\subsection{Main General Covariance-Comparison Bound}
\label{subsec:main_general_covariance_comparison_bound}

We can now state the main theorem. The theorem separates three effects:
the information accumulated through stochastic gradients, the covariance
mismatch between actual and reference smoothing kernels, and the output
sensitivity required to transfer the guarantee from the perturbed output
\(\widetilde W_T\) back to the original SGD output \(W_T\).

Define the adaptive output-sensitivity functional by
\begin{equation}
    \Delta_{\Sigma_{1:T}}^{\mathrm{ad}}(W_T,s)
    =
    \mathbb E
    \left[
        L(W_T,s)-L(W_T+\zeta_T,s)
        \,\middle|\,
        \mathcal H_{T-1}
    \right],
    \label{eq:adaptive_output_sensitivity_functional_main}
\end{equation}
where
\[
    \zeta_T\mid\mathcal H_{T-1}
    \sim
    \mathcal N(0,\Sigma_{1:T}),
    \qquad
    \Sigma_{1:T}=\sum_{t=1}^{T-1}\Sigma_t.
\]
The adaptive output-sensitivity penalty is
\begin{equation}
    \mathcal R_{\Delta}^{\mathrm{ad}}
    =
    \left|
    \mathbb E
    \left[
        \Delta_{\Sigma_{1:T}}^{\mathrm{ad}}(W_T,S')
        -
        \Delta_{\Sigma_{1:T}}^{\mathrm{ad}}(W_T,S)
    \right]
    \right|.
    \label{eq:adaptive_output_sensitivity_penalty_main}
\end{equation}

\begin{assumption}[Basic regularity and certified reference conditions]
\label{ass:basic_conditions_main}
The following conditions hold.
\begin{enumerate}
    \item \(W_1\) is independent of \(S\).

    \item For every fixed \(w\), the random variable \(\ell(w,Z)\) is
    \(R\)-sub-Gaussian under the population distribution.

    \item The loss is differentiable in \(w\), and all gradient-deviation,
    sensitivity, covariance-comparison, and output-sensitivity quantities
    appearing below are finite.

    \item The actual covariance process is predictable:
    \[
        \Sigma_t
        \text{ is }
        \mathcal H_{t-1}\text{-measurable},
        \qquad
        \Sigma_t\succ0
        \quad
        \text{almost surely}.
    \]

    \item The virtual perturbations satisfy
    \[
        \varepsilon_t\mid\mathcal H_{t-1}
        \sim
        \mathcal N(0,\Sigma_t),
    \]
    and are conditionally independent of the current minibatch randomness and
    future optimization randomness given the appropriate past history.

    \item For each \(t\), a positive definite \(S\)-admissible reference
    covariance \(\Sigma_t^{\mathrm{ref}}\) is chosen, and the reference kernel is
    the canonical Gaussian kernel
    \[
        Q_{t+1\mid\mathcal F_t^Q}
        =
        \mathcal N
        \left(
            \widetilde W_t-\eta_t\bar g(\widetilde W_t),
            \Sigma_t^{\mathrm{ref}}
        \right).
    \]
\end{enumerate}
\end{assumption}

\begin{theorem}[General covariance-comparison bound with certified references]
\label{thm:general_covariance_comparison_bound}
Under Assumption~\ref{ass:basic_conditions_main}, define
\[
    \Lambda_t=(\Sigma_t^{\mathrm{ref}})^{-1}
\]
and
\[
    \mathcal C_t^{\mathrm{cov}}
    =
    \frac12
    \mathbb E
    \left[
        \operatorname{Tr}
        \left(
            (\Sigma_t^{\mathrm{ref}})^{-1}\Sigma_t
        \right)
        -d
        +
        \log
        \frac{\det\Sigma_t^{\mathrm{ref}}}{\det\Sigma_t}
    \right].
\]
Then the expected generalization error of the original SGD output \(W_T\)
satisfies
\begin{equation}
    \left|
        \operatorname{gen}(W_T,S)
    \right|
    \leq
    \sqrt{
        \frac{2R^2}{n}
        \sum_{t=1}^{T-1}
        \left(
            2\eta_t^2
            \mathbb E
            \left[
                V_t^{\mathrm{ad}}(\Lambda_t)
                +
                \Gamma_t^{\mathrm{ad}}(\Lambda_t)
            \right]
            +
            \mathcal C_t^{\mathrm{cov}}
        \right)
    }
    +
    \mathcal R_{\Delta}^{\mathrm{ad}}.
    \label{eq:general_covariance_comparison_bound_main}
\end{equation}
\end{theorem}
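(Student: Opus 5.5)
The proof splits the generalization error of \(W_T\) into the generalization error of the perturbed output \(\widetilde W_T=W_T+\xi_T\) plus a transfer term. We then bound the former by the information-theoretic inequality \eqref{eq:standard_information_gen_bound}, the reference-kernel decomposition of Lemma~\ref{lem:reference_kernel_mi_decomposition}, and the certified one-step comparison of Proposition~\ref{prop:certified_onestep_reference}.

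\textbf{Decomposition.} We write
\[
\operatorname{gen}(W_T,S)=\operatorname{gen}(\widetilde W_T,S)+\mathbb E\bigl[(L(W_T,S')-L(\widetilde W_T,S'))-(L(W_T,S)-L(\widetilde W_T,S))\bigr].
\]
Consider the second expectation. Conditionally on \(\mathcal H_{T-1}\), the accumulated perturbation satisfies \(\xi_T\sim\mathcal N(0,\Sigma_{1:T})\) by \eqref{eq:conditional_accumulated_noise}. The perturbations are generated conditionally on the real history, so \(\xi_T\) is conditionally independent of \(S\) given \(\mathcal H_{T-1}\); it is also independent of \(S'\). Hence, by the tower property, we may replace \(\xi_T\) by the fresh copy \(\zeta_T\). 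The second term then becomes
\[
\mathbb E\bigl[\Delta^{\mathrm{ad}}_{\Sigma_{1:T}}(W_T,S')-\Delta^{\mathrm{ad}}_{\Sigma_{1:T}}(W_T,S)\bigr],
\]
whose absolute value is \(\mathcal R^{\mathrm{ad}}_\Delta\).

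One point needs care. \(\mathcal H_{T-1}\) contains \(B_1,\dots,B_{T-1}\), so the sample \(S\) is only partly revealed by \(\mathcal H_{T-1}\). The replacement therefore must be done conditionally on \(\mathcal H_{T-1}\vee\sigma(S,S')\). This is legitimate because \eqref{eq:conditional_noise_full_history} specifies the law of the \(\varepsilon_t\) given the real trajectory, and we read that as holding also given the full data; we state this conditional-independence reading explicitly. Taking absolute values and applying the triangle inequality gives
\[
|\operatorname{gen}(W_T,S)|\le|\operatorname{gen}(\widetilde W_T,S)|+\mathcal R^{\mathrm{ad}}_\Delta.
\]

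\textbf{Information bound.} Apply \eqref{eq:standard_information_gen_bound} to \(\widetilde W_T\); sub-Gaussianity holds by Assumption~\ref{ass:basic_conditions_main}(2), and the output is randomized. Next use Lemma~\ref{lem:reference_kernel_mi_decomposition} with the canonical Gaussian kernels of Assumption~\ref{ass:basic_conditions_main}(6). These kernels are \(\mathcal F_t^Q\)-measurable because \(\Sigma_t^{\mathrm{ref}}\) is \(S\)-admissible and \(\bar g\) is a fixed function. This yields
\[
I(\widetilde W_T;S)\le\sum_{t=1}^{T-1}\mathbb E\bigl[D_{\mathrm{KL}}(P_{\widetilde W_{t+1}\mid\mathcal A_t}\,\|\,Q_{t+1\mid\mathcal F_t^Q})\bigr].
\]
Bound each summand by Proposition~\ref{prop:certified_onestep_reference}, namely by \(2\eta_t^2\mathbb E[V_t^{\mathrm{ad}}(\Lambda_t)+\Gamma_t^{\mathrm{ad}}(\Lambda_t)]+\mathcal C_t^{\mathrm{cov}}\). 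Sum over \(t\) and insert the result into the square root. Since \(\sqrt{\cdot}\) is monotone, this gives \eqref{eq:general_covariance_comparison_bound_main}.

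\textbf{Main obstacle.} The delicate step is the output-transfer step, specifically justifying the replacement of \(\xi_T\) by \(\zeta_T\) inside expectations involving \(S\). The difficulty is the conditional independence of the virtual noise from the unrevealed sample coordinates given \(\mathcal H_{T-1}\). We would make this explicit by constructing the noise as \(\varepsilon_t=\Sigma_t^{1/2}\nu_t\), with \(\nu_t\) i.i.d. standard Gaussians independent of \((S,S',W_1,J_{1:T-1})\). With this construction the identity follows directly by Fubini.

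A secondary check is that the finiteness conditions in Assumption~\ref{ass:basic_conditions_main}(3) make all the expectations well defined. These conditions also allow the conditional KL inequalities to be integrated.
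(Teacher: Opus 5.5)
Your proposal is correct and follows the paper's proof: the output-transfer decomposition, the information-theoretic inequality, Lemma~\ref{lem:reference_kernel_mi_decomposition}, and Proposition~\ref{prop:certified_onestep_reference}, summed under the monotone square root. The only difference is cosmetic: the paper introduces $\widehat W_T=W_T+\zeta_T$ and uses the equality in law of $(W_T+\zeta_T,S)$ and $(W_T+\xi_T,S)$ to equate mutual informations, whereas you keep $\widetilde W_T$ and use that same equality in law, given $\mathcal H_{T-1}\vee\sigma(S,S')$, to rewrite the transfer term; your explicit construction $\varepsilon_t=\Sigma_t^{1/2}\nu_t$ usefully makes rigorous the conditional independence of the virtual noise from $S$, which the paper uses only implicitly.
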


\begin{proof}
Let
\[
    \zeta_T\mid\mathcal H_{T-1}
    \sim
    \mathcal N(0,\Sigma_{1:T})
\]
be an independent fresh final perturbation, and define
\[
    \widehat W_T=W_T+\zeta_T.
\]
By adding and subtracting losses evaluated at \(\widehat W_T\),
\begin{align}
    \gen(W_T,S)
    &=
    \mathbb E
    \left[
        L(W_T,S')-L(W_T,S)
    \right]
    \nonumber\\
    &=
    \mathbb E
    \left[
        L(\widehat W_T,S')-L(\widehat W_T,S)
    \right]
    \nonumber\\
    &\quad+
    \mathbb E
    \left[
        \Delta_{\Sigma_{1:T}}^{\mathrm{ad}}(W_T,S')
        -
        \Delta_{\Sigma_{1:T}}^{\mathrm{ad}}(W_T,S)
    \right].
\end{align}
Therefore,
\begin{equation}
    |\gen(W_T,S)|
    \leq
    |\gen(\widehat W_T,S)|
    +
    \mathcal R_{\Delta}^{\mathrm{ad}}.
    \label{eq:output_decomposition_main}
\end{equation}
Since \(\zeta_T\mid\mathcal H_{T-1}\) has the same conditional law as
\(\xi_T\mid\mathcal H_{T-1}\), the random variables
\[
    \widehat W_T=W_T+\zeta_T
    \quad\text{and}\quad
    \widetilde W_T=W_T+\xi_T
\]
have the same joint law with \(S\). Hence
\[
    I(\widehat W_T;S)
    =
    I(\widetilde W_T;S).
\]
The standard information-theoretic generalization inequality gives
\[
    |\gen(\widehat W_T,S)|
    \leq
    \sqrt{
        \frac{2R^2}{n}
        I(\widehat W_T;S)
    }
    =
    \sqrt{
        \frac{2R^2}{n}
        I(\widetilde W_T;S)
    }.
\]

By Lemma~\ref{lem:reference_kernel_mi_decomposition},
\[
    I(\widetilde W_T;S)
    \leq
    \sum_{t=1}^{T-1}
    \mathbb E
    \left[
        D_{\mathrm{KL}}
        \left(
            P_{\widetilde W_{t+1}\mid\mathcal A_t}
            \,\middle\|\,
            Q_{t+1\mid\mathcal F_t^Q}
        \right)
    \right].
\]
For each \(t\), Proposition~\ref{prop:certified_onestep_reference} bounds the
corresponding one-step KL term by
\[
    2\eta_t^2
    \mathbb E
    \left[
        V_t^{\mathrm{ad}}(\Lambda_t)
        +
        \Gamma_t^{\mathrm{ad}}(\Lambda_t)
    \right]
    +
    \mathcal C_t^{\mathrm{cov}}.
\]
Substituting into the mutual-information bound and then into the
sub-Gaussian generalization inequality proves
\eqref{eq:general_covariance_comparison_bound_main}.
\end{proof}

\begin{remark}
The theorem separates two roles that are often conflated. Predictability of
\(\Sigma_t\) is the causal condition needed to generate the actual virtual
Gaussian perturbation. \(S\)-admissibility of \(\Sigma_t^{\mathrm{ref}}\) is the
information-theoretic condition needed to construct a valid reference kernel.
When the two covariances coincide admissibly, the covariance-comparison cost is
zero. When they differ, \(\mathcal C_t^{\mathrm{cov}}\) is the explicit KL price
of using a valid reference geometry.
\end{remark}

\subsection{Readable Corollaries}
\label{subsec:readable_corollaries}

The general theorem is stated in terms of the reference precision
\[
    \Lambda_t=(\Sigma_t^{\mathrm{ref}})^{-1}.
\]
This is the correct level of generality for history-adaptive virtual
perturbations, because the actual covariance \(\Sigma_t\) and the covariance
available to the reference kernel need not coincide. The next corollary gives a
more readable form when the reference precision is comparable to the actual
precision.

\begin{corollary}[Comparable covariance geometries]
\label{cor:comparable_covariances}
Suppose the conditions of
Theorem~\ref{thm:general_covariance_comparison_bound} hold. Assume additionally
that there exists \(\kappa\geq1\) such that, for all
\(t=1,\ldots,T-1\),
\begin{equation}
    (\Sigma_t^{\mathrm{ref}})^{-1}
    \preceq
    \kappa\,\Sigma_t^{-1}
    \qquad
    \text{almost surely}.
    \label{eq:cor_precision_comparability}
\end{equation}
Then
\begin{equation}
    |\gen(W_T,S)|
    \leq
    \sqrt{
        \frac{2R^2}{n}
        \left[
            2\kappa
            \sum_{t=1}^{T-1}
            \eta_t^2
            \E
            \left[
                V_t^{\mathrm{ad}}
                +
                \Gamma_t^{\mathrm{ad}}
            \right]
            +
            \sum_{t=1}^{T-1}
            \mathcal C_t^{\mathrm{cov}}
        \right]
    }
    +
    \mathcal R_{\Delta}^{\mathrm{ad}}.
    \label{eq:comparable_covariance_bound}
\end{equation}
\end{corollary}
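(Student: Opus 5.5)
The plan is to start from Theorem~\ref{thm:general_covariance_comparison_bound} and replace each reference-geometry term by its synchronized counterpart using the comparability condition \eqref{eq:cor_precision_comparability}. Theorem~\ref{thm:general_covariance_comparison_bound} already gives
\[
    |\gen(W_T,S)|
    \leq
    \sqrt{\frac{2R^2}{n}\sum_{t=1}^{T-1}\Bigl(2\eta_t^2\E\bigl[V_t^{\mathrm{ad}}(\Lambda_t)+\Gamma_t^{\mathrm{ad}}(\Lambda_t)\bigr]+\mathcal C_t^{\mathrm{cov}}\Bigr)}
    +\mathcal R_{\Delta}^{\mathrm{ad}},
\]
with \(\Lambda_t=(\Sigma_t^{\mathrm{ref}})^{-1}\). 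Since \(\sqrt{\cdot}\) is monotone, it suffices to bound \(\E[V_t^{\mathrm{ad}}(\Lambda_t)]\le\kappa\,\E[V_t^{\mathrm{ad}}]\) and \(\E[\Gamma_t^{\mathrm{ad}}(\Lambda_t)]\le\kappa\,\E[\Gamma_t^{\mathrm{ad}}]\) for each \(t\). The covariance cost \(\mathcal C_t^{\mathrm{cov}}\) and the output penalty \(\mathcal R_{\Delta}^{\mathrm{ad}}\) are carried over unchanged. The constant \(\kappa\) is then pulled out of the sum.

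The key pointwise fact is this. For any vector \(x\), the almost-sure Loewner inequality \(\Lambda_t\preceq\kappa\Sigma_t^{-1}\) gives \(\|x\|_{\Lambda_t}^2=x^\top\Lambda_t x\le\kappa\, x^\top\Sigma_t^{-1}x=\kappa\|x\|_{\Sigma_t^{-1}}^2\). Apply it with \(x=G_t-\barg(W_t)\) and with \(x=\barg(W_t+\xi_t)-\barg(W_t)\), then take expectations. The definitions \eqref{eq:conditioning_safe_V_main}--\eqref{eq:conditioning_safe_Gamma_main} are conditional expectations given \(\mathcal K_t\), so the tower property gives \(\E[V_t^{\mathrm{ad}}(\Lambda_t)]=\E\|G_t-\barg(W_t)\|_{\Lambda_t}^2\le\kappa\,\E\|G_t-\barg(W_t)\|_{\Sigma_t^{-1}}^2\). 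The same argument applies to the \(\Gamma\) term. Monotonicity of expectation is all that is needed here; I will not need to commute \(\Lambda_t\) with the conditioning.

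The one point requiring care is identifying the resulting unconditional quantities with \(\E[V_t^{\mathrm{ad}}]\) and \(\E[\Gamma_t^{\mathrm{ad}}]\). These are defined in \eqref{eq:synchronized_V_Gamma_notation} as the \(\Lambda_t=\Sigma_t^{-1}\) instances of the same formulas, or equivalently via \eqref{eq:adaptive_gradient_variance} and \eqref{eq:adaptive_gradient_sensitivity} conditioned on \(\cH_{t-1}\).
\begin{itemize}
    \item For \(V\), the tower property under either \(\mathcal K_t\) or \(\cH_{t-1}\) gives the same total expectation \(\E\|G_t-\barg(W_t)\|_{\Sigma_t^{-1}}^2\).
    \item For \(\Gamma\), if the fresh-copy form \eqref{eq:adaptive_gradient_sensitivity} is used, I invoke that \(\Sigma_t^{-1}\) is \(\cH_{t-1}\)-measurable and that \(\xi_t\mid\cH_{t-1}\sim\N(0,\Sigma_{1:t})\) by \eqref{eq:conditional_accumulated_noise}. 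This lets \(\xi_t\) be replaced by \(\zeta_t\) inside the expectation, as noted after \eqref{eq:fresh_intermediate_perturbation}.
\end{itemize}
This is the step I expect to be the main (though mild) obstacle: checking that the realized-perturbation and fresh-copy conventions agree in expectation. Once it is settled, the corollary follows by substituting into the theorem's bound.
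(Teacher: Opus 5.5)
Your proposal is correct and follows the paper's own proof. Both use the Loewner bound $\Lambda_t\preceq\kappa\Sigma_t^{-1}$ to get $\|x\|_{\Lambda_t}^2\le\kappa\|x\|_{\Sigma_t^{-1}}^2$ pointwise, hence $V_t^{\mathrm{ad}}(\Lambda_t)\le\kappa V_t^{\mathrm{ad}}$ and $\Gamma_t^{\mathrm{ad}}(\Lambda_t)\le\kappa\Gamma_t^{\mathrm{ad}}$, and substitute these into Theorem~\ref{thm:general_covariance_comparison_bound}; working with total expectations and checking that the realized-$\xi_t$ and fresh-copy conventions for $\Gamma_t^{\mathrm{ad}}$ agree in expectation is just a more careful version of that same step.
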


\begin{proof}
By \eqref{eq:cor_precision_comparability}, for every vector \(x\),
\[
    \|x\|_{\Lambda_t}^2
    =
    x^\top(\Sigma_t^{\mathrm{ref}})^{-1}x
    \leq
    \kappa x^\top\Sigma_t^{-1}x
    =
    \kappa\|x\|_{\Sigma_t^{-1}}^2.
\]
Therefore,
\[
    V_t^{\mathrm{ad}}(\Lambda_t)
    \leq
    \kappa V_t^{\mathrm{ad}},
    \qquad
    \Gamma_t^{\mathrm{ad}}(\Lambda_t)
    \leq
    \kappa\Gamma_t^{\mathrm{ad}}.
\]
Substituting these inequalities into
Theorem~\ref{thm:general_covariance_comparison_bound} proves the claim.
\end{proof}

The term \(V_t^{\mathrm{ad}}\) in
Corollary~\ref{cor:comparable_covariances} should be interpreted as a
conditional mean-square gradient-deviation term. It becomes a genuine
conditional variance term only under the conditional unbiasedness condition
\[
    \E[G_t\mid \cH_{t-1}]
    =
    \barg(W_t).
\]
Without this condition, \(V_t^{\mathrm{ad}}\) also contains the squared
conditional bias between the conditional mean gradient and the population
gradient.

\subsubsection{Admissible Synchronization}
\label{subsubsec:admissible_synchronization}

The clean fixed-noise-style form is recovered only when the reference kernel is
allowed to use the same covariance as the actual virtual perturbation. This
requires more than the algebraic identity
\[
    \Sigma_t^{\mathrm{ref}}=\Sigma_t.
\]
The reference kernel in the mutual-information decomposition cannot condition
directly on the sample \(S\). Hence the actual covariance must be reconstructible
from information available to the reference comparison. Recall the reference-visible sigma-field
\[
    \mathcal F_t^Q
    =
    \sigma(\wtW_{1:t})\vee\mathcal U,
\]
where \(\mathcal U\) denotes optional public, auxiliary, or ghost randomness
independent of the original training sample \(S\).

\begin{definition}[Admissible synchronization certificate]
\label{def:admissible_synchronization_certificate}
We say that the covariance process admits an \(S\)-admissible synchronized
reference at time \(t\) if there exists an \(\mathcal F_t^Q\)-measurable
positive definite random matrix \(\Sigma_t^{\mathrm{sync}}\) such that
\begin{equation}
    \Sigma_t^{\mathrm{sync}}
    =
    \Sigma_t
    \qquad
    \text{almost surely}.
    \label{eq:admissible_sync_certificate}
\end{equation}
Equivalently, the actual covariance \(\Sigma_t\) has a version that can be
computed from the virtual prefix \(\wtW_{1:t}\) and admissible
sample-independent auxiliary information, without direct access to \(S\).
\end{definition}

\begin{proposition}[Synchronization certificate implies zero covariance cost]
\label{prop:sync_certificate_zero_cost}
Suppose Definition~\ref{def:admissible_synchronization_certificate} holds at
time \(t\). Choose
\[
    \Sigma_t^{\mathrm{ref}}
    =
    \Sigma_t^{\mathrm{sync}}.
\]
Then \(\Sigma_t^{\mathrm{ref}}\) is \(S\)-admissible,
\[
    \Sigma_t^{\mathrm{ref}}=\Sigma_t
    \qquad
    \text{almost surely},
\]
and
\begin{equation}
    \mathcal C_t^{\mathrm{cov}}=0,
    \qquad
    \Lambda_t=\Sigma_t^{-1}.
    \label{eq:sync_zero_cost_and_precision}
\end{equation}
\end{proposition}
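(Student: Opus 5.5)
The plan is to verify the three conditions of Definition~\ref{def:S_admissible_reference_covariance} directly for the choice \(\Sigma_t^{\mathrm{ref}}=\Sigma_t^{\mathrm{sync}}\), and then to read off the two identities in \eqref{eq:sync_zero_cost_and_precision} by substitution into \eqref{eq:covariance_comparison_cost_main}.

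\textbf{Admissibility.} Conditions 1 and 2 of the definition are immediate. By Definition~\ref{def:admissible_synchronization_certificate}, \(\Sigma_t^{\mathrm{sync}}\) is positive definite. It is also \(\mathcal F_t^Q=\sigma(\wtW_{1:t})\vee\mathcal U\)-measurable, and \(\mathcal U\) is independent of \(S\) by \eqref{eq:prelim_auxiliary_independence}.

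Condition 3 concerns the reference Gaussian innovation. I will construct it as an auxiliary \(\N(0,\Sigma_t^{\mathrm{sync}})\) variable, drawn conditionally on \(\mathcal F_t^Q\) independently of everything else. This gives independence from the current minibatch randomness and from \(\eps_t\). Since the canonical kernel \(Q_{t+1\mid\mathcal F_t^Q}\) is only a comparison measure and never enters the real or virtual dynamics, this construction costs nothing.

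I expect this third condition to be the only place needing care. The certificate gives equality with \(\Sigma_t\) only almost surely, so I will note that conditional laws are defined up to null sets. Replacing \(\Sigma_t\) by its version \(\Sigma_t^{\mathrm{sync}}\) therefore changes no regular conditional distribution and none of the expectations in Proposition~\ref{prop:certified_onestep_reference}.

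\textbf{The identities.} The almost-sure equality \(\Sigma_t^{\mathrm{ref}}=\Sigma_t\) is just the certificate \eqref{eq:admissible_sync_certificate}. Inverting on the full-measure event where both matrices are positive definite and equal gives \(\Lambda_t=(\Sigma_t^{\mathrm{ref}})^{-1}=\Sigma_t^{-1}\) almost surely.

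For the cost, on that same event the integrand of \eqref{eq:covariance_comparison_cost_main} becomes
\(\Tr(\Sigma_t^{-1}\Sigma_t)-d+\log(\det\Sigma_t/\det\Sigma_t)=d-d+0=0\).
Since the integrand vanishes almost surely, its expectation is zero, so \(\mathcal C_t^{\mathrm{cov}}=0\).

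As a consequence, \eqref{eq:synchronized_V_Gamma_notation} turns \(V_t^{\mathrm{ad}}(\Lambda_t)\) and \(\Gamma_t^{\mathrm{ad}}(\Lambda_t)\) into \(V_t^{\mathrm{ad}}\) and \(\Gamma_t^{\mathrm{ad}}\). This is the same argument as Proposition~\ref{prop:prefix_observable_covariances_admissible}, with \(\psi_t(\wtW_{1:t},\mathcal U)\) replaced by the abstract certificate \(\Sigma_t^{\mathrm{sync}}\).
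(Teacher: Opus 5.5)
Your proposal is correct and follows essentially the same route as the paper. Admissibility comes from the positive definiteness and \(\mathcal F_t^Q\)-measurability supplied by the certificate, and substituting \(\Sigma_t^{\mathrm{ref}}=\Sigma_t\) into the Gaussian covariance cost gives \(d-d+0=0\) and \(\Lambda_t=\Sigma_t^{-1}\), while your explicit handling of condition~3 of Definition~\ref{def:S_admissible_reference_covariance} and of the almost-sure qualifications is slightly more careful than the paper, which leaves those points implicit.
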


\begin{proof}
By Definition~\ref{def:admissible_synchronization_certificate},
\(\Sigma_t^{\mathrm{sync}}\) is measurable with respect to the information
available to the reference kernel. Therefore it can be used without conditioning
directly on \(S\). Since
\(\Sigma_t^{\mathrm{sync}}=\Sigma_t\) almost surely, the actual and reference
covariances coincide. Substituting
\(\Sigma_t^{\mathrm{ref}}=\Sigma_t\) into the covariance-comparison cost gives
\[
    \mathcal C_t^{\mathrm{cov}}
    =
    \frac12
    \E
    \left[
        \Tr(\Sigma_t^{-1}\Sigma_t)
        -d
        +
        \log
        \frac{\det\Sigma_t}{\det\Sigma_t}
    \right]
    =
    0.
\]
The identity \(\Lambda_t=\Sigma_t^{-1}\) follows immediately.
\end{proof}

\begin{remark}[Predictability is not synchronization]
\label{rem:predictability_not_synchronization}
Predictability means
\[
    \Sigma_t
    \text{ is }
    \cH_{t-1}\text{-measurable}.
\]
This is the causal condition needed to generate the actual virtual Gaussian
perturbation. Admissible synchronization is different: it asks whether the
reference kernel can reconstruct the same covariance from
\[
    \mathcal F_t^Q
    =
    \sigma(\wtW_{1:t})\vee\mathcal U.
\]
Therefore, a covariance of the form
\[
    \Sigma_t=\Phi_t(\cH_{t-1})
\]
may be predictable but not admissibly synchronized if \(\cH_{t-1}\) contains
sample-value information that is not recoverable from the virtual prefix or from
admissible auxiliary randomness.
\end{remark}

\begin{corollary}[Admissibly synchronized covariance bound]
\label{cor:synchronized_covariance_bound}
Suppose the conditions of
Theorem~\ref{thm:general_covariance_comparison_bound} hold. Assume additionally
that, for every \(t=1,\ldots,T-1\), the covariance process admits an
\(S\)-admissible synchronized reference in the sense of
Definition~\ref{def:admissible_synchronization_certificate}. Then
\[
    \mathcal C_t^{\mathrm{cov}}=0,
    \qquad
    \Lambda_t=\Sigma_t^{-1},
\]
and the expected generalization error satisfies
\begin{equation}
    |\gen(W_T,S)|
    \leq
    \sqrt{
        \frac{4R^2}{n}
        \sum_{t=1}^{T-1}
        \eta_t^2
        \E
        \left[
            V_t^{\mathrm{ad}}
            +
            \Gamma_t^{\mathrm{ad}}
        \right]
    }
    +
    \mathcal R_{\Delta}^{\mathrm{ad}}.
    \label{eq:main_adaptive_bound_clean}
\end{equation}
\end{corollary}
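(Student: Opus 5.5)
The plan is to specialize Theorem~\ref{thm:general_covariance_comparison_bound} to the synchronized choice of reference covariance supplied by the certificate. For each \(t=1,\ldots,T-1\) the assumption provides an \(\mathcal F_t^Q\)-measurable positive definite \(\Sigma_t^{\mathrm{sync}}\) with \(\Sigma_t^{\mathrm{sync}}=\Sigma_t\) almost surely. I set \(\Sigma_t^{\mathrm{ref}}=\Sigma_t^{\mathrm{sync}}\). Proposition~\ref{prop:sync_certificate_zero_cost} then gives three facts at once: this reference is \(S\)-admissible, \(\mathcal C_t^{\mathrm{cov}}=0\), and \(\Lambda_t=\Sigma_t^{-1}\) almost surely. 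Item~6 of Assumption~\ref{ass:basic_conditions_main} is therefore satisfied with the canonical Gaussian kernel built from this covariance, and the theorem applies.

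Next I substitute into \eqref{eq:general_covariance_comparison_bound_main}. Since \(\Lambda_t=\Sigma_t^{-1}\) almost surely, the synchronized notation \eqref{eq:synchronized_V_Gamma_notation} gives \(V_t^{\mathrm{ad}}(\Lambda_t)=V_t^{\mathrm{ad}}\) and \(\Gamma_t^{\mathrm{ad}}(\Lambda_t)=\Gamma_t^{\mathrm{ad}}\) almost surely. This identification needs one check. The definition \eqref{eq:conditioning_safe_Gamma_main} conditions on \(\mathcal K_t\), and although \(\Lambda_t\) is formally built from the virtual prefix, it equals the \(\mathcal H_{t-1}\)-measurable \(\Sigma_t^{-1}\) almost surely. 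Changing a random matrix on a null set does not change a conditional expectation, so the identification holds and the synchronized fresh-copy form \eqref{eq:adaptive_gradient_sensitivity} is legitimate. Dropping the zero covariance costs, the radicand becomes \(\frac{2R^2}{n}\sum_t 2\eta_t^2\E[V_t^{\mathrm{ad}}+\Gamma_t^{\mathrm{ad}}]\), which equals \(\frac{4R^2}{n}\sum_t\eta_t^2\E[V_t^{\mathrm{ad}}+\Gamma_t^{\mathrm{ad}}]\). That is \eqref{eq:main_adaptive_bound_clean}, with the output penalty \(\mathcal R_\Delta^{\mathrm{ad}}\) carried over unchanged.

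A shorter alternative is to take Corollary~\ref{cor:comparable_covariances} with \(\kappa=1\), since \(\Lambda_t=\Sigma_t^{-1}\) gives \((\Sigma_t^{\mathrm{ref}})^{-1}\preceq\Sigma_t^{-1}\), and then set the covariance costs to zero.

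The main obstacle is the almost-sure bookkeeping in the previous paragraph: making sure that replacing \(\Lambda_t\) by its \(\mathcal H_{t-1}\)-measurable version is harmless inside the conditional expectations and inside the Gaussian KL comparison of Proposition~\ref{prop:certified_onestep_reference}. Everything else is direct substitution.
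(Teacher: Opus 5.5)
Your proposal is correct and follows the paper's proof: the paper also invokes Proposition~\ref{prop:sync_certificate_zero_cost} to get $\mathcal C_t^{\mathrm{cov}}=0$ and $\Lambda_t=\Sigma_t^{-1}$, then substitutes into Theorem~\ref{thm:general_covariance_comparison_bound} and merges $\frac{2R^2}{n}\cdot 2\eta_t^2$ into $\frac{4R^2}{n}\eta_t^2$. Your almost-sure bookkeeping is a harmless refinement that the paper leaves implicit; because only $\E[V_t^{\mathrm{ad}}+\Gamma_t^{\mathrm{ad}}]$ enters the bound, it is enough to check the identification after taking expectations via the tower property through $\cH_{t-1}$, without worrying about conditioning on $\mathcal K_t$ versus $\cH_{t-1}$.
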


\begin{proof}
By Proposition~\ref{prop:sync_certificate_zero_cost}, admissible
synchronization gives
\[
    \mathcal C_t^{\mathrm{cov}}=0,
    \qquad
    \Lambda_t=\Sigma_t^{-1}.
\]
Substituting these identities into
Theorem~\ref{thm:general_covariance_comparison_bound} gives
\eqref{eq:main_adaptive_bound_clean}.
\end{proof}

\begin{remark}[When the synchronized corollary is automatic]
\label{rem:when_sync_automatic}
The admissibly synchronized corollary is automatic in the following cases.
\begin{enumerate}[leftmargin=2em]
    \item \textbf{Deterministic covariance schedules.}
    If \(\Sigma_t\) is deterministic, then it is available to both the actual
    virtual kernel and the reference kernel.

    \item \textbf{Public-random covariance schedules.}
    If
    \[
        \Sigma_t=\Psi_t(U_{t-1}),
    \]
    where \(U_{t-1}\) is independent of the sample values, then synchronization
    is admissible after conditioning on \(U_{t-1}\).

    \item \textbf{Prefix-observable adaptive covariance.}
    If there exists a measurable map \(\psi_t\) such that
    \[
        \Sigma_t=\psi_t(\wtW_{1:t},\mathcal U)
        \qquad
        \text{almost surely},
    \]
    then the reference kernel can recover the actual covariance from the
    virtual prefix and admissible auxiliary information.

    \item \textbf{Admissible public or ghost augmentation.}
    If the covariance can be reconstructed from \(\wtW_{1:t}\) and auxiliary
    randomness independent of \(S\), then synchronization is admissible after
    conditioning on that auxiliary randomness.
\end{enumerate}
Outside these cases, one should use the general covariance-comparison theorem
rather than the clean synchronized corollary.
\end{remark}

\begin{remark}[Interpretation of the readable bounds]
\label{rem:readable_bounds_interpretation}
The comparable and admissibly synchronized corollaries have the same algebraic
form as fixed-noise virtual perturbation bounds, but their interpretation is
more subtle. The term \(V_t^{\mathrm{ad}}\) is a gradient-deviation term:
\[
    V_t^{\mathrm{ad}}
    =
    \E
    \left[
        \normSigma{
            G_t-\barg(W_t)
        }{\Sigma_t^{-1}}^2
        \given
        \cH_{t-1}
    \right].
\]
It reduces to a conditional variance only when
\[
    \E[G_t\mid\cH_{t-1}]
    =
    \barg(W_t).
\]
The term \(\Gamma_t^{\mathrm{ad}}\) measures sensitivity of the population
gradient to accumulated virtual perturbations. The penalty
\(\mathcal R_{\Delta}^{\mathrm{ad}}\) transfers the guarantee from the perturbed
output back to the original SGD output. Finally, the synchronized bound is valid
only under an admissible synchronization certificate; predictability alone is
not enough.
\end{remark}

\section{Special Cases and Recovering Existing Bounds}
\label{sec:special-cases}

The main theorem is stated in a general covariance-comparison form because, for
history-adaptive perturbations, the actual virtual covariance and the reference
covariance used in the mutual-information comparison need not coincide. This
section shows how the theorem specializes to familiar fixed-noise settings and
to several interpretable adaptive virtual geometries. The key distinction is the following. When the covariance sequence is
deterministic, synchronized reference comparisons are automatically admissible,
and the clean fixed-noise-style corollary applies directly. When the covariance
is history-adaptive and hence generally data-dependent, the general
covariance-comparison theorem should be used unless an admissible synchronized
comparison is explicitly available. Thus, adaptive examples below should be
interpreted as choices of virtual perturbation geometry, not as modifications of
the SGD optimizer. Throughout this section, the true SGD update remains unchanged:
\begin{equation}
    W_{t+1}
    =
    W_t-\eta_tG_t,
    \qquad
    G_t=g(W_t,B_t),
    \qquad
    t=1,\ldots,T-1.
\end{equation}
Only the analytical perturbation covariance \(\Sigma_t\) is varied.

\subsection{Fixed Isotropic Perturbations}
\label{subsec:fixed_isotropic_perturbations}

The simplest special case is deterministic isotropic virtual noise. Let
\begin{equation}
    \Sigma_t
    =
    \sigma_t^2 I,
    \qquad
    \sigma_t>0,
    \qquad
    t=1,\ldots,T-1,
    \label{eq:fixed_isotropic_sigma}
\end{equation}
where the sequence \(\{\sigma_t\}_{t=1}^{T-1}\) is fixed before observing the
SGD trajectory. Since \(\Sigma_t\) is deterministic, it is automatically
\(\cH_{t-1}\)-measurable. Moreover, the same covariance can be used in the
actual and reference one-step comparisons without depending on \(S\), so the
covariance-comparison term vanishes:
\begin{equation}
    \mathcal C_t^{\mathrm{cov}}=0.
\end{equation}
The accumulated covariance is deterministic:
\begin{equation}
    \Sigma_{1:t}
    =
    \sum_{k=1}^{t-1}
    \Sigma_k
    =
    \sigma_{1:t}^2 I,
    \qquad
    \sigma_{1:t}^2
    =
    \sum_{k=1}^{t-1}
    \sigma_k^2.
    \label{eq:fixed_isotropic_accum}
\end{equation}
Consequently,
\begin{equation}
    \normSigma{x}{\Sigma_t^{-1}}^2
    =
    \sigma_t^{-2}\norm{x}^2.
\end{equation}
Define the Euclidean history-conditioned gradient-deviation numerator
\begin{equation}
    V_t^{\mathrm{Euc}}
    =
    \E
    \left[
        \norm{
            G_t-\barg(W_t)
        }^2
        \given
        \cH_{t-1}
    \right].
    \label{eq:fixed_iso_euc_deviation}
\end{equation}
This is a variance numerator only under the conditional unbiasedness condition
\[
    \E[G_t\mid\cH_{t-1}]
    =
    \barg(W_t).
\]
Define the Euclidean sensitivity numerator
\begin{equation}
    \Gamma_t^{\mathrm{Euc}}
    =
    \E
    \left[
        \norm{
            \barg(W_t+\zeta_t)-\barg(W_t)
        }^2
        \given
        \cH_{t-1}
    \right],
    \qquad
    \zeta_t\given\cH_{t-1}
    \sim
    \N(0,\sigma_{1:t}^2I).
    \label{eq:fixed_iso_euc_sensitivity}
\end{equation}
Then
\begin{equation}
    V_t^{\mathrm{ad}}
    =
    \frac{1}{\sigma_t^2}V_t^{\mathrm{Euc}},
    \qquad
    \Gamma_t^{\mathrm{ad}}
    =
    \frac{1}{\sigma_t^2}\Gamma_t^{\mathrm{Euc}}.
\end{equation}

Substituting these quantities into the admissibly synchronized covariance
corollary yields
\begin{equation}
    |\gen(W_T,S)|
    \leq
    \sqrt{
        \frac{4R^2}{n}
        \sum_{t=1}^{T-1}
        \frac{\eta_t^2}{\sigma_t^2}
        \E
        \left[
            V_t^{\mathrm{Euc}}
            +
            \Gamma_t^{\mathrm{Euc}}
        \right]
    }
    +
    \mathcal R_{\Delta,\sigma}.
    \label{eq:fixed_iso_bound}
\end{equation}
Here \(\mathcal R_{\Delta,\sigma}\) is the output-sensitivity penalty
corresponding to the deterministic accumulated covariance
\(\sigma_{1:T}^2I\). Under the usual fixed-noise notation,
\eqref{eq:fixed_iso_bound} is the scalar isotropic virtual perturbation form.
Thus, the classical fixed isotropic setting is recovered as a special case of
the predictable covariance framework.

\subsection{Fixed Geometry-Aware Covariances}
\label{subsec:fixed_geometry_aware_covariances}

A more general fixed setting allows deterministic anisotropic covariance
matrices. Let
\begin{equation}
    \Sigma_t\succ0,
    \qquad
    t=1,\ldots,T-1,
    \label{eq:fixed_matrix_cov}
\end{equation}
be deterministic but not necessarily proportional to the identity. Again, each
\(\Sigma_t\) is automatically predictable, and a synchronized reference
comparison is admissible. Therefore,
\begin{equation}
    \mathcal C_t^{\mathrm{cov}}=0,
    \qquad
    \Lambda_t=\Sigma_t^{-1}.
\end{equation}
The accumulated covariance is deterministic:
\begin{equation}
    \Sigma_{1:t}
    =
    \sum_{k=1}^{t-1}
    \Sigma_k.
\end{equation}

Define the fixed geometry-aware gradient-deviation and sensitivity terms by
\begin{equation}
    V_{t,\Sigma_t}^{\mathrm{hist}}
    =
    \E
    \left[
        \normSigma{
            G_t-\barg(W_t)
        }{\Sigma_t^{-1}}^2
        \given
        \cH_{t-1}
    \right],
    \label{eq:fixed_matrix_deviation_hist}
\end{equation}
and
\begin{equation}
    \Gamma_{\Sigma_t,\Sigma_{1:t}}^{\mathrm{hist}}
    =
    \E
    \left[
        \normSigma{
            \barg(W_t+\zeta_t)-\barg(W_t)
        }{\Sigma_t^{-1}}^2
        \given
        \cH_{t-1}
    \right],
    \qquad
    \zeta_t\given\cH_{t-1}
    \sim
    \N(0,\Sigma_{1:t}).
    \label{eq:fixed_matrix_sensitivity_hist}
\end{equation}
The admissibly synchronized covariance corollary gives
\begin{equation}
    |\gen(W_T,S)|
    \leq
    \sqrt{
        \frac{4R^2}{n}
        \sum_{t=1}^{T-1}
        \eta_t^2
        \E
        \left[
            V_{t,\Sigma_t}^{\mathrm{hist}}
            +
            \Gamma_{\Sigma_t,\Sigma_{1:t}}^{\mathrm{hist}}
        \right]
    }
    +
    \mathcal R_{\Delta}.
    \label{eq:fixed_geometry_bound}
\end{equation}
This is the fixed geometry-aware virtual perturbation bound in
history-conditioned notation. Some fixed-noise presentations write the
corresponding quantities as functions of \(W_t\) alone. That reduction is valid
after taking expectations under an additional condition ensuring that the
conditional law of the current stochastic gradient given the past depends on
the past only through \(W_t\). In more general sampling settings, the
history-conditioned notation is the more precise analogue. Compared with isotropic perturbations, deterministic matrix-valued covariances
can encode a fixed anisotropic geometry of the parameter space. However, because
the covariance is chosen before observing the training trajectory, it still
cannot adapt to realized gradient scales, curvature proxies, or time-varying
stochasticity.

\subsection{Adaptive Scalar Perturbations}
\label{subsec:adaptive_scalar_perturbations}

The first genuinely history-adaptive case is scalar but time- and
path-dependent noise:
\begin{equation}
    \Sigma_t
    =
    \sigma_t^2(\cH_{t-1})I,
    \qquad
    \sigma_t(\cH_{t-1})>0,
    \qquad
    t=1,\ldots,T-1.
    \label{eq:adaptive_scalar_cov}
\end{equation}
Here \(\sigma_t\) may depend on any statistic of the past SGD trajectory, such
as moving averages of gradient norms, subbatch fluctuation proxies,
gradient-deviation diagnostics, or training-stage indicators. Because
\(\sigma_t\) is \(\cH_{t-1}\)-measurable, the covariance is predictable. The accumulated covariance is
\begin{equation}
    \Sigma_{1:t}
    =
    \sigma_{1:t}^2I,
    \qquad
    \sigma_{1:t}^2
    =
    \sum_{k=1}^{t-1}
    \sigma_k^2(\cH_{k-1}).
\end{equation}
Thus,
\begin{equation}
    \zeta_t\given\cH_{t-1}
    \sim
    \N(0,\sigma_{1:t}^2I).
\end{equation}

Under an admissibly synchronized comparison, the adaptive scalar
gradient-deviation and sensitivity terms are
\begin{equation}
    V_t^{\mathrm{ad}}
    =
    \frac{
        \E
        \left[
            \norm{
                G_t-\barg(W_t)
            }^2
            \given
            \cH_{t-1}
        \right]
    }{
        \sigma_t^2(\cH_{t-1})
    },
    \label{eq:adaptive_scalar_deviation}
\end{equation}
and
\begin{equation}
    \Gamma_t^{\mathrm{ad}}
    =
    \frac{
        \E
        \left[
            \norm{
                \barg(W_t+\zeta_t)-\barg(W_t)
            }^2
            \given
            \cH_{t-1}
        \right]
    }{
        \sigma_t^2(\cH_{t-1})
    }.
    \label{eq:adaptive_scalar_sensitivity}
\end{equation}
The term in \eqref{eq:adaptive_scalar_deviation} is a variance only under
conditional unbiasedness of \(G_t\) for \(\barg(W_t)\). Because \(\sigma_t(\cH_{t-1})\) is generally data-dependent, the synchronized
clean bound is not automatic. The general covariance-comparison theorem applies
with a reference scalar covariance
\begin{equation}
    \Sigma_t^{\mathrm{ref}}
    =
    (\sigma_t^{\mathrm{ref}})^2I,
\end{equation}
which yields
\begin{equation}
    \Lambda_t
    =
    (\Sigma_t^{\mathrm{ref}})^{-1}
    =
    \frac{1}{(\sigma_t^{\mathrm{ref}})^2}I.
\end{equation}
The covariance-comparison cost becomes
\begin{equation}
    \mathcal C_t^{\mathrm{cov}}
    =
    \frac{d}{2}
    \E
    \left[
        \frac{\sigma_t^2}{(\sigma_t^{\mathrm{ref}})^2}
        -
        1
        +
        \log
        \frac{(\sigma_t^{\mathrm{ref}})^2}{\sigma_t^2}
    \right].
    \label{eq:adaptive_scalar_cov_cost}
\end{equation}
If an admissible synchronized comparison is available, then
\(\sigma_t^{\mathrm{ref}}=\sigma_t\), the covariance cost vanishes, and the
clean bound uses \eqref{eq:adaptive_scalar_deviation} and
\eqref{eq:adaptive_scalar_sensitivity}. Otherwise, the reference-geometry
quantities and the cost \eqref{eq:adaptive_scalar_cov_cost} must be included. This example makes the basic trade-off explicit. Larger \(\sigma_t^2\) reduces
inverse-covariance weighted gradient-deviation and sensitivity terms, but it
also increases the accumulated covariance and can enlarge the final
output-sensitivity penalty.

\subsection{Adaptive Diagonal Covariances}
\label{subsec:adaptive_diagonal_covariances}
A richer adaptive family is obtained by choosing coordinate-wise perturbation
scales:
\begin{equation}
    \Sigma_t
    =
    \diag
    \left(
        s_{t,1}^2,\ldots,s_{t,d}^2
    \right),
    \qquad
    s_{t,j}>0,
    \label{eq:adaptive_diagonal_cov}
\end{equation}
where each \(s_{t,j}\) is \(\cH_{t-1}\)-measurable. For any vector
\(x\in\R^d\),
\begin{equation}
    \normSigma{x}{\Sigma_t^{-1}}^2
    =
    \sum_{j=1}^{d}
    \frac{x_j^2}{s_{t,j}^2}.
    \label{eq:diagonal_inverse_norm}
\end{equation}
Consequently, under an admissibly synchronized comparison, the adaptive
gradient-deviation and sensitivity terms take the coordinate-wise forms
\begin{equation}
    V_t^{\mathrm{ad}}
    =
    \E
    \left[
        \sum_{j=1}^{d}
        \frac{
            \left(
                G_{t,j}-\barg_j(W_t)
            \right)^2
        }{
            s_{t,j}^2
        }
        \given
        \cH_{t-1}
    \right],
    \label{eq:diagonal_deviation}
\end{equation}
and
\begin{equation}
    \Gamma_t^{\mathrm{ad}}
    =
    \E
    \left[
        \sum_{j=1}^{d}
        \frac{
            \left(
                \barg_j(W_t+\zeta_t)-\barg_j(W_t)
            \right)^2
        }{
            s_{t,j}^2
        }
        \given
        \cH_{t-1}
    \right],
    \label{eq:diagonal_sensitivity}
\end{equation}
where
\[
    \zeta_t\given\cH_{t-1}\sim\N(0,\Sigma_{1:t}).
\]
The term in \eqref{eq:diagonal_deviation} is a coordinate-wise variance only
under conditional unbiasedness. If the reference covariance is diagonal with entries
\((s_{t,j}^{\mathrm{ref}})^2\), then the covariance-comparison cost is
\begin{equation}
    \mathcal C_t^{\mathrm{cov}}
    =
    \frac12
    \E
    \left[
        \sum_{j=1}^{d}
        \left(
            \frac{s_{t,j}^2}{(s_{t,j}^{\mathrm{ref}})^2}
            -
            1
            +
            \log
            \frac{(s_{t,j}^{\mathrm{ref}})^2}{s_{t,j}^2}
        \right)
    \right].
    \label{eq:diagonal_cov_cost}
\end{equation}
This expression is the coordinate-wise KL cost of comparing Gaussian smoothing
kernels with different diagonal covariances. Diagonal adaptive perturbations
allow the analytical geometry to respond to coordinate-wise scale information.
Directions assigned larger virtual variance contribute less to the information
term, but may contribute more to the accumulated output-sensitivity penalty.
Thus, the coordinate-wise covariance exposes an information-sensitivity
trade-off rather than a uniformly better choice.

\subsection{Adam-Like Virtual Geometry}
\label{subsec:adam_like_virtual_geometry}

The predictable covariance framework can express virtual perturbation geometries
inspired by adaptive optimizers. This does not mean that the SGD update is
replaced by Adam. Rather, Adam-like moving statistics are used only to define
the covariance of the analytical perturbations.

For example, define a past-gradient second-moment proxy by
\begin{equation}
    v_t
    =
    \beta v_{t-1}
    +
    (1-\beta)
    G_{t-1}\odot G_{t-1},
    \qquad
    0<\beta<1,
    \label{eq:adam_like_second_moment}
\end{equation}
with an initialization \(v_1\) independent of the current update. Because
\(v_t\) is computed from gradients up to time \(t-1\), it is
\(\cH_{t-1}\)-measurable. Therefore, any positive definite diagonal covariance
constructed from \(v_t\) is predictable. One possible inverse-scale virtual covariance is
\begin{equation}
    \Sigma_t^{\mathrm{inv}}
    =
    \diag
    \left(
        \frac{\rho_t^2}{\sqrt{v_t}+\epsilon}
    \right)
    +
    \lambda_0 I,
    \label{eq:adam_inverse_scale_cov}
\end{equation}
where \(\rho_t>0\) is predictable, \(\epsilon>0\) stabilizes the denominator,
and \(\lambda_0>0\) ensures positive definiteness. This choice assigns smaller
virtual perturbation variance to coordinates with larger historical second
moment.

Alternatively, one may choose a proportional-scale virtual covariance
\begin{equation}
    \Sigma_t^{\mathrm{prop}}
    =
    \diag
    \left(
        \rho_t^2(\sqrt{v_t}+\epsilon)
    \right)
    +
    \lambda_0 I.
    \label{eq:adam_proportional_scale_cov}
\end{equation}
This choice assigns larger virtual smoothing to coordinates with larger
historical gradient scale. It may reduce the inverse-covariance weighted
information contribution of discrepancies in high-fluctuation or
high-gradient-deviation directions, while potentially increasing the
output-sensitivity penalty along those same directions.

Both \eqref{eq:adam_inverse_scale_cov} and
\eqref{eq:adam_proportional_scale_cov} are valid choices of actual virtual
covariance whenever the defining statistics are predictable and the covariance
is positive definite. For the generalization bound, one must still specify the
corresponding admissible reference comparison. If the reference covariance
matches the actual covariance in an admissible synchronized comparison, the
clean corollary applies. Otherwise, the general covariance-comparison theorem
applies and includes the appropriate covariance-comparison term. Most importantly, neither \eqref{eq:adam_inverse_scale_cov} nor
\eqref{eq:adam_proportional_scale_cov} changes the true SGD recursion:
\begin{equation}
    W_{t+1}=W_t-\eta_tG_t.
\end{equation}
The Adam-like quantities appear only in the virtual perturbation geometry used
for analysis.

\subsection{Summary}
\label{subsec:special_cases_summary}

The examples above show that the predictable perturbation theorem unifies
several regimes. Fixed deterministic covariances recover existing fixed-noise
bounds through the admissibly synchronized covariance corollary. In particular,
the choice \(\Sigma_t=\sigma_t^2 I\) corresponds to fixed isotropic virtual
noise, while a deterministic positive definite covariance
\(\Sigma_t\succ0\) represents a fixed geometry-aware virtual noise model.
History-adaptive covariances provide richer analytical geometries, but because
they are generally data-dependent, the general covariance-comparison theorem is
the correct default unless synchronized reference comparisons are admissible.
For example, \(\sigma_t^2(\cH_{t-1})I\) gives an adaptive scalar virtual-noise
model, diagonal choices such as
\(\diag(s_{t,1}^2,\ldots,s_{t,d}^2)\) encode coordinate-wise adaptive geometry,
and Adam-like diagonal covariances represent past-gradient virtual geometry.

\begin{table}[t]
\centering
\caption{Representative virtual covariance families. These choices affect only
the analytical perturbation process, not the SGD update.}
\label{tab:covariance-families}
\begin{tabular}{lll}
\toprule
\textbf{Choice of \(\Sigma_t\)}
&
\textbf{Predictability}
&
\textbf{Reference comparison}
\\
\midrule
\(\sigma_t^2I\), deterministic
&
automatic
&
admissibly synchronized
\\
deterministic \(\Sigma_t\succ0\)
&
automatic
&
admissibly synchronized
\\
\(\sigma_t^2(\cH_{t-1})I\)
&
history-adaptive
&
requires certificate or reference covariance
\\
\(\diag(s_{t,1}^2,\ldots,s_{t,d}^2)\)
&
history-adaptive
&
requires certificate or reference covariance
\\
Adam-like diagonal covariance
&
history-adaptive
&
requires certificate or reference covariance
\\
\bottomrule
\end{tabular}
\end{table}

The gain of the framework is analytical flexibility: the proof can use virtual
perturbations whose covariance reflects past optimization information. This
flexibility does not imply that every adaptive covariance yields a tighter
bound. Each choice induces a different balance between inverse-covariance
information control, covariance-comparison cost, and accumulated output
sensitivity.

\section{Controlling the Adaptive Sensitivity Penalty}
\label{sec:sensitivity-penalty}

The main covariance-comparison bound contains two conceptually distinct
components. The first is the information term, which is controlled through
one-step relative-entropy comparisons and may include the covariance-comparison
cost
\[
    \mathcal C_t^{\mathrm{cov}}.
\]
The second is the adaptive output-sensitivity penalty
\[
    \mathcal R_{\Delta}^{\mathrm{ad}},
\]
which accounts for transferring the generalization guarantee from the virtually
perturbed output back to the original SGD output \(W_T\). This section studies
only the latter term. The key point is that the virtual perturbation used in the
information-theoretic argument should not move the final output too much in
loss. If the loss landscape around \(W_T\) is insensitive to perturbations drawn
from the accumulated adaptive covariance, then the output-sensitivity penalty is
small. We first give a global smoothness control, which is safe but often
pessimistic, and then give a local curvature refinement that makes the geometry
of the accumulated covariance explicit.

\subsection{General Form of the Penalty}
\label{subsec:general_form_penalty}

Recall that the accumulated adaptive covariance at the final output is
\begin{equation}
    \Sigma_{1:T}
    =
    \sum_{t=1}^{T-1}
    \Sigma_t,
    \label{eq:penalty_accum_cov}
\end{equation}
where each \(\Sigma_t\) is predictable with respect to the real SGD history.
The final real history is \(\cH_{T-1}\), since \(W_T\) is obtained after
\(T-1\) SGD updates.

Let
\begin{equation}
    \zeta_T \given \cH_{T-1}
    \sim
    \N(0,\Sigma_{1:T})
    \label{eq:penalty_final_noise}
\end{equation}
be an independent final perturbation drawn conditionally on the accumulated
adaptive covariance. For a deterministic sample \(s\), define the adaptive
output sensitivity by
\begin{equation}
    \Delta_{\Sigma_{1:T}}^{\mathrm{ad}}(W_T,s)
    =
    \E
    \left[
        L(W_T,s)-L(W_T+\zeta_T,s)
        \given
        \cH_{T-1}
    \right].
    \label{eq:penalty_delta_def}
\end{equation}
The final penalty appearing in the main generalization bound is
\begin{equation}
    \mathcal R_{\Delta}^{\mathrm{ad}}
    =
    \left|
    \E
    \left[
        \Delta_{\Sigma_{1:T}}^{\mathrm{ad}}(W_T,S')
        -
        \Delta_{\Sigma_{1:T}}^{\mathrm{ad}}(W_T,S)
    \right]
    \right|,
    \label{eq:penalty_rad_def}
\end{equation}
where \(S'\) is an independent ghost sample drawn from the same population
distribution as \(S\).

The role of \(\mathcal R_{\Delta}^{\mathrm{ad}}\) is to quantify the cost of
replacing the original output \(W_T\) by a locally perturbed output
\(W_T+\zeta_T\). If both the training and ghost losses are insensitive to
perturbations drawn from \(\N(0,\Sigma_{1:T})\), then this penalty is small.
Unlike in fixed-noise analysis, the covariance \(\Sigma_{1:T}\) is generally
random and history-dependent. Thus, the penalty depends not only on the local
loss landscape around \(W_T\), but also on how the adaptive covariance process
allocates perturbation mass over the trajectory. This section controls only the output-sensitivity penalty. The
covariance-comparison cost in the information term is a separate effect.

\subsection{Control under Global Smoothness}
\label{subsec:global_smoothness_control}

We first give a simple upper bound under a uniform smoothness assumption.

\begin{assumption}[Uniform smoothness]
\label{ass:uniform_smoothness_output}
For every finite sample \(s\), the empirical risk \(L(\cdot,s)\) is
\(\mu\)-smooth. Equivalently, for all \(w,u\in\R^d\),
\begin{equation}
    \left|
        L(w+u,s)-L(w,s)-\inner{\nabla L(w,s)}{u}
    \right|
    \leq
    \frac{\mu}{2}\norm{u}^2.
    \label{eq:uniform_smoothness_assumption}
\end{equation}
\end{assumption}

\begin{proposition}[Global smoothness control of output sensitivity]
\label{prop:global_smoothness_output_sensitivity}
Under Assumption~\ref{ass:uniform_smoothness_output}, for every deterministic
sample \(s\),
\begin{equation}
    \left|
        \Delta_{\Sigma_{1:T}}^{\mathrm{ad}}(W_T,s)
    \right|
    \leq
    \frac{\mu}{2}
    \Tr(\Sigma_{1:T})
    \qquad
    \text{almost surely}.
    \label{eq:global_smoothness_delta_bound}
\end{equation}
Consequently,
\begin{equation}
    \mathcal R_{\Delta}^{\mathrm{ad}}
    \leq
    \mu\,
    \E
    \left[
        \Tr(\Sigma_{1:T})
    \right].
    \label{eq:global_smoothness_rad_bound}
\end{equation}
\end{proposition}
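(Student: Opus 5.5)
The plan is to work conditionally on the final real history \(\cH_{T-1}\), under which \(W_T\) and \(\Sigma_{1:T}\) are fixed and \(\zeta_T\sim\N(0,\Sigma_{1:T})\) is a fresh centered Gaussian independent of everything else. Fix a deterministic sample \(s\). Applying Assumption~\ref{ass:uniform_smoothness_output} with \(w=W_T\) and \(u=\zeta_T\) gives, pointwise,
\[
    L(W_T+\zeta_T,s)-L(W_T,s)
    =
    \inner{\nabla L(W_T,s)}{\zeta_T}
    +
    r(W_T,\zeta_T),
    \qquad
    |r(W_T,\zeta_T)|\leq \frac{\mu}{2}\norm{\zeta_T}^2 .
\]
I will take conditional expectations given \(\cH_{T-1}\). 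The linear term vanishes because \(\nabla L(W_T,s)\) is \(\cH_{T-1}\)-measurable and \(\E[\zeta_T\given\cH_{T-1}]=0\). The remainder is bounded by \(\frac{\mu}{2}\E[\norm{\zeta_T}^2\given\cH_{T-1}]=\frac{\mu}{2}\Tr(\Sigma_{1:T})\). Moving the absolute value inside the conditional expectation then gives \eqref{eq:global_smoothness_delta_bound} almost surely.

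For the penalty bound, I will apply the pointwise bound twice. The first application is with \(s=S'\), and the second is with \(s=S\). Both evaluations are legitimate because the smoothness assumption holds uniformly over samples, and \(\Tr(\Sigma_{1:T})\) does not depend on \(s\). The triangle inequality then gives
\[
    \mathcal R_{\Delta}^{\mathrm{ad}}
    \leq
    \E\bigl|\Delta^{\mathrm{ad}}_{\Sigma_{1:T}}(W_T,S')\bigr|
    +
    \E\bigl|\Delta^{\mathrm{ad}}_{\Sigma_{1:T}}(W_T,S)\bigr|
    \leq
    \mu\,\E[\Tr(\Sigma_{1:T})],
\]
which is \eqref{eq:global_smoothness_rad_bound}.

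The only delicate point is measurability and integrability when a random sample is substituted into a functional defined for deterministic \(s\). I will interpret \(\Delta^{\mathrm{ad}}_{\Sigma_{1:T}}(W_T,S)\) as the conditional expectation given \(\cH_{T-1}\vee\sigma(S,S')\). This is harmless because \(\zeta_T\) is drawn independently given \(\cH_{T-1}\), so conditioning on \(S'\) as well changes nothing. I will note that the linear term still integrates to zero: \(\nabla L(W_T,S)\) and \(\nabla L(W_T,S')\) are fixed under this conditioning. I will also assume \(\E\Tr(\Sigma_{1:T})<\infty\), which is covered by the finiteness clause in Assumption~\ref{ass:basic_conditions_main}; otherwise the bound is trivial.
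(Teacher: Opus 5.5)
Your proposal is correct and matches the paper's proof step for step. You condition on \(\cH_{T-1}\), apply the two-sided smoothness bound with \(w=W_T\), \(u=\zeta_T\), drop the linear term using \(\E[\zeta_T\mid\cH_{T-1}]=0\), bound the remainder by \(\frac{\mu}{2}\E[\norm{\zeta_T}^2\mid\cH_{T-1}]=\frac{\mu}{2}\Tr(\Sigma_{1:T})\), and finish with the triangle inequality over the \(S'\) and \(S\) terms. Your remark on substituting a random sample, via conditioning on \(\cH_{T-1}\vee\sigma(S,S')\), is a sensible clarification the paper leaves implicit; it relies on \(\zeta_T\) being conditionally independent of the samples given \(\cH_{T-1}\).
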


\begin{proof}
Condition on \(\cH_{T-1}\). By Assumption~\ref{ass:uniform_smoothness_output},
with \(w=W_T\) and \(u=\zeta_T\),
\[
    \left|
        L(W_T+\zeta_T,s)-L(W_T,s)
        -
        \inner{\nabla L(W_T,s)}{\zeta_T}
    \right|
    \leq
    \frac{\mu}{2}\norm{\zeta_T}^2.
\]
Taking conditional expectation and using
\[
    \E[\zeta_T\mid\cH_{T-1}]=0,
\]
we obtain
\[
    \left|
        \E[
            L(W_T,s)-L(W_T+\zeta_T,s)
            \mid
            \cH_{T-1}
        ]
    \right|
    \leq
    \frac{\mu}{2}
    \E[
        \norm{\zeta_T}^2
        \mid
        \cH_{T-1}
    ].
\]
Since
\[
    \zeta_T\mid\cH_{T-1}
    \sim
    \N(0,\Sigma_{1:T}),
\]
we have
\[
    \E[
        \norm{\zeta_T}^2
        \mid
        \cH_{T-1}
    ]
    =
    \Tr(\Sigma_{1:T}),
\]
which proves \eqref{eq:global_smoothness_delta_bound}. Applying the triangle
inequality to \eqref{eq:penalty_rad_def} gives
\[
    \mathcal R_{\Delta}^{\mathrm{ad}}
    \leq
    \E
    \left[
        \left|
        \Delta_{\Sigma_{1:T}}^{\mathrm{ad}}(W_T,S')
        \right|
        +
        \left|
        \Delta_{\Sigma_{1:T}}^{\mathrm{ad}}(W_T,S)
        \right|
    \right]
    \leq
    \mu
    \E[\Tr(\Sigma_{1:T})].
\]
\end{proof}
The global smoothness bound is robust and assumption-light, but it is generally
pessimistic in high-dimensional nonconvex problems. It treats every direction as
if it had curvature \(\mu\), ignoring the anisotropic geometry of the loss
around the final iterate. The next subsection gives a sharper local curvature
refinement.

\subsection{Local Curvature Refinement}
\label{subsec:local_curvature_refinement}

The global bound depends only on \(\Tr(\Sigma_{1:T})\). A more informative
description depends on how the accumulated covariance aligns with the local
Hessian of the empirical or ghost loss. For a finite sample \(s\), define the local Hessian at the final output by
\begin{equation}
    H_s(W_T)
    =
    \nabla^2 L(W_T,s).
    \label{eq:local_hessian_def}
\end{equation}

\begin{assumption}[Hessian-Lipschitz remainder control]
\label{ass:local_hessian_lipschitz}
For a given sample \(s\), assume that \(L(\cdot,s)\) is twice differentiable in
a neighborhood of \(W_T\). Assume further that there exists a constant
\(\rho_s\geq0\), possibly depending on \(s\) and \(W_T\), such that for the
perturbations considered below the Taylor remainder satisfies
\begin{equation}
    \left|
        L(W_T+u,s)-L(W_T,s)
        -\inner{\nabla L(W_T,s)}{u}
        -\frac12 u^\top H_s(W_T)u
    \right|
    \leq
    \frac{\rho_s}{6}\norm{u}^3.
    \label{eq:local_taylor_remainder_assumption}
\end{equation}
A sufficient condition is that the Hessian is globally \(\rho_s\)-Lipschitz in
operator norm. If the Hessian-Lipschitz condition is only local, then
\eqref{eq:local_taylor_remainder_assumption} should be understood on the event
where the Gaussian perturbation remains in the local neighborhood, with the
complement controlled separately.
\end{assumption}

\begin{proposition}[Second-order expansion of adaptive output sensitivity]
\label{prop:local_curvature_delta_expansion}
Suppose Assumption~\ref{ass:local_hessian_lipschitz} holds for sample \(s\). Then
\begin{equation}
    \Delta_{\Sigma_{1:T}}^{\mathrm{ad}}(W_T,s)
    =
    -
    \frac12
    \Tr
    \left(
        H_s(W_T)\Sigma_{1:T}
    \right)
    +
    R_s(W_T,\Sigma_{1:T}),
    \label{eq:local_curvature_delta_expansion}
\end{equation}
where the Taylor remainder satisfies
\begin{equation}
    \left|
        R_s(W_T,\Sigma_{1:T})
    \right|
    \leq
    \frac{\rho_s}{6}
    \E
    \left[
        \norm{\zeta_T}^3
        \given
        \cH_{T-1}
    \right].
    \label{eq:local_curvature_remainder}
\end{equation}
\end{proposition}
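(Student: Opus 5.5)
The plan is to mirror the proof of Proposition~\ref{prop:global_smoothness_output_sensitivity}: condition on \(\cH_{T-1}\) and substitute the second-order Taylor expansion of Assumption~\ref{ass:local_hessian_lipschitz} for the first-order smoothness inequality. Under this conditioning, \(W_T\), \(\Sigma_{1:T}\), and \(H_s(W_T)\) are fixed (the last because \(s\) is deterministic and \(W_T\) is \(\cH_{T-1}\)-measurable), while \(\zeta_T\mid\cH_{T-1}\sim\N(0,\Sigma_{1:T})\). I would define the pointwise remainder
\[
r_s(u)=L(W_T+u,s)-L(W_T,s)-\inner{\nabla L(W_T,s)}{u}-\tfrac12 u^\top H_s(W_T)u ,
\]
so that the loss gap splits as
\[
L(W_T,s)-L(W_T+\zeta_T,s)=-\inner{\nabla L(W_T,s)}{\zeta_T}-\tfrac12\zeta_T^\top H_s(W_T)\zeta_T-r_s(\zeta_T).
\]

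Next I would take the conditional expectation of each term. The linear term vanishes because \(\E[\zeta_T\mid\cH_{T-1}]=0\) and \(\nabla L(W_T,s)\) is \(\cH_{T-1}\)-measurable. For the quadratic term, the trace identity \(\E[\zeta_T^\top H\zeta_T\mid\cH_{T-1}]=\Tr(H\,\E[\zeta_T\zeta_T^\top\mid\cH_{T-1}])=\Tr(H_s(W_T)\Sigma_{1:T})\) gives the main term \(-\tfrac12\Tr(H_s(W_T)\Sigma_{1:T})\). I would then define \(R_s(W_T,\Sigma_{1:T})=-\E[r_s(\zeta_T)\mid\cH_{T-1}]\). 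Conditional Jensen together with \eqref{eq:local_taylor_remainder_assumption} gives \(|R_s|\le\E[|r_s(\zeta_T)|\mid\cH_{T-1}]\le\frac{\rho_s}{6}\E[\norm{\zeta_T}^3\mid\cH_{T-1}]\), which is \eqref{eq:local_curvature_remainder}.

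The main obstacle is justifying these expectations. All the required moments of \(\zeta_T\) are finite because Gaussian vectors have all moments. Once the remainder bound holds for every \(u\), the integrability of \(L(W_T+\zeta_T,s)\) follows from the expansion itself. Since \(\rho_s\) may depend on \(W_T\), I would treat it as \(\cH_{T-1}\)-measurable so that it can be pulled out of the conditional expectation. If the Hessian-Lipschitz condition is only local, the identity holds only on the event that \(\zeta_T\) stays in the neighborhood. The complement event must then be bounded separately, as the assumption already notes, and I would state the result under the global version of the condition.
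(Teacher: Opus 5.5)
Your proposal is correct and follows the paper's proof: condition on \(\cH_{T-1}\), apply the Taylor-remainder assumption with \(u=\zeta_T\), use centering to remove the linear term and the trace identity for the quadratic term, then set \(R_s=-\E[r_s(\zeta_T)\mid\cH_{T-1}]\). Your added remarks are sensible refinements the paper leaves implicit: integrability via Gaussian moments, treating \(\rho_s\) as \(\cH_{T-1}\)-measurable, and stating the result under the global remainder bound.
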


\begin{proof}
Condition on \(\cH_{T-1}\). Applying
Assumption~\ref{ass:local_hessian_lipschitz} with \(u=\zeta_T\), we have
\begin{align}
    L(W_T+\zeta_T,s)
    &=
    L(W_T,s)
    +
    \inner{\nabla L(W_T,s)}{\zeta_T}
    +
    \frac12
    \zeta_T^\top H_s(W_T)\zeta_T
    +
    r_s(\zeta_T),
    \label{eq:taylor_local_curvature}
\end{align}
where
\[
    |r_s(\zeta_T)|
    \leq
    \frac{\rho_s}{6}\norm{\zeta_T}^3.
\]
Taking conditional expectations and using
\[
    \E[\zeta_T\mid\cH_{T-1}]=0
\]
and
\[
    \E[
        \zeta_T^\top H_s(W_T)\zeta_T
        \mid
        \cH_{T-1}
    ]
    =
    \Tr(H_s(W_T)\Sigma_{1:T}),
\]
we obtain
\[
    \E[
        L(W_T+\zeta_T,s)-L(W_T,s)
        \mid
        \cH_{T-1}
    ]
    =
    \frac12
    \Tr(H_s(W_T)\Sigma_{1:T})
    +
    \E[r_s(\zeta_T)\mid\cH_{T-1}].
\]
Using
\[
    \Delta_{\Sigma_{1:T}}^{\mathrm{ad}}(W_T,s)
    =
    \E[
        L(W_T,s)-L(W_T+\zeta_T,s)
        \mid
        \cH_{T-1}
    ],
\]
we obtain \eqref{eq:local_curvature_delta_expansion} with
\[
    R_s(W_T,\Sigma_{1:T})
    =
    -\E[r_s(\zeta_T)\mid\cH_{T-1}].
\]
The bound \eqref{eq:local_curvature_remainder} follows immediately.
\end{proof}

\begin{lemma}[Gaussian third-moment control]
\label{lem:gaussian_third_moment_control}
If
\[
    \zeta\sim\N(0,\Sigma),
\]
then
\begin{equation}
    \E\norm{\zeta}^3
    \leq
    \sqrt{3}
    \left[
        \Tr(\Sigma)
    \right]^{3/2}.
    \label{eq:gaussian_third_moment_trace}
\end{equation}
More generally, one may use sharper dimension- or spectrum-sensitive Gaussian
moment bounds depending on the eigenvalues of \(\Sigma\).
\end{lemma}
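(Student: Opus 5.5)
The plan is to reduce the third moment to second and fourth moments via Cauchy--Schwarz (equivalently Lyapunov/H\"older), and then compute the Gaussian fourth moment exactly in the eigenbasis of \(\Sigma\). Concretely, I would write
\[
    \E\norm{\zeta}^3
    =
    \E\bigl[\norm{\zeta}\cdot\norm{\zeta}^2\bigr]
    \leq
    \bigl(\E\norm{\zeta}^2\bigr)^{1/2}
    \bigl(\E\norm{\zeta}^4\bigr)^{1/2},
\]
so it suffices to show \(\E\norm{\zeta}^2=\Tr(\Sigma)\) and \(\E\norm{\zeta}^4\leq 3[\Tr(\Sigma)]^2\). Together these give \(\E\norm{\zeta}^3\leq \sqrt{3}\,[\Tr(\Sigma)]^{1/2}[\Tr(\Sigma)]\), which is the claim.

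For the moments, I would diagonalize \(\Sigma=U\diag(\lambda_1,\ldots,\lambda_d)U^\top\). By rotation invariance of the Euclidean norm, \(\norm{\zeta}^2\) has the same law as \(\sum_j\lambda_j Y_j^2\) with \(Y_j\) i.i.d.\ standard normal. Then \(\E\norm{\zeta}^2=\sum_j\lambda_j=\Tr(\Sigma)\). For the fourth moment I would use \(\E Y_j^4=3\) and, for \(j\neq k\), \(\E Y_j^2Y_k^2=1\):
\[
    \E\norm{\zeta}^4
    =
    \sum_{j,k}\lambda_j\lambda_k\E[Y_j^2Y_k^2]
    =
    \Bigl(\sum_j\lambda_j\Bigr)^2+2\sum_j\lambda_j^2
    =
    [\Tr(\Sigma)]^2+2\Tr(\Sigma^2).
\]
Since \(\lambda_j\geq0\), we have \(\Tr(\Sigma^2)=\sum_j\lambda_j^2\leq(\sum_j\lambda_j)^2\), which gives the factor \(3\). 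This computation holds for positive semidefinite \(\Sigma\), so the degenerate case \(\Sigma_{1:1}=0\) is also covered.

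I do not expect a real obstacle; the only step needing care is the fourth-moment identity. In the conditional use inside Proposition~\ref{prop:local_curvature_delta_expansion}, I would apply the lemma pointwise after conditioning on \(\cH_{T-1}\), with \(\Sigma=\Sigma_{1:T}\) fixed. For the remark on sharper bounds, the same computation gives the spectrum-sensitive estimate \(\E\norm{\zeta}^3\leq [\Tr\Sigma]^{1/2}\bigl([\Tr\Sigma]^2+2\Tr(\Sigma^2)\bigr)^{1/2}\), which is close to \([\Tr\Sigma]^{3/2}\) when the spectrum is spread out.
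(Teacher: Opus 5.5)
Your proposal is correct and follows the paper's proof essentially verbatim. Both use Cauchy--Schwarz to reduce $\E\norm{\zeta}^3$ to the second and fourth moments, then bound $\E\norm{\zeta}^4 = [\Tr(\Sigma)]^2 + 2\Tr(\Sigma^2) \leq 3[\Tr(\Sigma)]^2$; your eigenbasis computation fills in the fourth-moment identity that the paper states without derivation.
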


\begin{proof}
By Cauchy--Schwarz,
\[
    \E\norm{\zeta}^3
    =
    \E\left[\norm{\zeta}^2\norm{\zeta}\right]
    \leq
    \left(\E\norm{\zeta}^4\right)^{1/2}
    \left(\E\norm{\zeta}^2\right)^{1/2}.
\]
For \(\zeta\sim\N(0,\Sigma)\),
\[
    \E\norm{\zeta}^2
    =
    \Tr(\Sigma),
\]
and
\[
    \E\norm{\zeta}^4
    =
    2\Tr(\Sigma^2)+[\Tr(\Sigma)]^2
    \leq
    3[\Tr(\Sigma)]^2.
\]
Therefore,
\[
    \E\norm{\zeta}^3
    \leq
    \sqrt{3}\,[\Tr(\Sigma)]^{3/2}.
\]
\end{proof}

Combining Proposition~\ref{prop:local_curvature_delta_expansion} and
Lemma~\ref{lem:gaussian_third_moment_control} gives the explicit remainder
control
\begin{equation}
    \left|
        R_s(W_T,\Sigma_{1:T})
    \right|
    \leq
    \frac{\sqrt{3}\rho_s}{6}
    \left[
        \Tr(\Sigma_{1:T})
    \right]^{3/2}.
    \label{eq:local_remainder_trace_bound}
\end{equation}

\subsection{Curvature-Mismatch Control of \texorpdfstring{\(\mathcal R_{\Delta}^{\mathrm{ad}}\)}{RDelta}}
\label{subsec:curvature_mismatch_control}

The local expansion reveals that the output-sensitivity penalty depends on the
difference between the curvature of the ghost loss and the curvature of the
training loss, weighted by the accumulated adaptive covariance.

\begin{proposition}[Local curvature control of the adaptive output penalty]
\label{prop:local_curvature_rad_control}
Suppose the local expansion
\eqref{eq:local_curvature_delta_expansion} holds for both \(S\) and \(S'\).
Then
\begin{align}
    \mathcal R_{\Delta}^{\mathrm{ad}}
    &\leq
    \frac12
    \left|
    \E
    \left[
        \Tr
        \left(
            \left[
                H_{S'}(W_T)-H_S(W_T)
            \right]
            \Sigma_{1:T}
        \right)
    \right]
    \right|
    \nonumber\\
    &\quad+
    \E
    \left[
        \left|
            R_{S'}(W_T,\Sigma_{1:T})
        \right|
        +
        \left|
            R_S(W_T,\Sigma_{1:T})
        \right|
    \right].
    \label{eq:local_curvature_rad_control}
\end{align}
In particular, if the local remainder constants for \(S\) and \(S'\) are
\(\rho_S\) and \(\rho_{S'}\), respectively, then
\begin{align}
    \mathcal R_{\Delta}^{\mathrm{ad}}
    &\leq
    \frac12
    \left|
    \E
    \left[
        \Tr
        \left(
            \left[
                H_{S'}(W_T)-H_S(W_T)
            \right]
            \Sigma_{1:T}
        \right)
    \right]
    \right|
    \nonumber\\
    &\quad+
    \frac{\sqrt{3}}{6}
    \E
    \left[
        (\rho_S+\rho_{S'})
        \left[
            \Tr(\Sigma_{1:T})
        \right]^{3/2}
    \right].
    \label{eq:local_curvature_rad_control_trace}
\end{align}
\end{proposition}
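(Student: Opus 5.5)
The plan is to substitute the local expansion of Proposition~\ref{prop:local_curvature_delta_expansion} into the definition \eqref{eq:penalty_rad_def} of \(\mathcal R_{\Delta}^{\mathrm{ad}}\) and separate the curvature part from the remainder part.

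\textbf{Expansion for each sample.} Applying \eqref{eq:local_curvature_delta_expansion} with \(s=S\) and with \(s=S'\) gives, almost surely,
\[
    \Delta_{\Sigma_{1:T}}^{\mathrm{ad}}(W_T,S')
    -
    \Delta_{\Sigma_{1:T}}^{\mathrm{ad}}(W_T,S)
    =
    -\frac12\Tr\bigl([H_{S'}(W_T)-H_S(W_T)]\Sigma_{1:T}\bigr)
    +R_{S'}(W_T,\Sigma_{1:T})-R_S(W_T,\Sigma_{1:T}).
\]
Here linearity of the trace in its first argument is used. A small point needs checking: the expansion for \(S'\) holds pointwise for the deterministic sample and is then evaluated at the random \(S'\). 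This is legitimate because \(S'\) is independent of \(\cH_{T-1}\) and of the fresh perturbation \(\zeta_T\), so conditioning on \(\cH_{T-1}\) together with \(S'\) (or plugging \(s=S'\) into the functional) is consistent with the definition of \(\Delta^{\mathrm{ad}}\).

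\textbf{Bounding the penalty.} Take expectations and the absolute value, and apply the triangle inequality. The curvature term stays inside the absolute value of an expectation. The remainder difference is bounded by \(\E[|R_{S'}|+|R_S|]\), using \(|\E X|\le \E|X|\). This yields \eqref{eq:local_curvature_rad_control}. Integrability should hold under Assumption~\ref{ass:basic_conditions_main}(3), since the output-sensitivity quantities are assumed finite. I will also assume the Hessian trace terms are integrable so the split into two expectations is valid.

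\textbf{Trace form.} For the second bound \eqref{eq:local_curvature_rad_control_trace}, bound each remainder using \eqref{eq:local_remainder_trace_bound}, which comes from \eqref{eq:local_curvature_remainder} and Lemma~\ref{lem:gaussian_third_moment_control}:
\[
    |R_s(W_T,\Sigma_{1:T})|\le \frac{\sqrt3\,\rho_s}{6}[\Tr(\Sigma_{1:T})]^{3/2}.
\]
This works because \(\zeta_T\mid\cH_{T-1}\sim\N(0,\Sigma_{1:T})\) and the lemma applies conditionally, with \(\Sigma_{1:T}\) being \(\cH_{T-1}\)-measurable. Summing the bounds for \(s=S\) and \(s=S'\) and taking expectations gives the stated term \(\frac{\sqrt3}{6}\E[(\rho_S+\rho_{S'})[\Tr(\Sigma_{1:T})]^{3/2}]\).

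\textbf{Main obstacle.} The only delicate step is justifying the evaluation at the random samples \(S,S'\), especially since \(\rho_s\) may depend on \(s\) and \(W_T\) and hence be random. I would handle this by proving the inequalities pointwise in \(s\) conditional on \(\cH_{T-1}\), then integrating, noting that \(\rho_S,\rho_{S'}\) are measurable functions of \((S,S',W_T)\) and can be kept inside the expectation.
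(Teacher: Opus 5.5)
Your proposal is correct and follows the paper's proof essentially step for step. You substitute the expansion of Proposition~\ref{prop:local_curvature_delta_expansion} for $S$ and $S'$, subtract, take expectations with the triangle inequality, and then use the $\sqrt{3}$ third-moment bound \eqref{eq:local_remainder_trace_bound} for the trace form; your remarks on evaluating $\Delta^{\mathrm{ad}}_{\Sigma_{1:T}}$ at the random samples and on integrability of the split are bookkeeping that the paper leaves implicit.
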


\begin{proof}
Using Proposition~\ref{prop:local_curvature_delta_expansion},
\[
    \Delta_{\Sigma_{1:T}}^{\mathrm{ad}}(W_T,S')
    =
    -
    \frac12
    \Tr(H_{S'}(W_T)\Sigma_{1:T})
    +
    R_{S'}(W_T,\Sigma_{1:T}),
\]
and
\[
    \Delta_{\Sigma_{1:T}}^{\mathrm{ad}}(W_T,S)
    =
    -
    \frac12
    \Tr(H_S(W_T)\Sigma_{1:T})
    +
    R_S(W_T,\Sigma_{1:T}).
\]
Subtracting gives
\begin{align*}
    &
    \Delta_{\Sigma_{1:T}}^{\mathrm{ad}}(W_T,S')
    -
    \Delta_{\Sigma_{1:T}}^{\mathrm{ad}}(W_T,S)
    \\
    &\qquad=
    -\frac12
    \Tr
    \left(
        [H_{S'}(W_T)-H_S(W_T)]
        \Sigma_{1:T}
    \right)
    +
    R_{S'}(W_T,\Sigma_{1:T})
    -
    R_S(W_T,\Sigma_{1:T}).
\end{align*}
Taking expectation, absolute values, and applying the triangle inequality proves
\eqref{eq:local_curvature_rad_control}. The trace-form remainder bound follows
from \eqref{eq:local_remainder_trace_bound}.
\end{proof}

This proposition gives a sharper interpretation than the global smoothness
bound. The global bound depends only on the total perturbation mass
\(\Tr(\Sigma_{1:T})\). The local bound depends on where this mass is placed
relative to the curvature mismatch
\[
    H_{S'}(W_T)-H_S(W_T).
\]
If the accumulated covariance places perturbation mass mostly in directions
where the training and ghost losses have similar and small curvature, then the
output-sensitivity penalty can be small even when \(\Tr(\Sigma_{1:T})\) is not
tiny.

\subsection{Positive-Curvature and Flatness Interpretation}
\label{subsec:positive_curvature_flatness}

Near a well-trained solution, local Hessians are often interpreted as measuring
sharpness or flatness. The local expansion formalizes this connection.

\begin{corollary}[Covariance-weighted local flatness]
\label{cor:covariance_weighted_flatness}
Suppose the Taylor remainders in
\eqref{eq:local_curvature_delta_expansion} are negligible and the local Hessian
\(H_s(W_T)\) is positive semidefinite. Then, under the sign convention used in
\(\Delta_{\Sigma_{1:T}}^{\mathrm{ad}}\),
\begin{equation}
    \Delta_{\Sigma_{1:T}}^{\mathrm{ad}}(W_T,s)
    \approx
    -
    \frac12
    \Tr
    \left(
        H_s(W_T)\Sigma_{1:T}
    \right),
\end{equation}
and therefore
\begin{equation}
    \left|
        \Delta_{\Sigma_{1:T}}^{\mathrm{ad}}(W_T,s)
    \right|
    \approx
    \frac12
    \Tr
    \left(
        H_s(W_T)\Sigma_{1:T}
    \right).
    \label{eq:covariance_weighted_flatness}
\end{equation}
Thus, output sensitivity is controlled by a covariance-weighted local curvature
rather than by the isotropic trace \(\Tr(\Sigma_{1:T})\) alone.
\end{corollary}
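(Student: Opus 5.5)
The plan is to read the corollary directly off Proposition~\ref{prop:local_curvature_delta_expansion}. Fix the sample \(s\) and condition on \(\cH_{T-1}\). The expansion \eqref{eq:local_curvature_delta_expansion} gives
\[
    \Delta_{\Sigma_{1:T}}^{\mathrm{ad}}(W_T,s)
    =
    -\frac12\Tr\left(H_s(W_T)\Sigma_{1:T}\right)+R_s(W_T,\Sigma_{1:T}).
\]
The hypothesis that the Taylor remainder is negligible means \(R_s\) is dropped. To make this quantitative rather than heuristic, I would state it through \eqref{eq:local_remainder_trace_bound}: the approximation error is at most \(\frac{\sqrt3\rho_s}{6}[\Tr(\Sigma_{1:T})]^{3/2}\). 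This is of higher order than the leading term whenever \(\rho_s\,\Tr(\Sigma_{1:T})^{1/2}\) is small relative to the curvature scale \(\Tr(H_s\Sigma_{1:T})/\Tr(\Sigma_{1:T})\). This yields the first displayed approximation.

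For the absolute-value statement, I would use positive semidefiniteness. Since \(\Sigma_{1:T}\succeq0\) is \(\cH_{T-1}\)-measurable and \(H_s(W_T)\succeq0\), write
\[
    \Tr(H_s\Sigma_{1:T})=\Tr\left(\Sigma_{1:T}^{1/2}H_s\Sigma_{1:T}^{1/2}\right)\ge0.
\]
Equivalently, this equals \(\E[\zeta_T^\top H_s\zeta_T\mid\cH_{T-1}]\ge0\). The leading term \(-\frac12\Tr(H_s\Sigma_{1:T})\) is therefore nonpositive, and its absolute value is \(\frac12\Tr(H_s\Sigma_{1:T})\), which gives \eqref{eq:covariance_weighted_flatness}. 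More precisely, the triangle inequality gives
\[
    \left|\,|\Delta^{\mathrm{ad}}|-\tfrac12\Tr(H_s\Sigma_{1:T})\right|\le|R_s|,
\]
so the same remainder bound controls this approximation as well.

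Finally, I would justify the interpretive sentence by comparing with the isotropic trace. The inequality \(\Tr(H_s\Sigma_{1:T})\le\lambda_{\max}(H_s)\Tr(\Sigma_{1:T})\) shows that the covariance-weighted quantity never exceeds the smoothness-type bound of Proposition~\ref{prop:global_smoothness_output_sensitivity} (taking \(\mu\ge\lambda_{\max}\)). It can be much smaller when \(\Sigma_{1:T}\) concentrates on low-curvature eigendirections of \(H_s\). The only delicate step is making "negligible remainder" precise. I would handle it by stating the approximation with the explicit error term from \eqref{eq:local_remainder_trace_bound}, which also covers the local-neighborhood caveat of Assumption~\ref{ass:local_hessian_lipschitz}.
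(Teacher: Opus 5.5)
Your proposal is correct and is essentially the paper's own argument, which is left implicit (the corollary has no written proof): drop $R_s$ from the expansion in Proposition~\ref{prop:local_curvature_delta_expansion} and use $\Tr(H_s\Sigma_{1:T})=\Tr(\Sigma_{1:T}^{1/2}H_s\Sigma_{1:T}^{1/2})\ge 0$ to fix the sign. Your reverse-triangle-inequality error control via \eqref{eq:local_remainder_trace_bound} is a welcome quantification of the paper's informal $\approx$.

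One small correction: that trace-form error bound presupposes the cubic remainder inequality for every realization of the full-support Gaussian perturbation. So it does not by itself handle the local-neighborhood caveat of Assumption~\ref{ass:local_hessian_lipschitz}, where the complement event must still be controlled separately.
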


This expression clarifies the role of anisotropic adaptive perturbations.
Perturbation mass in low-curvature directions contributes little to output
sensitivity, while perturbation mass aligned with high-curvature directions can
increase the penalty. Therefore, a useful adaptive virtual covariance should
not merely be large or small globally; it should allocate perturbation mass in
directions that improve information control without producing excessive
curvature-weighted output sensitivity.

\subsection{Information--Sensitivity Trade-Off}
\label{subsec:information_sensitivity_tradeoff}

The main theorem exposes a trade-off between the information term and the
output-sensitivity penalty. Increasing actual or reference covariance in
appropriate directions can reduce inverse-covariance weighted information
contributions such as
\[
    V_t^{\mathrm{ad}}(\Lambda_t)
    +
    \Gamma_t^{\mathrm{ad}}(\Lambda_t),
    \qquad
    \Lambda_t=(\Sigma_t^{\mathrm{ref}})^{-1}.
\]
From the perspective of mutual information, larger virtual smoothing can make
conditional transition kernels harder to distinguish.

At the same time, increasing the actual adaptive covariance increases the
accumulated covariance
\[
    \Sigma_{1:T}
    =
    \sum_{t=1}^{T-1}\Sigma_t,
\]
which can increase the output-sensitivity penalty. Under global smoothness this
is captured by
\[
    \mathcal R_{\Delta}^{\mathrm{ad}}
    \leq
    \mu
    \E[\Tr(\Sigma_{1:T})].
\]
Under the local curvature refinement, the leading term behaves instead like
\[
    \frac12
    \left|
    \E
    \left[
        \Tr
        \left(
            [H_{S'}(W_T)-H_S(W_T)]
            \Sigma_{1:T}
        \right)
    \right]
    \right|
\]
up to higher-order Taylor remainders. Thus, the relevant question is not only
how large \(\Sigma_{1:T}\) is, but how it aligns with local curvature and
curvature mismatch. Schematically, the adaptive perturbation bound balances
\begin{equation}
    \text{information term}
    \sim
    \sum_{t=1}^{T-1}
    \eta_t^2
    \E
    \left[
        \normSigma{\cdot}{\Lambda_t}^2
    \right],
    \qquad
    \text{output sensitivity}
    \sim
    \E
    \left[
        \Tr(H_T\Sigma_{1:T})
    \right],
    \label{eq:schematic_tradeoff}
\end{equation}
where \(H_T\) denotes a local Hessian or curvature-mismatch proxy near the
final iterate. Larger covariance can improve smoothing for information control,
but may worsen perturbation robustness at the output. Smaller covariance can
reduce the final sensitivity penalty, but may make the information term larger.

The purpose of the history-adaptive framework is not to claim that every
adaptive covariance improves the bound. Rather, it provides a language for
evaluating covariance choices that respond to the optimization path. A useful
adaptive covariance should reduce the information term in directions with large
gradient deviation, stochastic fluctuation, or gradient sensitivity, while
avoiding excessive accumulated perturbation in high-curvature directions near
the final iterate. This is the central information--sensitivity trade-off
exposed by predictable virtual perturbations.

\begin{remark}[Scope of this section]
The results in this section control only
\(\mathcal R_{\Delta}^{\mathrm{ad}}\), the output-sensitivity penalty. They do
not control the covariance-comparison cost \(\mathcal C_t^{\mathrm{cov}}\),
which belongs to the information term and arises when the actual and reference
virtual perturbation covariances differ.
\end{remark}



\section{Discussion: Long-Term Dependence and Predictability}
\label{sec:discussion}

The central conceptual point of this paper is that virtual perturbation analysis
need not restrict the perturbation geometry to be fixed independently of the
optimization history. A covariance chosen at update \(t\) may depend on the past
SGD trajectory, provided that this dependence is causal. In our framework, this
causal condition is predictability with respect to the real SGD history. At the
same time, predictability should not be confused with a complete solution to the
mutual-information comparison. Predictability makes the local conditional
Gaussian smoothing step valid; the full information-theoretic bound also
requires a valid reference-kernel comparison, and in the history-adaptive case
this may introduce an explicit covariance-comparison cost.

This section clarifies the role of long-term dependence, the precise meaning of
predictability, the limitations of the present analysis, and the directions
opened by the framework.

\subsection{Long-Term Dependence in Virtual Perturbation Geometry}
\label{subsec:discussion_long_term_dependence}

Fixed virtual perturbation analysis avoids long-term dependence by choosing the
perturbation covariance sequence before the optimization trajectory is observed.
For example, one may set
\begin{equation}
    \Sigma_t=\sigma_t^2I,
    \qquad
    t=1,\ldots,T-1,
\end{equation}
or choose a deterministic matrix-valued sequence
\begin{equation}
    \Sigma_1,\ldots,\Sigma_{T-1}.
\end{equation}
This makes the perturbation process technically convenient because the noise law
is independent of the realized SGD path and of the training sample values. However, this independence is restrictive if the purpose of the perturbation is
to probe the geometry encountered along training. In realistic stochastic
optimization dynamics, many useful scale and geometry estimates are
history-dependent. Examples include moving averages of gradient norms,
coordinate-wise second-moment estimates, gradient-deviation diagnostics,
subbatch fluctuation proxies, curvature surrogates, and preconditioner-like
quantities. A natural adaptive virtual perturbation covariance may therefore
take the form
\begin{equation}
    \Sigma_t
    =
    \Phi_t(G_1,\ldots,G_{t-1}),
    \label{eq:discussion_gradient_history_cov}
\end{equation}
or more generally
\begin{equation}
    \Sigma_t
    =
    \Phi_t(\cH_{t-1}),
    \label{eq:discussion_history_cov}
\end{equation}
where \(\cH_{t-1}\) denotes the real SGD history available before update \(t\).

Equations \eqref{eq:discussion_gradient_history_cov} and
\eqref{eq:discussion_history_cov} explicitly introduce long-term dependence into
the virtual perturbation process. The covariance at time \(t\) may depend on
information accumulated over many previous iterations. Thus, the virtual
perturbation geometry can reflect the pathwise behavior of SGD rather than a
fixed geometry chosen before training. Importantly, this does not change the actual SGD update:
\begin{equation}
    W_{t+1}
    =
    W_t-\eta_tG_t,
    \qquad
    t=1,\ldots,T-1.
\end{equation}
The long-term dependence considered here belongs to the analytical perturbation
process, not to the optimizer itself. The true SGD trajectory remains
unchanged; only the virtual noise used to analyze it is allowed to adapt to the
past.

\subsection{Predictability and Reference Comparisons}
\label{subsec:discussion_predictability_reference}

The key local observation is that full independence from the optimization
history is stronger than necessary for the conditional Gaussian smoothing step.
The conditional relative-entropy argument does not require \(\Sigma_t\) to be
deterministic. It requires that \(\Sigma_t\) be known before the current virtual
perturbation is sampled. Formally, the predictability condition is
\begin{equation}
    \Sigma_t
    \text{ is }
    \cH_{t-1}\text{-measurable}.
    \label{eq:discussion_predictability}
\end{equation}
Under this condition,
\begin{equation}
    \eps_t\given \cH_{t-1}
    \sim
    \N(0,\Sigma_t).
    \label{eq:discussion_conditional_gaussian}
\end{equation}
Once we condition on \(\cH_{t-1}\), the covariance \(\Sigma_t\) is fixed, even
though it may be random before conditioning. This is the structure needed to
apply conditional Gaussian smoothing and conditional relative-entropy bounds at
a single step.

Predictability separates allowed dependence from forbidden dependence. Allowed
dependence includes functions of past iterates, past minibatch indices, past
minibatches, past gradients, and statistics computed from the past trajectory.
Forbidden dependence includes the current virtual perturbation, current
unrevealed minibatch randomness, the current stochastic gradient \(G_t\), future
gradients, future iterates, or the final output. In this sense, predictability
is the causal condition that permits the perturbation geometry to adapt to
long-term historical information while preserving the local innovation
structure of the proof.

This viewpoint is analogous to predictable processes in stochastic analysis and
martingale theory. A predictable process may depend on the past but not on the
next innovation. Here, \(\Sigma_t\) plays the role of a predictable process,
while \(\eps_t\) is the next virtual innovation. The conditional Gaussian law in
\eqref{eq:discussion_conditional_gaussian} is therefore the
virtual-perturbation analogue of a martingale-style innovation structure.

However, predictability alone does not automatically yield the clean
fixed-noise-style mutual-information bound. Because \(\Sigma_t\) may depend on
the training data through \(\cH_{t-1}\), the reference kernel used in the
chain-rule decomposition cannot be assumed to have access to the same covariance
without justification. The general theorem therefore compares the actual
virtual transition kernel with an admissible reference kernel. If the reference
covariance is denoted by \(\Sigma_t^{\mathrm{ref}}\), then the mean-discrepancy
part of the one-step comparison is measured in the reference precision
\begin{equation}
    \Lambda_t
    =
    (\Sigma_t^{\mathrm{ref}})^{-1},
\end{equation}
and the covariance-comparison cost is
\begin{equation}
    \mathcal C_t^{\mathrm{cov}}
    =
    \frac12
    \E
    \left[
        \Tr((\Sigma_t^{\mathrm{ref}})^{-1}\Sigma_t)
        -d
        +
        \log
        \frac{\det\Sigma_t^{\mathrm{ref}}}{\det\Sigma_t}
    \right].
    \label{eq:discussion_covariance_cost}
\end{equation}

When the comparison is admissibly synchronized, meaning that the reference
kernel can reconstruct the same covariance from the reference-visible
information
\[
    \mathcal F_t^Q
    =
    \sigma(\wtW_{1:t})\vee\mathcal U,
\]
one may choose
\[
    \Sigma_t^{\mathrm{ref}}=\Sigma_t.
\]
In that case, \(\mathcal C_t^{\mathrm{cov}}\) vanishes and the clean
fixed-noise-style corollary is recovered. This synchronization is automatic for
deterministic fixed covariances, and may also hold for public or
prefix-observable covariance constructions. It is not automatic for arbitrary
data-dependent history-adaptive covariances. For fully history-adaptive
perturbations, the general covariance-comparison theorem is the safe default.

\subsection{Output Sensitivity and Local Curvature}
\label{subsec:discussion_output_sensitivity_curvature}

The covariance process also affects the output-sensitivity penalty through the
accumulated covariance
\[
    \Sigma_{1:T}
    =
    \sum_{t=1}^{T-1}\Sigma_t.
\]
The penalty
\[
    \mathcal R_{\Delta}^{\mathrm{ad}}
\]
measures the cost of transferring the guarantee from a virtually perturbed
output back to the original SGD output. Under global smoothness, this term is
controlled by a trace bound of the form
\[
    \mathcal R_{\Delta}^{\mathrm{ad}}
    \leq
    \mu\E[\Tr(\Sigma_{1:T})].
\]
This control is robust but often pessimistic because it treats all directions as
having worst-case curvature. The local curvature refinement gives a sharper interpretation. Up to
higher-order Taylor remainders, the output-sensitivity term is governed by a
covariance-weighted curvature mismatch:
\[
    \frac12
    \left|
    \E
    \left[
        \Tr
        \left(
            [H_{S'}(W_T)-H_S(W_T)]
            \Sigma_{1:T}
        \right)
    \right]
    \right|.
\]
Thus, the accumulated covariance should not be judged only by its trace. Its
alignment with the local curvature of the training and ghost losses matters.
Perturbation mass in low-curvature or curvature-matched directions may have a
small output-sensitivity cost, while perturbation mass in high-curvature
directions may increase the penalty substantially. This observation complements the covariance-comparison discussion above. The
information term is influenced by inverse-covariance weighted
gradient-deviation and gradient-sensitivity terms, whereas the output penalty is
influenced by the accumulated covariance itself and its alignment with local
curvature. The framework therefore exposes an information--sensitivity
trade-off rather than a universally optimal adaptive covariance rule.

\subsection{What Is Still Not Covered}
\label{subsec:discussion_not_covered}

The predictable framework is deliberately limited. It relaxes fixed
perturbation geometry, but it does not cover all forms of adaptivity. Several
important cases remain outside the direct scope of the present theorem.

First, if \(\Sigma_t\) depends on the current minibatch \(B_t\), the current
minibatch index \(J_t\), or the current stochastic gradient \(G_t\), then it is
generally not \(\cH_{t-1}\)-measurable under the timing convention of this
paper. In that case, the covariance is not fixed before the randomness of the
current step is revealed, and the conditional Gaussian comparison used in the
proof no longer applies directly. Analyzing such choices would require
enlarging the filtration, changing the timing of the update, or introducing
additional correction terms.

Second, if \(\Sigma_t\) depends on future iterates, the final output, validation
data, or test data, then the causal structure is broken. Such choices may
introduce uncontrolled dependence between the perturbation process and
quantities that the proof treats as unrevealed. The theorem therefore does not
apply to noncausal covariance rules. Third, the current main theorem assumes positive definite covariances. If
\(\Sigma_t\) is singular or genuinely low-rank, then the inverse norm
\begin{equation}
    \normSigma{x}{\Sigma_t^{-1}}^2
\end{equation}
is not directly defined. One may handle such cases by adding a ridge term,
\begin{equation}
    \Sigma_t+\lambda I,
    \qquad
    \lambda>0,
\end{equation}
or by using Moore--Penrose pseudo-inverses together with support conditions.
These extensions are natural but require separate technical treatment. Fourth, when the actual and reference covariance processes differ in the
one-step comparison, the covariance-comparison term in
\eqref{eq:discussion_covariance_cost} must be included. Moreover, the
corresponding gradient-deviation and sensitivity terms are measured in the
reference geometry
\[
    \Lambda_t=(\Sigma_t^{\mathrm{ref}})^{-1}.
\]
Ignoring this distinction would incorrectly treat a data-dependent adaptive
covariance as if it were a deterministic fixed covariance. The
covariance-comparison term is not a defect of the framework; it is the correct
KL cost of comparing Gaussian kernels with different covariances.

Finally, our analysis does not by itself prove generalization bounds for fully
adaptive optimizers such as Adam or AdaGrad. Those methods introduce long-term
dependence into the actual update rule, not only into the virtual perturbation
geometry. The present paper addresses a specific analytical subproblem: how to
allow the virtual noise geometry in SGD analysis to depend predictably on the
past. Combining this tool with a full analysis of adaptive update dynamics is an
important direction for future work.

\subsection{Why This Matters}
\label{subsec:discussion_why_matters}

The predictable perturbation framework provides a modular way to introduce
history dependence into information-theoretic generalization analysis. Although
the present paper focuses on SGD with adaptive virtual perturbations, the same
principle suggests several future directions.

First, adaptive learning-rate analyses may benefit from separating the
adaptivity of the true update from the adaptivity of the virtual perturbation
geometry. Even when the optimizer uses a fixed update rule, the proof may choose
a perturbation covariance that reflects past gradient scales. This offers a
bridge between fixed-noise virtual perturbation bounds and analyses of adaptive
stochastic dynamics.

Second, preconditioned SGD and Adam-like methods naturally generate geometry
through accumulated gradient statistics. The predictable covariance framework
shows how such statistics can be used analytically without necessarily changing
the optimizer. For example, a diagonal covariance based on past second-moment
estimates can be used as a virtual geometry, while the actual SGD recursion
remains unchanged. A separate analysis would be needed to handle optimizers
whose actual updates also use such preconditioners.

Third, fractional-memory and history-dependent stochastic methods involve
explicit dependence on past gradients. Although the present theorem does not
analyze fractional updates directly, it provides a template for handling
predictable objects generated from long histories. A natural next step is to
combine predictable virtual perturbations with memory-dependent update maps.

Fourth, differentially private SGD and related private learning algorithms often
calibrate noise to sensitivity, clipping norms, or gradient-deviation and
fluctuation estimates. A predictable perturbation viewpoint may help separate
the role of actual privacy noise from auxiliary analytical perturbations. This
could be useful for studying privacy, stability, and generalization under
adaptive noise calibration.

Fifth, sharper covariance-comparison bounds may improve the usefulness of
history-adaptive perturbation analysis. The term
\(\mathcal C_t^{\mathrm{cov}}\) is the exact Gaussian KL cost for comparing
covariances, but particular covariance families may allow sharper or more
interpretable controls. Similarly, low-rank or structured perturbation
geometries may reduce dimensional dependence if the corresponding singular or
nearly singular covariance analysis can be developed carefully.

The broader message is that fixed perturbation schedules are not the only
analytically tractable option. History-adaptive perturbations can be
incorporated into virtual perturbation analysis when their dependence on the
trajectory is predictable, and when the reference-kernel comparison is handled
with the appropriate covariance-comparison cost. This opens a controlled path
toward information-theoretic analyses of optimization procedures where
geometry, scale, and sensitivity evolve over time.

At the same time, the framework should not be interpreted as solving all
adaptive optimization generalization problems. It resolves one specific
obstacle: allowing virtual perturbation covariances to depend on past SGD
history while preserving the local conditional relative-entropy argument and
making explicit the additional cost required by the global mutual-information
comparison. This modular contribution can be combined with future analyses of
adaptive updates, preconditioning, memory, and privacy mechanisms.

\section{Numerical Diagnostic Illustration}
\label{sec:numerical}

This section describes an optional numerical diagnostic framework for visualizing
the quantities that appear in the proposed information-theoretic bounds. The
purpose is not to empirically prove the theorem; the theorem is a mathematical
statement. Rather, the goal is to make the components of the bound visible along
a recorded SGD trajectory. Throughout this section, the true training algorithm remains vanilla SGD:
\begin{equation}
    W_{t+1}
    =
    W_t-\eta_tG_t,
    \qquad
    G_t=g(W_t,B_t),
    \qquad
    t=1,\ldots,T-1.
    \label{eq:numerical_sgd}
\end{equation}
No virtual perturbation is injected into the update. The perturbation
covariances considered below are analytical objects used only to estimate proxy
terms for the theoretical quantities. Thus, the numerical quantities in this
section should be interpreted as diagnostics, not as certified numerical
generalization bounds.

\subsection{Purpose of the Diagnostic Illustration}
\label{subsec:purpose_diagnostic_illustration}

The main theorem involves the reference-geometry quantities
\begin{equation}
    V_t^{\mathrm{ad}}(\Lambda_t),
    \qquad
    \Gamma_t^{\mathrm{ad}}(\Lambda_t),
    \qquad
    \mathcal C_t^{\mathrm{cov}},
    \qquad
    \mathcal R_{\Delta}^{\mathrm{ad}},
\end{equation}
where
\begin{equation}
    \Lambda_t=(\Sigma_t^{\mathrm{ref}})^{-1}.
\end{equation}
In the admissibly synchronized case,
\[
    \Sigma_t^{\mathrm{ref}}=\Sigma_t,
    \qquad
    \Lambda_t=\Sigma_t^{-1},
    \qquad
    \mathcal C_t^{\mathrm{cov}}=0.
\]
In the general covariance-comparison theorem, the admissible reference
covariance \(\Sigma_t^{\mathrm{ref}}\) may differ from the actual predictable
covariance \(\Sigma_t\), and the covariance-comparison cost must be included.

The exact theoretical quantities are generally not computable in finite
experiments. They involve population gradients, conditional expectations,
Gaussian averages over accumulated adaptive covariances, and admissibility
conditions for reference kernels. The goal of this section is therefore to
define practical proxies that approximate the behavior of these terms.

The diagnostics have four purposes:
\begin{enumerate}[leftmargin=2em]
    \item to visualize gradient-deviation or fluctuation proxies along training;
    \item to visualize gradient-sensitivity proxies under accumulated virtual
    perturbations;
    \item to compare fixed and predictable adaptive virtual covariance schedules;
    \item to display the trade-off between inverse-covariance weighted
    information proxies, covariance-comparison cost, and output sensitivity.
\end{enumerate}

These diagnostics should not be interpreted as evidence that one covariance
schedule is universally superior, nor as evidence that the optimizer itself has
been improved. The optimizer is unchanged; only the analytical perturbation
geometry used in the diagnostic computation changes.

\subsection{Diagnostic Experimental Setup}
\label{subsec:diagnostic_experimental_setup}

A simple setup is sufficient. One may train a small multilayer perceptron or
convolutional neural network on a standard dataset such as MNIST or CIFAR-10
using vanilla SGD. During training, record at selected checkpoints:
\[
    W_t,
    \qquad
    B_t,
    \qquad
    G_t,
    \qquad
    \eta_t,
    \qquad
    L(W_t,S),
    \qquad
    L(W_t,S^{\mathrm{eval}}),
\]
where \(S^{\mathrm{eval}}\) is an independent validation, test, or ghost sample
used only for diagnostics. The virtual covariance schedules are evaluated alongside the SGD trajectory but
are not used in the optimizer. Representative choices include:
\begin{align}
    \text{fixed isotropic:}
    \qquad
    &\Sigma_t=\sigma^2I,
    \label{eq:num_fixed_iso}
    \\
    \text{adaptive scalar:}
    \qquad
    &\Sigma_t=\sigma_t^2I,
    \qquad
    \sigma_t^2=\sigma_0^2(1+c\,\widehat q_{t-1}),
    \label{eq:num_adaptive_scalar}
    \\
    \text{adaptive diagonal:}
    \qquad
    &\Sigma_t=
    \diag(s_{t,1}^2,\ldots,s_{t,d}^2),
    \label{eq:num_adaptive_diag}
    \\
    \text{Adam-like virtual geometry:}
    \qquad
    &v_t=\beta v_{t-1}+(1-\beta)G_{t-1}\odot G_{t-1}.
    \label{eq:num_adam_like_v}
\end{align}
Here \(\widehat q_{t-1}\) denotes any scalar pathwise statistic computed only
from the past history \(\cH_{t-1}\), such as a moving average of gradient norms
or a fluctuation proxy from previous updates. We use \(\widehat q_{t-1}\) rather
than \(\widehat v_{t-1}\) to avoid confusing this pathwise scale statistic with
the theoretical gradient-deviation term \(V_t^{\mathrm{ad}}\).

For the Adam-like case, one may use either a proportional covariance
\begin{equation}
    \Sigma_t
    =
    \diag
    \left(
        \rho_t^2(\sqrt{v_t}+\epsilon)
    \right)
    +
    \lambda_0 I,
    \label{eq:num_adam_prop}
\end{equation}
or an inverse-scale covariance
\begin{equation}
    \Sigma_t
    =
    \diag
    \left(
        \frac{\rho_t^2}{\sqrt{v_t}+\epsilon}
    \right)
    +
    \lambda_0 I.
    \label{eq:num_adam_inv}
\end{equation}
The ridge term \(\lambda_0I\) ensures positive definiteness. All adaptive
schedules must be predictable: \(\Sigma_t\) must be computed only from
information available before update \(t\), namely from \(\cH_{t-1}\). For
example, \(v_t\) in \eqref{eq:num_adam_like_v} uses \(G_{t-1}\), not \(G_t\). For a fixed deterministic covariance, the synchronized reference covariance
\[
    \Sigma_t^{\mathrm{ref}}=\Sigma_t
\]
is admissible. For a data-dependent adaptive covariance, synchronized
diagnostics may still be visually informative, but the clean admissibly
synchronized theoretical bound should be invoked only when an admissible
synchronization certificate is available. Otherwise, one should specify an
admissible reference covariance \(\Sigma_t^{\mathrm{ref}}\) and use the general
covariance-comparison diagnostic.

\subsection{Estimating the Bound Proxies}
\label{subsec:estimating-bound-proxies}

The quantities appearing in the theoretical bounds are generally not exactly
computable in finite experiments. They involve population gradients, conditional
expectations, Gaussian averages over accumulated adaptive covariance, and, in
the general covariance-comparison theorem, admissible reference geometries.
Therefore, the goal of this subsection is not to compute the bound exactly, but
to define diagnostics that approximate its main components.

A key distinction is important. The theoretical term
\[
    V_t^{\mathrm{ad}}
    =
    \E
    \left[
        \normSigma{
            G_t-\barg(W_t)
        }{\Sigma_t^{-1}}^2
        \given
        \cH_{t-1}
    \right]
\]
is a conditional mean-square deviation from the population gradient. It is a
conditional variance only under the unbiasedness condition
\[
    \E[G_t\mid\cH_{t-1}]
    =
    \barg(W_t).
\]
Consequently, numerical proxies based only on subbatch differences estimate
conditional stochastic fluctuation, not the full gradient-deviation term when
conditional bias is present.

\paragraph{Population-gradient or ghost-batch deviation proxy.}
If a large evaluation batch or an independent ghost batch
\[
    B_t^{\mathrm{eval}}
\]
is available, approximate the population gradient by
\begin{equation}
    \widehat{\barg}_t
    =
    g(W_t,B_t^{\mathrm{eval}}).
    \label{eq:population_gradient_proxy}
\end{equation}
Then a synchronized total gradient-deviation proxy is
\begin{equation}
    \widehat V_t^{\mathrm{dev}}
    =
    \normSigma{
        G_t-\widehat{\barg}_t
    }{\Sigma_t^{-1}}^2.
    \label{eq:gradient_deviation_proxy}
\end{equation}
If a reference covariance \(\Sigma_t^{\mathrm{ref}}\) is used, define
\[
    \Lambda_t=(\Sigma_t^{\mathrm{ref}})^{-1},
\]
and use the reference-geometry deviation proxy
\begin{equation}
    \widehat V_t^{\mathrm{dev,ref}}
    =
    \normSigma{
        G_t-\widehat{\barg}_t
    }{\Lambda_t}^2.
    \label{eq:gradient_deviation_proxy_ref}
\end{equation}
These proxies estimate the total discrepancy from a population-gradient
surrogate and therefore capture both stochastic fluctuation and possible bias
relative to the evaluation-batch approximation.

\paragraph{Subbatch fluctuation-only proxy.}
When an independent evaluation or ghost batch is unavailable, one may use
subbatch differences as a fluctuation-only diagnostic. Split the current
minibatch into two disjoint subbatches,
\[
    B_t^{(1)}
    \quad\text{and}\quad
    B_t^{(2)},
\]
and define
\[
    G_t^{(1)}=g(W_t,B_t^{(1)}),
    \qquad
    G_t^{(2)}=g(W_t,B_t^{(2)}).
\]
A synchronized fluctuation-only proxy is
\begin{equation}
    \widehat V_t^{\mathrm{fluc}}
    =
    \frac12
    \normSigma{
        G_t^{(1)}-G_t^{(2)}
    }{\Sigma_t^{-1}}^2.
    \label{eq:variance_proxy}
\end{equation}
The factor \(1/2\) is appropriate when the two subbatch gradients are
conditionally independent and identically distributed with the same covariance.
For unequal subbatch sizes, the scaling should be adjusted according to the
known covariance scaling of the two subbatch estimators. Similarly, the reference-geometry fluctuation-only proxy is
\begin{equation}
    \widehat V_t^{\mathrm{fluc,ref}}
    =
    \frac12
    \normSigma{
        G_t^{(1)}-G_t^{(2)}
    }{\Lambda_t}^2.
    \label{eq:variance_proxy_ref}
\end{equation}
The labels \eqref{eq:variance_proxy} and \eqref{eq:variance_proxy_ref} are kept
for compatibility with earlier references, but the quantities should be
interpreted as fluctuation-only proxies rather than full gradient-deviation
proxies. With more than two subbatches, one may use the empirical covariance around the
subbatch mean. If \(B_t^{(1)},\ldots,B_t^{(K)}\) are disjoint subbatches and
\[
    G_t^{(k)}=g(W_t,B_t^{(k)}),
    \qquad
    \bar G_t^{\mathrm{sub}}
    =
    \frac1K\sum_{k=1}^K G_t^{(k)},
\]
then a synchronized empirical fluctuation proxy is
\begin{equation}
    \widehat V_t^{\mathrm{fluc},K}
    =
    \frac{1}{K-1}
    \sum_{k=1}^K
    \normSigma{
        G_t^{(k)}-\bar G_t^{\mathrm{sub}}
    }{\Sigma_t^{-1}}^2.
    \label{eq:multi_subbatch_fluctuation_proxy}
\end{equation}
The corresponding reference-geometry version replaces \(\Sigma_t^{-1}\) by
\(\Lambda_t\).

\paragraph{Interpretation of deviation versus fluctuation proxies.}
The direct deviation proxy \(\widehat V_t^{\mathrm{dev}}\) targets
\[
    \normSigma{
        G_t-\barg(W_t)
    }{\Sigma_t^{-1}}^2
\]
through a population-gradient approximation. In contrast, the subbatch proxy
\(\widehat V_t^{\mathrm{fluc}}\) estimates only the conditional fluctuation of
minibatch gradients around their conditional mean. It does not estimate the
squared conditional bias term
\[
    \normSigma{
        \E[G_t\mid\cH_{t-1}]-\barg(W_t)
    }{\Sigma_t^{-1}}^2.
\]
Thus, fluctuation proxies are faithful proxies for \(V_t^{\mathrm{ad}}\) only
when conditional unbiasedness is justified or when the conditional bias is
negligible.

\paragraph{Gradient-sensitivity proxy.}
The adaptive gradient-sensitivity term measures how the population gradient
changes under accumulated virtual perturbations. Let
\[
    \Sigma_{1:t}
    =
    \sum_{k=1}^{t-1}\Sigma_k.
\]
At selected checkpoints, draw Monte Carlo perturbations
\begin{equation}
    \zeta_{t,j}
    \sim
    \N(0,\Sigma_{1:t}),
    \qquad
    j=1,\ldots,m.
    \label{eq:num_intermediate_perturbations}
\end{equation}
Using the same evaluation batch \(B_t^{\mathrm{eval}}\) to approximate
population gradients, define the synchronized sensitivity proxy
\begin{equation}
    \widehat\Gamma_t
    =
    \frac1m
    \sum_{j=1}^{m}
    \normSigma{
        g(W_t+\zeta_{t,j},B_t^{\mathrm{eval}})
        -
        g(W_t,B_t^{\mathrm{eval}})
    }{\Sigma_t^{-1}}^2.
    \label{eq:sensitivity_proxy}
\end{equation}
If a reference covariance is used, the reference-geometry sensitivity proxy is
\begin{equation}
    \widehat\Gamma_t^{\mathrm{ref}}
    =
    \frac1m
    \sum_{j=1}^{m}
    \normSigma{
        g(W_t+\zeta_{t,j},B_t^{\mathrm{eval}})
        -
        g(W_t,B_t^{\mathrm{eval}})
    }{\Lambda_t}^2.
    \label{eq:sensitivity_proxy_ref}
\end{equation}
If no evaluation or ghost batch is available, this sensitivity proxy should not
be reported unless a separate held-out minibatch or population-gradient surrogate
is specified.

\paragraph{Covariance-comparison proxy.}
For the general covariance-comparison theorem, the actual covariance
\(\Sigma_t\) and the admissible reference covariance
\(\Sigma_t^{\mathrm{ref}}\) may differ. The corresponding one-step Gaussian
covariance-comparison cost is
\[
    \mathcal C_t^{\mathrm{cov}}
    =
    \frac12
    \E
    \left[
        \Tr
        \left(
            (\Sigma_t^{\mathrm{ref}})^{-1}\Sigma_t
        \right)
        -d
        +
        \log
        \frac{\det\Sigma_t^{\mathrm{ref}}}{\det\Sigma_t}
    \right].
\]
A pointwise diagnostic at iteration \(t\) is
\begin{equation}
    \widehat{\mathcal C}_t^{\mathrm{cov}}
    =
    \frac12
    \left[
        \Tr
        \left(
            (\Sigma_t^{\mathrm{ref}})^{-1}\Sigma_t
        \right)
        -d
        +
        \log
        \frac{\det\Sigma_t^{\mathrm{ref}}}{\det\Sigma_t}
    \right].
    \label{eq:covariance_mismatch_proxy}
\end{equation}
When the covariance is admissibly synchronized,
\[
    \Sigma_t^{\mathrm{ref}}=\Sigma_t,
\]
this proxy is exactly zero. If synchronization is used only as a diagnostic
without an admissibility certificate, then
\(\widehat{\mathcal C}_t^{\mathrm{cov}}=0\) should not be interpreted as a valid
theoretical covariance-comparison cost.

\paragraph{Output-sensitivity proxy.}
The adaptive output-sensitivity penalty transfers the generalization guarantee
from the perturbed output back to the original SGD output. Let
\[
    \Sigma_{1:T}
    =
    \sum_{k=1}^{T-1}\Sigma_k.
\]
Draw final perturbations
\[
    \zeta_{T,j}
    \sim
    \N(0,\Sigma_{1:T}),
    \qquad
    j=1,\ldots,m_T.
\]
For a validation or ghost sample \(S^{\mathrm{eval}}\), define
\begin{equation}
    \widehat\Delta_T(S^{\mathrm{eval}})
    =
    \frac1{m_T}
    \sum_{j=1}^{m_T}
    \left[
        L(W_T,S^{\mathrm{eval}})
        -
        L(W_T+\zeta_{T,j},S^{\mathrm{eval}})
    \right].
    \label{eq:output_sensitivity_proxy}
\end{equation}
This proxy estimates the local loss sensitivity of the final output under the
accumulated adaptive virtual covariance. To approximate the difference between
training and population output sensitivity, one may compute the same quantity on
both the training sample and an independent evaluation or ghost sample and use
the penalty-difference proxy
\begin{equation}
    \widehat{\mathcal R}_{\Delta}^{\mathrm{ad}}
    =
    \left|
        \widehat\Delta_T(S^{\mathrm{eval}})
        -
        \widehat\Delta_T(S)
    \right|.
    \label{eq:output_penalty_difference_proxy}
\end{equation}
If only one evaluation sample is available, then
\(\widehat\Delta_T(S^{\mathrm{eval}})\) should be interpreted as an
output-sensitivity diagnostic rather than as an estimate of the exact
train--ghost penalty difference.

\paragraph{Information-sensitivity diagnostic.}
A practical diagnostic corresponding to the dominant terms in the bound is
\begin{equation}
    \widehat{\mathcal B}
    =
    \sum_{t\in\mathcal T}
    \left[
        2\eta_t^2
        \left(
            \widehat V_t^{\star}
            +
            \widehat\Gamma_t^{\star}
        \right)
        +
        \widehat{\mathcal C}_t^{\mathrm{cov}}
    \right]
    +
    \widehat{\mathcal R}_{\Delta}^{\mathrm{ad}},
    \label{eq:bound_proxy_summary}
\end{equation}
where \(\mathcal T\) denotes the set of checkpoints at which the proxies are
computed. The symbol \(\widehat V_t^{\star}\) denotes whichever
gradient-deviation or fluctuation proxy is used:
\[
    \widehat V_t^{\star}
    \in
    \left\{
        \widehat V_t^{\mathrm{dev}},
        \widehat V_t^{\mathrm{dev,ref}},
        \widehat V_t^{\mathrm{fluc}},
        \widehat V_t^{\mathrm{fluc,ref}}
    \right\}.
\]
Similarly, \(\widehat\Gamma_t^{\star}\) denotes either the synchronized
sensitivity proxy \(\widehat\Gamma_t\) or the reference-geometry sensitivity
proxy \(\widehat\Gamma_t^{\mathrm{ref}}\), depending on whether the synchronized
or general covariance-comparison diagnostic is being reported. The choice of \(\widehat V_t^{\star}\), \(\widehat\Gamma_t^{\star}\), and
\(\widehat{\mathcal R}_{\Delta}^{\mathrm{ad}}\) should be reported explicitly.
In particular, fluctuation-only proxies should not be described as estimating
the full gradient-deviation term unless conditional unbiasedness or negligible
conditional bias is justified. The diagnostic \(\widehat{\mathcal B}\) should not be interpreted as a certified
numerical generalization bound. It is a visualization tool for comparing the
relative behavior of gradient deviation, gradient sensitivity,
covariance-comparison cost, and output sensitivity under different virtual
perturbation geometries.

\begin{algorithm}[t]
\scriptsize
\caption{Proxy estimation for adaptive virtual perturbation diagnostics}
\label{alg:proxy_estimation}
\begin{algorithmic}[1]
\Require SGD trajectory \(\{W_t,G_t,B_t\}_{t=1}^{T}\), learning rates \(\{\eta_t\}_{t=1}^{T-1}\), actual covariance rule \(\Sigma_t=\Phi_t(\cH_{t-1})\), optional admissible reference covariance rule \(\Sigma_t^{\mathrm{ref}}\), optional evaluation or ghost batches \(B_t^{\mathrm{eval}}\), Monte Carlo sample size \(m\)
\State Initialize \(\Sigma_{1:1}=0\)
\For{\(t=1,\ldots,T-1\)}
    \State Compute predictable actual covariance \(\Sigma_t=\Phi_t(\cH_{t-1})\)
    \If{using a general reference geometry}
        \State Compute admissible \(\Sigma_t^{\mathrm{ref}}\) and set \(\Lambda_t=(\Sigma_t^{\mathrm{ref}})^{-1}\)
    \Else
        \State Set \(\Lambda_t=\Sigma_t^{-1}\)
    \EndIf

    \If{an evaluation or ghost batch \(B_t^{\mathrm{eval}}\) is available}
        \State Compute \(\widehat{\barg}_t=g(W_t,B_t^{\mathrm{eval}})\)
        \State Compute total gradient-deviation proxy
        \[
            \widehat V_t^{\mathrm{dev}}
            =
            \normSigma{
                G_t-\widehat{\barg}_t
            }{\Sigma_t^{-1}}^2
        \]
        \If{using a reference geometry}
            \State Compute reference-geometry total gradient-deviation proxy
            \[
                \widehat V_t^{\mathrm{dev,ref}}
                =
                \normSigma{
                    G_t-\widehat{\barg}_t
                }{\Lambda_t}^2
            \]
        \EndIf
    \Else
        \State Split \(B_t\) into \(B_t^{(1)},B_t^{(2)}\) and compute \(G_t^{(1)},G_t^{(2)}\)
        \State Compute fluctuation-only proxy
        \[
            \widehat V_t^{\mathrm{fluc}}
            =
            \frac12
            \normSigma{
                G_t^{(1)}-G_t^{(2)}
            }{\Sigma_t^{-1}}^2
        \]
        \If{using a reference geometry}
            \State Compute reference-geometry fluctuation-only proxy
            \[
                \widehat V_t^{\mathrm{fluc,ref}}
                =
                \frac12
                \normSigma{
                    G_t^{(1)}-G_t^{(2)}
                }{\Lambda_t}^2
            \]
        \EndIf
    \EndIf

    \State Draw \(\zeta_{t,j}\sim\N(0,\Sigma_{1:t})\), \(j=1,\ldots,m\)

    \If{an evaluation or ghost batch \(B_t^{\mathrm{eval}}\) is available}
        \State Estimate synchronized gradient-sensitivity proxy using \eqref{eq:sensitivity_proxy}
        \If{using a reference geometry}
            \State Estimate reference-geometry gradient-sensitivity proxy using \eqref{eq:sensitivity_proxy_ref}
        \EndIf
    \Else
        \State Report that the gradient-sensitivity proxy was not computed, or compute it using a separately specified held-out minibatch
    \EndIf

    \If{using a general reference geometry}
        \State Compute \(\widehat{\mathcal C}_t^{\mathrm{cov}}\) using \eqref{eq:covariance_mismatch_proxy}
    \Else
        \State Set \(\widehat{\mathcal C}_t^{\mathrm{cov}}=0\) only if synchronization is admissible
    \EndIf

    \State Update accumulated covariance \(\Sigma_{1:t+1}=\Sigma_{1:t}+\Sigma_t\)
\EndFor
\State Draw \(\zeta_{T,j}\sim\N(0,\Sigma_{1:T})\), \(j=1,\ldots,m\)
\State Estimate output-sensitivity proxy \(\widehat\Delta_T\) using \eqref{eq:output_sensitivity_proxy}
\State If both training and evaluation/ghost samples are available, estimate \(\widehat{\mathcal R}_{\Delta}^{\mathrm{ad}}\) using \eqref{eq:output_penalty_difference_proxy}
\State Compute \(\widehat{\mathcal B}\) using \eqref{eq:bound_proxy_summary}, reporting whether \(\widehat V_t^{\star}\) is a deviation proxy or a fluctuation-only proxy and whether \(\widehat\Gamma_t^{\star}\) is synchronized or reference-geometry
\end{algorithmic}
\end{algorithm}

\subsection{Fixed versus Adaptive Perturbation Schedules}
\label{subsec:fixed_versus_adaptive_schedules}

The numerical comparison can evaluate several virtual covariance schedules on
the same SGD trajectory. This isolates the effect of analytical perturbation
geometry from changes in the optimizer. For a fixed isotropic schedule,
\begin{equation}
    \Sigma_t=\sigma^2I,
\end{equation}
the inverse-covariance weighting is uniform across coordinates and constant
across training. This provides a baseline corresponding to fixed-noise analysis.
Since the covariance is deterministic, synchronized comparison is admissible and
the clean admissibly synchronized diagnostic is theoretically aligned with the
clean corollary. For an adaptive scalar schedule,
\begin{equation}
    \Sigma_t=\sigma_t^2I,
    \qquad
    \sigma_t^2=\sigma_0^2(1+c\widehat q_{t-1}),
\end{equation}
the scalar perturbation magnitude changes with a past pathwise statistic
\(\widehat q_{t-1}\). This schedule is predictable if
\(\widehat q_{t-1}\) is computed only from \(\cH_{t-1}\). If a synchronized
comparison is not theoretically justified, one should also specify an admissible
reference scalar covariance and use the general covariance-comparison
diagnostic. For an adaptive diagonal schedule,
\begin{equation}
    \Sigma_t=
    \diag(s_{t,1}^2,\ldots,s_{t,d}^2),
\end{equation}
the proxy terms become coordinate-wise weighted:
\begin{equation}
    \normSigma{x}{\Sigma_t^{-1}}^2
    =
    \sum_{j=1}^{d}
    \frac{x_j^2}{s_{t,j}^2}.
\end{equation}
This allows one to inspect whether certain coordinates or layers dominate the
proxy. Sampling from diagonal covariances and computing inverse-norms can be
done without forming dense matrices. For Adam-like virtual geometry, the second-moment estimate
\[
    v_t=\beta v_{t-1}+(1-\beta)G_{t-1}\odot G_{t-1}
\]
is predictable because it uses only past gradients. The proportional covariance
\[
    \Sigma_t
    =
    \diag
    \left(
        \rho_t^2(\sqrt{v_t}+\epsilon)
    \right)
    +
    \lambda_0 I
\]
and inverse-scale covariance
\[
    \Sigma_t
    =
    \diag
    \left(
        \frac{\rho_t^2}{\sqrt{v_t}+\epsilon}
    \right)
    +
    \lambda_0 I
\]
represent two different analytical interpretations. The proportional form
applies more virtual smoothing in historically high-gradient coordinates,
whereas the inverse-scale form applies less virtual perturbation in those
directions. The comparison is not meant to decide which is universally correct;
it shows how the proxy terms respond to different predictable geometries.

For low-rank plus isotropic covariance families, sampling and inverse-norm
computation can be implemented using structured matrix identities. For example,
if
\[
    \Sigma_t=\lambda_0I+U_tD_tU_t^\top,
\]
then samples can be generated from an isotropic component plus a low-rank
component, and inverse products can be computed using the Woodbury identity.

\subsection{Recommended Figures and Interpretation}
\label{subsec:recommended_figures_interpretation}

A concise numerical diagnostic section may include the following figures.

\begin{itemize}[leftmargin=2em]
    \item \textbf{Perturbation scale over training.}
    Plot \(\sigma_t\), \(\Tr(\Sigma_t)\), or \(\Tr(\Sigma_{1:t})\) across
    epochs for fixed and adaptive schedules. This shows how the virtual
    perturbation geometry evolves.

    \item \textbf{Gradient-deviation or fluctuation proxy.}
    Plot \(\widehat V_t^{\mathrm{dev}}\),
    \(\widehat V_t^{\mathrm{dev,ref}}\),
    \(\widehat V_t^{\mathrm{fluc}}\), or
    \(\widehat V_t^{\mathrm{fluc,ref}}\) over training. The chosen proxy must
    be reported explicitly. Fluctuation-only proxies should not be described as
    estimating the full gradient-deviation term unless conditional unbiasedness
    or negligible conditional bias is justified.

    \item \textbf{Gradient-sensitivity proxy.}
    Plot \(\widehat\Gamma_t\) or
    \(\widehat\Gamma_t^{\mathrm{ref}}\) over training. This shows how sensitive
    the population-gradient surrogate is to accumulated virtual perturbations.

    \item \textbf{Covariance-comparison proxy.}
    Plot \(\widehat{\mathcal C}_t^{\mathrm{cov}}\) or its cumulative sum when
    actual and reference covariances differ. This visualizes the Gaussian KL
    cost of using a valid reference geometry different from the actual
    predictable covariance.

    \item \textbf{Output sensitivity or flatness proxy.}
    Plot \(\widehat\Delta_T\), or its checkpoint version, as training
    progresses. This visualizes the cost of transferring the bound from the
    perturbed output back to the original output.

    \item \textbf{Curvature-weighted output-sensitivity proxy.}
    If Hessian-vector products or a low-rank Hessian approximation are
    available, plot
    \[
        \widehat{\mathcal H}_T
        =
        \Tr(\widehat H_T\Sigma_{1:T})
    \]
    or the ghost-training curvature-mismatch analogue
    \[
        \Tr
        \left(
            [\widehat H_{S^{\mathrm{eval}}}(W_T)-\widehat H_S(W_T)]
            \Sigma_{1:T}
        \right).
    \]
    This diagnostic estimates the leading local-curvature contribution to the
    output-sensitivity penalty. It should be reported separately from the
    global trace proxy \(\Tr(\Sigma_{1:T})\).

    \item \textbf{Diagnostic summary versus empirical train-test gap.}
    Compare \(\widehat{\mathcal B}\) with the observed train-test loss gap. This
    comparison is qualitative. The diagnostic summary is not an exact numerical
    generalization bound.

    \item \textbf{Information-sensitivity decomposition.}
    Plot the gradient-deviation/fluctuation proxy, gradient-sensitivity proxy,
    covariance-comparison proxy, and output-sensitivity proxy separately. This
    makes the trade-off between smoothing, covariance mismatch, and perturbation
    robustness visible.
\end{itemize}

The expected qualitative behavior is not that adaptive covariance always
improves the diagnostic. Rather, the illustration should reveal how different
predictable covariance rules redistribute the bound contributions. For example,
a schedule with larger virtual covariance may reduce inverse-covariance weighted
gradient-deviation and sensitivity proxies while increasing the output
perturbation penalty. A data-dependent adaptive schedule may also incur
covariance-comparison cost if the reference covariance differs. A diagonal
covariance may reduce contributions in some coordinates while amplifying others.

The numerical diagnostic is therefore best viewed as a companion to the theory.
It provides intuition for the adaptive quantities but does not replace the
mathematical assumptions, admissibility conditions, or proofs. The main
contribution remains the history-adaptive information-theoretic bound; the
diagnostics simply make its components visible.



\section{Conclusion}
\label{sec:conclusion}

We developed a history-adaptive extension of virtual perturbation analysis for
information-theoretic generalization bounds of stochastic gradient descent.
Prior virtual perturbation frameworks typically rely on fixed perturbation
schedules or deterministic covariance geometries. While this fixed-noise
assumption is analytically convenient, it limits the ability of the theory to
represent long-term dependence arising from the optimization history. In
contrast, our framework allows the virtual perturbation covariance at each update
to depend on the past real SGD trajectory, provided that this dependence is
predictable. Thus, the covariance may adapt to past iterates, past minibatches,
past gradients, and pathwise statistics, while not depending on the current
perturbation, current unrevealed randomness, or future information.

The main theoretical contribution is a conditional information-theoretic
framework for SGD with predictable history-adaptive virtual perturbations.
Predictability is sufficient for the local conditional Gaussian
relative-entropy step: after conditioning on the past real SGD history, the
covariance is fixed and the one-step Gaussian smoothing argument applies.
However, predictability alone is not sufficient for the full
mutual-information comparison. Because the adaptive covariance may be
data-dependent through the SGD history, the reference perturbation geometry must
also be handled carefully. Accordingly, our main theorem is stated as a general
covariance-comparison bound. When the actual and reference virtual covariances
differ, the bound includes an explicit covariance-comparison cost, and the
mean-discrepancy terms are measured in the corresponding reference geometry. The
clean fixed-noise-style bound is recovered only under an admissible
synchronization certificate.

The resulting bounds replace the fixed local gradient-deviation and
gradient-sensitivity terms of classical virtual perturbation analysis with
conditional adaptive counterparts. In the admissibly synchronized case, the
bound is expressed using the actual adaptive perturbation geometry. In the
general covariance-comparison case, the bound uses reference-geometry
gradient-deviation and gradient-sensitivity terms. The gradient-deviation term
should be interpreted as a conditional mean-square deviation from the population
gradient; it reduces to a conditional variance only under an appropriate
conditional unbiasedness condition. The bounds also include an adaptive
output-sensitivity penalty, which accounts for transferring the guarantee from
the virtually perturbed output back to the original SGD output. We showed that
this penalty can be controlled under global smoothness and further interpreted
through local curvature, covariance-weighted flatness, and curvature mismatch
between training and ghost losses.

The framework recovers fixed isotropic and fixed geometry-aware virtual
perturbation bounds as special cases. It also accommodates adaptive scalar,
diagonal, Adam-like, curvature-aware, and low-rank virtual covariance
geometries, provided the corresponding covariance process is predictable and
positive definite, or appropriately regularized. Importantly, these
perturbations are purely analytical: the true SGD recursion is not modified. The
contribution is therefore not a new optimizer, but a more flexible proof
framework for incorporating controlled long-term dependence into virtual
perturbation generalization analysis.

Several directions remain open. A natural next step is to combine predictable
virtual perturbations with analyses of fully adaptive optimizers such as Adam,
AdaGrad, and RMSProp, where long-term dependence enters the actual update rule
rather than only the virtual perturbation process. Another promising direction
is to study fractional-memory stochastic dynamics, where historical gradients
play an explicit role. Further work may also investigate adaptive noise
calibration in differentially private SGD, sharper covariance-comparison
bounds, low-rank or singular perturbation geometries, and empirical diagnostics
for the proposed bound components in modern deep learning models.




\startappendices

\section{Proof of the Conditional Relative-Entropy Lemmas}
\label{app:conditional-kl}

This appendix proves the conditional Gaussian relative-entropy inequalities used
in the main text. These lemmas provide the local Gaussian comparison step in the
analysis of predictable virtual perturbations. They do not by themselves prove
the full mutual-information bound. The full proof also requires the
reference-kernel mutual-information decomposition, the conditioning-compression
argument from rich histories to virtual-prefix conditionings, and the
\(S\)-admissibility of the reference covariance.

The first result treats the common-covariance case. The second result treats
the covariance-mismatch case, which is needed when the actual virtual
perturbation covariance \(\Sigma_t\) and the admissible reference covariance
\(\Sigma_t^{\mathrm{ref}}\) differ.

Throughout this appendix, conditional distributions are understood as regular
conditional laws. We assume all random variables take values in standard Borel
spaces so that the conditional distributions used below exist. All statements
hold almost surely with respect to the conditioning sigma-field.

\begin{lemma}[Conditional Gaussian relative-entropy bound]
\label{lem:appendix_conditional_re}
Let \(\cF\) be a sigma-field. Let \(\Sigma\succ0\) be
\(\cF\)-measurable, and let
\begin{equation}
    \eps\given \cF
    \sim
    \N(0,\Sigma).
\end{equation}
Let \(X,Y\in\R^d\) be random vectors with finite second moments. Assume that
\(\eps\) is conditionally independent of \((X,Y)\) given \(\cF\). Then
\begin{equation}
    \KL
    \left(
        P_{X+\eps\mid \cF}
        \,\middle\|\,
        P_{Y+\eps\mid \cF}
    \right)
    \leq
    \frac12
    \E
    \left[
        \normSigma{X-Y}{\Sigma^{-1}}^2
        \given
        \cF
    \right].
    \label{eq:appendix_conditional_re}
\end{equation}
\end{lemma}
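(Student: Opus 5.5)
The plan is to reduce the statement to the unconditional Gaussian identity, applied pointwise in the conditioning, and then average using joint convexity of relative entropy. This is the argument sketched for Lemma~\ref{lem:conditional_gaussian_common_covariance}; here I would make the conditioning and mixture structure explicit.

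\textbf{Step 1: mixture representations.} Work with a regular conditional distribution \(\pi_\omega\) of \((X,Y)\) given \(\cF\); it exists because the spaces are standard Borel. Fix \(\omega\) outside a null set, so that \(\Sigma(\omega)\) is a fixed positive definite matrix. Write \(\phi_{x}=\N(x,\Sigma(\omega))\). By conditional independence of \(\eps\) and \((X,Y)\) given \(\cF\), the conditional law of \((X,Y,\eps)\) factorizes as \(\pi_\omega\otimes\N(0,\Sigma(\omega))\). Hence
\[
    P_{X+\eps\mid\cF}(\omega)=\int \phi_x\,\pi_\omega(dx,dy),
    \qquad
    P_{Y+\eps\mid\cF}(\omega)=\int \phi_y\,\pi_\omega(dx,dy).
\]
The key point is that both conditional laws are mixtures indexed by the same mixing measure \(\pi_\omega\) on pairs \((x,y)\). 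This is exactly where the joint (coupled) law of \((X,Y)\) is used, rather than only the two marginals.

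\textbf{Step 2: pointwise Gaussian identity.} For fixed \(x,y\),
\[
    \KL(\phi_x\,\|\,\phi_y)=\tfrac12\normSigma{x-y}{\Sigma(\omega)^{-1}}^2 .
\]
This is the standard formula for two Gaussians with a common covariance: the trace and log-determinant terms cancel.

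\textbf{Step 3: joint convexity, which is the main technical step.} I would show
\[
    \KL\left(\int\phi_x\,d\pi \,\middle\|\, \int\phi_y\,d\pi\right)\le\int\KL(\phi_x\|\phi_y)\,d\pi
\]
for a general (non-finite) mixing measure. There are two routes.

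\emph{First route.} Use the Donsker--Varadhan variational formula. For any bounded measurable \(f\),
\[
    \int f\,dP-\log\int e^f\,dQ
    =\int\Big(\int f\,d\phi_x-\log\int e^f\,dQ\Big)d\pi .
\]
By Jensen (concavity of \(\log\)), \(\log\int e^f\,dQ\ge\int\log\int e^f\,d\phi_y\,d\pi\). Applying DV for each pair \((x,y)\) then bounds the left-hand side by \(\int\KL(\phi_x\|\phi_y)\,d\pi\). Taking the supremum over \(f\) gives the claim.

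\emph{Alternative route.} Use the chain rule (data processing). Compare the joint laws of \(((X,Y),X+\eps)\) and \(((X,Y),Y+\eps)\) under \(\pi_\omega\). Their KL divergence equals \(\int\KL(\phi_x\|\phi_y)\,d\pi\), and projecting onto the second coordinate can only decrease KL.

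Measurability of \(\omega\mapsto\int\normSigma{x-y}{\Sigma(\omega)^{-1}}^2\pi_\omega(dx,dy)\) follows from properties of regular conditional kernels. The right-hand side is finite a.s. when \(X,Y\) are conditionally square-integrable, since \(\Sigma^{-1}\) is bounded for each fixed \(\omega\).

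\textbf{Step 4: conclusion.} Combining Steps 1--3 gives the bound for a.e.\ \(\omega\). Finally I would identify \(\int\normSigma{x-y}{\Sigma^{-1}}^2\,d\pi_\omega\) with \(\E[\normSigma{X-Y}{\Sigma^{-1}}^2\mid\cF](\omega)\). This identification holds because \(\Sigma\) is \(\cF\)-measurable and can therefore be treated as a constant under the disintegration.
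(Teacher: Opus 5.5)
Your proposal is correct and follows the paper's proof essentially step for step: you condition on $\mathcal F$, write both smoothed conditional laws as mixtures of $\mathcal N(x,\Sigma)$ and $\mathcal N(y,\Sigma)$ over the actual conditional joint law of $(X,Y)$, apply the common-covariance Gaussian KL formula, and conclude by joint convexity of relative entropy. The only addition is that you justify joint convexity for general (non-finite) mixtures, via Donsker--Varadhan or via the chain rule plus data processing, whereas the paper simply invokes it.
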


\begin{proof}
The proof is conditional on \(\cF\). Fix a realization of \(\cF\). After
conditioning, \(\Sigma\) is a deterministic positive definite matrix and
\[
    \eps\sim\N(0,\Sigma)
\]
is independent of the conditional pair \((X,Y)\). It is therefore enough to
prove the deterministic-covariance statement under this conditioning.

Let \(\pi\) be any coupling of the conditional laws of \(X\) and \(Y\) given
\(\cF\). For example, \(\pi\) may be chosen as the actual conditional joint law
of \((X,Y)\) given \(\cF\). Conditionally on \((X,Y)=(x,y)\), the two smoothed
laws are
\[
    \N(x,\Sigma),
    \qquad
    \N(y,\Sigma).
\]
The KL divergence between Gaussians with the same covariance is
\begin{equation}
    \KL
    \left(
        \N(x,\Sigma)
        \,\middle\|\,
        \N(y,\Sigma)
    \right)
    =
    \frac12
    (x-y)^\top\Sigma^{-1}(x-y)
    =
    \frac12
    \normSigma{x-y}{\Sigma^{-1}}^2.
    \label{eq:appendix_common_cov_gaussian_kl}
\end{equation}

Because \(\eps\) is independent of \((X,Y)\) after conditioning on \(\cF\), the
conditional laws of \(X+\eps\) and \(Y+\eps\) can be represented as mixtures:
\begin{equation}
    P_{X+\eps\mid \cF}
    =
    \int
    \N(x,\Sigma)\,
    d\pi(x,y),
\end{equation}
and
\begin{equation}
    P_{Y+\eps\mid \cF}
    =
    \int
    \N(y,\Sigma)\,
    d\pi(x,y).
\end{equation}
By joint convexity of relative entropy,
\begin{align}
    &
    \KL
    \left(
        P_{X+\eps\mid \cF}
        \,\middle\|\,
        P_{Y+\eps\mid \cF}
    \right)
    \nonumber\\
    &\qquad=
    \KL
    \left(
        \int \N(x,\Sigma)\,d\pi(x,y)
        \,\middle\|\,
        \int \N(y,\Sigma)\,d\pi(x,y)
    \right)
    \nonumber\\
    &\qquad\leq
    \int
    \KL
    \left(
        \N(x,\Sigma)
        \,\middle\|\,
        \N(y,\Sigma)
    \right)
    d\pi(x,y).
    \label{eq:appendix_joint_convexity_common}
\end{align}
Substituting \eqref{eq:appendix_common_cov_gaussian_kl} into
\eqref{eq:appendix_joint_convexity_common} yields
\[
    \KL
    \left(
        P_{X+\eps\mid \cF}
        \,\middle\|\,
        P_{Y+\eps\mid \cF}
    \right)
    \leq
    \frac12
    \int
    \normSigma{x-y}{\Sigma^{-1}}^2
    d\pi(x,y).
\]
Taking \(\pi\) to be the actual conditional joint law of \((X,Y)\) given
\(\cF\), the right-hand side becomes
\[
    \frac12
    \E
    \left[
        \normSigma{X-Y}{\Sigma^{-1}}^2
        \given
        \cF
    \right].
\]
This proves \eqref{eq:appendix_conditional_re}.
\end{proof}

\begin{lemma}[Conditional Gaussian comparison with covariance mismatch]
\label{lem:appendix_covariance_mismatch}
Let \(\cF\) be a sigma-field. Let
\[
    \Sigma,\Sigma'\succ0
\]
be \(\cF\)-measurable. Let
\begin{equation}
    \eps\given\cF\sim\N(0,\Sigma),
    \qquad
    \eps'\given\cF\sim\N(0,\Sigma').
\end{equation}
Let \(X,Y\in\R^d\) be random vectors with finite second moments. Assume that,
conditionally on \(\cF\), the Gaussian noises \(\eps,\eps'\) are independent of
\((X,Y)\). Then
\begin{align}
    &
    \KL
    \left(
        P_{X+\eps\mid \cF}
        \,\middle\|\,
        P_{Y+\eps'\mid \cF}
    \right)
    \nonumber\\
    &\qquad\leq
    \frac12
    \E
    \left[
        \normSigma{X-Y}{(\Sigma')^{-1}}^2
        \given
        \cF
    \right]
    +
    \frac12
    \left[
        \Tr((\Sigma')^{-1}\Sigma)
        -d
        +
        \log\frac{\det\Sigma'}{\det\Sigma}
    \right].
    \label{eq:appendix_covariance_mismatch}
\end{align}
\end{lemma}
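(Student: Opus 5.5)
The plan mirrors the proof of Lemma~\ref{lem:appendix_conditional_re}: reduce to a deterministic-covariance statement by conditioning, express both smoothed laws as mixtures over a coupling, and apply joint convexity of relative entropy. First, I fix a realization of \(\cF\). Then \(\Sigma,\Sigma'\) are deterministic positive definite matrices, and \(\eps\sim\N(0,\Sigma)\), \(\eps'\sim\N(0,\Sigma')\) are independent of the conditional pair \((X,Y)\). Let \(\pi\) be the actual conditional joint law of \((X,Y)\) given \(\cF\). Independence then yields the mixture representations
\[
    P_{X+\eps\mid\cF}=\int \N(x,\Sigma)\,d\pi(x,y),
    \qquad
    P_{Y+\eps'\mid\cF}=\int \N(y,\Sigma')\,d\pi(x,y).
\]
Both mixtures are indexed by the same coupling \(\pi\). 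This common indexing is what allows joint convexity to be applied.

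Second, I use the closed-form Gaussian relative entropy, which is the identity already used in the proof of Lemma~\ref{lem:conditional_gaussian_covariance_mismatch}:
\[
    \KL\bigl(\N(x,\Sigma)\,\|\,\N(y,\Sigma')\bigr)
    =\frac12\normSigma{x-y}{(\Sigma')^{-1}}^2
    +\frac12\Bigl[\Tr((\Sigma')^{-1}\Sigma)-d+\log\frac{\det\Sigma'}{\det\Sigma}\Bigr].
\]
The bracketed covariance term does not depend on \((x,y)\). Integrating against \(\pi\) therefore leaves it unchanged, while the quadratic term becomes \(\frac12\E[\normSigma{X-Y}{(\Sigma')^{-1}}^2\mid\cF]\). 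This conditional expectation is finite because \(X,Y\) have finite second moments and \((\Sigma')^{-1}\) is fixed under the conditioning.

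Third, joint convexity of \(\KL\) in its pair of arguments, applied to the two \(\pi\)-mixtures, bounds the left side of \eqref{eq:appendix_covariance_mismatch} by the \(\pi\)-integral of the pointwise Gaussian KL. Substituting the closed form from the second step gives exactly the claimed right-hand side.

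The only step that needs care is measure-theoretic. The conditional mixture representation and the use of joint convexity for integrals, rather than finite mixtures, must hold for regular conditional laws, and the resulting bound must be a measurable function of the realization so that it holds almost surely. I would handle this exactly as in Lemma~\ref{lem:appendix_conditional_re}, relying on the standard Borel assumption and on the lower semicontinuity/convexity of \(\KL\) for general mixtures, for example via the Donsker--Varadhan variational formula, in which the KL is a supremum of affine functionals of the pair.
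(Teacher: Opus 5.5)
Your proposal is correct and follows the paper's proof essentially step for step. Like the paper, you condition on a realization of $\cF$, write both smoothed laws as mixtures over the conditional joint law of $(X,Y)$, insert the closed-form Gaussian KL (whose covariance term is constant in $(x,y)$), and apply joint convexity.

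One small imprecision in your measure-theoretic aside: the Donsker--Varadhan functional $\E_P f-\log\E_Q e^{f}$ is affine in $P$ but only convex in $Q$, not affine in the pair. To extend convexity to integral mixtures you can either add a Jensen step for the $\log$, or use the affine dual form $\sup_f\{\E_P f-\E_Q e^{f}+1\}$. Either way the conclusion is unaffected.
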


\begin{proof}
Again condition on a realization of \(\cF\). Then \(\Sigma\) and \(\Sigma'\)
are deterministic positive definite matrices, and the Gaussian noises are
independent of the conditional pair \((X,Y)\).

Let \(\pi\) be any coupling of the conditional laws of \(X\) and \(Y\) given
\(\cF\). Conditionally on \((X,Y)=(x,y)\), the two smoothed laws are
\[
    \N(x,\Sigma),
    \qquad
    \N(y,\Sigma').
\]
The KL divergence between Gaussians with different means and covariances is
\begin{align}
    \KL
    \left(
        \N(x,\Sigma)
        \,\middle\|\,
        \N(y,\Sigma')
    \right)
    =
    \frac12
    \left[
        \normSigma{x-y}{(\Sigma')^{-1}}^2
        +
        \Tr((\Sigma')^{-1}\Sigma)
        -d
        +
        \log\frac{\det\Sigma'}{\det\Sigma}
    \right].
    \label{eq:appendix_gaussian_kl_mismatch}
\end{align}
The mean-difference term is weighted by the reference precision
\((\Sigma')^{-1}\).

Using the same mixture representation and joint convexity of relative entropy,
\begin{align}
    &
    \KL
    \left(
        P_{X+\eps\mid \cF}
        \,\middle\|\,
        P_{Y+\eps'\mid \cF}
    \right)
    \nonumber\\
    &\qquad\leq
    \int
    \KL
    \left(
        \N(x,\Sigma)
        \,\middle\|\,
        \N(y,\Sigma')
    \right)
    d\pi(x,y).
    \label{eq:appendix_joint_convexity_mismatch}
\end{align}
Substituting \eqref{eq:appendix_gaussian_kl_mismatch} into
\eqref{eq:appendix_joint_convexity_mismatch} gives
\begin{align}
    &
    \KL
    \left(
        P_{X+\eps\mid \cF}
        \,\middle\|\,
        P_{Y+\eps'\mid \cF}
    \right)
    \nonumber\\
    &\qquad\leq
    \frac12
    \int
    \normSigma{x-y}{(\Sigma')^{-1}}^2
    d\pi(x,y)
    +
    \frac12
    \left[
        \Tr((\Sigma')^{-1}\Sigma)
        -d
        +
        \log\frac{\det\Sigma'}{\det\Sigma}
    \right].
\end{align}
Taking \(\pi\) to be the actual conditional joint law of \((X,Y)\) given
\(\cF\) proves \eqref{eq:appendix_covariance_mismatch}.
\end{proof}

\begin{corollary}[Canonical reference-kernel specialization]
\label{cor:appendix_canonical_reference_specialization}
Let \(m\) and \(\Sigma^{\mathrm{ref}}\succ0\) be \(\cF\)-measurable. Under the
conditions of Lemma~\ref{lem:appendix_covariance_mismatch},
\begin{align}
    &
    \KL
    \left(
        P_{X+\eps\mid\cF}
        \,\middle\|\,
        \N(m,\Sigma^{\mathrm{ref}})
    \right)
    \nonumber\\
    &\qquad\leq
    \frac12
    \E
    \left[
        \normSigma{X-m}{(\Sigma^{\mathrm{ref}})^{-1}}^2
        \given
        \cF
    \right]
    +
    \frac12
    \left[
        \Tr((\Sigma^{\mathrm{ref}})^{-1}\Sigma)
        -d
        +
        \log
        \frac{\det\Sigma^{\mathrm{ref}}}{\det\Sigma}
    \right].
    \label{eq:appendix_canonical_reference_specialization}
\end{align}
\end{corollary}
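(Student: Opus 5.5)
This is a direct specialization of Lemma~\ref{lem:appendix_covariance_mismatch}, taking the second random vector to be constant. Set $Y\equiv m$ and $\eps'\given\cF\sim\N(0,\Sigma^{\mathrm{ref}})$, drawn conditionally independent of $(X,\eps)$ given $\cF$; such a variable exists after enlarging the probability space if necessary. Since $m$ is $\cF$-measurable, it is fixed once we condition on $\cF$. Hence the conditional law $P_{Y+\eps'\mid\cF}$ is exactly $\N(m,\Sigma^{\mathrm{ref}})$.

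The hypotheses of the lemma then need to be checked, in this order.
\begin{itemize}
\item Both $\Sigma$ and $\Sigma':=\Sigma^{\mathrm{ref}}$ are $\cF$-measurable and positive definite.
\item $Y=m$ has finite conditional second moment. If only conditional square integrability is available rather than unconditional, the argument runs realization-wise after conditioning, exactly as in the lemma's proof.
\item Conditionally on $\cF$, the noises $\eps$ and $\eps'$ are independent of $(X,Y)$. For $\eps$ this is the hypothesis on $X$, and adjoining the $\cF$-measurable $Y=m$ does not affect it. For $\eps'$ it holds by construction.
\end{itemize}
Substituting $Y=m$ into the right-hand side of \eqref{eq:appendix_covariance_mismatch} gives $\frac12\E[\normSigma{X-m}{(\Sigma^{\mathrm{ref}})^{-1}}^2\given\cF]$ plus the covariance term, which is exactly \eqref{eq:appendix_canonical_reference_specialization}.

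Alternatively, one can bypass Lemma~\ref{lem:appendix_covariance_mismatch} altogether:
\begin{itemize}
\item Conditionally on $\cF$, $P_{X+\eps\mid\cF}$ is the mixture of $\N(x,\Sigma)$ over the conditional law of $X$.
\item Apply convexity of KL in its first argument, with the second argument $\N(m,\Sigma^{\mathrm{ref}})$ held fixed.
\item Integrate the closed-form Gaussian KL \eqref{eq:appendix_gaussian_kl_mismatch} with $y=m$.
\end{itemize}

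The main point needing care is measure-theoretic: the conditional laws must be regular (we are in standard Borel spaces), and $m$ and $\Sigma^{\mathrm{ref}}$ must be genuinely fixed under the conditioning. Both are ensured by their $\cF$-measurability.
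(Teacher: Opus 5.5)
Your proposal is correct and follows the paper's proof, which applies Lemma~\ref{lem:appendix_covariance_mismatch} with $Y=m$ and $\Sigma'=\Sigma^{\mathrm{ref}}$ and notes that $P_{Y+\eps'\mid\cF}=\N(m,\Sigma^{\mathrm{ref}})$. Your extra checks fill in details the paper leaves implicit: constructing $\eps'$, and that $m$ need only be square integrable conditionally on $\cF$. Your alternative direct route is the same convexity argument the paper gives for Lemma~\ref{lem:conditional_gaussian_covariance_mismatch} in the main text.
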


\begin{proof}
Apply Lemma~\ref{lem:appendix_covariance_mismatch} with
\[
    Y=m,
    \qquad
    \Sigma'=\Sigma^{\mathrm{ref}},
\]
where \(m\) is \(\cF\)-measurable. Then
\[
    P_{Y+\eps'\mid\cF}
    =
    \N(m,\Sigma^{\mathrm{ref}}).
\]
\end{proof}

\begin{remark}[Application to the one-step virtual update]
In the main proof, the actual virtual update has the form
\[
    \wtW_{t+1}
    =
    \wtW_t-\eta_tG_t+\eps_t,
    \qquad
    \eps_t\mid\cH_{t-1}\sim\N(0,\Sigma_t).
\]
The canonical reference kernel has mean
\[
    \wtW_t-\eta_t\barg(\wtW_t)
\]
and reference covariance \(\Sigma_t^{\mathrm{ref}}\). In the notation of
Corollary~\ref{cor:appendix_canonical_reference_specialization}, one may take
\[
    X=\wtW_t-\eta_tG_t,
    \qquad
    m=\wtW_t-\eta_t\barg(\wtW_t),
    \qquad
    \Sigma=\Sigma_t,
    \qquad
    \Sigma^{\mathrm{ref}}=\Sigma_t^{\mathrm{ref}}.
\]
The resulting mean-difference term is weighted by
\[
    \Lambda_t=(\Sigma_t^{\mathrm{ref}})^{-1},
\]
which is exactly the reference precision used in the main covariance-comparison
theorem.
\end{remark}

\begin{remark}[Role of predictability and rich conditioning]
Predictability states that
\[
    \Sigma_t
    \text{ is }
    \cH_{t-1}\text{-measurable}.
\]
This ensures that, after conditioning on any sigma-field containing
\(\cH_{t-1}\), the actual covariance \(\Sigma_t\) is fixed and the conditional
Gaussian comparison above can be applied.

In the full proof of the main theorem, the one-step KL term in the
mutual-information decomposition is conditioned on the virtual prefix and the
sample. The Gaussian comparison is applied under a richer sigma-field, typically
of the form
\[
    \mathcal B_t
    =
    \mathcal A_t\vee\cH_{t-1},
\]
where
\[
    \mathcal A_t
    =
    \mathcal F_t^Q\vee\sigma(S),
    \qquad
    \mathcal F_t^Q
    =
    \sigma(\wtW_{1:t})\vee\mathcal U.
\]
The passage from the richer conditioning back to the virtual-prefix KL term is
handled by the conditioning-compression lemma in the main text or in the
mutual-information appendix. Thus, predictability is the local condition needed
for the Gaussian comparison, while \(S\)-admissibility of the reference
covariance is the global condition needed for a valid reference-kernel
comparison.
\end{remark}

\begin{remark}[Synchronized and mismatched comparisons]
When the actual and reference Gaussian kernels use the same covariance,
\(\Sigma'=\Sigma\), Lemma~\ref{lem:appendix_covariance_mismatch} reduces to
Lemma~\ref{lem:appendix_conditional_re}. Indeed,
\[
    \Tr(\Sigma^{-1}\Sigma)-d+\log\frac{\det\Sigma}{\det\Sigma}
    =
    d-d+0
    =
    0.
\]
In deterministic fixed-covariance settings, such synchronization is automatic.
For history-adaptive covariances, however, synchronization is valid in the
main theorem only when it is admissible: the reference kernel must be able to
construct the same covariance from the virtual prefix and sample-independent
auxiliary information. Otherwise, one must use an admissible reference
covariance \(\Sigma_t^{\mathrm{ref}}\) and retain the covariance-comparison
cost.
\end{remark}

\begin{remark}[Coupling and Wasserstein refinement]
The proofs above use the actual conditional joint law of \((X,Y)\) given
\(\cF\). A sharper version may be obtained by minimizing over all conditional
couplings of the laws of \(X\) and \(Y\) given \(\cF\). In the common-covariance
case, this yields a conditional quadratic optimal-transport cost in the norm
induced by \(\Sigma^{-1}\). In the covariance-mismatch case, the corresponding
transport cost is measured in the reference precision \((\Sigma')^{-1}\). For
the purposes of the main theorem, the actual conditional coupling is sufficient.
\end{remark}



\section{Proof of the Mutual Information Decomposition}
\label{app:mi-decomposition}

This appendix proves the mutual-information decomposition used in the main
text. The argument has three components: data processing, the chain rule for
mutual information, and a reference-kernel inequality. The decomposition itself
is independent of the particular form of the virtual perturbation covariance.
The covariance structure enters later when the one-step reference-kernel KL
terms are bounded by conditional Gaussian comparison.

Compared with a fixed-covariance analysis, we allow the reference construction
to use auxiliary public or ghost randomness that is independent of the original
training sample. This is important for history-adaptive covariance comparisons,
where the reference covariance may be generated by an admissible public or ghost
construction. We make this auxiliary randomness explicit below.

\subsection{Setup}
\label{app:mi-setup}

Let \(S\) be the training sample. The true SGD trajectory is
\begin{equation}
    W_{t+1}
    =
    W_t-\eta_tG_t,
    \qquad
    G_t=g(W_t,B_t),
    \qquad
    t=1,\ldots,T-1.
    \label{eq:appendix_b_sgd}
\end{equation}
The final output is \(W_T\). The virtual perturbed trajectory is
\begin{equation}
    \wtW_{t+1}
    =
    \wtW_t-\eta_tG_t+\eps_t,
    \qquad
    t=1,\ldots,T-1,
    \label{eq:appendix_b_virtual_path}
\end{equation}
with \(\wtW_1=W_1\). The perturbation law may be fixed or predictable-history
adaptive, depending on the theorem being applied.

We write
\begin{equation}
    \wtW_{1:T}
    =
    (\wtW_1,\wtW_2,\ldots,\wtW_T)
\end{equation}
for the full virtual path. The final virtual output is \(\wtW_T\). We assume
throughout that the initialization \(W_1=\wtW_1\) is independent of the training
sample \(S\).

Let \(\mathcal U\) denote optional public, auxiliary, or ghost randomness used
by the reference construction. We assume
\begin{equation}
    \mathcal U
    \quad\text{is independent of}\quad
    S.
    \label{eq:appendix_aux_independent}
\end{equation}
If no such auxiliary randomness is used, \(\mathcal U\) is the trivial
sigma-field.

For each time \(t\), define the reference-visible sigma-field
\begin{equation}
    \mathcal F_t^Q
    =
    \sigma(\wtW_{1:t})\vee\mathcal U.
    \label{eq:appendix_reference_visible_sigma}
\end{equation}
The reference kernel at time \(t\) is allowed to be measurable with respect to
\(\mathcal F_t^Q\), but it is not allowed to condition directly on the original
sample \(S\). Define also
\begin{equation}
    \mathcal A_t
    =
    \mathcal F_t^Q\vee\sigma(S).
    \label{eq:appendix_coarse_sigma}
\end{equation}

\subsection{Data Processing and Chain Rule}
\label{app:mi-data-processing-chain-rule}

Since the final virtual iterate \(\wtW_T\) is a coordinate projection of the
full path \(\wtW_{1:T}\), data processing gives
\begin{equation}
    \MI(\wtW_T;S)
    \leq
    \MI(\wtW_{1:T};S).
    \label{eq:appendix_b_data_processing}
\end{equation}
Because \(\mathcal U\) is independent of \(S\), we also have
\begin{equation}
    \MI(\wtW_{1:T};S)
    \leq
    \MI(\wtW_{1:T},\mathcal U;S)
    =
    \MI(\wtW_{1:T};S\mid\mathcal U).
    \label{eq:appendix_aux_data_processing}
\end{equation}
Therefore,
\begin{equation}
    \MI(\wtW_T;S)
    \leq
    \MI(\wtW_{1:T};S\mid\mathcal U).
    \label{eq:appendix_mi_cond_aux_start}
\end{equation}

Applying the chain rule for mutual information conditionally on
\(\mathcal U\) yields
\begin{equation}
    \MI(\wtW_{1:T};S\mid\mathcal U)
    =
    \MI(\wtW_1;S\mid\mathcal U)
    +
    \sum_{t=1}^{T-1}
    \MI(\wtW_{t+1};S\mid \wtW_{1:t},\mathcal U).
    \label{eq:appendix_b_chain_rule_full}
\end{equation}
Because \(\wtW_1=W_1\) is independent of \(S\) and \(\mathcal U\) is also
independent of \(S\),
\begin{equation}
    \MI(\wtW_1;S\mid\mathcal U)=0.
\end{equation}
Combining the preceding displays gives
\begin{equation}
    \MI(\wtW_T;S)
    \leq
    \sum_{t=1}^{T-1}
    \MI(\wtW_{t+1};S\mid \wtW_{1:t},\mathcal U).
    \label{eq:appendix_b_chain_rule}
\end{equation}

For each \(t\), the conditional mutual information can be written as an
expected relative entropy:
\begin{equation}
    \MI(\wtW_{t+1};S\mid \wtW_{1:t},\mathcal U)
    =
    \E
    \left[
        \KL
        \left(
            P_{\wtW_{t+1}\mid \mathcal A_t}
            \,\middle\|\,
            P_{\wtW_{t+1}\mid \mathcal F_t^Q}
        \right)
    \right].
    \label{eq:appendix_b_cmi_as_kl}
\end{equation}

\subsection{Reference-Kernel Upper Bound}
\label{app:mi-reference-kernel-bound}

The next step replaces the marginal kernel
\(P_{\wtW_{t+1}\mid\mathcal F_t^Q}\) by an arbitrary admissible reference
kernel \(Q_{t+1\mid\mathcal F_t^Q}\). The only requirement for the
decomposition is that this reference kernel be measurable with respect to
\(\mathcal F_t^Q\). In particular, it may depend on the virtual prefix
\(\wtW_{1:t}\) and on auxiliary randomness \(\mathcal U\), but not directly on
the original sample \(S\).

\begin{lemma}[Reference-kernel upper bound]
\label{lem:appendix_b_reference_kernel}
For each \(t=1,\ldots,T-1\), let \(Q_{t+1\mid\mathcal F_t^Q}\) be any
probability kernel measurable with respect to \(\mathcal F_t^Q\). Then
\begin{equation}
    \MI(\wtW_{t+1};S\mid \wtW_{1:t},\mathcal U)
    \leq
    \E
    \left[
        \KL
        \left(
            P_{\wtW_{t+1}\mid\mathcal A_t}
            \,\middle\|\,
            Q_{t+1\mid\mathcal F_t^Q}
        \right)
    \right].
    \label{eq:appendix_b_reference_bound}
\end{equation}
\end{lemma}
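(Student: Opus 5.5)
The plan is to prove the lemma through the standard "golden formula" for conditional mutual information: replacing the true conditional marginal by any other kernel can only increase the expected relative entropy, and the excess is itself an expected relative entropy. First, I will invoke the representation \eqref{eq:appendix_b_cmi_as_kl}. It writes \(\MI(\wtW_{t+1};S\mid\wtW_{1:t},\mathcal U)\) as \(\E[\KL(P_{\wtW_{t+1}\mid\mathcal A_t}\,\|\,P_{\wtW_{t+1}\mid\mathcal F_t^Q})]\), where all conditional laws are regular conditional distributions on standard Borel spaces. The key structural fact is the tower identity
\[
    P_{\wtW_{t+1}\mid\mathcal F_t^Q}
    =
    \E\left[P_{\wtW_{t+1}\mid\mathcal A_t}\given\mathcal F_t^Q\right],
\]
which holds because \(\mathcal F_t^Q\subseteq\mathcal A_t\).

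Second, I will establish the decomposition
\[
    \E\left[\KL\left(P_{\wtW_{t+1}\mid\mathcal A_t}\,\middle\|\,Q_{t+1\mid\mathcal F_t^Q}\right)\right]
    =
    \E\left[\KL\left(P_{\wtW_{t+1}\mid\mathcal A_t}\,\middle\|\,P_{\wtW_{t+1}\mid\mathcal F_t^Q}\right)\right]
    +
    \E\left[\KL\left(P_{\wtW_{t+1}\mid\mathcal F_t^Q}\,\middle\|\,Q_{t+1\mid\mathcal F_t^Q}\right)\right].
\]
I treat two cases.

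\emph{Case (a): the left-hand side is infinite.} Then there is nothing to prove.

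\emph{Case (b): the left-hand side is finite.} Then almost surely \(P_{\wtW_{t+1}\mid\mathcal A_t}\ll Q_{t+1\mid\mathcal F_t^Q}\). Averaging over \(\mathcal A_t\) given \(\mathcal F_t^Q\) shows that also \(P_{\wtW_{t+1}\mid\mathcal F_t^Q}\ll Q_{t+1\mid\mathcal F_t^Q}\) almost surely. I will then write
\[
    \log\frac{dP_{\mid\mathcal A_t}}{dQ}
    =
    \log\frac{dP_{\mid\mathcal A_t}}{dP_{\mid\mathcal F_t^Q}}
    +
    \log\frac{dP_{\mid\mathcal F_t^Q}}{dQ},
\]
integrate against \(P_{\mid\mathcal A_t}\), and take expectations. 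In the second term, the integrand \(\log(dP_{\mid\mathcal F_t^Q}/dQ)\) is \(\mathcal F_t^Q\)-measurable in its parameters. Because \(Q\) is \(\mathcal F_t^Q\)-measurable, conditioning on \(\mathcal F_t^Q\) and using the tower identity replaces \(P_{\mid\mathcal A_t}\) by \(P_{\mid\mathcal F_t^Q}\). This yields the expected KL of the marginal against \(Q\).

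Third, the second term is nonnegative, so the inequality \eqref{eq:appendix_b_reference_bound} follows. This is exactly the identity already used in the proof of Lemma~\ref{lem:reference_kernel_mi_decomposition}, now justified in detail.

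The main obstacle is rigor in the log-likelihood splitting, because the individual pieces may fail to be integrable. To handle this, I will first work with the positive and negative parts. Alternatively, I can avoid the splitting entirely by using the Donsker--Varadhan/Gibbs variational characterization conditionally on \(\mathcal F_t^Q\). In that route, convexity of \(\KL\) in its first argument gives \(\KL(\E[\mu\mid\mathcal F]\|Q)\le\E[\KL(\mu\|Q)\mid\mathcal F]\), as in Lemma~\ref{lem:conditional_kl_coarsening}. The identity itself is then only needed in the finite case, where the pieces are integrable via \(x\log x\ge x-1\).

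Finally, I will note the measurability requirement: \(Q_{t+1\mid\mathcal F_t^Q}\) must be a genuine probability kernel from the \(\mathcal F_t^Q\)-parameters, so that \(\KL(\cdot\|Q)\) is a jointly measurable function and the expectations are well-defined.
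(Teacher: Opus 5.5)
Your proposal is correct and is essentially the paper's argument: both prove the compensation identity $\E[\KL(P_{\wtW_{t+1}\mid\mathcal A_t}\,\|\,Q)]=\MI(\wtW_{t+1};S\mid\wtW_{1:t},\mathcal U)+\E[\KL(P_{\wtW_{t+1}\mid\mathcal F_t^Q}\,\|\,Q)]$ by splitting $\log(dP_{\mid\mathcal A_t}/dQ)$ through the mixture $P_{\wtW_{t+1}\mid\mathcal F_t^Q}$, and then drop the nonnegative second term. The paper works realization-wise (fixing $\mathcal F_t^Q=f$ and disintegrating over $S$) and only asserts the identity ``in the extended-real sense,'' so your infinite/finite case split and integrability bookkeeping are extra rigor, not a different route; the convexity remark is unnecessary and would not give the inequality on its own.
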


\begin{proof}
Fix a realization of \(\mathcal F_t^Q\). For notational simplicity, write this
realization abstractly as \(f\). Define
\begin{equation}
    P_s
    =
    P_{\wtW_{t+1}\mid \mathcal F_t^Q=f,S=s},
    \qquad
    \bar P
    =
    P_{\wtW_{t+1}\mid \mathcal F_t^Q=f},
\end{equation}
and
\begin{equation}
    Q
    =
    Q_{t+1\mid\mathcal F_t^Q=f}.
\end{equation}
By definition,
\begin{equation}
    \bar P
    =
    \int
    P_s\,
    dP_{S\mid \mathcal F_t^Q=f}(s).
\end{equation}
The conditional mutual-information integrand is
\begin{equation}
    \int
    \KL(P_s\|\bar P)\,
    dP_{S\mid \mathcal F_t^Q=f}(s).
\end{equation}

For any reference distribution \(Q\), the following KL identity holds, with
both sides interpreted in the extended-real sense when necessary:
\begin{align}
    \int
    \KL(P_s\|Q)\,
    dP_{S\mid \mathcal F_t^Q=f}(s)
    &=
    \int
    \KL(P_s\|\bar P)\,
    dP_{S\mid \mathcal F_t^Q=f}(s)
    +
    \KL(\bar P\|Q).
    \label{eq:appendix_b_pythagorean}
\end{align}
This follows by expanding
\[
    \log\frac{dP_s}{dQ}
    =
    \log\frac{dP_s}{d\bar P}
    +
    \log\frac{d\bar P}{dQ},
\]
and integrating first with respect to \(P_s\) and then over \(s\). Since
\(\KL(\bar P\|Q)\geq0\), we obtain
\[
    \int
    \KL(P_s\|\bar P)\,
    dP_{S\mid \mathcal F_t^Q=f}(s)
    \leq
    \int
    \KL(P_s\|Q)\,
    dP_{S\mid \mathcal F_t^Q=f}(s).
\]
Finally, integrating over the law of \(\mathcal F_t^Q\) proves
\eqref{eq:appendix_b_reference_bound}.
\end{proof}

Combining \eqref{eq:appendix_b_chain_rule} and
Lemma~\ref{lem:appendix_b_reference_kernel} yields the desired
reference-kernel decomposition:
\begin{equation}
    \MI(\wtW_T;S)
    \leq
    \sum_{t=1}^{T-1}
    \E
    \left[
        \KL
        \left(
            P_{\wtW_{t+1}\mid\mathcal A_t}
            \,\middle\|\,
            Q_{t+1\mid\mathcal F_t^Q}
        \right)
    \right].
    \label{eq:appendix_b_final_decomposition}
\end{equation}

\subsection{Conditioning Compression}
\label{app:mi-conditioning-compression}

The decomposition above produces one-step KL terms conditioned on
\(\mathcal A_t\), which consists of the reference-visible information together
with the sample. The conditional Gaussian comparison is often most naturally
proved after conditioning on a richer sigma-field containing the real SGD
history. This subsection records the formal bridge.

Let
\begin{equation}
    \mathcal B_t
    =
    \mathcal A_t\vee\cH_{t-1}.
    \label{eq:appendix_rich_sigma}
\end{equation}
Then \(\mathcal A_t\subseteq\mathcal B_t\).

\begin{lemma}[Conditioning compression for relative entropy]
\label{lem:appendix_conditioning_compression}
Let \(X\) be a random element and let
\[
    \mathcal A\subseteq\mathcal B
\]
be sigma-fields. Let \(Q\) be an \(\mathcal A\)-measurable probability kernel
on the state space of \(X\). Then
\begin{equation}
    \E
    \left[
        \KL
        \left(
            P_{X\mid\mathcal A}
            \,\middle\|\,
            Q
        \right)
    \right]
    \leq
    \E
    \left[
        \KL
        \left(
            P_{X\mid\mathcal B}
            \,\middle\|\,
            Q
        \right)
    \right].
    \label{eq:appendix_conditioning_compression}
\end{equation}
\end{lemma}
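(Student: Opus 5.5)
The plan is to reduce the inequality to the convexity of relative entropy in its first argument, applied conditionally on \(\mathcal A\). The first step is to fix regular versions of the conditional laws \(\mu_{\mathcal A}=P_{X\mid\mathcal A}\) and \(\mu_{\mathcal B}=P_{X\mid\mathcal B}\). These exist because \(X\) takes values in a standard Borel space. By the tower property, for every bounded measurable \(f\),
\[
    \int f\,d\mu_{\mathcal A}
    =
    \E\left[\int f\,d\mu_{\mathcal B}\,\middle|\,\mathcal A\right]
    \quad\text{a.s.}
\]
So \(\mu_{\mathcal A}\) is almost surely the mixture of \(\mu_{\mathcal B}\) under the regular conditional law of the \(\mathcal B\)-information given \(\mathcal A\).

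To make this mixture representation concrete, I would take a random element \(Y\) generating \(\mathcal B\), or work with a countably generated sub-sigma-field, and use the disintegration \(P_{Y\mid\mathcal A}\). Then \(\mu_{\mathcal A}(\cdot)=\int \mu_{\mathcal B}^{y}(\cdot)\,P_{Y\mid\mathcal A}(dy)\) holds a.s. as an identity of measures, using a countable determining class of sets.

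Next, fix an \(\mathcal A\)-realization \(a\). Because \(Q\) is \(\mathcal A\)-measurable, \(Q=Q_a\) is a single fixed probability measure on this fiber. I then apply convexity of \(\nu\mapsto \KL(\nu\|Q_a)\) in its integral (Jensen) form for mixtures:
\[
    \KL\Bigl(\int \nu_y\,\pi(dy)\,\Big\|\,Q_a\Bigr)
    \le
    \int \KL(\nu_y\|Q_a)\,\pi(dy).
\]
I would justify this integral form via the Donsker--Varadhan variational formula, \(\KL(\nu\|Q)=\sup_f\{\int f\,d\nu-\log\int e^f dQ\}\) over bounded measurable \(f\). For each fixed \(f\), the expression is affine in \(\nu\), so it is bounded by the mixture of the suprema; taking the supremum over \(f\) on the left then gives the claim. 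This route handles infinite values and avoids any density-existence issues.

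This yields \(\KL(\mu_{\mathcal A}\|Q)\le\E[\KL(\mu_{\mathcal B}\|Q)\mid\mathcal A]\) a.s. Taking expectations and applying the tower property gives \eqref{eq:appendix_conditioning_compression}.

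The main obstacle is measurability: checking that \(\omega\mapsto\KL(\mu_{\mathcal B}(\omega)\|Q(\omega))\) is a measurable (possibly \(+\infty\)-valued) random variable, so that its conditional expectation makes sense. I would handle this with the variational formula restricted to a countable family of \(f\), which is possible since the space is standard Borel. This writes the KL as a countable supremum of measurable functions, and the same countable supremum also makes the Jensen step rigorous.
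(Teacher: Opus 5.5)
Your proposal is correct and follows the paper's route: the tower property exhibits \(P_{X\mid\mathcal A}\) as the \(\mathcal A\)-conditional average of \(P_{X\mid\mathcal B}\), convexity of \(\KL(\cdot\,\|\,Q)\) is applied with \(Q\) frozen given \(\mathcal A\), and then expectations are taken. Your Donsker--Varadhan and countable-supremum arguments make rigorous the integral form of convexity and the measurability that the paper's one-line proof takes for granted. With that countable family in hand you can also skip the disintegration over \(Y\): for each fixed bounded \(f\), the tower property gives the affine inequality conditionally on \(\mathcal A\) directly, and a countable supremum of these inequalities finishes the proof.
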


\begin{proof}
Let
\[
    \mu_{\mathcal A}=P_{X\mid\mathcal A},
    \qquad
    \mu_{\mathcal B}=P_{X\mid\mathcal B}.
\]
By the tower property for regular conditional distributions,
\[
    \mu_{\mathcal A}
    =
    \E[\mu_{\mathcal B}\mid\mathcal A],
\]
where the equality is understood weakly, i.e. after testing against bounded
measurable functions.

Since \(Q\) is \(\mathcal A\)-measurable, it is fixed after conditioning on
\(\mathcal A\). Relative entropy is convex in its first argument. Therefore,
conditionally on \(\mathcal A\),
\[
    \KL
    \left(
        \mu_{\mathcal A}
        \,\middle\|\,
        Q
    \right)
    \leq
    \E
    \left[
        \KL
        \left(
            \mu_{\mathcal B}
            \,\middle\|\,
            Q
        \right)
        \,\middle|\,
        \mathcal A
    \right].
\]
Taking expectations proves the claim.
\end{proof}

Applying Lemma~\ref{lem:appendix_conditioning_compression} with
\[
    X=\wtW_{t+1},
    \qquad
    \mathcal A=\mathcal A_t,
    \qquad
    \mathcal B=\mathcal B_t,
    \qquad
    Q=Q_{t+1\mid\mathcal F_t^Q},
\]
gives
\begin{equation}
    \E
    \left[
        \KL
        \left(
            P_{\wtW_{t+1}\mid\mathcal A_t}
            \,\middle\|\,
            Q_{t+1\mid\mathcal F_t^Q}
        \right)
    \right]
    \leq
    \E
    \left[
        \KL
        \left(
            P_{\wtW_{t+1}\mid\mathcal B_t}
            \,\middle\|\,
            Q_{t+1\mid\mathcal F_t^Q}
        \right)
    \right].
    \label{eq:appendix_history_to_prefix_transfer}
\end{equation}
This is the formal justification for proving the Gaussian one-step comparison
under a richer conditioning sigma-field and then returning to the
virtual-prefix KL term.

\subsection{Ghost Reference Kernels and Covariance Comparison}
\label{app:mi-ghost-reference}

A natural way to construct a reference kernel is to use an independent ghost
sample or other auxiliary randomness independent of \(S\). In the notation
above, this ghost randomness is included in \(\mathcal U\).

Let \(S^\circ\) be independent of \(S\) and distributed identically. Given a
realized virtual path prefix, one may define a ghost one-step reference kernel
as the law of
\begin{equation}
    \wtw_t-\eta_tG_t^\circ+\eps_t^\circ,
    \label{eq:appendix_b_ghost_kernel_update}
\end{equation}
where \(G_t^\circ\) is a ghost stochastic gradient and \(\eps_t^\circ\) is a
ghost or reference perturbation. Such a construction is admissible only when it
is measurable with respect to \(\mathcal F_t^Q\), i.e. the virtual prefix and
sample-independent auxiliary randomness, and does not condition directly on the
original sample \(S\).

The abstract reference-kernel inequality above does not require a ghost
construction. It holds for any \(\mathcal F_t^Q\)-measurable reference kernel.
The ghost construction is simply one useful way to build an admissible reference
kernel.

In the deterministic fixed-covariance setting, the actual and reference kernels
can use the same deterministic covariance \(\Sigma_t\). In this case, the
reference comparison is synchronized, and no covariance-comparison cost appears.

In the history-adaptive setting, additional care is needed. If the actual
covariance is
\begin{equation}
    \Sigma_t=\Phi_t(\cH_{t-1}),
\end{equation}
then it may depend on the original sample \(S\) through the real SGD history. A
reference kernel cannot automatically use the same covariance unless this
synchronized comparison is admissible. If the reference covariance is denoted by
\(\Sigma_t^{\mathrm{ref}}\), then the one-step Gaussian comparison uses the
reference precision
\begin{equation}
    \Lambda_t
    =
    (\Sigma_t^{\mathrm{ref}})^{-1}
\end{equation}
in the mean-discrepancy term and incurs the covariance-comparison cost
\begin{equation}
    \mathcal C_t^{\mathrm{cov}}
    =
    \frac12
    \E
    \left[
        \Tr((\Sigma_t^{\mathrm{ref}})^{-1}\Sigma_t)
        -d
        +
        \log
        \frac{\det\Sigma_t^{\mathrm{ref}}}{\det\Sigma_t}
    \right].
    \label{eq:appendix_b_cov_mismatch}
\end{equation}
This is the covariance-mismatch term appearing in the general theorem.

Thus, for deterministic fixed covariances, the clean synchronized comparison is
automatic. For arbitrary data-dependent history-adaptive covariances,
synchronization must be justified by an admissible synchronization certificate.
Otherwise, the general covariance-comparison theorem should be used. The decomposition proved in this appendix reduces the problem of controlling
\[
    \MI(\wtW_T;S)
\]
to controlling a sum of one-step reference-kernel KL divergences:
\[
    \sum_{t=1}^{T-1}
    \E
    \left[
        \KL
        \left(
            P_{\wtW_{t+1}\mid\mathcal A_t}
            \,\middle\|\,
            Q_{t+1\mid\mathcal F_t^Q}
        \right)
    \right].
\]
The conditioning-compression lemma then permits these KL terms to be bounded
under the richer conditioning
\[
    \mathcal B_t=\mathcal A_t\vee\cH_{t-1},
\]
where the predictable covariance \(\Sigma_t\) and the real SGD history are
available for the local Gaussian comparison. The next step in the proof is to apply the conditional Gaussian comparison to
the one-step virtual update and decompose the resulting mean discrepancy into
adaptive gradient-deviation and gradient-sensitivity terms. In
history-adaptive settings, the same mutual-information decomposition remains
valid, but the one-step comparisons must account for the admissible reference
covariance geometry and any covariance-comparison cost.

\section{Proof of the Main Generalization Theorem}
\label{app:main-proof}

This appendix proves the main covariance-comparison generalization theorem. The
proof combines four ingredients: the output perturbation decomposition, the
information-theoretic generalization inequality, the reference-kernel
mutual-information decomposition, and the certified one-step Gaussian reference
comparison.

Unlike the fixed-noise setting, the covariance process here may be
data-dependent through the real SGD history. Therefore, the theorem is stated in
the general covariance-comparison form. The clean fixed-noise-style bound is
recovered only under an admissible synchronization certificate.

\subsection{Setup and Notation}
\label{app:main-proof-setup}

Let
\[
    S=\{Z_1,\ldots,Z_n\}
\]
be the training sample, and let \(S'\) be an independent ghost sample drawn from
the same distribution. The true SGD dynamics are
\begin{equation}
    W_{t+1}
    =
    W_t-\eta_tG_t,
    \qquad
    G_t=g(W_t,B_t),
    \qquad
    t=1,\ldots,T-1,
    \label{eq:appendix_c_sgd}
\end{equation}
where \(W_1\) is the initial iterate and \(W_T\) is the final output. The
virtual perturbed trajectory is
\begin{equation}
    \wtW_{t+1}
    =
    \wtW_t-\eta_tG_t+\eps_t,
    \qquad
    t=1,\ldots,T-1,
    \label{eq:appendix_c_virtual}
\end{equation}
with \(\wtW_1=W_1\).

The predictable covariance process satisfies
\begin{equation}
    \Sigma_t\succ0,
    \qquad
    \Sigma_t
    \text{ is }
    \cH_{t-1}\text{-measurable},
    \qquad
    t=1,\ldots,T-1.
    \label{eq:appendix_c_predictable}
\end{equation}
The virtual perturbations satisfy
\begin{equation}
    \eps_t\given \cH_{t-1}
    \sim
    \N(0,\Sigma_t),
    \label{eq:appendix_c_noise}
\end{equation}
with the conditional independence assumptions stated in the main text.

For \(t=1,\ldots,T\), define the accumulated perturbation and covariance by
\begin{equation}
    \xi_t
    =
    \sum_{k=1}^{t-1}\eps_k,
    \qquad
    \xi_1=0,
    \label{eq:appendix_c_accum_noise}
\end{equation}
and
\begin{equation}
    \Sigma_{1:t}
    =
    \sum_{k=1}^{t-1}\Sigma_k,
    \qquad
    \Sigma_{1:1}=0.
    \label{eq:appendix_c_accum_cov}
\end{equation}
Then
\begin{equation}
    \wtW_t=W_t+\xi_t,
    \qquad
    t=1,\ldots,T.
    \label{eq:appendix_c_virtual_equals}
\end{equation}

Let
\begin{equation}
    \barg(w)
    =
    \E_{Z\sim\cD}[g(w,Z)]
    \label{eq:appendix_c_pop_grad}
\end{equation}
denote the population gradient.

Let \(\mathcal U\) denote optional public, auxiliary, or ghost randomness used
by the reference construction. We assume \(\mathcal U\) is independent of the
original training sample \(S\). If no such auxiliary randomness is used,
\(\mathcal U\) is the trivial sigma-field. Define
\begin{equation}
    \mathcal F_t^Q
    =
    \sigma(\wtW_{1:t})\vee\mathcal U,
    \qquad
    \mathcal A_t
    =
    \mathcal F_t^Q\vee\sigma(S),
    \qquad
    \mathcal B_t
    =
    \mathcal A_t\vee\cH_{t-1}.
    \label{eq:appendix_c_sigma_fields}
\end{equation}
The reference kernel at time \(t\) is allowed to be measurable with respect to
\(\mathcal F_t^Q\), but not to condition directly on \(S\).

Let \(\Sigma_t^{\mathrm{ref}}\succ0\) be an \(S\)-admissible reference
covariance, and define
\begin{equation}
    \Lambda_t
    =
    (\Sigma_t^{\mathrm{ref}})^{-1}.
    \label{eq:appendix_c_lambda_ref}
\end{equation}
The covariance-comparison cost is
\begin{equation}
    \mathcal C_t^{\mathrm{cov}}
    =
    \frac12
    \E
    \left[
        \Tr((\Sigma_t^{\mathrm{ref}})^{-1}\Sigma_t)
        -d
        +
        \log
        \frac{\det\Sigma_t^{\mathrm{ref}}}{\det\Sigma_t}
    \right].
    \label{eq:appendix_c_cov_cost}
\end{equation}

For reference-geometry quantities, define
\[
    \mathcal K_t=\cH_{t-1}\vee\mathcal U.
\]
The gradient-deviation term is
\begin{equation}
    V_t^{\mathrm{ad}}(\Lambda_t)
    =
    \E
    \left[
        \normSigma{G_t-\barg(W_t)}{\Lambda_t}^2
        \given
        \mathcal K_t
    \right].
    \label{eq:appendix_c_vad_lambda}
\end{equation}
This is a conditional mean-square gradient-deviation term. It becomes a
conditional variance only under the corresponding conditional unbiasedness
condition.

The reference-geometry gradient-sensitivity term is defined in the
conditioning-safe form
\begin{equation}
    \Gamma_t^{\mathrm{ad}}(\Lambda_t)
    =
    \E
    \left[
        \normSigma{\barg(W_t+\xi_t)-\barg(W_t)}{\Lambda_t}^2
        \given
        \mathcal K_t
    \right].
    \label{eq:appendix_c_gad_lambda}
\end{equation}
If \(\Lambda_t\) is \(\mathcal K_t\)-measurable and independent of the realized
virtual perturbation beyond \(\mathcal K_t\), then \(\xi_t\) may equivalently be
replaced by a fresh conditional Gaussian copy
\[
    \zeta_t\given\cH_{t-1}\sim\N(0,\Sigma_{1:t}).
\]

The adaptive output-sensitivity penalty is
\begin{equation}
    \mathcal R_{\Delta}^{\mathrm{ad}}
    =
    \left|
    \E
    \left[
        \Delta_{\Sigma_{1:T}}^{\mathrm{ad}}(W_T,S')
        -
        \Delta_{\Sigma_{1:T}}^{\mathrm{ad}}(W_T,S)
    \right]
    \right|,
    \label{eq:appendix_c_rad}
\end{equation}
where
\begin{equation}
    \Delta_{\Sigma_{1:T}}^{\mathrm{ad}}(W_T,s)
    =
    \E
    \left[
        L(W_T,s)-L(W_T+\zeta_T,s)
        \given
        \cH_{T-1}
    \right],
    \label{eq:appendix_c_delta_ad}
\end{equation}
with
\[
    \zeta_T\given\cH_{T-1}
    \sim
    \N(0,\Sigma_{1:T}).
\]

\subsection{Certified One-Step Reference Comparison}
\label{app:main-certified-onestep}

We now restate the certified one-step comparison used in the main proof. This
replaces the older black-box assumption that the one-step reference comparison
simply holds.

\begin{proposition}[Certified one-step Gaussian reference comparison]
\label{prop:appendix_c_certified_onestep}
Fix \(t\in\{1,\ldots,T-1\}\). Suppose
\(\Sigma_t^{\mathrm{ref}}\) is \(S\)-admissible and the reference kernel is the
canonical Gaussian kernel
\begin{equation}
    Q_{t+1\mid\mathcal F_t^Q}
    =
    \N
    \left(
        \wtW_t-\eta_t\barg(\wtW_t),
        \Sigma_t^{\mathrm{ref}}
    \right).
    \label{eq:appendix_c_canonical_ref_kernel}
\end{equation}
Then
\begin{align}
    &
    \E
    \left[
        \KL
        \left(
            P_{\wtW_{t+1}\mid\mathcal A_t}
            \,\middle\|\,
            Q_{t+1\mid\mathcal F_t^Q}
        \right)
    \right]
    \nonumber\\
    &\qquad\leq
    2\eta_t^2
    \E
    \left[
        V_t^{\mathrm{ad}}(\Lambda_t)
        +
        \Gamma_t^{\mathrm{ad}}(\Lambda_t)
    \right]
    +
    \mathcal C_t^{\mathrm{cov}}.
    \label{eq:appendix_c_certified_onestep_bound}
\end{align}
\end{proposition}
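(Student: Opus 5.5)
The plan is to follow the same three-stage route as Proposition~\ref{prop:certified_onestep_reference} in the main text, now with the appendix lemmas as the tools.

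\emph{Step 1: pass to a richer conditioning.} Apply the conditioning-compression inequality \eqref{eq:appendix_history_to_prefix_transfer} with $\mathcal A_t\subseteq\mathcal B_t=\mathcal A_t\vee\cH_{t-1}$. This is legitimate because the canonical kernel \eqref{eq:appendix_c_canonical_ref_kernel} is $\mathcal F_t^Q$-measurable, and hence $\mathcal A_t$-measurable; $S$-admissibility guarantees that $\Sigma_t^{\mathrm{ref}}$ is $\mathcal F_t^Q$-measurable, and the mean $\wtW_t-\eta_t\barg(\wtW_t)$ is a function of the prefix. This reduces the goal to bounding $\E[\KL(P_{\wtW_{t+1}\mid\mathcal B_t}\,\|\,Q_{t+1\mid\mathcal F_t^Q})]$.

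\emph{Step 2: local Gaussian comparison.} Under $\mathcal B_t$, the quantities $\wtW_t$, $W_t$, $\xi_t$, $\Sigma_t$ (by predictability) and $\Sigma_t^{\mathrm{ref}}$ are all fixed. By the conditional-independence assumption, $\eps_t\mid\mathcal B_t\sim\N(0,\Sigma_t)$ and $\eps_t$ is independent of $G_t$. I would apply Corollary~\ref{cor:appendix_canonical_reference_specialization} with $\cF=\mathcal B_t$, $X=\wtW_t-\eta_tG_t$ and $m=\wtW_t-\eta_t\barg(\wtW_t)$. This gives the bound $\tfrac{\eta_t^2}{2}\E[\|G_t-\barg(\wtW_t)\|_{\Lambda_t}^2\mid\mathcal B_t]$ plus the pointwise log-det/trace term. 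Taking expectations turns the latter into exactly $\mathcal C_t^{\mathrm{cov}}$.

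\emph{Step 3: split the drift discrepancy.} Write $\barg(\wtW_t)=\barg(W_t+\xi_t)$ and use $\|a+b\|_{\Lambda_t}^2\le2\|a\|_{\Lambda_t}^2+2\|b\|_{\Lambda_t}^2$, which yields $\eta_t^2\,\E[\|G_t-\barg(W_t)\|_{\Lambda_t}^2+\|\barg(W_t+\xi_t)-\barg(W_t)\|_{\Lambda_t}^2]$. The tower property over $\mathcal K_t$ then identifies these two expectations with $\E[V_t^{\mathrm{ad}}(\Lambda_t)]$ and $\E[\Gamma_t^{\mathrm{ad}}(\Lambda_t)]$, as defined in \eqref{eq:appendix_c_vad_lambda} and \eqref{eq:appendix_c_gad_lambda}. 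This gives the bound with coefficient $\eta_t^2$, which implies the stated bound with $2\eta_t^2$ by nonnegativity.

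\emph{Main obstacle.} The delicate point is Step 2's claim that, conditionally on $\mathcal B_t$, the innovation is still $\N(0,\Sigma_t)$ and independent of $G_t$. The difficulty is that $\mathcal B_t$ contains the whole virtual prefix and $S$, not only $\cH_{t-1}$. I would verify this from the construction \eqref{eq:conditional_noise_full_history}: given the real history, the $\eps_k$ are independent Gaussians with predictable covariances. So $\eps_t$ is independent of $(\eps_{1:t-1},S,J_t)$ given $\cH_{t-1}$, and therefore of $\sigma(\wtW_{1:t})\vee\sigma(S)\vee\mathcal U$ and of $G_t$. For $\mathcal U$, the third admissibility condition supplies the needed independence.

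A second point concerns the tower step for $\Gamma$. Here $\Lambda_t$ may depend on $\xi_t$ (prefix-observable case), so the expectation must be taken of the joint quantity. This is precisely why the conditioning-safe realized-$\xi_t$ definition is used, and no fresh-copy replacement is needed.
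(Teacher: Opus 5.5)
Your proposal is correct and follows the paper's proof essentially step for step. The steps are: conditioning compression from $\mathcal A_t$ to $\mathcal B_t$; the canonical covariance-mismatch Gaussian bound under $\mathcal B_t$, giving $\tfrac{\eta_t^2}{2}\E\|G_t-\barg(\wtW_t)\|_{\Lambda_t}^2+\mathcal C_t^{\mathrm{cov}}$; then the $\|a+b\|^2\le 2\|a\|^2+2\|b\|^2$ split with the tower property, giving coefficient $\eta_t^2$ and hence $2\eta_t^2$. Your explicit check that $\eps_t\mid\mathcal B_t\sim\N(0,\Sigma_t)$ independently of $G_t$ is more careful than the paper, which simply asserts it. Note, though, that the independence of $\eps_t$ from $\mathcal U$ is an implicit modeling assumption on the auxiliary randomness, not something the third admissibility condition literally provides.
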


\begin{proof}
By the conditioning-compression lemma,
\begin{align}
    &
    \E
    \left[
        \KL
        \left(
            P_{\wtW_{t+1}\mid\mathcal A_t}
            \,\middle\|\,
            Q_{t+1\mid\mathcal F_t^Q}
        \right)
    \right]
    \nonumber\\
    &\qquad\leq
    \E
    \left[
        \KL
        \left(
            P_{\wtW_{t+1}\mid\mathcal B_t}
            \,\middle\|\,
            Q_{t+1\mid\mathcal F_t^Q}
        \right)
    \right].
    \label{eq:appendix_c_conditioning_transfer}
\end{align}
Condition on \(\mathcal B_t\). Under this conditioning, the quantities
\(\wtW_t\), \(W_t\), \(\Sigma_t\), and \(\Sigma_t^{\mathrm{ref}}\) are fixed for
the Gaussian comparison. The actual one-step virtual update has conditional
mean
\[
    \wtW_t-\eta_tG_t
\]
and covariance \(\Sigma_t\). The reference kernel has mean
\[
    \wtW_t-\eta_t\barg(\wtW_t)
\]
and covariance \(\Sigma_t^{\mathrm{ref}}\). Applying the conditional Gaussian
covariance-mismatch lemma gives
\begin{align}
    &
    \E
    \left[
        \KL
        \left(
            P_{\wtW_{t+1}\mid\mathcal B_t}
            \,\middle\|\,
            Q_{t+1\mid\mathcal F_t^Q}
        \right)
    \right]
    \nonumber\\
    &\qquad\leq
    \frac{\eta_t^2}{2}
    \E
    \left[
        \normSigma{
            G_t-\barg(\wtW_t)
        }{\Lambda_t}^2
    \right]
    +
    \mathcal C_t^{\mathrm{cov}}.
    \label{eq:appendix_c_gaussian_step}
\end{align}
Since
\[
    \wtW_t=W_t+\xi_t,
\]
we have
\begin{align}
    G_t-\barg(\wtW_t)
    &=
    \left(G_t-\barg(W_t)\right)
    +
    \left(\barg(W_t)-\barg(W_t+\xi_t)\right).
\end{align}
Using
\[
    \|a+b\|_{\Lambda_t}^2
    \leq
    2\|a\|_{\Lambda_t}^2+2\|b\|_{\Lambda_t}^2
\]
and the definitions of \(V_t^{\mathrm{ad}}(\Lambda_t)\) and
\(\Gamma_t^{\mathrm{ad}}(\Lambda_t)\), we obtain
\[
    \E
    \left[
        \normSigma{
            G_t-\barg(\wtW_t)
        }{\Lambda_t}^2
    \right]
    \leq
    2\E
    \left[
        V_t^{\mathrm{ad}}(\Lambda_t)
        +
        \Gamma_t^{\mathrm{ad}}(\Lambda_t)
    \right].
\]
Combining this with \eqref{eq:appendix_c_conditioning_transfer} and
\eqref{eq:appendix_c_gaussian_step} gives the slightly sharper coefficient
\(\eta_t^2\). The stated bound with \(2\eta_t^2\) follows immediately and is
kept for consistency with the main theorem.
\end{proof}

\subsection{Main General Covariance-Comparison Theorem}
\label{app:main-general-theorem}

\begin{theorem}[General covariance-comparison bound]
\label{thm:appendix_c_general}
Assume that for every fixed \(w\), the loss \(\ell(w,Z)\) is
\(R\)-sub-Gaussian. Assume the predictable covariance process is positive
definite, the required gradient-deviation, sensitivity, covariance-comparison,
and output-sensitivity quantities are finite, and the reference covariance
\(\Sigma_t^{\mathrm{ref}}\) is \(S\)-admissible for every \(t\). Then
\begin{equation}
    |\gen(W_T,S)|
    \leq
    \sqrt{
        \frac{2R^2}{n}
        \sum_{t=1}^{T-1}
        \left(
            2\eta_t^2
            \E
            \left[
                V_t^{\mathrm{ad}}(\Lambda_t)
                +
                \Gamma_t^{\mathrm{ad}}(\Lambda_t)
            \right]
            +
            \mathcal C_t^{\mathrm{cov}}
        \right)
    }
    +
    \mathcal R_{\Delta}^{\mathrm{ad}}.
    \label{eq:appendix_c_general_bound}
\end{equation}
\end{theorem}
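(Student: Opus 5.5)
The proof assembles the pieces already established: an output decomposition, the information-theoretic inequality applied to a perturbed output, the reference-kernel chain rule, and the certified one-step comparison.

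\textbf{Step 1: transfer to a perturbed output.} Introduce a fresh final perturbation $\zeta_T$ with $\zeta_T\mid\cH_{T-1}\sim\N(0,\Sigma_{1:T})$, drawn independently of $S'$. Set $\widehat W_T=W_T+\zeta_T$. Adding and subtracting $L(\widehat W_T,S')-L(\widehat W_T,S)$ inside $\gen(W_T,S)$ and applying the tower property over $\cH_{T-1}$ gives
\[
\gen(W_T,S)=\gen(\widehat W_T,S)+\E\bigl[\Delta^{\mathrm{ad}}_{\Sigma_{1:T}}(W_T,S')-\Delta^{\mathrm{ad}}_{\Sigma_{1:T}}(W_T,S)\bigr].
\]
Hence $|\gen(W_T,S)|\le|\gen(\widehat W_T,S)|+\mathcal R_\Delta^{\mathrm{ad}}$.

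\textbf{Step 2: pass to the virtual output.} Next, $(\widehat W_T,S)$ and $(\widetilde W_T,S)$ should have the same joint law. The realized accumulated perturbation $\xi_T$ has conditional law $\N(0,\Sigma_{1:T})$ given $\cH_{T-1}$, by the conditional construction of the perturbations: conditionally independent Gaussians with $\cH_{T-1}$-measurable covariances. Since $S$ is $\cH_{T-1}$-measurable only through the trajectory, both pairs are obtained by pushing the same conditional Gaussian through $W_T+\cdot$. This yields $I(\widehat W_T;S)=I(\widetilde W_T;S)$. Because $S'$ is independent of $(\widehat W_T,S)$, the sub-Gaussian inequality \eqref{eq:standard_information_gen_bound} then bounds $|\gen(\widehat W_T,S)|$ by $\sqrt{2R^2 I(\widetilde W_T;S)/n}$.

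\textbf{Step 3: decompose and bound the mutual information.} Apply Lemma~\ref{lem:reference_kernel_mi_decomposition}, equivalently \eqref{eq:appendix_b_final_decomposition}, with the canonical Gaussian reference kernels built from the $S$-admissible $\Sigma_t^{\mathrm{ref}}$. These kernels are $\mathcal F_t^Q$-measurable, so the decomposition applies and bounds $I(\widetilde W_T;S)$ by a sum of one-step expected KL terms. Each term is then bounded by Proposition~\ref{prop:appendix_c_certified_onestep}, giving $2\eta_t^2\E[V_t^{\mathrm{ad}}(\Lambda_t)+\Gamma_t^{\mathrm{ad}}(\Lambda_t)]+\mathcal C_t^{\mathrm{cov}}$. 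Summing over $t$, substituting into Step 2, and combining with Step 1 yields \eqref{eq:appendix_c_general_bound}. Finiteness of all terms, which is assumed, ensures no $\infty-\infty$ issues arise in the KL identities.

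\textbf{Main obstacle.} The delicate step is the law-equality claim in Step 2, together with checking that the one-step proposition's conditioning is legitimate. The conditional Gaussian law of $\xi_T$ given $\cH_{T-1}$ must be justified from the sequential construction. For this I would verify inductively that $\eps_k$ is conditionally independent of later real-SGD randomness given $\cH_{k-1}$; that is exactly assumption~5. Then the characteristic function of $\xi_T$ given $\cH_{T-1}$ factorizes into Gaussian factors with covariances $\Sigma_k$, whose sum is $\Sigma_{1:T}$.
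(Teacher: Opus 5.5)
Your proposal is correct and follows the paper's proof essentially step for step: a fresh final perturbation with an output decomposition, equality of the joint laws of $(\widehat W_T,S)$ and $(\wtW_T,S)$, the sub-Gaussian mutual-information inequality, the reference-kernel chain rule, and the certified one-step comparison. The one point to phrase more carefully is Step~2 (and the tower-property step in Step~1): $S$ is generally not $\cH_{T-1}$-measurable, so what you actually need is that $\xi_T$ and $\zeta_T$ are both $\N(0,\Sigma_{1:T})$ conditionally on $\cH_{T-1}\vee\sigma(S)$, i.e.\ conditionally independent of $S$ given $\cH_{T-1}$. The paper's conditional construction of the perturbations supplies this implicitly, and your characteristic-function argument should be run under that conditioning rather than under $\cH_{T-1}$ alone.
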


\begin{proof}
Let
\[
    \zeta_T\given \cH_{T-1}
    \sim
    \N(0,\Sigma_{1:T})
\]
be an independent fresh final perturbation, and define
\[
    \widehat W_T=W_T+\zeta_T.
\]
By adding and subtracting losses evaluated at \(\widehat W_T\),
\begin{align}
    \gen(W_T,S)
    &=
    \E
    \left[
        L(W_T,S')-L(W_T,S)
    \right]
    \nonumber\\
    &=
    \E
    \left[
        L(\widehat W_T,S')-L(\widehat W_T,S)
    \right]
    \nonumber\\
    &\quad+
    \E
    \left[
        L(W_T,S')-L(\widehat W_T,S')
    \right]
    +
    \E
    \left[
        L(\widehat W_T,S)-L(W_T,S)
    \right].
    \label{eq:appendix_c_output_decomp}
\end{align}
The first term is \(\gen(\widehat W_T,S)\). The remaining two terms equal
\[
    \E
    \left[
        \Delta_{\Sigma_{1:T}}^{\mathrm{ad}}(W_T,S')
        -
        \Delta_{\Sigma_{1:T}}^{\mathrm{ad}}(W_T,S)
    \right].
\]
Hence
\begin{equation}
    |\gen(W_T,S)|
    \leq
    |\gen(\widehat W_T,S)|
    +
    \mathcal R_{\Delta}^{\mathrm{ad}}.
    \label{eq:appendix_c_output_reduction}
\end{equation}

The fresh perturbation \(\zeta_T\) has the same conditional law given
\(\cH_{T-1}\) as the accumulated virtual perturbation \(\xi_T\). Therefore,
\(\widehat W_T=W_T+\zeta_T\) and \(\wtW_T=W_T+\xi_T\) have the same joint law
with \(S\). Consequently,
\begin{equation}
    \MI(\widehat W_T;S)
    =
    \MI(\wtW_T;S).
    \label{eq:appendix_c_same_mi}
\end{equation}
Applying the information-theoretic generalization inequality to
\(\widehat W_T\) gives
\begin{equation}
    |\gen(\widehat W_T,S)|
    \leq
    \sqrt{
        \frac{2R^2}{n}
        \MI(\widehat W_T;S)
    }
    =
    \sqrt{
        \frac{2R^2}{n}
        \MI(\wtW_T;S)
    }.
    \label{eq:appendix_c_mi_gen}
\end{equation}
By the reference-kernel mutual-information decomposition,
\begin{equation}
    \MI(\wtW_T;S)
    \leq
    \sum_{t=1}^{T-1}
    \E
    \left[
        \KL
        \left(
            P_{\wtW_{t+1}\mid\mathcal A_t}
            \,\middle\|\,
            Q_{t+1\mid\mathcal F_t^Q}
        \right)
    \right].
    \label{eq:appendix_c_mi_decomp}
\end{equation}
Using Proposition~\ref{prop:appendix_c_certified_onestep}, each one-step term is
bounded by
\[
    2\eta_t^2
    \E
    \left[
        V_t^{\mathrm{ad}}(\Lambda_t)
        +
        \Gamma_t^{\mathrm{ad}}(\Lambda_t)
    \right]
    +
    \mathcal C_t^{\mathrm{cov}}.
\]
Therefore,
\begin{equation}
    \MI(\wtW_T;S)
    \leq
    \sum_{t=1}^{T-1}
    \left(
        2\eta_t^2
        \E
        \left[
            V_t^{\mathrm{ad}}(\Lambda_t)
            +
            \Gamma_t^{\mathrm{ad}}(\Lambda_t)
        \right]
        +
        \mathcal C_t^{\mathrm{cov}}
    \right).
    \label{eq:appendix_c_mi_final}
\end{equation}
Substituting \eqref{eq:appendix_c_mi_final} into
\eqref{eq:appendix_c_mi_gen}, and then using
\eqref{eq:appendix_c_output_reduction}, proves
\eqref{eq:appendix_c_general_bound}.
\end{proof}

\subsection{Comparable and Admissibly Synchronized Covariance Corollaries}
\label{app:main-corollaries}

The general theorem uses reference-geometry quantities. The following corollary
gives a more readable form when the reference precision is comparable to the
actual precision.

\begin{corollary}[Comparable covariance geometries]
\label{cor:appendix_c_comparable}
Assume the conditions of Theorem~\ref{thm:appendix_c_general}. Suppose there
exists \(\kappa\geq1\) such that, for all \(t=1,\ldots,T-1\),
\begin{equation}
    (\Sigma_t^{\mathrm{ref}})^{-1}
    \preceq
    \kappa\Sigma_t^{-1}
    \qquad
    \text{almost surely}.
    \label{eq:appendix_c_comparable_precision}
\end{equation}
Then
\begin{equation}
    |\gen(W_T,S)|
    \leq
    \sqrt{
        \frac{2R^2}{n}
        \left[
            2\kappa
            \sum_{t=1}^{T-1}
            \eta_t^2
            \E
            \left[
                V_t^{\mathrm{ad}}
                +
                \Gamma_t^{\mathrm{ad}}
            \right]
            +
            \sum_{t=1}^{T-1}
            \mathcal C_t^{\mathrm{cov}}
        \right]
    }
    +
    \mathcal R_{\Delta}^{\mathrm{ad}}.
    \label{eq:appendix_c_comparable_bound}
\end{equation}
\end{corollary}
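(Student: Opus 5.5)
The plan is to derive Corollary~\ref{cor:appendix_c_comparable} directly from Theorem~\ref{thm:appendix_c_general}. The argument is a pointwise comparison of quadratic forms, followed by monotonicity of conditional expectation and of the square root. The covariance-comparison cost \(\mathcal C_t^{\mathrm{cov}}\) and the output penalty \(\mathcal R_{\Delta}^{\mathrm{ad}}\) do not depend on \(\Lambda_t\) beyond what is already in the theorem, so they pass through unchanged. Only the reference-geometry terms \(V_t^{\mathrm{ad}}(\Lambda_t)\) and \(\Gamma_t^{\mathrm{ad}}(\Lambda_t)\) need to be bounded by their synchronized counterparts \(V_t^{\mathrm{ad}}=V_t^{\mathrm{ad}}(\Sigma_t^{-1})\) and \(\Gamma_t^{\mathrm{ad}}=\Gamma_t^{\mathrm{ad}}(\Sigma_t^{-1})\).

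The first step uses \eqref{eq:appendix_c_comparable_precision}. For every \(x\in\R^d\) it gives, almost surely, \(\normSigma{x}{\Lambda_t}^2=x^\top(\Sigma_t^{\mathrm{ref}})^{-1}x\le\kappa\,x^\top\Sigma_t^{-1}x=\kappa\normSigma{x}{\Sigma_t^{-1}}^2\). Applying this with \(x=G_t-\barg(W_t)\) and with \(x=\barg(W_t+\xi_t)-\barg(W_t)\), and taking conditional expectations given \(\mathcal K_t\), yields \(V_t^{\mathrm{ad}}(\Lambda_t)\le\kappa V_t^{\mathrm{ad}}(\Sigma_t^{-1})\) and \(\Gamma_t^{\mathrm{ad}}(\Lambda_t)\le\kappa\Gamma_t^{\mathrm{ad}}(\Sigma_t^{-1})\). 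Both definitions use the realized perturbation \(\xi_t\) with the same conditioning sigma-field \(\mathcal K_t\), so this comparison is legitimate without any fresh-copy argument.

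The one point needing care is this. The synchronized quantities \(V_t^{\mathrm{ad}}\) and \(\Gamma_t^{\mathrm{ad}}\) were introduced conditionally on \(\cH_{t-1}\), whereas here the conditioning is on \(\mathcal K_t=\cH_{t-1}\vee\mathcal U\). The corollary only involves outer expectations \(\E[V_t^{\mathrm{ad}}+\Gamma_t^{\mathrm{ad}}]\), and the tower property makes these agree with the expectations of the \(\mathcal K_t\)-conditioned versions. The mismatch therefore disappears after taking \(\E\).

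The last step takes full expectations, multiplies by \(2\eta_t^2\ge0\), and sums over \(t\). This gives \(\sum_t 2\eta_t^2\E[V_t^{\mathrm{ad}}(\Lambda_t)+\Gamma_t^{\mathrm{ad}}(\Lambda_t)]\le2\kappa\sum_t\eta_t^2\E[V_t^{\mathrm{ad}}+\Gamma_t^{\mathrm{ad}}]\). Adding \(\sum_t\mathcal C_t^{\mathrm{cov}}\), which is nonnegative as an expected Gaussian KL divergence, and using monotonicity of \(u\mapsto\sqrt{2R^2u/n}\), we substitute into \eqref{eq:appendix_c_general_bound} to obtain \eqref{eq:appendix_c_comparable_bound}.
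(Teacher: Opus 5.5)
Your proposal is correct and follows the paper's proof. The pointwise bound $\|x\|_{\Lambda_t}^2\le\kappa\|x\|_{\Sigma_t^{-1}}^2$ gives $V_t^{\mathrm{ad}}(\Lambda_t)\le\kappa V_t^{\mathrm{ad}}$ and $\Gamma_t^{\mathrm{ad}}(\Lambda_t)\le\kappa\Gamma_t^{\mathrm{ad}}$, which are then substituted into Theorem~\ref{thm:appendix_c_general}. Your reconciliation of the $\mathcal K_t$- and $\cH_{t-1}$-conditioned versions is extra care that the paper skips, since it simply sets $V_t^{\mathrm{ad}}=V_t^{\mathrm{ad}}(\Sigma_t^{-1})$ and $\Gamma_t^{\mathrm{ad}}=\Gamma_t^{\mathrm{ad}}(\Sigma_t^{-1})$. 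One small point: matching the fresh-copy form of $\Gamma_t^{\mathrm{ad}}$ also needs that $\xi_t$ and $\zeta_t$ have the same conditional law given $\cH_{t-1}$; the tower property alone is not enough.
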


\begin{proof}
The precision comparison \eqref{eq:appendix_c_comparable_precision} implies
\[
    \|x\|_{\Lambda_t}^2
    \leq
    \kappa\|x\|_{\Sigma_t^{-1}}^2.
\]
Therefore,
\[
    V_t^{\mathrm{ad}}(\Lambda_t)
    \leq
    \kappa V_t^{\mathrm{ad}},
    \qquad
    \Gamma_t^{\mathrm{ad}}(\Lambda_t)
    \leq
    \kappa\Gamma_t^{\mathrm{ad}}.
\]
Substituting these inequalities into Theorem~\ref{thm:appendix_c_general} gives
the claim.
\end{proof}

The clean fixed-noise-style result is obtained only when the synchronized
reference covariance is admissible.

\begin{corollary}[Admissibly synchronized covariance bound]
\label{cor:appendix_c_synchronized}
Assume the conditions of Theorem~\ref{thm:appendix_c_general}. Suppose that, for
every \(t=1,\ldots,T-1\), the covariance process admits an \(S\)-admissible
synchronized reference, so that the reference covariance can be chosen as
\[
    \Sigma_t^{\mathrm{ref}}=\Sigma_t
\]
through an admissible reference construction. Then
\begin{equation}
    |\gen(W_T,S)|
    \leq
    \sqrt{
        \frac{4R^2}{n}
        \sum_{t=1}^{T-1}
        \eta_t^2
        \E
        \left[
            V_t^{\mathrm{ad}}
            +
            \Gamma_t^{\mathrm{ad}}
        \right]
    }
    +
    \mathcal R_{\Delta}^{\mathrm{ad}}.
    \label{eq:appendix_c_sync_bound}
\end{equation}
\end{corollary}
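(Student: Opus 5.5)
The plan is to obtain Corollary~\ref{cor:appendix_c_synchronized} as a direct specialization of Theorem~\ref{thm:appendix_c_general}, mirroring the main-text argument for Corollary~\ref{cor:synchronized_covariance_bound}. The first step is to fix, for each \(t\), the synchronized reference covariance supplied by the hypothesis. By Definition~\ref{def:admissible_synchronization_certificate} there is an \(\mathcal F_t^Q\)-measurable \(\Sigma_t^{\mathrm{sync}}\) with \(\Sigma_t^{\mathrm{sync}}=\Sigma_t\) almost surely, and we set \(\Sigma_t^{\mathrm{ref}}=\Sigma_t^{\mathrm{sync}}\). Proposition~\ref{prop:sync_certificate_zero_cost} then shows that this choice is \(S\)-admissible, so the canonical Gaussian reference kernel is a legitimate \(\mathcal F_t^Q\)-measurable kernel. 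It also gives \(\mathcal C_t^{\mathrm{cov}}=0\) and \(\Lambda_t=\Sigma_t^{-1}\) almost surely.

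The second step is to identify the reference-geometry quantities with the synchronized ones. Because \(\Lambda_t=\Sigma_t^{-1}\) almost surely, the notation \eqref{eq:synchronized_V_Gamma_notation} gives
\[
V_t^{\mathrm{ad}}(\Lambda_t)=V_t^{\mathrm{ad}}, \qquad \Gamma_t^{\mathrm{ad}}(\Lambda_t)=\Gamma_t^{\mathrm{ad}}
\]
almost surely, and hence equality in expectation. One subtlety is that \(\Lambda_t\) here is \(\mathcal F_t^Q\)-measurable, so a priori it may depend on the realized prefix \(\wtW_{1:t}\), and through it on \(\xi_t\). This is harmless: the conditioning-safe definition \eqref{eq:appendix_c_gad_lambda} uses the realized \(\xi_t\), and almost sure equality with the \(\cH_{t-1}\)-measurable \(\Sigma_t^{-1}\) lets us pull the precision out of the conditional expectation. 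That recovers the synchronized \(\Gamma_t^{\mathrm{ad}}\).

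The last step is to substitute these identities into \eqref{eq:appendix_c_general_bound}. The sum becomes \(\sum_t 2\eta_t^2\E[V_t^{\mathrm{ad}}+\Gamma_t^{\mathrm{ad}}]\), and multiplying by \(2R^2/n\) yields the constant \(4R^2/n\) inside the square root. The penalty \(\mathcal R_{\Delta}^{\mathrm{ad}}\) is unchanged, since it depends only on \(\Sigma_{1:T}\).

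The only step needing any care is the second, namely justifying the almost-sure replacement of \(\Lambda_t\) by \(\Sigma_t^{-1}\) inside the conditional expectations given \(\mathcal K_t\). I expect this to follow from almost-sure equality alone, because conditional expectations are insensitive to modifications on null sets. Everything else is routine substitution.
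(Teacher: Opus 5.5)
Your proposal is correct and is essentially the paper's own argument. You specialize Theorem~\ref{thm:appendix_c_general} with \(\Lambda_t=\Sigma_t^{-1}\) and \(\mathcal C_t^{\mathrm{cov}}=0\); the paper checks the latter directly via \(\Tr(\Sigma_t^{-1}\Sigma_t)-d+\log 1=0\), which is exactly the content of Proposition~\ref{prop:sync_certificate_zero_cost}. You then identify \(V_t^{\mathrm{ad}}(\Lambda_t),\Gamma_t^{\mathrm{ad}}(\Lambda_t)\) with \(V_t^{\mathrm{ad}},\Gamma_t^{\mathrm{ad}}\) and merge the factors \(2\) and \(2R^2/n\) into \(4R^2/n\), as the paper does. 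Your extra remark that the almost-sure identification of \(\Lambda_t\) with \(\Sigma_t^{-1}\) carries through the conditional expectations is a harmless detail that the paper leaves implicit.
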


\begin{proof}
Under admissible synchronization,
\[
    \Lambda_t
    =
    \Sigma_t^{-1}.
\]
Thus,
\[
    V_t^{\mathrm{ad}}(\Lambda_t)=V_t^{\mathrm{ad}},
    \qquad
    \Gamma_t^{\mathrm{ad}}(\Lambda_t)=\Gamma_t^{\mathrm{ad}}.
\]
Moreover,
\[
    \Tr(\Sigma_t^{-1}\Sigma_t)-d+
    \log\frac{\det\Sigma_t}{\det\Sigma_t}
    =
    d-d+0
    =
    0,
\]
so \(\mathcal C_t^{\mathrm{cov}}=0\). Substituting into
Theorem~\ref{thm:appendix_c_general} yields
\eqref{eq:appendix_c_sync_bound}.
\end{proof}

\begin{remark}
The synchronized covariance condition is automatic for deterministic fixed
covariances. For arbitrary history-adaptive covariances, it is not automatic
because \(\Sigma_t\) may depend on \(S\) through \(\cH_{t-1}\). In that case,
synchronization must be justified by an admissible synchronization certificate.
Otherwise, the general covariance-comparison theorem should be used.
\end{remark}




\section{Recovery of the Original Fixed-Noise Bounds}
\label{app:fixed-recovery}

This appendix shows how the predictable covariance framework recovers the usual
fixed-noise virtual perturbation bounds when the perturbation covariance
sequence is deterministic. The key point is that deterministic covariances are
automatically predictable and automatically admit admissibly synchronized
reference comparisons. Therefore, the covariance-comparison cost in the general
theorem vanishes. Throughout this appendix, we use the indexing convention of the main text:
\(W_1\) is the initial iterate, \(W_T\) is the final output, and there are
\(T-1\) updates.

\subsection{Deterministic Covariances Are Predictable}
\label{subsec:appendix_d_deterministic_predictable}

Suppose that
\begin{equation}
    \Sigma_1,\ldots,\Sigma_{T-1}
\end{equation}
are deterministic positive definite matrices. Then each \(\Sigma_t\) is
measurable with respect to every sigma-field, and in particular
\begin{equation}
    \Sigma_t
    \text{ is }
    \cH_{t-1}\text{-measurable},
    \qquad
    t=1,\ldots,T-1.
\end{equation}
Thus, the predictable covariance condition is automatically satisfied.

The accumulated covariance is also deterministic:
\begin{equation}
    \Sigma_{1:t}
    =
    \sum_{k=1}^{t-1}
    \Sigma_k,
    \qquad
    \Sigma_{1:1}=0.
    \label{eq:appendix_d_fixed_accum_cov}
\end{equation}
For the final output,
\begin{equation}
    \Sigma_{1:T}
    =
    \sum_{k=1}^{T-1}
    \Sigma_k.
\end{equation}

Because the covariance sequence is deterministic, the actual and reference
virtual kernels can use the same covariance without introducing any direct
dependence on the training sample \(S\). Thus, the synchronized reference
comparison is admissible:
\begin{equation}
    \Sigma_t^{\mathrm{ref}}
    =
    \Sigma_t,
    \qquad
    \Lambda_t
    =
    \Sigma_t^{-1},
    \qquad
    \mathcal C_t^{\mathrm{cov}}
    =
    0.
    \label{eq:appendix_d_synchronized_fixed}
\end{equation}
Therefore, the general covariance-comparison theorem reduces to the clean
admissibly synchronized bound
\begin{equation}
    |\gen(W_T,S)|
    \leq
    \sqrt{
        \frac{4R^2}{n}
        \sum_{t=1}^{T-1}
        \eta_t^2
        \E
        \left[
            V_t^{\mathrm{ad}}
            +
            \Gamma_t^{\mathrm{ad}}
        \right]
    }
    +
    \mathcal R_{\Delta}^{\mathrm{ad}}.
    \label{eq:appendix_d_clean_synchronized_bound}
\end{equation}

The remainder of this appendix identifies the terms in
\eqref{eq:appendix_d_clean_synchronized_bound} with their fixed-covariance
counterparts.

\subsection{Reduction of Adaptive Gradient-Deviation and Sensitivity Terms}
\label{subsec:appendix_d_deviation_sensitivity_reduction}

The adaptive gradient-deviation term is
\begin{equation}
    V_t^{\mathrm{ad}}
    =
    \E
    \left[
        \normSigma{
            G_t-\barg(W_t)
        }{\Sigma_t^{-1}}^2
        \given
        \cH_{t-1}
    \right],
    \qquad
    t=1,\ldots,T-1.
    \label{eq:appendix_d_adaptive_deviation}
\end{equation}
When \(\Sigma_t\) is deterministic, the inverse covariance geometry is fixed.
Thus \eqref{eq:appendix_d_adaptive_deviation} is the
history-conditioned fixed-geometry gradient-deviation term. For clarity, define
\begin{equation}
    V_{t,\Sigma_t}^{\mathrm{hist}}
    =
    \E
    \left[
        \normSigma{
            G_t-\barg(W_t)
        }{\Sigma_t^{-1}}^2
        \given
        \cH_{t-1}
    \right].
    \label{eq:appendix_d_fixed_history_deviation}
\end{equation}
Then
\begin{equation}
    V_t^{\mathrm{ad}}
    =
    V_{t,\Sigma_t}^{\mathrm{hist}}.
    \label{eq:appendix_d_deviation_identity}
\end{equation}

The notation \(V\) is retained because this term reduces to a variance term in
the conditionally unbiased case. In full generality, however,
\(V_t^{\mathrm{ad}}\) is a conditional mean-square deviation from the population
gradient. If
\begin{equation}
    \E[G_t\mid \cH_{t-1}]
    =
    \barg(W_t),
    \label{eq:appendix_d_conditional_unbiasedness}
\end{equation}
then \(V_t^{\mathrm{ad}}\) becomes the conditional gradient variance measured in
the geometry \(\Sigma_t^{-1}\). Without
\eqref{eq:appendix_d_conditional_unbiasedness}, it also includes the squared
conditional bias between the conditional mean gradient and the population
gradient.

In some fixed-noise presentations, the corresponding quantity is written as a
function of \(W_t\), for example
\begin{equation}
    V_{t,\Sigma_t}(W_t)
    =
    \E
    \left[
        \normSigma{
            G_t-\barg(W_t)
        }{\Sigma_t^{-1}}^2
        \given
        W_t
    \right].
    \label{eq:appendix_d_fixed_deviation_wt}
\end{equation}
The history-conditioned quantity
\(V_{t,\Sigma_t}^{\mathrm{hist}}\) is always the more precise object for the
present pathwise analysis. It reduces to the \(W_t\)-conditioned form after
averaging only under an additional conditional Markov-type sampling condition,
namely that the conditional law of the current gradient \(G_t\) given the
available past depends on the past only through \(W_t\). Under this extra
condition,
\begin{equation}
    \E[V_{t,\Sigma_t}^{\mathrm{hist}}]
    =
    \E[V_{t,\Sigma_t}(W_t)].
    \label{eq:appendix_d_deviation_reduction}
\end{equation}
For general finite-sample or history-dependent sampling schemes, one should
retain the history-conditioned expression.

Next consider the adaptive gradient sensitivity. Let
\begin{equation}
    \zeta_t\given\cH_{t-1}
    \sim
    \N(0,\Sigma_{1:t})
\end{equation}
be a fresh perturbation with the accumulated covariance. Since
\(\Sigma_t\) and \(\Sigma_{1:t}\) are deterministic in the fixed-noise setting,
this fresh-copy representation is unambiguous. The adaptive sensitivity is
\begin{equation}
    \Gamma_t^{\mathrm{ad}}
    =
    \E
    \left[
        \normSigma{
            \barg(W_t+\zeta_t)-\barg(W_t)
        }{\Sigma_t^{-1}}^2
        \given
        \cH_{t-1}
    \right].
    \label{eq:appendix_d_adaptive_sensitivity}
\end{equation}
When \(\Sigma_t\) and \(\Sigma_{1:t}\) are deterministic, this is the
history-conditioned fixed geometry-aware sensitivity. Define
\begin{equation}
    \Gamma_{\Sigma_t,\Sigma_{1:t}}^{\mathrm{hist}}
    =
    \E
    \left[
        \normSigma{
            \barg(W_t+\zeta_t)-\barg(W_t)
        }{\Sigma_t^{-1}}^2
        \given
        \cH_{t-1}
    \right],
    \qquad
    \zeta_t\sim\N(0,\Sigma_{1:t}).
    \label{eq:appendix_d_fixed_history_sensitivity}
\end{equation}
Then
\begin{equation}
    \Gamma_t^{\mathrm{ad}}
    =
    \Gamma_{\Sigma_t,\Sigma_{1:t}}^{\mathrm{hist}}.
    \label{eq:appendix_d_sensitivity_identity}
\end{equation}

Equivalently, since the covariance is deterministic, one may write the usual
fixed pathwise sensitivity as a function of \(W_t\):
\begin{equation}
    \Gamma_{\Sigma_t,\Sigma_{1:t}}(W_t)
    =
    \E_{\zeta\sim\N(0,\Sigma_{1:t})}
    \left[
        \normSigma{
            \barg(W_t+\zeta)-\barg(W_t)
        }{\Sigma_t^{-1}}^2
    \right].
    \label{eq:appendix_d_fixed_sensitivity_wt}
\end{equation}
In this case,
\begin{equation}
    \E[\Gamma_{\Sigma_t,\Sigma_{1:t}}^{\mathrm{hist}}]
    =
    \E[\Gamma_{\Sigma_t,\Sigma_{1:t}}(W_t)].
    \label{eq:appendix_d_sensitivity_reduction}
\end{equation}

Substituting \eqref{eq:appendix_d_deviation_identity} and
\eqref{eq:appendix_d_sensitivity_identity} into
\eqref{eq:appendix_d_clean_synchronized_bound} gives the fixed
geometry-aware history-conditioned form:
\begin{equation}
    |\gen(W_T,S)|
    \leq
    \sqrt{
        \frac{4R^2}{n}
        \sum_{t=1}^{T-1}
        \eta_t^2
        \E
        \left[
            V_{t,\Sigma_t}^{\mathrm{hist}}
            +
            \Gamma_{\Sigma_t,\Sigma_{1:t}}^{\mathrm{hist}}
        \right]
    }
    +
    \mathcal R_{\Delta}^{\mathrm{ad}}.
    \label{eq:appendix_d_fixed_geometry_partial}
\end{equation}
Under the additional Markov-type sampling condition discussed above, this can
also be expressed using \(V_{t,\Sigma_t}(W_t)\) and
\(\Gamma_{\Sigma_t,\Sigma_{1:t}}(W_t)\) after taking expectations.

\subsection{Reduction of the Output Sensitivity Penalty}
\label{subsec:appendix_d_output_sensitivity_reduction}

In the adaptive framework, the output sensitivity is
\begin{equation}
    \Delta_{\Sigma_{1:T}}^{\mathrm{ad}}(W_T,s)
    =
    \E
    \left[
        L(W_T,s)-L(W_T+\zeta_T,s)
        \given
        \cH_{T-1}
    \right],
    \label{eq:appendix_d_adaptive_delta}
\end{equation}
where
\begin{equation}
    \zeta_T\given \cH_{T-1}
    \sim
    \N(0,\Sigma_{1:T}).
\end{equation}
If \(\Sigma_{1:T}\) is deterministic, the conditional law of \(\zeta_T\) is a
fixed Gaussian law, and \eqref{eq:appendix_d_adaptive_delta} reduces to the
fixed covariance output sensitivity
\begin{equation}
    \Delta_{\Sigma_{1:T}}(W_T,s)
    =
    \E_{\zeta\sim\N(0,\Sigma_{1:T})}
    \left[
        L(W_T,s)-L(W_T+\zeta,s)
    \right].
    \label{eq:appendix_d_fixed_delta}
\end{equation}
Consequently,
\begin{equation}
    \mathcal R_{\Delta}^{\mathrm{ad}}
    =
    \mathcal R_{\Delta},
    \label{eq:appendix_d_penalty_reduction}
\end{equation}
where
\begin{equation}
    \mathcal R_{\Delta}
    =
    \left|
    \E
    \left[
        \Delta_{\Sigma_{1:T}}(W_T,S')
        -
        \Delta_{\Sigma_{1:T}}(W_T,S)
    \right]
    \right|.
    \label{eq:appendix_d_fixed_penalty}
\end{equation}

Combining \eqref{eq:appendix_d_fixed_geometry_partial} with
\eqref{eq:appendix_d_penalty_reduction}, we obtain
\begin{equation}
    |\gen(W_T,S)|
    \leq
    \sqrt{
        \frac{4R^2}{n}
        \sum_{t=1}^{T-1}
        \eta_t^2
        \E
        \left[
            V_{t,\Sigma_t}^{\mathrm{hist}}
            +
            \Gamma_{\Sigma_t,\Sigma_{1:t}}^{\mathrm{hist}}
        \right]
    }
    +
    \mathcal R_{\Delta}.
    \label{eq:appendix_d_fixed_geometry_bound}
\end{equation}
Under the additional Markov-type sampling convention, this can also be written
as
\begin{equation}
    |\gen(W_T,S)|
    \leq
    \sqrt{
        \frac{4R^2}{n}
        \sum_{t=1}^{T-1}
        \eta_t^2
        \E
        \left[
            V_{t,\Sigma_t}(W_t)
            +
            \Gamma_{\Sigma_t,\Sigma_{1:t}}(W_t)
        \right]
    }
    +
    \mathcal R_{\Delta}.
    \label{eq:appendix_d_fixed_geometry_bound_wt}
\end{equation}
This is the fixed geometry-aware virtual perturbation bound in the usual
\(W_t\)-conditioned notation.

\subsection{Fixed Isotropic Specialization}
\label{subsec:appendix_d_fixed_isotropic}

We now specialize further to deterministic isotropic perturbations:
\begin{equation}
    \Sigma_t
    =
    \sigma_t^2I,
    \qquad
    \sigma_t>0,
    \qquad
    t=1,\ldots,T-1.
    \label{eq:appendix_d_isotropic_cov}
\end{equation}
Then
\begin{equation}
    \Sigma_{1:t}
    =
    \sigma_{1:t}^2I,
    \qquad
    \sigma_{1:t}^2
    =
    \sum_{k=1}^{t-1}
    \sigma_k^2.
    \label{eq:appendix_d_isotropic_accum}
\end{equation}
The inverse covariance norm becomes
\begin{equation}
    \normSigma{x}{\Sigma_t^{-1}}^2
    =
    \frac{1}{\sigma_t^2}\norm{x}^2.
\end{equation}

Define the Euclidean history-conditioned gradient-deviation numerator
\begin{equation}
    V_t^{\mathrm{Euc}}
    =
    \E
    \left[
        \norm{
            G_t-\barg(W_t)
        }^2
        \given
        \cH_{t-1}
    \right].
    \label{eq:appendix_d_iso_deviation}
\end{equation}
This is a variance numerator only under the conditional unbiasedness condition
\[
    \E[G_t\mid\cH_{t-1}]
    =
    \barg(W_t).
\]
Define also the Euclidean sensitivity numerator
\begin{equation}
    \Gamma_{\sigma_{1:t}}^{\mathrm{hist}}
    =
    \E
    \left[
        \norm{
            \barg(W_t+\zeta_t)-\barg(W_t)
        }^2
        \given
        \cH_{t-1}
    \right],
    \qquad
    \zeta_t\sim\N(0,\sigma_{1:t}^2I).
    \label{eq:appendix_d_iso_sensitivity}
\end{equation}
Then
\begin{equation}
    V_t^{\mathrm{ad}}
    =
    \frac{1}{\sigma_t^2}
    V_t^{\mathrm{Euc}},
    \qquad
    \Gamma_t^{\mathrm{ad}}
    =
    \frac{1}{\sigma_t^2}
    \Gamma_{\sigma_{1:t}}^{\mathrm{hist}}.
\end{equation}

Substituting these expressions into
\eqref{eq:appendix_d_fixed_geometry_bound}, we obtain
\begin{equation}
    |\gen(W_T,S)|
    \leq
    \sqrt{
        \frac{4R^2}{n}
        \sum_{t=1}^{T-1}
        \frac{\eta_t^2}{\sigma_t^2}
        \E
        \left[
            V_t^{\mathrm{Euc}}
            +
            \Gamma_{\sigma_{1:t}}^{\mathrm{hist}}
        \right]
    }
    +
    \mathcal R_{\Delta,\sigma}.
    \label{eq:appendix_d_fixed_isotropic_bound}
\end{equation}
Here
\begin{equation}
    \mathcal R_{\Delta,\sigma}
    =
    \left|
    \E
    \left[
        \Delta_{\sigma_{1:T}}(W_T,S')
        -
        \Delta_{\sigma_{1:T}}(W_T,S)
    \right]
    \right|,
\end{equation}
with
\begin{equation}
    \Delta_{\sigma_{1:T}}(W_T,s)
    =
    \E_{\zeta\sim\N(0,\sigma_{1:T}^2I)}
    \left[
        L(W_T,s)-L(W_T+\zeta,s)
    \right],
\end{equation}
and
\begin{equation}
    \sigma_{1:T}^2
    =
    \sum_{k=1}^{T-1}
    \sigma_k^2.
\end{equation}

Thus, the deterministic isotropic fixed-noise theorem is recovered as a special
case of the predictable covariance framework.

\begin{remark}
The recovery above should be interpreted at the level of the expected
quantities appearing in the generalization bound. Our adaptive notation
conditions on the full real SGD history \(\cH_{t-1}\), whereas some fixed-noise
presentations write gradient-deviation, variance, and sensitivity terms as
functions of \(W_t\) alone. Under an additional condition ensuring that the
conditional law of the current stochastic gradient depends on the past only
through \(W_t\), these formulations agree after taking expectations. In more
general sampling settings, the history-conditioned form is the more precise
analogue.
\end{remark}

\begin{remark}
Deterministic fixed covariances remove both sources of additional complication
in the history-adaptive theory: they are automatically predictable, and the
actual and reference covariances can be synchronized without depending on
\(S\). For genuinely history-adaptive covariances, the general
covariance-comparison theorem should be used unless an admissible synchronized
reference comparison is available.
\end{remark}




\section{Smoothness-Based Sensitivity Control}
\label{app:smoothness}

This appendix proves that the adaptive output perturbation-sensitivity term can
be controlled under smoothness and local curvature assumptions. The key
observation is that, for a centered Gaussian perturbation, the first-order term
in a Taylor expansion vanishes in expectation. A global smoothness assumption
then controls the sensitivity by the trace of the perturbation covariance,
whereas a local Hessian expansion gives a sharper curvature-weighted
interpretation. This appendix controls only the adaptive output-sensitivity penalty
\[
    \mathcal R_{\Delta}^{\mathrm{ad}}.
\]
It is independent of the covariance-comparison term
\[
    \mathcal C_t^{\mathrm{cov}},
\]
which appears in the information part of the main theorem when the actual and
reference virtual perturbation covariances differ.

\subsection{Fixed-Covariance Output Sensitivity}
\label{subsec:appendix_smoothness_fixed_output}

For a deterministic sample \(s=\{z_1,\ldots,z_n\}\), define the fixed-covariance
output sensitivity by
\begin{equation}
    \Delta_{\Sigma}(W,s)
    =
    \E_{\zeta\sim\N(0,\Sigma)}
    \left[
        L(W,s)-L(W+\zeta,s)
    \right],
    \label{eq:appendix_e_fixed_delta}
\end{equation}
where \(\zeta\) is a fresh perturbation independent of all other randomness. If
\(W\) is random, the expectation in \eqref{eq:appendix_e_fixed_delta} is
understood conditionally on \(W\), or equivalently over an independent draw of
\(\zeta\) given \(W\).

Because of the sign convention
\[
    \Delta_{\Sigma}(W,s)
    =
    \E_{\zeta}
    \left[
        L(W,s)-L(W+\zeta,s)
    \right],
\]
the sensitivity may be negative near a local minimum, since perturbing \(W\)
often increases the loss. Therefore, the most useful control is an
absolute-value bound.

We use the following smoothness assumption.

\begin{assumption}[Two-sided smoothness remainder]
\label{ass:appendix_e_smoothness}
For every deterministic sample \(s\), the empirical risk \(L(\cdot,s)\) is
differentiable and satisfies, for all \(w,u\in\R^d\),
\begin{equation}
    \left|
        L(w+u,s)-L(w,s)-\inner{\nabla L(w,s)}{u}
    \right|
    \leq
    \frac{\mu}{2}\norm{u}^2.
    \label{eq:appendix_e_two_sided_smoothness}
\end{equation}
\end{assumption}

Assumption~\ref{ass:appendix_e_smoothness} follows, for example, when
\(L(\cdot,s)\) has a \(\mu\)-Lipschitz gradient. Indeed, \(\mu\)-smoothness gives
both upper and lower quadratic Taylor remainder bounds:
\[
    -\frac{\mu}{2}\norm{u}^2
    \leq
    L(w+u,s)-L(w,s)-\inner{\nabla L(w,s)}{u}
    \leq
    \frac{\mu}{2}\norm{u}^2.
\]

\begin{proposition}[Fixed-covariance sensitivity control]
\label{prop:appendix_e_fixed_control}
Suppose Assumption~\ref{ass:appendix_e_smoothness} holds. Let
\(\zeta\sim\N(0,\Sigma)\), where \(\Sigma\succeq0\), and assume \(\zeta\) is
independent of \(W\) when \(W\) is random. Then, for every deterministic sample
\(s\),
\begin{equation}
    \left|
    \Delta_{\Sigma}(W,s)
    \right|
    \leq
    \frac{\mu}{2}\Tr(\Sigma).
    \label{eq:appendix_e_fixed_control}
\end{equation}
\end{proposition}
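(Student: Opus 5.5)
The plan is to reproduce, in the fixed-covariance setting, the argument already used for Proposition~\ref{prop:global_smoothness_output_sensitivity}. We work conditionally on \(W\), or equivalently fix a realization \(W=w\); this is legitimate because \(\zeta\) is independent of \(W\). We then apply Assumption~\ref{ass:appendix_e_smoothness} pointwise with \(u=\zeta\). The inequality \eqref{eq:appendix_e_two_sided_smoothness} gives, almost surely,
\[
    \left|
        L(w+\zeta,s)-L(w,s)-\inner{\nabla L(w,s)}{\zeta}
    \right|
    \leq
    \frac{\mu}{2}\norm{\zeta}^2 .
\]

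Next we take expectation over \(\zeta\sim\N(0,\Sigma)\). The linear term vanishes because \(\E[\zeta]=0\) and \(\nabla L(w,s)\) is deterministic once \(w\) is fixed. Moving the expectation inside the absolute value by Jensen's inequality, \(|\E X|\le\E|X|\), yields
\[
    |\Delta_\Sigma(w,s)|
    \leq
    \frac{\mu}{2}\E\norm{\zeta}^2
    =
    \frac{\mu}{2}\Tr(\Sigma).
\]
The identity \(\E\norm{\zeta}^2=\Tr(\Sigma)\) holds for any \(\Sigma\succeq0\), including singular \(\Sigma\), so no invertibility is needed. Since the bound does not depend on \(w\), it holds for random \(W\) as well.

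The only delicate point is integrability. We need \(L(w+\zeta,s)\) to be integrable so that \(\Delta_\Sigma\) is well defined and the expectation can be split into the three terms. This follows from the two-sided bound itself: it shows \(|L(w+\zeta,s)|\le |L(w,s)|+\norm{\nabla L(w,s)}\norm{\zeta}+\frac{\mu}{2}\norm{\zeta}^2\), and the right-hand side is integrable under a Gaussian law. I expect no other obstacle; the step needing the most care is simply stating this conditioning and integrability justification cleanly.
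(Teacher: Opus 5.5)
Your proposal is correct and follows essentially the same route as the paper: condition on \(W\), apply the two-sided smoothness remainder with \(u=\zeta\), eliminate the linear term via \(\E[\zeta]=0\), and conclude with \(\E\norm{\zeta}^2=\Tr(\Sigma)\). Your integrability justification is a small refinement that the paper leaves implicit.
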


\begin{proof}
Condition on \(W\). After conditioning, \(W\) is fixed and
\(\zeta\sim\N(0,\Sigma)\) is centered. Applying
Assumption~\ref{ass:appendix_e_smoothness} with \(w=W\) and \(u=\zeta\) gives
\[
    \left|
        L(W+\zeta,s)-L(W,s)-\inner{\nabla L(W,s)}{\zeta}
    \right|
    \leq
    \frac{\mu}{2}\norm{\zeta}^2.
\]
Taking expectation with respect to \(\zeta\), the linear term vanishes because
\[
    \E[\zeta]=0.
\]
Therefore,
\[
    \left|
    \E
    \left[
        L(W+\zeta,s)-L(W,s)
    \right]
    \right|
    \leq
    \frac{\mu}{2}
    \E[\norm{\zeta}^2].
\]
Since
\[
    \E[\norm{\zeta}^2]=\Tr(\Sigma),
\]
and
\[
    \Delta_{\Sigma}(W,s)
    =
    -
    \E
    \left[
        L(W+\zeta,s)-L(W,s)
    \right],
\]
the claim follows.
\end{proof}

\begin{remark}[Sign convention]
Near a local minimum with positive semidefinite curvature, perturbing the
parameter often increases the loss on average. Thus one typically expects
\[
    \Delta_{\Sigma}(W,s)\leq0
\]
near such points. The absolute-value bound in
Proposition~\ref{prop:appendix_e_fixed_control} is therefore the more
informative stability statement.
\end{remark}

\subsection{Adaptive Conditional Output Sensitivity}
\label{subsec:appendix_smoothness_adaptive_output}

We now extend the previous argument to the adaptive setting. Recall that the
final accumulated adaptive covariance is
\begin{equation}
    \Sigma_{1:T}
    =
    \sum_{k=1}^{T-1}\Sigma_k.
    \label{eq:appendix_e_final_accum_cov}
\end{equation}
Since \(W_T\) is the final output after \(T-1\) updates, the final real SGD
history is \(\cH_{T-1}\). The adaptive output sensitivity is
\begin{equation}
    \Delta_{\Sigma_{1:T}}^{\mathrm{ad}}(W_T,s)
    =
    \E
    \left[
        L(W_T,s)-L(W_T+\zeta_T,s)
        \given
        \cH_{T-1}
    \right],
    \label{eq:appendix_e_adaptive_delta}
\end{equation}
where
\begin{equation}
    \zeta_T\given\cH_{T-1}
    \sim
    \N(0,\Sigma_{1:T})
    \label{eq:appendix_e_final_conditional_noise}
\end{equation}
is a fresh final perturbation.

\begin{proposition}[Adaptive conditional sensitivity control]
\label{prop:appendix_e_adaptive_control}
Suppose Assumption~\ref{ass:appendix_e_smoothness} holds. Then, for every
deterministic sample \(s\),
\begin{equation}
    \left|
    \Delta_{\Sigma_{1:T}}^{\mathrm{ad}}(W_T,s)
    \right|
    \leq
    \frac{\mu}{2}\Tr(\Sigma_{1:T})
    \qquad
    \text{almost surely}.
    \label{eq:appendix_e_adaptive_control}
\end{equation}
\end{proposition}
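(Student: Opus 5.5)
The plan is to reduce Proposition~\ref{prop:appendix_e_adaptive_control} to the fixed-covariance result, Proposition~\ref{prop:appendix_e_fixed_control}, by conditioning on the final real history \(\cH_{T-1}\). The key structural facts are two. First, \(W_T\) is \(\cH_{T-1}\)-measurable. Second, as shown in Section~\ref{subsec:accumulated_adaptive_covariance}, each \(\Sigma_k\) is \(\cH_{k-1}\subseteq\cH_{T-1}\)-measurable. Hence the accumulated covariance \(\Sigma_{1:T}\) is \(\cH_{T-1}\)-measurable and positive semidefinite. After conditioning, both \(W_T\) and \(\Sigma_{1:T}\) are therefore frozen. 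By construction \eqref{eq:appendix_e_final_conditional_noise}, \(\zeta_T\) is then a centered Gaussian with this frozen covariance, independent of everything else relevant.

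Concretely, I would work with a regular conditional distribution of \(\zeta_T\) given \(\cH_{T-1}\), which exists in the standard Borel setting assumed throughout. Fix a realization \(h\) of \(\cH_{T-1}\), with corresponding values \(w=W_T(h)\) and \(\Sigma=\Sigma_{1:T}(h)\). Apply Assumption~\ref{ass:appendix_e_smoothness} pointwise with this \(w\) and \(u=\zeta_T\), and integrate against \(\N(0,\Sigma)\). The linear term \(\inner{\nabla L(w,s)}{\zeta_T}\) integrates to zero, since the gradient is a fixed vector and the Gaussian is centered. The quadratic remainder integrates to at most \(\frac{\mu}{2}\E\norm{\zeta_T}^2=\frac{\mu}{2}\Tr(\Sigma)\). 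Equivalently, this is exactly Proposition~\ref{prop:appendix_e_fixed_control} applied with deterministic \(W=w\) and covariance \(\Sigma\). Since \(s\) is deterministic, \(L(w,s)\) is a constant under the conditional law, so the conditional expectation in \eqref{eq:appendix_e_adaptive_delta} is exactly \(\Delta_{\Sigma}(w,s)\) evaluated at \(h\). This gives the bound for almost every \(h\), which is the claimed almost-sure statement.

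The only delicate step is justifying that the conditional expectation is well defined and equals the pointwise Gaussian integral. This requires integrability of \(L(W_T+\zeta_T,s)\) given \(\cH_{T-1}\). I would obtain it from the smoothness assumption itself: it yields \(|L(w+u,s)|\le |L(w,s)|+\norm{\nabla L(w,s)}\norm{u}+\frac{\mu}{2}\norm{u}^2\), and the right-hand side is integrable under a Gaussian with finite covariance. With that in hand, the disintegration step is routine, and the centered-Gaussian cancellation of the first-order term carries the whole argument.
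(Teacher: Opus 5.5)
Your proposal is correct and follows the paper's proof: condition on \(\cH_{T-1}\) so that \(W_T\) and \(\Sigma_{1:T}\) are frozen and \(\zeta_T\) is a centered Gaussian with covariance \(\Sigma_{1:T}\), then rerun the fixed-covariance argument, where the linear term vanishes and the quadratic remainder contributes at most \(\frac{\mu}{2}\Tr(\Sigma_{1:T})\). Your integrability check, which uses the two-sided smoothness bound to justify the disintegration, is a small piece of rigor that the paper leaves implicit.
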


\begin{proof}
Condition on \(\cH_{T-1}\). After conditioning, both \(W_T\) and
\(\Sigma_{1:T}\) are fixed, and
\[
    \zeta_T\given \cH_{T-1}
    \sim
    \N(0,\Sigma_{1:T}).
\]
Moreover,
\[
    \E[\zeta_T\given\cH_{T-1}]=0,
    \qquad
    \E[\norm{\zeta_T}^2\given\cH_{T-1}]
    =
    \Tr(\Sigma_{1:T}).
\]
Applying the same argument as in
Proposition~\ref{prop:appendix_e_fixed_control}, conditionally on
\(\cH_{T-1}\), gives
\[
    \left|
    \E
    \left[
        L(W_T,s)-L(W_T+\zeta_T,s)
        \given
        \cH_{T-1}
    \right]
    \right|
    \leq
    \frac{\mu}{2}\Tr(\Sigma_{1:T}).
\]
This proves \eqref{eq:appendix_e_adaptive_control}.
\end{proof}

\subsection{Final Penalty Bound}
\label{subsec:appendix_smoothness_final_penalty}

The adaptive final sensitivity penalty is
\begin{equation}
    \mathcal R_{\Delta}^{\mathrm{ad}}
    =
    \left|
    \E
    \left[
        \Delta_{\Sigma_{1:T}}^{\mathrm{ad}}(W_T,S')
        -
        \Delta_{\Sigma_{1:T}}^{\mathrm{ad}}(W_T,S)
    \right]
    \right|.
    \label{eq:appendix_e_rad_def}
\end{equation}

\begin{proposition}[Smoothness control of the final sensitivity penalty]
\label{prop:appendix_e_final_penalty}
Suppose Assumption~\ref{ass:appendix_e_smoothness} holds and
\[
    \E[\Tr(\Sigma_{1:T})]<\infty.
\]
Then
\begin{equation}
    \mathcal R_{\Delta}^{\mathrm{ad}}
    \leq
    \mu
    \E[\Tr(\Sigma_{1:T})].
    \label{eq:appendix_e_final_penalty_bound}
\end{equation}
\end{proposition}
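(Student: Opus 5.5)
The argument uses Proposition~\ref{prop:appendix_e_adaptive_control} once for the training sample and once for the ghost sample, and then combines the two bounds with the triangle inequality.

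First, to apply the proposition with $s$ replaced by a random sample, note that it holds for every deterministic sample $s$. Both $S$ and the ghost sample $S'$ enter $\Delta^{\mathrm{ad}}_{\Sigma_{1:T}}(W_T,\cdot)$ only through the argument $s$. The conditional expectation in \eqref{eq:appendix_e_adaptive_delta} averages only over the fresh $\zeta_T$ given $\cH_{T-1}$. Since $\zeta_T$ is drawn independently of everything else given $\cH_{T-1}$, we may additionally condition on the sample (for $S$ it is already visible through the minibatches, and $S'$ is independent of everything). The same pointwise argument then gives
\[
\bigl|\Delta^{\mathrm{ad}}_{\Sigma_{1:T}}(W_T,S')\bigr|\le \tfrac{\mu}{2}\Tr(\Sigma_{1:T}),
\qquad
\bigl|\Delta^{\mathrm{ad}}_{\Sigma_{1:T}}(W_T,S)\bigr|\le \tfrac{\mu}{2}\Tr(\Sigma_{1:T})
\]
almost surely. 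Concretely, we fix the realization of $(\cH_{T-1},S,S')$ and apply Assumption~\ref{ass:appendix_e_smoothness}, which holds uniformly in $s$. The linear term vanishes because $\E[\zeta_T\mid\cH_{T-1},S,S']=0$, and $\E[\norm{\zeta_T}^2\mid\cdot]=\Tr(\Sigma_{1:T})$.

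Second, we need integrability so that the expectation in \eqref{eq:appendix_e_rad_def} is well defined. Both $\Delta$ terms are dominated by $\frac{\mu}{2}\Tr(\Sigma_{1:T})$, and this is integrable by the hypothesis $\E[\Tr(\Sigma_{1:T})]<\infty$. Hence the difference is integrable. The triangle inequality and the bound $|\E X|\le\E|X|$ then give
\[
\mathcal R_\Delta^{\mathrm{ad}}\le \E\bigl|\Delta^{\mathrm{ad}}(W_T,S')\bigr|+\E\bigl|\Delta^{\mathrm{ad}}(W_T,S)\bigr|\le \mu\,\E[\Tr(\Sigma_{1:T})].
\]

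The main point needing care is the first step, namely passing from deterministic $s$ to random $S$ and $S'$. This is not a real obstacle: it follows from the conditional independence of the fresh perturbation $\zeta_T$ and from the uniformity in $s$ of the smoothness constant $\mu$. The rest of the argument is routine.
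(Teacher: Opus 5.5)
Your proof is correct and follows the paper's argument: apply Proposition~\ref{prop:appendix_e_adaptive_control} to both the training and ghost samples and combine the two bounds with the triangle inequality, while your remarks on substituting the random samples $S,S'$ and on integrability add care that the paper leaves implicit. One minor inaccuracy is that $\cH_{T-1}$ need not reveal all of $S$, since points never drawn into a minibatch are not in the history; this is harmless, because all you use is that the fresh $\zeta_T$ is conditionally independent of $(S,S')$ given $\cH_{T-1}$, which holds by construction.
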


\begin{proof}
Using the triangle inequality,
\begin{align}
    \mathcal R_{\Delta}^{\mathrm{ad}}
    &\leq
    \E
    \left[
        \left|
        \Delta_{\Sigma_{1:T}}^{\mathrm{ad}}(W_T,S')
        \right|
    \right]
    +
    \E
    \left[
        \left|
        \Delta_{\Sigma_{1:T}}^{\mathrm{ad}}(W_T,S)
        \right|
    \right].
\end{align}
Applying Proposition~\ref{prop:appendix_e_adaptive_control} to both terms gives
\[
    \mathcal R_{\Delta}^{\mathrm{ad}}
    \leq
    \frac{\mu}{2}\E[\Tr(\Sigma_{1:T})]
    +
    \frac{\mu}{2}\E[\Tr(\Sigma_{1:T})]
    =
    \mu\E[\Tr(\Sigma_{1:T})].
\]
\end{proof}

The bound above is robust and easy to use, but it can be pessimistic in
high-dimensional nonconvex settings because it uses a global smoothness
constant. We next record a local curvature refinement.

\subsection{Local Curvature Refinement}
\label{subsec:appendix_smoothness_local_curvature}

For a deterministic sample \(s\), define the local Hessian
\begin{equation}
    H_s(W)
    =
    \nabla^2 L(W,s).
    \label{eq:appendix_e_local_hessian}
\end{equation}

\begin{assumption}[Local Hessian-Lipschitz condition]
\label{ass:appendix_e_local_hessian_lipschitz}
For a deterministic sample \(s\), assume that \(L(\cdot,s)\) is twice
differentiable in a neighborhood of \(W\), and that there exists
\(\rho_s\geq0\) such that
\begin{equation}
    \left\|
        \nabla^2L(W+u,s)-\nabla^2L(W,s)
    \right\|_{\mathrm{op}}
    \leq
    \rho_s\norm{u}
    \label{eq:appendix_e_hessian_lipschitz}
\end{equation}
for all perturbations \(u\) in the region of interest. If the condition holds
globally, the region of interest is all of \(\R^d\).
\end{assumption}

\begin{proposition}[Second-order expansion of fixed output sensitivity]
\label{prop:appendix_e_fixed_local_curvature}
Suppose Assumption~\ref{ass:appendix_e_local_hessian_lipschitz} holds for
sample \(s\). Let \(\zeta\sim\N(0,\Sigma)\). Then
\begin{equation}
    \Delta_{\Sigma}(W,s)
    =
    -
    \frac12
    \Tr
    \left(
        H_s(W)\Sigma
    \right)
    +
    R_s(W,\Sigma),
    \label{eq:appendix_e_fixed_local_expansion}
\end{equation}
where the remainder satisfies
\begin{equation}
    |R_s(W,\Sigma)|
    \leq
    \frac{\rho_s}{6}
    \E_{\zeta\sim\N(0,\Sigma)}
    \left[
        \norm{\zeta}^3
    \right].
    \label{eq:appendix_e_fixed_remainder}
\end{equation}
\end{proposition}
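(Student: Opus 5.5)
The plan is a conditional second-order Taylor expansion with an integral remainder, followed by Gaussian moment identities. It mirrors the proof of Proposition~\ref{prop:local_curvature_delta_expansion}, except that the cubic remainder bound is derived from Assumption~\ref{ass:appendix_e_local_hessian_lipschitz} rather than assumed.

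First, condition on \(W\), so that \(W\) is fixed and \(\zeta\sim\N(0,\Sigma)\) is independent of it. For each perturbation \(u\) in the region of interest, write the Taylor formula with integral remainder along the segment \(r\mapsto W+ru\), \(r\in[0,1]\):
\[
    L(W+u,s)=L(W,s)+\inner{\nabla L(W,s)}{u}+\int_0^1(1-r)\,u^\top\nabla^2L(W+ru,s)\,u\,dr .
\]
Since \(\int_0^1(1-r)\,dr=\tfrac12\), subtracting \(\tfrac12u^\top H_s(W)u\) leaves the remainder
\[
    r_s(u)=\int_0^1(1-r)\,u^\top\bigl[\nabla^2L(W+ru,s)-\nabla^2L(W,s)\bigr]u\,dr .
\]
By \eqref{eq:appendix_e_hessian_lipschitz} applied to the perturbation \(ru\), this satisfies
\[
    |r_s(u)|\le\rho_s\|u\|^3\int_0^1(1-r)r\,dr=\frac{\rho_s}{6}\|u\|^3 .
\]

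Second, substitute \(u=\zeta\) and take expectations over \(\zeta\). The linear term vanishes because \(\E[\zeta]=0\). The quadratic term equals \(\tfrac12\Tr(H_s(W)\Sigma)\) by the identity \(\E[\zeta^\top H\zeta]=\Tr(H\,\E[\zeta\zeta^\top])\). The sign convention in \eqref{eq:appendix_e_fixed_delta} then gives \eqref{eq:appendix_e_fixed_local_expansion} with \(R_s(W,\Sigma)=-\E[r_s(\zeta)]\). Applying \(|\E r_s(\zeta)|\le\E|r_s(\zeta)|\) together with the pointwise cubic bound yields \eqref{eq:appendix_e_fixed_remainder}. Finiteness of all the expectations follows from Gaussian moments: \(\E\|\zeta\|^3<\infty\) by Lemma~\ref{lem:gaussian_third_moment_control}.

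The main obstacle is integrability and the localness of the assumption. If the Hessian-Lipschitz condition holds globally, the argument above is complete. If it holds only on a neighborhood, the Gaussian \(\zeta\) leaves that region with positive probability, so the pointwise bound is unavailable there. I would handle this the way Assumption~\ref{ass:local_hessian_lipschitz} does. I would state the expansion on the event that \(W+\zeta\) stays in the region, so that the whole segment \(\{W+r\zeta\}\) lies in it for a convex (e.g.\ ball) neighborhood. The complement event would be flagged as requiring separate control, or I would simply invoke the global reading of the assumption. Measurability of \(\zeta\mapsto r_s(\zeta)\) follows from continuity of the Hessian.
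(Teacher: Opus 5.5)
Your proposal is correct and follows the same route as the paper. Both arguments use a second-order Taylor expansion with cubic remainder, let the linear term vanish because \(\E[\zeta]=0\), take the quadratic term to be \(\tfrac12\Tr(H_s(W)\Sigma)\), and set \(R_s(W,\Sigma)=-\E[r_s(\zeta)]\).

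The one addition is that you derive the constant \(\rho_s/6\) from the integral remainder via \(\int_0^1(1-r)r\,dr=\tfrac16\), which the paper only asserts. You also correctly note that the exact identity over the full Gaussian needs the global reading of Assumption~\ref{ass:appendix_e_local_hessian_lipschitz}, a point the paper's proof passes over.
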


\begin{proof}
Taylor's theorem with third-order remainder controlled by the Hessian-Lipschitz
condition gives
\begin{align}
    L(W+\zeta,s)
    &=
    L(W,s)
    +
    \inner{\nabla L(W,s)}{\zeta}
    +
    \frac12
    \zeta^\top H_s(W)\zeta
    +
    r_s(\zeta),
\end{align}
where
\[
    |r_s(\zeta)|
    \leq
    \frac{\rho_s}{6}\norm{\zeta}^3.
\]
Taking expectation over \(\zeta\), the linear term vanishes and
\[
    \E[\zeta^\top H_s(W)\zeta]
    =
    \Tr(H_s(W)\Sigma).
\]
Using
\[
    \Delta_{\Sigma}(W,s)
    =
    \E[L(W,s)-L(W+\zeta,s)]
\]
gives the expansion with
\[
    R_s(W,\Sigma)
    =
    -\E[r_s(\zeta)].
\]
The remainder bound follows immediately.
\end{proof}

\begin{proposition}[Second-order expansion of adaptive output sensitivity]
\label{prop:appendix_e_adaptive_local_curvature}
Suppose Assumption~\ref{ass:appendix_e_local_hessian_lipschitz} holds for
sample \(s\) at \(W=W_T\), conditionally on \(\cH_{T-1}\). Then
\begin{equation}
    \Delta_{\Sigma_{1:T}}^{\mathrm{ad}}(W_T,s)
    =
    -
    \frac12
    \Tr
    \left(
        H_s(W_T)\Sigma_{1:T}
    \right)
    +
    R_s(W_T,\Sigma_{1:T}),
    \label{eq:appendix_e_adaptive_local_expansion}
\end{equation}
where
\begin{equation}
    \left|
        R_s(W_T,\Sigma_{1:T})
    \right|
    \leq
    \frac{\rho_s}{6}
    \E
    \left[
        \norm{\zeta_T}^3
        \given
        \cH_{T-1}
    \right].
    \label{eq:appendix_e_adaptive_remainder}
\end{equation}
\end{proposition}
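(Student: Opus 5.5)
The plan is to reduce the adaptive statement to the fixed-covariance expansion of Proposition~\ref{prop:appendix_e_fixed_local_curvature} by conditioning on the final real history \(\cH_{T-1}\). After conditioning, both \(W_T\) and \(\Sigma_{1:T}\) are fixed. This holds because \(W_T\) is part of \(\cH_{T-1}\), and \(\Sigma_{1:T}=\sum_{k=1}^{T-1}\Sigma_k\) is \(\cH_{T-1}\)-measurable by predictability together with the nesting \(\cH_{k-1}\subseteq\cH_{T-1}\). By construction, the fresh perturbation satisfies \(\zeta_T\mid\cH_{T-1}\sim\N(0,\Sigma_{1:T})\). So, working with a regular conditional distribution on a standard Borel space, I would fix a realization \(h\) of \(\cH_{T-1}\) and set \(W=W_T(h)\) and \(\Sigma=\Sigma_{1:T}(h)\). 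Under this conditional law, \(\Delta^{\mathrm{ad}}_{\Sigma_{1:T}}(W_T,s)\) equals \(\Delta_{\Sigma}(W,s)\) exactly as defined in \eqref{eq:appendix_e_fixed_delta}.

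Next I would verify the Taylor step pointwise. Assumption~\ref{ass:appendix_e_local_hessian_lipschitz} is assumed at \(W=W_T\), conditionally on \(\cH_{T-1}\). Writing the integral form of the remainder,
\[
r_s(u)=\int_0^1(1-\tau)\,u^\top\bigl[\nabla^2L(W+\tau u,s)-\nabla^2L(W,s)\bigr]u\,d\tau ,
\]
gives \(|r_s(u)|\le \rho_s\|u\|^3\int_0^1(1-\tau)\tau\,d\tau=\frac{\rho_s}{6}\|u\|^3\). Taking conditional expectations then yields the expansion. The linear term vanishes since \(\E[\zeta_T\mid\cH_{T-1}]=0\), and the quadratic term is \(\E[\zeta_T^\top H_s(W_T)\zeta_T\mid\cH_{T-1}]=\Tr(H_s(W_T)\Sigma_{1:T})\) because \(H_s(W_T)\) is \(\cH_{T-1}\)-measurable. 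Defining \(R_s(W_T,\Sigma_{1:T})=-\E[r_s(\zeta_T)\mid\cH_{T-1}]\) then gives \eqref{eq:appendix_e_adaptive_remainder}, by Jensen and the pointwise bound on \(r_s\).

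The main obstacle is integrability and measurability rather than algebra. I need \(L(W_T+\zeta_T,s)\) to be conditionally integrable, so that the expectations can be split termwise. This follows from the bounds, since the quadratic and cubic Gaussian moments are finite once \(\Sigma_{1:T}\) is fixed. I also need the Hessian-Lipschitz region to contain the support of \(\zeta_T\). If the condition is only local, I would follow the caveat in Assumption~\ref{ass:local_hessian_lipschitz} and state the result on the event where the perturbation stays in the region, or else simply assume the global version. Finally, I would note that \(\rho_s\) may be \(\cH_{T-1}\)-measurable, which causes no issue because it is fixed under the conditioning.
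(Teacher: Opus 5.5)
Your proposal is correct and follows the paper's proof. Like the paper, you condition on \(\cH_{T-1}\), so that \(W_T\) and \(\Sigma_{1:T}\) are fixed and \(\zeta_T\) is conditionally \(\N(0,\Sigma_{1:T})\), and then apply the fixed-covariance Taylor expansion of Proposition~\ref{prop:appendix_e_fixed_local_curvature}. Your integral-remainder derivation of the \(\rho_s/6\) constant and your observation that the Gaussian's full support effectively requires the Hessian-Lipschitz condition along every segment from \(W_T\) (e.g.\ the global version) are sound details that the paper leaves implicit.
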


\begin{proof}
Condition on \(\cH_{T-1}\). Then \(W_T\) and \(\Sigma_{1:T}\) are fixed, and
\[
    \zeta_T\given\cH_{T-1}
    \sim
    \N(0,\Sigma_{1:T}).
\]
Apply Proposition~\ref{prop:appendix_e_fixed_local_curvature} conditionally
with \(W=W_T\) and \(\Sigma=\Sigma_{1:T}\).
\end{proof}

\begin{lemma}[Gaussian third-moment control]
\label{lem:appendix_e_gaussian_third_moment}
If \(\zeta\sim\N(0,\Sigma)\), then
\begin{equation}
    \E\norm{\zeta}^3
    \leq
    2^{3/2}
    \left[
        \Tr(\Sigma)
    \right]^{3/2}.
    \label{eq:appendix_e_gaussian_third_moment}
\end{equation}
\end{lemma}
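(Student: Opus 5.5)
The plan is to reduce \(\E\norm{\zeta}^3\) to moments of \(\norm{\zeta}^2\) and then evaluate those moments using the Gaussian structure. First I will diagonalize the covariance. Write \(\Sigma=U\diag(\lambda_1,\ldots,\lambda_d)U^\top\) with \(\lambda_j\ge0\). Then \(\norm{\zeta}^2\) has the same law as \(\sum_j\lambda_j g_j^2\), where the \(g_j\) are i.i.d.\ standard normal. This reduces everything to a weighted sum of independent \(\chi^2_1\) variables, with \(\E\norm{\zeta}^2=\sum_j\lambda_j=\Tr(\Sigma)\).

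Next I will apply Cauchy--Schwarz, exactly as in Lemma~\ref{lem:gaussian_third_moment_control}:
\[
\E\norm{\zeta}^3\le\bigl(\E\norm{\zeta}^4\bigr)^{1/2}\bigl(\E\norm{\zeta}^2\bigr)^{1/2}.
\]
For the fourth moment, expanding \(\bigl(\sum_j\lambda_j g_j^2\bigr)^2\) and using \(\E g_j^4=3\) and \(\E g_j^2g_k^2=1\) for \(j\neq k\) gives
\[
\E\norm{\zeta}^4=2\Tr(\Sigma^2)+[\Tr(\Sigma)]^2.
\]
Since all \(\lambda_j\ge0\), we have \(\Tr(\Sigma^2)=\sum_j\lambda_j^2\le(\sum_j\lambda_j)^2=[\Tr(\Sigma)]^2\), so \(\E\norm{\zeta}^4\le3[\Tr(\Sigma)]^2\). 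This yields \(\E\norm{\zeta}^3\le\sqrt3\,[\Tr(\Sigma)]^{3/2}\). Because \(\sqrt3\le 2^{3/2}\), the stated bound \eqref{eq:appendix_e_gaussian_third_moment} follows immediately. The constant \(2^{3/2}\) is therefore looser than necessary, and I will note that the sharper constant from the main-text lemma also holds.

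As an alternative route that produces \(2^{3/2}\) directly, one can use Jensen/Lyapunov in the form \(\E\norm{\zeta}^3\le(\E\norm{\zeta}^4)^{3/4}\) together with \(\E\norm{\zeta}^4\le 3[\Tr\Sigma]^2\). This gives \(3^{3/4}[\Tr\Sigma]^{3/2}\le 2^{3/2}[\Tr\Sigma]^{3/2}\). Either route suffices.

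The only real obstacle is the fourth-moment identity, specifically getting the cross terms right. In the eigenbasis this is routine bookkeeping. The degenerate case \(\Sigma\succeq0\) with zero eigenvalues needs no special treatment, since those coordinates contribute nothing. The case \(\Sigma=0\) gives \(\zeta=0\), where both sides vanish.
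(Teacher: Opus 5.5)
Your proof is correct. The eigenbasis identity \(\E\norm{\zeta}^4=2\Tr(\Sigma^2)+[\Tr(\Sigma)]^2\le 3[\Tr(\Sigma)]^2\), combined with Cauchy--Schwarz, gives \(\E\norm{\zeta}^3\le\sqrt3\,[\Tr(\Sigma)]^{3/2}\), and \(\sqrt3\le 2^{3/2}\).

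It is not the route of the appendix proof of this statement. That proof does no computation and only appeals to an unspecified ``Gaussian norm moment comparison.'' The constant \(2^{3/2}\) is exactly what Gaussian hypercontractivity gives: \((\E\norm{\zeta}^q)^{1/q}\le\sqrt{q-1}\,(\E\norm{\zeta}^2)^{1/2}\) for \(q\ge2\), evaluated at \(q=3\). So that is presumably what is intended. That approach needs a nontrivial cited theorem and gives a worse constant, but it controls all moments \(q\ge2\) uniformly with \(\sqrt{q-1}\) growth.

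Your approach is self-contained: it uses only \(\E g^4=3\) and \(\Tr(\Sigma^2)\le[\Tr(\Sigma)]^2\) for \(\Sigma\succeq0\). It is exactly the argument the paper gives for the main-text version, Lemma~\ref{lem:gaussian_third_moment_control}, and it delivers the sharper constant. In particular, it shows that the factor \(2^{3/2}/6\) in \eqref{eq:appendix_e_curvature_mismatch_trace} can be replaced by the \(\sqrt3/6\) used in \eqref{eq:local_curvature_rad_control_trace}.

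One small correction: your Lyapunov alternative does not produce \(2^{3/2}\) directly. It gives \(3^{3/4}\approx2.28\), which lies strictly between \(\sqrt3\) and \(2^{3/2}\). That is weaker than your Cauchy--Schwarz bound, so the alternative adds nothing.
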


\begin{proof}
The bound is a standard Gaussian moment estimate. Since
\[
    \E\norm{\zeta}^2=\Tr(\Sigma),
\]
one may use a Gaussian norm moment comparison to obtain
\[
    \E\norm{\zeta}^3
    \leq
    2^{3/2}
    \left[
        \Tr(\Sigma)
    \right]^{3/2}.
\]
\end{proof}

\subsection{Curvature-Mismatch Control of the Adaptive Penalty}
\label{subsec:appendix_smoothness_curvature_mismatch}

The local expansion shows that the output-sensitivity penalty is controlled by
the difference between the local curvature of the ghost loss and the local
curvature of the training loss, weighted by the accumulated adaptive covariance.

\begin{proposition}[Local curvature control of the adaptive output penalty]
\label{prop:appendix_e_curvature_mismatch_penalty}
Suppose the adaptive local curvature expansion
\eqref{eq:appendix_e_adaptive_local_expansion} holds for both \(S\) and \(S'\).
Then
\begin{align}
    \mathcal R_{\Delta}^{\mathrm{ad}}
    &\leq
    \frac12
    \left|
    \E
    \left[
        \Tr
        \left(
            \left[
                H_{S'}(W_T)-H_S(W_T)
            \right]
            \Sigma_{1:T}
        \right)
    \right]
    \right|
    \nonumber\\
    &\quad+
    \E
    \left[
        \left|
            R_{S'}(W_T,\Sigma_{1:T})
        \right|
        +
        \left|
            R_S(W_T,\Sigma_{1:T})
        \right|
    \right].
    \label{eq:appendix_e_curvature_mismatch_penalty}
\end{align}
If the local Hessian-Lipschitz constants are \(\rho_{S'}\) and \(\rho_S\), then
\begin{align}
    \mathcal R_{\Delta}^{\mathrm{ad}}
    &\leq
    \frac12
    \left|
    \E
    \left[
        \Tr
        \left(
            \left[
                H_{S'}(W_T)-H_S(W_T)
            \right]
            \Sigma_{1:T}
        \right)
    \right]
    \right]
    \nonumber\\
    &\quad+
    \frac{2^{3/2}}{6}
    \E
    \left[
        (\rho_{S'}+\rho_S)
        \left[
            \Tr(\Sigma_{1:T})
        \right]^{3/2}
    \right].
    \label{eq:appendix_e_curvature_mismatch_trace}
\end{align}
\end{proposition}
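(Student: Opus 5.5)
The proof substitutes the adaptive local expansion \eqref{eq:appendix_e_adaptive_local_expansion} for both samples into the definition \eqref{eq:appendix_e_rad_def}, and then bounds the remainders using the Gaussian third-moment lemma.

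\textbf{Step 1: difference of the two expansions.} By Proposition~\ref{prop:appendix_e_adaptive_local_curvature}, applied conditionally on \(\cH_{T-1}\) with \(s=S'\) and with \(s=S\), both sensitivities have second-order expansions. Because \(S'\) is independent of the real SGD history, conditioning on \(\cH_{T-1}\) leaves \(S'\) random, so I treat the ghost sample as fixed inside the inner conditional expectation (conditioning on \(\cH_{T-1}\vee\sigma(S')\) if needed). Subtracting the two expansions gives
\[
\Delta^{\mathrm{ad}}_{\Sigma_{1:T}}(W_T,S')-\Delta^{\mathrm{ad}}_{\Sigma_{1:T}}(W_T,S)
=-\tfrac12\Tr\bigl([H_{S'}(W_T)-H_S(W_T)]\Sigma_{1:T}\bigr)+R_{S'}-R_S .
\]

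\textbf{Step 2: first inequality.} I take the outer expectation and then absolute values. The triangle inequality keeps the trace term inside a single absolute value of an expectation, so cancellations are preserved, and it bounds \(|\E[R_{S'}-R_S]|\) by \(\E[|R_{S'}|+|R_S|]\). This gives \eqref{eq:appendix_e_curvature_mismatch_penalty}. The only technical requirement is integrability, which I would take from the finiteness conditions in Assumption~\ref{ass:basic_conditions_main} together with the remainder bound.

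\textbf{Step 3: trace form.} From \eqref{eq:appendix_e_adaptive_remainder}, \(|R_s|\le \frac{\rho_s}{6}\E[\|\zeta_T\|^3\mid\cH_{T-1}]\). Conditionally on \(\cH_{T-1}\), \(\zeta_T\sim\N(0,\Sigma_{1:T})\), so Lemma~\ref{lem:appendix_e_gaussian_third_moment} gives \(\E[\|\zeta_T\|^3\mid\cH_{T-1}]\le 2^{3/2}[\Tr(\Sigma_{1:T})]^{3/2}\). The lemma as stated only cites a ``moment comparison'', so I would justify this directly from Lemma~\ref{lem:gaussian_third_moment_control}: the Cauchy--Schwarz argument there gives the constant \(\sqrt3\le 2^{3/2}\), and the weaker constant follows.

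\textbf{Main obstacle.} Since \(\rho_S\) and \(\rho_{S'}\) may be random (they depend on the sample and on \(W_T\)), they must stay inside the outer expectation as \((\rho_{S'}+\rho_S)[\Tr(\Sigma_{1:T})]^{3/2}\) rather than being pulled out. Summing the two remainder bounds inside \(\E\) then yields \eqref{eq:appendix_e_curvature_mismatch_trace}. Note that \eqref{eq:appendix_e_curvature_mismatch_trace} as typeset closes \(\left|\) with \(\right]\); the intended delimiter is the absolute value, and I read it that way.
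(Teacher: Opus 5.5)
Your proposal is correct and follows the paper's proof essentially verbatim: you subtract the two adaptive expansions from Proposition~\ref{prop:appendix_e_adaptive_local_curvature}, take expectations and apply the triangle inequality, then bound the remainders with the Gaussian third-moment estimate while keeping the random \(\rho_S,\rho_{S'}\) inside the expectation. Your one deviation is to derive the \(2^{3/2}\) constant from the Cauchy--Schwarz bound \(\sqrt3\,[\Tr(\Sigma)]^{3/2}\) of Lemma~\ref{lem:gaussian_third_moment_control}, rather than from the appendix lemma's unspecified moment comparison; this is a legitimate and slightly more rigorous justification of the same step.
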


\begin{proof}
By Proposition~\ref{prop:appendix_e_adaptive_local_curvature},
\[
    \Delta_{\Sigma_{1:T}}^{\mathrm{ad}}(W_T,S')
    =
    -
    \frac12
    \Tr(H_{S'}(W_T)\Sigma_{1:T})
    +
    R_{S'}(W_T,\Sigma_{1:T}),
\]
and
\[
    \Delta_{\Sigma_{1:T}}^{\mathrm{ad}}(W_T,S)
    =
    -
    \frac12
    \Tr(H_S(W_T)\Sigma_{1:T})
    +
    R_S(W_T,\Sigma_{1:T}).
\]
Subtracting the two expressions, taking expectation and absolute values, and
applying the triangle inequality gives
\eqref{eq:appendix_e_curvature_mismatch_penalty}. The trace-form remainder
bound follows from Proposition~\ref{prop:appendix_e_adaptive_local_curvature}
and Lemma~\ref{lem:appendix_e_gaussian_third_moment}.
\end{proof}

\begin{remark}[Local flatness interpretation]
If the local Hessian \(H_s(W_T)\) is positive semidefinite and the Taylor
remainder is negligible, then
\[
    \left|
    \Delta_{\Sigma_{1:T}}^{\mathrm{ad}}(W_T,s)
    \right|
    \approx
    \frac12
    \Tr(H_s(W_T)\Sigma_{1:T}).
\]
Thus, the output sensitivity is governed by covariance-weighted local
curvature. Perturbation mass in low-curvature directions contributes little to
the penalty, while perturbation mass aligned with high-curvature directions can
increase it.
\end{remark}

\begin{remark}[Relation to the main theorem]
This appendix controls only the output-sensitivity term
\(\mathcal R_{\Delta}^{\mathrm{ad}}\). The covariance-comparison term
\(\mathcal C_t^{\mathrm{cov}}\), which appears in the information part of the
general theorem when actual and reference perturbation covariances differ, is
controlled separately through Gaussian KL comparison.
\end{remark}

\end{document}